  \def\ttl@Hy@steplink#1{%
    \Hy@MakeCurrentHrefAuto{#1*}%
    \edef\ttl@Hy@saveanchor{%
      \noexpand\Hy@raisedlink{%
        \noexpand\hyper@anchorstart{\@currentHref}%
        \noexpand\hyper@anchorend
        \def\noexpand\ttl@Hy@SavedCurrentHref{\@currentHref}%
        \noexpand\ttl@Hy@PatchSaveWrite
      }%
    }%
  }%
  \def\ttl@Hy@PatchSaveWrite{%
    \begingroup
      \toks@\expandafter{\ttl@savewrite}%
      \edef\x{\endgroup
        \def\noexpand\ttl@savewrite{%
          \let\noexpand\@currentHref
              \noexpand\ttl@Hy@SavedCurrentHref
          \the\toks@
        }%
      }%
    \x
  }%
  \def\ttl@Hy@refstepcounter#1{%
    \let\ttl@b\Hy@raisedlink
    \def\Hy@raisedlink##1{%
      \def\ttl@Hy@saveanchor{\Hy@raisedlink{##1}}%
    }%
    \refstepcounter{#1}%
    \let\Hy@raisedlink\ttl@b
  }%
\def\ttl@gobblecontents#1#2#3#4{\ignorespaces}%
  \newcommand{\revised}[1]{{#1}}
  \newcommand{\revTwo}[1]{{#1}}
\DeclareMathOperator*{\argmax}{arg\,max}
\DeclareMathOperator*{\argmin}{arg\,min}
\DeclareMathOperator{\sign}{sgn}
\newcommand{\sgnp}[1]{{\sign \left( #1 \right) }}
\newcommand\swapifbranches[3]{#1{#3}{#2}}
\patchcmd{\DeclarePairedDelimiter}{\@ifstar}{\swapifbranches\@ifstar}{}{}
\DeclarePairedDelimiter{\sbrack}{\lbrack}{\rbrack}
\DeclarePairedDelimiter{\angbrack}{\langle}{\rangle}
\DeclarePairedDelimiter{\floor}{\lfloor}{\rfloor}
\DeclarePairedDelimiter{\abs}{\lvert}{\rvert}
\DeclarePairedDelimiter{\norm}{\lVert}{\rVert}
\DeclarePairedDelimiterX\set[1]\lbrace\rbrace{#1}
\DeclarePairedDelimiterX\setbuild[2]\lbrace\rbrace{#1 \bm: #2}
\newcommand{\setbuildDynamic}[2]{\left\{#1 \bm: #2\right\}}
\newcommand{\func}[3]{{#1:#2\rightarrow#3}}
\newcommand{\defeq}{\coloneqq}
\newcommand{\fedeq}{\eqqcolon}
\newcommand{\ind}[1]{\mathbbm{1}_{#1}}
\newcommand{\eqsmall}[1]{{\small #1}}
\newcommand{\intsNN}{\mathbb{Z}_{+}}
\newcommand{\real}{\mathbb{R}}
\newcommand{\realnn}{\real_{{\geq}0}}  %
\newcommand{\realnp}{\real_{{\leq}0}}  %
\newcommand{\transpose}{^{\intercal}}
\newcommand{\bigO}[1]{{\mathcal{O}(#1)}}
\newcommand{\bigTheta}[1]{{\Theta(#1)}}
\newtheorem*{theorem*}{Theorem}
\newcommand{\keyword}[1]{\textit{#1}}
\pgfplotsset{compat=1.13}
\newcommand{\ifempty}[3]{%
  \ifthenelse{\isempty{#1}}{#2}{#3}%
}
\newcommand{\dotprod}[2]{\angbrack{ #1 ,~ #2 }}
\newcommand{\dotprodbig}[2]{\big\langle{ #1 ,~ #2 }\big\rangle}
\newcolumntype{H}{>{\setbox0=\hbox\bgroup}c<{\egroup}@{}}
\patchcmd{\NAT@test}{\else \NAT@nm}{\else \NAT@nmfmt{\NAT@nm}}{}{}
\DeclareRobustCommand\citepos
   \let\NAT@nmfmt\NAT@posfmt%
\let\NAT@ctype\z@\NAT@partrue
\let\NAT@orig@nmfmt\NAT@nmfmt
\def\NAT@posfmt#1{\NAT@orig@nmfmt{#1's}}
\newcommand{\AlgFontSize}{\small}
\newcommand{\TableFontSize}{\small}
\newcommand{\fitname}{FIT}
\newcommand{\fit}{{\fitname}}
\newcommand{\fitWith}[1]{\mbox{\fit{} w/ #1}}
\newcommand{\methodname}{GAS}
\newcommand{\simShort}{Rn.{}}
\newcommand{\layerwiseSuffix}{\=/L}
\newcommand{\bothSuffix}{(\layerwiseSuffix)}
\newcommand{\baseMethod}[2]{#1\textsubscript{#2}}
\newcommand{\baseZeroMethod}[1]{\baseMethod{#1}{$0$}}
\newcommand{\baseFinMethod}[1]{\baseMethod{#1}{$\nItr$}}
\newcommand{\method}{\textsc{\methodname}}
\newcommand{\bothMethod}{\method\bothSuffix}
\newcommand{\methodZero}{\baseZeroMethod{\method}}
\newcommand{\methodFin}{\baseFinMethod{\method}}
\newcommand{\layername}{\methodname\layerwiseSuffix}
\newcommand{\layer}{\textsc{\layername}}
\newcommand{\layerZero}{\baseZeroMethod{\layer}}
\newcommand{\tracin}{TracIn}
\newcommand{\tracinCP}{TracInCP}
\newcommand{\SST}{SST\=/2}
\newcommand{\influence}{\mathcal{I}}
\newcommand{\infIF}{\influence_{\text{IF}}}
\newcommand{\ifSign}{}
\newcommand{\infRepPt}{\influence_{\text{RP}}}
\newcommand{\infTracIn}{\influence_{\text{\tracin}}}
\newcommand{\infTracInFunc}{\infFunc{\infTracIn}{\zI}{\zHatTe}}
\newcommand{\infTracInCP}{\influence_{\text{\tracinCP}}}
\newcommand{\infTracInCPFunc}{\infFunc{\infTracInCP}{\zI}{\zHatTe}}
\newcommand{\infFunc}[3]{{#1 \left( #2 , #3 \right)}}
\newcommand{\renormInf}{\widetilde{\influence}}
\newcommand{\simIF}{\renormInf_{\text{IF}}}
\newcommand{\simRepPt}{\renormInf_{\text{RP}}}
\newcommand{\simTracIn}{\renormInf_{\text{\tracin}}}
\newcommand{\simTracInFunc}{\infFunc{\simTracIn}{\zI}{\zHatTe}}
\newcommand{\simTracInCP}{\renormInf_{\text{\tracinCP}}}
\newcommand{\simTracInCPFunc}{\infFunc{\simTracInCP}{\zI}{\zHatTe}}
\newcommand{\fRP}[1]{\mathbf{\dec}_{#1}}
\newcommand{\rpDeriv}{\frac{\partial \lFunc{\zI}{\wFin}}{\partial \acts_{\yI}}}
\newcommand{\indVar}{u}
\newcommand{\LossOnlyNorm}{\eqsmall{${\big\lVert \rpDeriv \big\rVert}$}}
\newcommand{\ActsGradient}{%
  \frac{\partial \acts}%
       {\partial \X} %
}
\newcommand{\neighSize}{k}
\newcommand{\KNN}{$\neighSize$\=/NN}
\newcommand{\deepknn}{Deep~\KNN}
\newcommand{\hydra}{\textsc{HyDRA}}
\newcommand{\itr}{t}
\newcommand{\subsetItr}{\mathcal{T}}
\newcommand{\nItr}{T}
\newcommand{\lr}{\eta}
\newcommand{\lrT}{\lr_{\itr}}
\newcommand{\wdecay}{\lambda}  %
\newcommand{\dec}{f}           %
\newcommand{\decFunc}[2]{{\dec ( #1 ; #2 )}}           %
\newcommand{\acts}{a}
\newcommand{\domainActs}{\mathcal{A}}
\newcommand{\gradActs}{\nabla_{\acts}}
\newcommand{\gradActsSq}{\gradActs^{2}}
\newcommand{\normActs}[1]{\normW{#1}}
\newcommand{\trainParams}{\mathcal{P}}
\newcommand{\emptyHat}{\widehat{~}}
\newcommand{\hatFunc}[1]{\widehat{#1}}
\newcommand{\z}{z}
\newcommand{\zHat}{\hatFunc{\z}}
\newcommand{\zI}{\z_{\trIdx}}
\newcommand{\X}{x}
\newcommand{\xI}{\X_{\trIdx}}
\newcommand{\y}{y}
\newcommand{\yHat}{\hatFunc{\y}}
\newcommand{\yI}{\y_{\trIdx}}
\newcommand{\domainX}{\mathcal{X}}
\newcommand{\domainY}{\mathcal{Y}}
\newcommand{\targStr}{\text{targ}}
\newcommand{\targFeatSet}{\mathcal{X}_{\targStr}}
\newcommand{\targSet}{\hatFunc{\mathcal{Z}}_{\targStr}}
\newcommand{\numTarg}{m}
\newcommand{\targIdx}{j}
\newcommand{\zHatTarg}{\zHat_{\targStr}}
\newcommand{\xTarg}{\X_{\targStr}}
\newcommand{\yTarg}{\y_{\targStr}}
\newcommand{\teStr}{\text{te}}
\newcommand{\zTeFilt}{\z_{\text{filt}}}
\newcommand{\zHatTe}{\zHat_{\teStr}}
\newcommand{\xTe}{\X_{\teStr}}
\newcommand{\yHatTe}{\yHat_{\teStr}}
\newcommand{\yAdv}{\y_{\text{adv}}}
\newcommand{\dimW}{\abs{\w}}
\newcommand{\gradW}{\nabla_{\w}}
\newcommand{\gradWSq}{\gradW^{2}}
\newcommand{\normW}[1]{{\norm{#1}_2}}
\newcommand{\hess}{H_{\w}}
\newcommand{\invHess}{\hess^{{-}1}}
\newcommand{\w}{\theta}
\newcommand{\wT}[1][\itr]{\w_{#1}}
\newcommand{\wTOne}{\wT[\itr - 1]}
\newcommand{\wZero}{\w_{0}}
\newcommand{\wFin}{\w_{\nItr}}
\newcommand{\batch}{\mathcal{B}}
\newcommand{\batchT}[1][\itr]{\batch_{#1}}
\newcommand{\batchOne}{\batchT[1]}
\newcommand{\batchFin}{\batchT[\nItr]}
\newcommand{\batchSize}{b}
\newcommand{\nTr}{n}
\newcommand{\numSubep}{\omega}
\newcommand{\trainBase}{\mathcal{D}}
\newcommand{\fullTrain}{\trainBase_{\text{tr}}}
\newcommand{\trIdx}{i}
\newcommand{\trIdxP}{l}
\newcommand{\advTrain}{\trainBase_{\text{adv}}}
\newcommand{\cleanTrain}{\trainBase_{\text{cl}}}
\newcommand{\gradLetter}{g}
\newcommand{\gradFunc}[1]{\gradLetter_{#1}}
\newcommand{\gradItr}[2][\itr]{{#2^{(#1)}}}
\newcommand{\gradHat}{\hatFunc{\gradLetter}}
\newcommand{\gradHatFunc}[1]{\gradHat_{#1}}
\newcommand{\gradHatTarg}{\gradHatFunc{\targStr}}
\newcommand{\gradHatTe}{\gradHatFunc{\teStr}}
\newcommand{\gradI}{\gradFunc{\trIdx}}
\newcommand{\lossScalar}{\widetilde{\ell}}
\newcommand{\lScalarFunc}[1]{{\lossScalar\left( #1 \right)}}
\newcommand{\gradScalarLoss}[1]{{\gradActs \lScalarFunc{#1}}}
\newcommand{\loss}{\ell}
\newcommand{\lActsFunc}[2]{{\loss\left(#1 , #2 \right)}}
\newcommand{\riskSym}{\mathcal{L}}
\newcommand{\lFunc}[2]{{\riskSym(#1 ; #2)}}
\newcommand{\riskAdv}{\riskSym_{\text{adv}}}
\newcommand{\gradLoss}[2]{{\gradW \lFunc{#1}{#2}}}
\newcommand{\infScalarI}{\infVec_{\trIdx}}
\newcommand{\infScalarIP}{\infVec_{\trIdxP}}
\newcommand{\infVec}{\mathbf{v}}
\newcommand{\anomCount}{\kappa}
\newcommand{\anomCutoff}{\zeta}
\newcommand{\anomScore}{\sigma}
\newcommand{\anomScoreI}{\anomScore_{\trIdx}}
\newcommand{\anomScoreVec}{\bm{\anomScore}}
\newcommand{\scoreCenter}{\mu}
\newcommand{\scoreSpread}{s}
\newcommand{\filtTrain}{\widetilde{\trainBase}_{\text{tr}}}
\newcommand{\filtWFin}{\widetilde{\w}_{\nItr}}
\newcommand{\baseMethodFunc}[4]{{#1(#2 ; #3\ifempty{#4}{}{, #4})}}
\newcommand{\MethodFunc}[3]{\baseMethodFunc{\method}{#1}{#2}{#3}}
\newcommand{\RenormInfMethodName}{\textsc{Inf}}
\newcommand{\SimFunc}[3]{\baseMethodFunc{\RenormInfMethodName}{#1}{#2}{#3}}
\newcommand{\targAnalysisSet}{\hatFunc{\mathcal{Z}}_{\text{te}}}
\newcommand{\simDistSet}{\mathcal{V}}
\newcommand{\heavinessSet}{\mathcal{H}}
\newcommand{\TeIdx}[1]{{#1^{(j)}}}
\newcommand{\scoreDistSet}{\Sigma}
\newcommand{\median}{\text{med}}
\newcommand{\Qn}{Q}
\newcommand{\dConsist}{c}
\newcommand{\qnOrderStat}{r}
\newcommand{\SupplementaryMaterialsTitle}{%
  \vbox{
    \hrule height 4pt
    \vskip 0.05in
    \vskip -\parskip%
    \begin{center}
      {\Huge\bf \titletext{} \par}

      \vspace{8pt}
      {\Huge Supplemental Materials \par}
    \end{center}
    \vskip 0.15in
    \vskip -\parskip
    \hrule height 1pt
    \vskip 0.09in%
  }
}
\newcommand{\nLayer}{L}
\newcommand{\robertaBase}{RoBERTa\textsubscript{BASE}}
\newcommand{\LayerConv}[4]{%
  In=#1 & Out=#2 & Kernel=${#3 \times #3}$ & Pad=#4 %
}
\newcommand{\LayerMaxPoolTwoD}[1]{%
  MaxPool2D & ${#1 \times #1}$%
}
\newcommand{\LayerBatchNormTwoD}[1]{%
  BatchNorm2D & Out=#1
}
\newcommand{\LayerReluActivation}{%
  ReLU
}
\newcommand{\LayerLinear}[2]{%
  Linear#1   & Out=#2%
}
\newcommand{\PVal}[2]{#1 $\pm$ #2}
\newcommand{\PValB}[2]{\textBF{#1} $\pm$ \textBF{#2}}
\newcommand{\OneVal}{1\phantom{.000}}
\newcommand{\ZeroVal}{0\phantom{{.000}}}
\newcommand{\OneValB}{\textBF{1}\phantom{\textBF{.000}}}
\newcommand{\ZeroValB}{\textBF{0}\phantom{\textBF{.000}}}
\newcommand{\RETURN}{\textbf{return}~}
\newcommand{\algcomment}[1]{\hfill$\triangleright$~#1}
\newcommand{\algSetStretch}{\setstretch{1.08}}
\newcommand{\retrainFunc}[2]{{\textsc{Retrain}(#1, #2)}}
\newcommand{\MitigateMethodName}{\textsc{Mitigate}}
\newcommand{\mitigateFunc}[3]{{\MitigateMethodName(#1,~~#2,~~#3)}}
\newcommand{\AnomScoreMethodName}{\textsc{AnomScore}}
\newcommand{\AnomScoreFunc}[2]{{\AnomScoreMethodName(#1,~~#2)}}
\newcommand{\nSurrogate}{m}
\newcommand{\nPoisJointOpt}{K}
\newcommand{\itrSurrogate}{j}
\newcommand{\itrPois}{l}
\newcommand{\xPois}{x_{\itrPois}}
\newcommand{\xPoisClean}{x_{\itrPois}^{\text{cl}}}
\newcommand{\methodSurrogate}{\widehat{\text{\method}}}
\newcommand{\gradSurrPois}{\gradFunc{\itrPois}^{(\itrSurrogate)}}
\newcommand{\gradSurrTarg}{\gradHatTarg^{(\itrSurrogate)}}
\newcommand{\surrogateHyper}{\lambda}
\newcommand{\convCoeff}{c^{(\itrSurrogate)}_{\itrPois}}
\newcommand{\featExtractSymbol}{\phi}
\newcommand{\featFunc}[2][\itrSurrogate]{{\featExtractSymbol^{(#1)}{\left( #2 \right)}}}
\newcommand{\targColor}{RubineRed}  %
\newcommand{\advColor}{violet}
\newcommand{\colorAuto}{{\color{\targColor}\texttt{auto}}}
\newcommand{\colorYTarg}{{\color{\targColor}\eqsmall{$\yTarg$}}}
\newcommand{\colorYAdv}{{\color{\advColor}\eqsmall{$\yAdv$}}}
\newcommand{\colorDog}{{\color{\advColor}\texttt{dog}}}
\newcommand{\colorMethodFunc}[2]{{\color{#2}#1}}
\newcommand{\colorMethodAdv}{\colorMethodFunc{\method}{\advColor}}
\newcommand{\colorMethodTarg}{\colorMethodFunc{\method}{\targColor}}
\newcommand{\colorLayerAdv}{\colorMethodFunc{\layer}{\advColor}}
\newcommand{\colorLayerTarg}{\colorMethodFunc{\layer}{\targColor}}
\newcommand{\eat}[1]{}
\newcommand{\ours}{~{\tiny(ours)}}
\newcommand{\ExpResBarChartHeight}{3.5cm}
\newcommand{\CifarPoisBarWidth}{3.5pt}
\newcommand{\NlpPoisBarWidth}{2.8pt}
\newcommand{\BackdoorSpeechBarWidth}{3.5pt}
\newcommand{\BarLineWidth}{1.0pt}
\newcommand{\DetectBarWidthVal}{3.6pt}
\newcommand{\StaticIdentBarWidth}{4.8pt}
\newcommand{\StaticIdentInterBarSpacing}{1.3pt}
\newcommand{\BarIdentMainHeight}{4.00cm}
\newcommand{\BarDetectMainHeight}{\BarIdentMainHeight}
\newcommand{\BarDetectMainWidth}{\columnwidth}
\newcommand{\IdentYLabel}{$\advTrain$~AUPRC}
\newcommand{\DetectYLabel}{Target AUPRC}
\newcommand{\BarLossNormOnlyWidth}{7.90cm}
\newcommand{\BarLossNormOnlyHeight}{4.20cm}
\newcommand{\LossDerivBarWidth}{4.8pt}
\pgfplotsset{compat=1.13,
  /pgfplots/ybar legend/.style={
    /pgfplots/legend image code/.code={%
       \draw[##1,/tikz/.cd,bar width=6pt,yshift=-0.2em,bar shift=0pt]
       plot coordinates {(0cm,0.6em)};
    },
  },
  minor grid style={gray!20!white},
}
\newcommand{\CifarIdentificationHeight}{3.8cm}
\newcommand{\NlpIdentificationHeight}{\CifarIdentificationHeight}
\newcommand{\QQHeight}{3.20cm}
\newcommand{\LayerTrendHeight}{3.80cm}
\newcommand{\BottomFiltHeight}{4.10cm}
\newcommand{\BottomFiltWidth}{8.2cm}
\newcommand{\InfFiltHeight}{3.85cm}
\newcommand{\JointOptSupplementAdvIdentHeight}{4.25cm}
\newcommand{\JointOptSupplementTargIdentHeight}{4.25cm}
\newcommand{\AdvIdentAblationHeight}{4.00cm}
\newcommand{\TargIdentAblationHeight}{4.00cm}
\newcommand{\CdfLineWidth}{0.7}
\newcommand{\BottomFiltColor}{\textcolor{Plum}{\textbf{purple}}}
\newcommand{\FullSetBarColor}{\textcolor{Gray}{\textbf{gray}}}  %
\newcommand{\numEle}{6}
\pgfplotsset{%
  cosin0 color/.style={
    fill=white,
    draw=blue,
    postaction={%
      pattern=north east lines,
      pattern color=blue
    }
  },
  layer0 color/.style={%
    fill=white,
    draw=red,
    postaction={%
      pattern=north west lines,
      pattern color=red
    }
  },
  cosin color/.style={%
    blue,
    fill=blue!20!white,
    postaction={%
      pattern=north east lines,
      pattern color=blue
    }
  },
  cosin clean color/.style={%
    blue,
    fill=blue!20!white,
  },
  cosin loss deriv color/.style={%
    blue,
    postaction={%
      pattern=horizontal lines,
      pattern color=blue
    }
  },
  layer color/.style={%
    red,
    fill=red!20!white,
    postaction={%
      pattern=north west lines,
      pattern color=red
    }
  },
  layer clean color/.style={%
    red,
    fill=red!20!white,
  },
  inf func/.style={%
    black,
    fill=gray,
  },
  inf func sim/.style={%
    black,
    fill=gray!25!white,
    postaction={%
      pattern=north east lines,
      pattern color=blue
    }
  },
  inf func loss deriv/.style={%
    black,
    postaction={%
      pattern=horizontal lines,
      pattern color=gray
    }
  },
  inf func layer/.style={%
    black,
    fill=gray!25!white,
    postaction={%
      pattern=north west lines,
      pattern color=BrickRed
    }
  },
  rep pt/.style={%
    violet!80!black,
    fill=violet,
  },
  rep pt sim/.style={%
    violet!70!black,
    fill=violet!25!white,
    postaction={%
      pattern=north east lines,
      pattern color=violet!80!black
    }
  },
  TracInCP color/.style={%
    brown!20!black,
    fill=brown!80!white,
    mark=none
  },
  deep knn color/.style={%
    fill=green!70!gray,
    draw=ForestGreen!50!black
  }
}
\pgfplotsset{
  TracIn Line/.style={
    draw=brown!60!black,
  },
  TracInCP Line/.style={%
    draw=brown!80!blue,
    dashed,
  },
  Random Line/.style={%
    draw=green!40!black,
  },
  CosIn Line/.style={%
    draw=blue,
  },%
  LayIn Line/.style={%
    draw=red,
  },
  Influence Functions Line/.style={%
    draw=violet,
  },
  Influence Functions Sim Line/.style={%
    draw=blue,
    densely dotted,
    line width=1pt,
  },
  Influence Functions Layer Line/.style={%
    draw=red,
    densely dotted,
    line width=1pt,
  },%
  Baseline Joint/.style={%
    fill=GreenBar!30!white,
    draw=GreenBar,
  },%
  Joint Opt/.style={%
    fill=RedBar!20!white,
    draw=RedBar,
  },%
  Bottom Filt/.style={%
    fill=Periwinkle,
    draw=Plum,
  },%
  Full Set/.style={%
    fill=gray!30!white,
    draw=gray,
  },
  Max KNN/.style={%
    lime!60!black,
    dashed,
  },
  Min KNN/.style={%
    teal,
  },
  Most Certain/.style={%
    violet,
    densely dotted,
  },
  Least Certain/.style={%
    orange,
    dashdotted,
  }
}
\definecolor{myblue}{RGB}{95,145,166}
\colorlet{GreenBar}{orange}
\newcommand{\GreenBarColor}{{\color{GreenBar}orange}}
\colorlet{RedBar}{teal}
\newcommand{\legendFontSize}{\small}
\newcommand{\plotFontSize}{\small}
\newcommand{\GroupLineFontSize}{\small}
\newcounter{groupcount}
\pgfplotsset{
    draw group line/.style n args={5}{
        after end axis/.append code={
            \setcounter{groupcount}{0}
            \pgfplotstableforeachcolumnelement{#1}\of\datatable\as\cell{%
                \def\temp{#2}
                \ifx\temp\cell
                    \ifnum\thegroupcount=0
                        \stepcounter{groupcount}
                        \pgfplotstablegetelem{\pgfplotstablerow}{[index]0}\of\datatable
                        \coordinate [yshift=#4] (startgroup) at (axis cs:\pgfplotsretval,0);
                    \else
                        \pgfplotstablegetelem{\pgfplotstablerow}{[index]0}\of\datatable
                        \coordinate [yshift=#4] (endgroup) at (axis cs:\pgfplotsretval,0);
                    \fi
                \else
                    \ifnum\thegroupcount=1
                        \setcounter{groupcount}{0}
                        \draw [
                            shorten >=-#5,
                            shorten <=-#5
                        ] (startgroup) -- node [anchor=north] {{\GroupLineFontSize #3}} (endgroup);
                    \fi
                \fi
            }
            \ifnum\thegroupcount=1
                        \setcounter{groupcount}{0}
                        \draw [
                            shorten >=-#5,
                            shorten <=-#5
                        ] (startgroup) -- node [anchor=north] {{\GroupLineFontSize #3}} (endgroup);
            \fi
        }
    },
    draw bottom line/.style={
        after end axis/.append code={
          \coordinate [yshift=-5.0ex] (startgroup) at (axis cs:3.5,0);
          \coordinate [yshift=-5.0ex] (endgroup) at (axis cs:4.5,0);
          \draw [] (startgroup) -- node [anchor=north] {{\GroupLineFontSize Poison}} (endgroup);
        }
    },
    draw empty group line/.style n args={5}{
        after end axis/.append code={
            \setcounter{groupcount}{0}
            \pgfplotstableforeachcolumnelement{#1}\of\datatable\as\cell{%
                \def\temp{#2}
                \ifx\temp\cell
                    \ifnum\thegroupcount=0
                        \stepcounter{groupcount}
                        \pgfplotstablegetelem{\pgfplotstablerow}{[index]0}\of\datatable
                        \coordinate [yshift=#4] (startgroup) at (axis cs:\pgfplotsretval,0);
                    \else
                        \pgfplotstablegetelem{\pgfplotstablerow}{[index]0}\of\datatable
                        \coordinate [yshift=#4] (endgroup) at (axis cs:\pgfplotsretval,0);
                    \fi
                \else
                    \ifnum\thegroupcount=1
                        \setcounter{groupcount}{0}
                        \draw [
                            white,
                            shorten >=-#5,
                            shorten <=-#5
                        ] (startgroup) -- node [anchor=north] {{\GroupLineFontSize \phantom{#3}}} (endgroup);
                    \fi
                \fi
            }
            \ifnum\thegroupcount=1
                        \setcounter{groupcount}{0}
                        \draw [
                            white,
                            shorten >=-#5,
                            shorten <=-#5
                        ] (startgroup) -- node [anchor=north] {{\GroupLineFontSize \phantom{#3}}} (endgroup);
            \fi
        }
    }
}
\newif\ifcomments
    \providecommand{\daniel}[2][]{{\protect\color{violet}{[Daniel:\textbf{#1} #2]}}}
    \providecommand{\daniel}[2][]{}
\newcommand{\trim}[1]{}
\newsavebox\CBox
\def\textBF#1{\sbox\CBox{#1}\resizebox{\wd\CBox}{\ht\CBox}{\textbf{#1}}}
\newcommand{\tikzCaption}[1]{\raisebox{0.5ex}{\begin{tikzpicture} #1 \end{tikzpicture}}}%
\newcommand{\nocontentsline}[3]{}
\newcommand{\tocless}[2]{\bgroup\let\addcontentsline=\nocontentsline#1{#2}\egroup}
\setlist{nolistsep,leftmargin=*}
\newcommand{\titletext}{Identifying a Training-Set Attack's Target \\ Using Renormalized Influence Estimation}
\newcommand{\titlehead}{Target Identification Using Renormalized Influence Estimation}
\newcommand{\pdfKeywords}{Target Identification; Backdoor Attack; Data Poisoning; Influence Estimation; TracIn; GAS; Influence Functions; Representer Point}
\thanks{%
  This is an extended version which includes additional details that did not fit in the peer-reviewed version.
  Not for redistribution.
  The definitive, peer-reviewed version is published in the proceedings of {CCS'22}~\citep{Hammoudeh:2022:GAS}.%
}
\begin{document}
\title[\titlehead]{\titletext}

\author{Zayd Hammoudeh \quad{} Daniel Lowd}
\affiliation{%
  \institution{University of Oregon}
  \streetaddress{1585 E 13th Ave.}
  \city{Eugene}
  \state{Oregon}
  \country{USA}
  \postcode{97403}
}
\email{{zayd,lowd}@cs.uoregon.edu}

\renewcommand{\shortauthors}{Hammoudeh \& Lowd}
 
\begin{abstract}
\keyword{Targeted training-set attacks} inject malicious instances into the training set to cause a trained model to mislabel one or more specific test instances.
This work proposes the task of \keyword{target identification}, which determines whether a specific test instance is the target of a training-set attack.
Target identification can be combined with \keyword{adversarial-instance identification} to find (and remove) the attack instances, mitigating the attack with minimal impact on other predictions.
Rather than focusing on a single attack method or data modality, we build on influence estimation, which quantifies each training instance's contribution to a model's prediction.
We show that existing influence estimators' poor practical performance often derives from their over-reliance on training instances and iterations with large losses.
Our \keyword{renormalized} influence estimators fix this weakness; they far outperform the original estimators at identifying influential groups of training examples in both adversarial and non-adversarial settings, even finding up to 100\% of adversarial training instances with no clean-data false positives.
Target identification then simplifies to detecting test instances with anomalous influence values.
We demonstrate our method's \revTwo{effectiveness} on backdoor and poisoning attacks across various data domains, including text, vision, and speech,
\revTwo{%
as well as against a gray-box, adaptive attacker that specifically optimizes the adversarial instances to evade our method.
}%
Our source code is available at
\mbox{\url{\sourceCodeUrl}}.%
 \end{abstract}

\begin{CCSXML}
<ccs2012>
<concept>
<concept_id>10002978</concept_id>
<concept_desc>Security and privacy</concept_desc>
<concept_significance>500</concept_significance>
</concept>
<concept>
<concept_id>10003752.10010070.10010071.10010261.10010276</concept_id>
<concept_desc>Theory of computation~Adversarial learning</concept_desc>
<concept_significance>500</concept_significance>
</concept>
</ccs2012>
\end{CCSXML}

\ccsdesc{Security and privacy; Theory of computation~Adversarial learning}
\keywords{\pdfKeywords}

\maketitle

\section{Introduction}\label{sec:Introduction}

\keyword{Targeted training-set attacks} manipulate an ML system's prediction on one or more \keyword{target} test instances by maliciously modifying the training data
\citep{Munoz:2017,Shafahi:2018,Aghakhani:2020,Salem:2020,Huang:2020,Geiping:2021,Wallace:2021}.
For example, a retailer may attempt to trick a spam filter into mislabeling all of a competitor's emails as spam~\citep{Shafahi:2018}.
Targeted attacks require very few corrupted instances~\cite{Wallace:2021}, and their effect on the test error is quite small, making them harder to detect~\citep{Chen:2017:Targeted}
than \keyword{availability training-set attacks}~\citep{Chen:2017:Targeted}, which \revised{are indiscriminate/untargeted and} seek to degrade an ML system's overall performance~\citep{Biggio:2012:Poisoning,Xiao:2015:IsFeature,Fowl:2021:Adversarial}.
\citepos{Kumar:2020} recent survey of business and governmental organizations found that training-set attacks were the top ML security concern, due to previous successful attacks~\citep{Lee:2016:Learning}.
\citeauthor{Kumar:2020} specifically identify defenses against such attacks as a significant, practical gap.
Existing training-set defenses~\citep{Gao:2019,Peri:2020,Xu:2021} change the training procedure to mitigate the impact of an attack but provide little information about the attacker's goals, methods, or identity. Learning more about the attacker is essential for anticipating their attacks~\citep{Pita:2009}, designing targeted defenses~\citep{Agarwal:2019}, and even building defenses outside the ML system, such as stopping spammers through their payment processors~\citep{Levchenko:2011}.

This paper's defense against training-set attacks focuses on the related goals of learning more about an attacker and stopping their attacks. We achieve this through a pair of related tasks:%
\begin{enumerate}%
  \item \keyword{target identification}: identifying the target of a training-set attack, which may provide insight into the attacker's goals and how to defend against them; and
  \item \keyword{adversarial-instance identification}: identifying the malicious instances that constitute the training-set attack.
\end{enumerate}%
\noindent%
We are not aware of any work that studies training-set attack target identification beyond very simple settings.

Our \textit{key insight} is the synergistic interplay between the two tasks above.
If attackers can add only a limited number of training instances (as is often the case) \citep{Chen:2017:Targeted,Shafahi:2018,Wallace:2021}, then these malicious instances must be highly influential to change target predictions. Thus, targets are those test instances with an unusual number of highly influential training examples. In contrast, non-targets tend to have many weak influences and few very strong ones. Thus, if we can (1)~determine which \revTwo{training} instances influence which predictions and (2)~detect \revised{anomalies} in this distribution, then we can jointly solve both tasks.
Unfortunately, determining which training instances are responsible for which model behaviors remains a challenge, especially for complex, black-box models such as neural networks.
\keyword{Influence estimators} \citep{Koh:2017:Understanding,Yeh:2018:Representer,Pruthi:2020,Feldman:2020,Chen:2021:Hydra,Brophy:2022:TreeInfluence} %
attempt to quantify how much each training example contributes to a particular prediction.
However, current influence analysis methods often perform poorly~\citep{Basu:2021:Influence,Zhang:2022:Rethinking}.
This paper identifies a weakness common to many influence estimators \citep{Koh:2017:Understanding,Yeh:2018:Representer,Pruthi:2020,Chen:2021:Hydra}: they induce a \keyword{low-loss penalty} that implicitly ranks confidently-predicted training instances as uninfluential. As a result, existing influence estimators can systematically overlook (groups of) highly influential, low-loss instances.
We remedy this via a simple \keyword{renormalization} that removes the low-loss penalty.
Our new \keyword{renormalized influence estimators} consistently outperform the originals in both adversarial and non-adversarial settings. The most effective of these, \keyword{gradient aggregated similarity} (\method{}), often detects 100\%~of malicious training instances with no clean-data false positives.

Our \underline{f}ramework for \underline{i}dentifying \underline{t}argets of training-set attacks, \fit{}, compares the distribution of influence values across test instances checking for \revised{anomalies}.  More concretely, \fit{} marks as potential targets those test instances with an unusual number of highly influential instances as explained above. Next, \fit{} mitigates the attack's effect by removing exceptionally influential training instances associated with the target(s). Since mitigation considers only targets, training instance outliers that are ``helpful'' to non-targets are unaffected.  This \keyword{target-driven mitigation} has a positive or neutral effect on clean data yet is highly effective on adversarial data where finding even a single target suffices to disable the attack on almost all other targets.

By relying on the concepts of influence estimation and not the properties of a particular attack or domain, \method{} and \fit{} are \keyword{attack agnostic}~\citep{Saha:2019:Attack}. They can apply equally well to different attack types, including \keyword{data poisoning} attacks, which target unperturbed test data, and \keyword{backdoor} attacks on test instances activating a specific trigger.  Our approach works across data domains from CNN image classifiers to speech recognition to even
text transformers.

In addition to learning more about the attack and attacker, \textit{target identification enables targeted mitigation}.
Certified training-set defenses~\citep{Steinhardt:2017,Levine:2021,Jia:2021,Weber:2021,Wang:2022:DeterministicAggregation,Hammoudeh:2022:CertifiedRegression} (which do not identify targets) implement countermeasures (e.g., smoothing~\revised{\citep{Wang:2020}}) that affect predictions on \textit{all} instances -- not just the very few targets. These methods can substantially degrade performance, in some cases causing up to~${10{\times}}$ more errors on clean data~\citep{Fowl:2021:Adversarial,Hammoudeh:2022:CertifiedRegression}.
A strength of deep neural networks is that they can ``memorize'' instances to learn rare cases from one or two examples~\citep{Feldman:2020}; certified training prevents this by limiting any single training instance's influence.

Our work's contributions are enumerated below. Note that additional experiments and the proof are in the supplemental materials.
\begin{enumerate}
  \setlength{\itemsep}{0pt}
  \item We identify a weakness common to all gradient-based influence estimators and provide a simple renormalization correction that addresses this weakness.
  \item Inspired by influence estimation, we propose \method{} -- a renormalized influence estimator that is highly adept at identifying influential groups of training instances. %
  \item Leveraging techniques from anomaly detection and \keyword{robust statistics}, we extend \method{} into a general \underline{f}ramework for \underline{i}dentifying \underline{t}argets of training-set attacks, \fit{}.
  \item We use \method{} in a target-driven data sanitizer that mitigates attacks while removing very few clean instances.
  \item We demonstrate the effectiveness and attack agnosticism of \method{} and \fit{} on a diverse set of attacks and data modalities, including speech recognition, vision, and text \revTwo{-- even against an adaptive attacker that attempts to evade our method}.
\end{enumerate}

In the remainder of the paper, we begin by establishing notation and reviewing prior work (Sections~\ref{sec:ProblemFormulation} and~\ref{sec:RelatedWork}).
Then in Section~\ref{sec:RenormInfluence}, we show how existing influence estimators are inadequate for identifying (groups of) highly influential training instances.
We also introduce our renormalization fix for influence estimation and describe our renormalized influence estimators.
Section~\ref{sec:Method} builds on these improved influence estimates to define a framework for identifying the targets of an attack and mitigating the attack's effect. We demonstrate the effectiveness of our methods in Section~\ref{sec:ExpRes}.
\newcommand{\methodparagraph}[1]{%
\textbf{#1}~}

\section{Problem Formulation}\label{sec:ProblemFormulation}

\methodparagraph{Notation}
${\X \in \domainX}$ denotes a \keyword{feature vector} and ${\y \in \domainY}$ a \keyword{label}. \keyword{Training set}, \eqsmall{${\fullTrain = \set{\zI}_{\trIdx = 1}^{\nTr}}$}, consists of $\nTr$~training example tuples ${\zI \defeq (\xI,\yI)}$.
Consider \keyword{model} $\func{\dec}{\domainX}{\domainActs}$ parameterized by~${\w}$, where \eqsmall{${\acts \defeq \decFunc{\X}{\w}}$} denotes the model's output (pre\=/softmax) \keyword{activations}. %
$\wZero$~denotes $\dec$'s~initial parameters, which may be randomly set and/or pre\=/trained.

For \keyword{loss function} $\func{\loss}{\domainActs \times \domainY}{\realnn}$, denote $\z$'s~empirical \keyword{risk} given~$\w$ as \eqsmall{${\lFunc{\z}{\w} \defeq \lActsFunc{\decFunc{\X}{\w}}{\y}}$}. %
Consider any iterative, first-order optimization algorithm (e.g.,~gradient descent, Adam~\citep{Kingma:2015}).
At each iteration ${\itr \in \set{1,\ldots,\nItr}}$, the optimizer updates parameters~$\wT$ from loss~$\loss$, previous parameters~$\wTOne$, and batch \eqsmall{${\batchT \subseteq \fullTrain}$} of size~${\batchSize}$.
Gradients are denoted \eqsmall{${\gradItr{\gradI} \defeq \gradW \lFunc{\zI}{\wT}}$}; the gradient's superscript ``($\itr$)'' is dropped when the iteration is clear from context.

Let \eqsmall{${\zHatTe \defeq (\xTe, \yHatTe)}$} be any \textit{a priori} unknown test instance, where $\yHatTe$~is the final model's predicted label for~$\xTe$.  Observe that $\yHatTe$ may \underline{not} be $\xTe$'s~\keyword{true label}.  Notation $\emptyHat$ (e.g.,~$\zHat$, $\gradHat$) denotes that the final \underline{\smash{predicted}} label \eqsmall{${\yHat = \decFunc{\X}{\wFin}}$} is used in place of $\X$'s~true label~$\y$.

\subsection*{Threat Model}
The attacker crafts an \keyword{adversarial set} of perturbed instances, \linebreak \eqsmall{${\advTrain \subset \fullTrain}$}.
Denote the \keyword{clean training set} \eqsmall{${\cleanTrain \defeq \fullTrain \setminus \advTrain}$}.  We only consider successful attacks, as defined below.

\methodparagraph{Attacker Objective \& Knowledge}
Let \eqsmall{${\targFeatSet \defeq \set{\X_{\targIdx}}_{\targIdx = 1}^{\numTarg}}$} be a set of target feature vectors with shared true label~\eqsmall{$\yTarg \in \domainY$}. The attacker crafts~\eqsmall{$\advTrain$} to induce the model to mislabel all of~\eqsmall{$\targFeatSet$} as \keyword{adversarial label}~\eqsmall{$\yAdv$}.
\eqsmall{${\targSet \defeq \setbuild{(\X_{\targIdx}, \yAdv)}{\X_{\targIdx} \in \targFeatSet}}$} denotes the \keyword{target set} and
\eqsmall{${\zHatTarg \defeq (\xTarg,\yAdv)}$} an arbitrary target instance.  To avoid detection, the attacked model's clean-data performance should be (essentially) unchanged.
\keyword{Data poisoning} attacks only perturb adversarial set~\eqsmall{$\advTrain$}. Target feature vectors are unperturbed/benign \citep{Biggio:2012:Poisoning,Munoz:2017,Wallace:2021,Jagielski:2021:Subpopulation}.  \keyword{Clean\=/label poisoning} leaves labels unchanged when crafting \eqsmall{$\advTrain$} from seed instances \citep{Zhu:2019}.
\keyword{Backdoor} attacks perturb the features of both~\eqsmall{$\advTrain$} and~\eqsmall{$\targFeatSet$} -- often with the same \keyword{adversarial trigger} (e.g.,~change a specific pixel to maximum value). Generally, these triggers can be inserted into any test example targeted by the adversary, making most backdoor attacks \keyword{multi\=/target} (\eqsmall{${\lvert{\targSet}\rvert > 1}$}) \citep{Tran:2018,Gu:2019,Lin:2020:Composite,Weber:2021}. \eqsmall{$\advTrain$}'s~labels may also be changed.

To ensure the strongest adversary, the attacker knows any pre-trained initial parameters. Where applicable, the attacker also knows the training hyperparameters and clean dataset~\eqsmall{$\cleanTrain$}.
\revised{%
  Like previous work~\citep{Zhu:2019,Weber:2021,Wallace:2021},
  the attacker does not know the training procedure's random seed, meaning the attack must be robust to randomness in batch ordering or parameter initialization.%
}%

\methodparagraph{Defender Objective \& Knowledge}
Let \eqsmall{$\targAnalysisSet$} denote the set of test instances the defender is concerned enough about to analyze as potential targets.%
\footnote{%
  Generally, there are far fewer potential targets~({$\targAnalysisSet$}) than possible test examples.
}
Our goals are to (1)~\keyword{identify} any attack targets in~\eqsmall{$\targAnalysisSet$}, and
(2)~mitigate the attack by removing the adversarial instances \eqsmall{$\advTrain$} associated with those target(s).
No assumptions are made about the modality/domain (e.g.,~text, vision) or adversarial perturbation.
We do not assume access to clean validation data.
\section{Related Work}\label{sec:RelatedWork}

We mitigate training-set attacks by building upon influence estimation to identify the target(s) and adversarial set.
This section first reviews existing defenses against training-set attacks and then formalizes training-set influence as defined in previous work.

\subsection{Defenses Against Training-Set Attacks}\label{sec:RelatedWork:Defenses}

\keyword{Certifiably-robust defenses} \cite{Steinhardt:2017,Wang:2020,Levine:2021,Jia:2021,Weber:2021,Wang:2022:DeterministicAggregation,Hammoudeh:2022:CertifiedRegression} provide %
guaranteed protection against specific training-set attacks under specific assumptions.
\keyword{Empirical defenses} \citep{Gao:2019,Peri:2020,Doan:2020,Udeshi:2019,Zhu:2020,Zhu:2021} derive from understandings and observations about the underlying mechanisms training-set attacks exploit to change a network's predictions.
In practice, empirical defenses generally significantly outperform certified approaches with fewer harmful side effects -- albeit without a guarantee \citep{Li:2020:Survey}.
These two defense categories are complementary and can be deployed together for better performance.
Since our defense is largely empirical, we focus on that defense category below.

We are not aware of any existing defense -- certified or empirical -- that provides target identification.  The most closely related task is determining whether a model is infected with a backdoor, which ignores poisoning and other training-set attacks~\citep{Soremekun:2020,Xu:2021}. Such methods make different assumptions than this work. \revised{For instance, some assume access to known-clean data~\citep{Liu:2018,Gao:2019,Wang:2019,Doan:2020,Zhu:2021} but may not have access to the training set.}  Many also make assumptions about the type of attack or the training methodology~\citep{Soremekun:2020}.

Another task similar to target identification is \keyword{adversarial trigger synthesis}, which attempts to reconstruct any backdoor attack pattern(s) a model learned \citep{Gao:2019,Udeshi:2019,VillarealVasquez:2020,Zhu:2021}.  These methods mitigate attacks by adding identified triggers to known-clean data so that retraining will cause catastrophic forgetting of the trigger.
\keyword{Data-sanitization} defenses mitigate attacks by removing adversarial set~\eqsmall{$\advTrain$} from~\eqsmall{$\fullTrain$}.
Existing data-sanitization defenses have shown promise \citep{Tran:2018,Chen:2019,Peri:2020}, but they all share a common pitfall concerning setting the data-removal threshold \citep{Koh:2018:Stronger,Li:2020:Survey}.
If this threshold is set too low, significant clean-data removal degrades overall clean-data performance.
A threshold set too high results in insufficient adversarial training data removal and the attack remaining successful.  Additional information (e.g.,~target identification) enables targeted tuning of this removal threshold.

Most existing certified and empirical defenses are not attack agnostic and assume specific data modalities (e.g.,~only vision \citep{Gao:2019,Udeshi:2019,VillarealVasquez:2020,Zhu:2021}), model architectures (e.g,~CNNs \citep{Kolouri:2019}), optimizers \citep{Hara:2019}, or training paradigms \citep{Soremekun:2020}.  Attack agnosticism is more challenging and more practically useful.
We achieve agnosticism by building upon existing methods that are general -- namely influence estimation, which is formalized next.

\begin{figure*}[t]
  \centering
\newcommand{\oursText}{~{\scriptsize(ours)}}
\newcommand{\nlbreak}{\\ \vspace{-2.5pt}}

\newcommand{\textsize}[1]{{\small #1}}
\newcommand{\cifarImg}[1]{\includegraphics[scale=0.68]{img/cifar_vs_mnist/60/#1.png}}
\newcommand{\vertSpacer}{\vspace{4pt}}
\newcommand{\AUPRC}[2]{#1~${\pm}$~#2}
\newcommand{\Header}[1]{
  \begin{center}
    \underline{\textbf{\textsize{#1}}}
  \end{center}
}
\newcommand{\ImagePlots}[1]{
  \centering
  \cifarImg{#1_01}
  \cifarImg{#1_02}
  \cifarImg{#1_03}
  \cifarImg{#1_04}
  \cifarImg{#1_05}
}
\newcommand{\CifarVsMnistResult}[3]{%
  \begin{minipage}{0.27\textwidth}
    \centering
    \textsize{#1}
  \end{minipage}
  \hfill
  \begin{minipage}{0.16\textwidth}
    \centering
    \textsize{#2}
  \end{minipage}
  \hfill
  \begin{minipage}{0.45\textwidth}
    \centering

    #3
  \end{minipage}
}

\begin{minipage}{0.09\textwidth}
  \begin{center}
  \textsize{\textbf{Test}\nlbreak\textbf{Example}}

    \vertSpacer
    \cifarImg{targ}

    \eqsmall{$\zHatTe$}
  \end{center}
\end{minipage}
\hfill \vrule width 0.1mm \hfill
\begin{minipage}{0.80\textwidth}
  \CifarVsMnistResult{\Header{Method}}{\Header{AUPRC}}{\Header{Top-5 Highest Ranked}}

  \vertSpacer
  \CifarVsMnistResult{Representer Point}{\AUPRC{0.030}{0.009}}{\ImagePlots{rp_base}}

  \vertSpacer
  \CifarVsMnistResult{Influence Functions}{\AUPRC{0.029}{0.018}}{\ImagePlots{if_base}}

  \vertSpacer
  \CifarVsMnistResult{\tracin}{\AUPRC{0.140}{0.098}}{\ImagePlots{tracin_base}}

  \vertSpacer
  \CifarVsMnistResult{\tracinCP}{\AUPRC{0.309}{0.260}}{\ImagePlots{tracincp_base}}

  \vertSpacer
  \CifarVsMnistResult{\textbf{Representer Point}\nlbreak{}\textbf{Renormalized\oursText}}{\textbf{\AUPRC{0.778}{0.144}}}{\ImagePlots{rp_fixed}}

  \vertSpacer
  \CifarVsMnistResult{\textbf{Influence Functions}\nlbreak{}\textbf{Renormalized\oursText}}{\textbf{\AUPRC{0.215}{0.191}}}{\ImagePlots{if_fixed}}

  \vertSpacer
  \CifarVsMnistResult{\textbf{\tracin{}}\nlbreak{}\textbf{Renormalized\oursText}}{\textbf{\AUPRC{0.617}{0.115}}}{\ImagePlots{tracin_fixed}}

  \vertSpacer
  \CifarVsMnistResult{\textbf{\method{} \oursText{}}\nlbreak\textbf{(\tracinCP{} Renormalized)}}{\textbf{\AUPRC{0.977}{0.001}}}{\ImagePlots{tracincp_fixed}}
\end{minipage}
   \caption{%
    \textit{Renormalized Influence}:
    CIFAR10 \& MNIST joint, binary classification for [\texttt{frog}] vs.\ [\texttt{airplane} \& MNIST~\texttt{0}] with \eqsmall{${\abs{\cleanTrain} = 10,000}$} \& \eqsmall{${\abs{\advTrain} = 150}$}.
    Existing influence estimators (upper half) consistently failed to rank \eqsmall{$\advTrain$}'s~MNIST training instances as highly influential on MNIST test instances.
  In contrast, all of our renormalized influence estimators (Section~\ref{sec:RenormInfluence:Measures}) outperformed their unnormalized version -- with AUPRC improving up to~$25\times$.
    Results averaged across 30~trials.
  }
  \label{fig:RenormInfluence:CifarVsMnist:Comparison}
\end{figure*}
 
\subsection{Training-Set Influence Estimation}\label{sec:RelatedWork:Influence}

In every successful attack, the inserted training instances change a model's prediction for specific input(s). If the attacker can only add a limited number of instances (e.g.,~1\% of~\eqsmall{$\fullTrain$}), these inserted instances must be highly influential to achieve the attacker's objective.
\textit{Influence estimation}'s goal is to determine which training instances are most responsible for a model's prediction for a particular input.
Influence is often viewed as a counterfactual: which instance (or group of instances) induces the biggest change when removed from the training data?
While there are multiple definitions of influence, as detailed below,
influence estimation methods can be broadly viewed as quantifying the relative responsibility of each training instance \eqsmall{${\zI \in \fullTrain}$} on some test prediction~\eqsmall{$\decFunc{\xTe}{\wFin}$}.

\keyword{Static influence estimators} consider only the final model parameters~$\wFin$.
For example,
\citepos{Koh:2017:Understanding} seminal work defines influence, \eqsmall{$\infFunc{\infIF}{\zI}{\zHatTe}$}, as the change in risk~\eqsmall{$\lFunc{\zHatTe}{\wFin}$} if~\eqsmall{${\zI \notin \fullTrain}$}, i.e.,~the leave-one-out~(LOO) change in test loss~\citep{Cook:1982:InfluenceRegression}.
By assuming strict convexity and stationarity,
\citeauthor{Koh:2017:Understanding}'s \keyword{influence functions} estimator approximates the LOO~influence as
{%
  \small%
  \begin{equation}\label{eq:RelatedWork:InfFunc}
    \infFunc{\infIF}{\zI}{\zHatTe} \approx
                        \ifSign{}
                        \frac{1}{\nTr}
                        \gradLoss{\zHatTe}{\wFin}\transpose
                        \invHess
                        \gradLoss{\zI}{\wFin} \text{,}
  \end{equation}
}%
\noindent%
with
\eqsmall{$\invHess$} the inverse of risk Hessian \eqsmall{${\hess \defeq \frac{1}{\nTr} \sum_{\zI \in \fullTrain} \gradWSq \lFunc{\zI}{\wFin}}$}.
\citet{Yeh:2018:Representer}'s \textit{representer point} static influence estimator exclusively considers the model's final, linear classification layer.  All other model parameters are treated as a fixed feature extractor. %
\revised{%
Given final parameters~$\wFin$, let $\fRP{\trIdx}$~denote $\X_{\trIdx}$'s~penultimate feature representation (i.e.,~the \textit{input} to the linear classification layer).
}
Then the representer point influence of \eqsmall{${\zI \in \fullTrain}$} on \eqsmall{$\zHatTe$} is
{%
  \small%
  \begin{equation}\label{eq:RelatedWork:RepPt}
    \infFunc{\infRepPt}{\zI}{\zHatTe} \defeq
         -\frac{1}{2 \wdecay \nTr}
         \bigg( \rpDeriv \bigg)
         \dotprodbig{\fRP{\trIdx}}{\fRP{\teStr}}
         \text{,}
  \end{equation}%
}%
\noindent
where ${\wdecay > 0}$~is the weight decay~(\eqsmall{$L_2$}) regularizer and \eqsmall{$\dotprod{\cdot}{\cdot}$} denotes vector dot product.
\revised{%
Recall that $\acts$~is the \textit{output} of the model's linear classification layer, specifically here \eqsmall{${\acts = \decFunc{\X_{\trIdx}}{\wFin}}$}.
Scalar \eqsmall{$\rpDeriv$} is then the partial derivative of risk~\eqsmall{$\riskSym$} w.r.t.\ $\acts$'s \eqsmall{$\yI$}\=/{th}~dimension.  %
}

\keyword{Dynamic influence estimators} measure influence based on how losses change during training.  More formally, influence is quantified according to how batches~\eqsmall{${\batchOne,\ldots,\batchFin}$} affect model parameters \eqsmall{${\wZero,\ldots,\wFin}$} and by consequence risks~\eqsmall{${\lFunc{\cdot}{\wZero},\ldots,\lFunc{\cdot}{\wFin}}$}.
For example, \citepos{Pruthi:2020} \keyword{\tracin{}} estimates influence by ``tracing'' gradient descent -- aggregating changes in \eqsmall{$\zHatTe$}'s test loss each time training instance~\eqsmall{$\zI$}'s gradient updates parameters~\eqsmall{$\wT$}.  For stochastic gradient descent (batch size ${\batchSize = 1}$), \eqsmall{$\zI$}'s \tracin{} influence on~\eqsmall{$\zHatTe$} is
{%
  \small%
  \begin{equation}\label{eq:RelatedWork:TracInInfluence}
    \infTracInFunc \defeq \sum_{\itr = 1}^{\nItr}
                                \ind{\zI \in \batchT}
                                \Big( \lFunc{\zHatTe}{\wTOne} - \lFunc{\zHatTe}{\wT} \Big)
                    \text{,}
  \end{equation}%
}%
\noindent%
where $\ind{\indVar}$~is the indicator function s.t.\ ${\ind{\indVar} = 1}$ if predicate~$\indVar$ is true and~0 otherwise.
\citeauthor{Pruthi:2020} approximate Eq.~\eqref{eq:RelatedWork:TracInInfluence} as,
{%
  \small%
  \begin{equation}\label{eq:RelatedWork:TracIn}
    \infTracInFunc \approx \sum_{\zI \in \batchT}
                                \frac{\lrT}{\batchSize}
                                \dotprod{\vphantom{\Big(\Big)}\gradLoss{\zI}{\wTOne}}
                                        {\gradLoss{\zHatTe}{\wTOne}}
          \text{,}
  \end{equation}%
}%
\noindent%
where $\lrT$~is iteration~$\itr$'s learning rate.%

\begin{algorithm}[t]
    {%
        \centering
        \AlgFontSize
        \caption{\tracin{}, \tracinCP{}, \& \method{} training phase}\label{alg:TrainPhase}
        \begin{algorithmic}[1]
  \algSetStretch%
  \REQUIRE Training set~$\fullTrain$, iteration subset~$\subsetItr$, iteration count~$\nItr$, learning rates~${\lr_1,\ldots,\lr_{\nItr}}$, and initial parameters~$\wZero$

  \ENSURE Training parameters~$\trainParams$

  \STATE $\trainParams \gets \emptyset$
  \FOR{$\itr \gets 1 \textbf{ to } \nItr$}
    \IF{$\itr \in \subsetItr$}
      \STATE $\trainParams \gets \trainParams \cup \set{(\lrT,\wTOne)}$
    \ENDIF
    \STATE $\batchT \sim \fullTrain$
    \STATE $\wT \gets \textsc{Update}(\lrT, \wTOne,\batchT)$
  \ENDFOR
  \STATE \RETURN $\trainParams$
\end{algorithmic}

    }
\end{algorithm}

Alg.~\ref{alg:TrainPhase}
details the minimal changes made to model training to support \tracin{} where \eqsmall{${\subsetItr \subset \set{1,\ldots,\nItr}}$} is a preselected \keyword{training iteration subset} and \eqsmall{${\trainParams \defeq \setbuild{(\lrT,\wT[\itr - 1])}{\itr \in \subsetItr}}$} contains the \keyword{serialized training parameters}.
Alg.~\ref{alg:TracIn}\footnote{Due to space, Alg.~\ref{alg:TrainPhase} appears in the supplement.}
outlines \tracin{}'s influence estimation procedure for \textit{a~priori} unknown test instance~\eqsmall{$\zHatTe$}. Influence vector~$\infVec$ (${\abs{\infVec} = \nTr}$) contains the \tracin{}~influence estimates for each \eqsmall{${\zI \in \fullTrain}$}. %
In practice, ${\abs{\subsetItr} \ll \nItr}$, and $\subsetItr$~is evenly-spaced in $\set{1,\ldots,\nItr}$, meaning \tracin{} effectively treats multiple batches like a single model update.

\citeauthor{Pruthi:2020} also propose \keyword{\tracin{} Checkpoint} (\tracinCP{}) -- a more heuristic version of \tracin{} that considers \textit{all} training examples at each checkpoint in~${\subsetItr}$ -- not just those instances in the intervening batches (see Alg.~\ref{alg:CosIn}).\footnote{Algorithm~\ref{alg:CosIn} combines two different methods \tracinCP{} as well as \method{} -- our renormalized version of \tracinCP{} discussed in Sec.~\ref{sec:RenormInfluence:Measures}.}  Formally,
{%
  \small%
  \begin{equation}\label{eq:RelatedWork:TracInCP}
    \infTracInCPFunc \defeq
                        \sum_{\itr \in \subsetItr}%
                        \frac{\lrT}{\batchSize}%
                        \hspace{2.5pt}%
                        \dotprod%
                        {\vphantom{\Big(\Big)}%
                         \gradW \lFunc{\zI}{\wTOne}}%
                        {\gradW \lFunc{\zHatTe}{\wTOne}}%
    \text{.}
  \end{equation}%
}%
\noindent%
\tracinCP{} is more computationally expensive than \tracin{} -- with the slowdown linear w.r.t.\ the number of checkpoints per epoch.

A major advantage of \tracin{} and \tracinCP{} over other estimators (e.g.,~influence functions) is that their only hyperparameter is iteration set~$\subsetItr$, which we tuned based only on compute availability.

\newcommand{\InfVsSimParagraph}[1]{%
\vspace{2pt}%
\textit{#1}: }

\section{Why Influence Estimation Often Fails and How to Fix It}\label{sec:RenormInfluence}

Before addressing target identification, we first consider the related task of adversarial-instance identification. In the simplest case, if the attack's target is known, then the malicious instances should be among the most influential instances for that target instance. In other words, \emph{adversarial-instance identification reduces to influence estimation}.
However,
Sec.~\ref{sec:RelatedWork:Influence}'s influence estimators share a common weakness that makes them poorly suited for this task: they all consistently rank confidently-predicted training instances as uninfluential. We illustrate this behavior below using a \revised{toy} experiment. We then explain this weakness's cause and propose a simple fix that addresses this limitation on adversarial and non-adversarial data, for all preceding estimators.
Our fix is needed to successfully identify adversarial set~\eqsmall{$\advTrain$} and, as detailed in Sec.~\ref{sec:Method}, attack targets.

\subsection{A Simple Experiment}\label{sec:RenormInfluence:SimpleExperiment}
Consider binary classification where clean set~\eqsmall{$\cleanTrain$} is all \texttt{frog} and \texttt{airplane} training images in CIFAR10 (\eqsmall{${\abs{\cleanTrain} = 10,000}$}). To simulate a naive backdoor attack,
adversarial set~\eqsmall{$\advTrain$} is 150~randomly selected MNIST \texttt{0}~images labeled as \texttt{airplane}.
\revised{%
  Clean data's overall influence can be estimated indirectly by training only on~\eqsmall{$\cleanTrain$} and observing the target set's misclassification rate~\citep{Feldman:2020}. This experiment used class pair \texttt{frog} and \texttt{airplane} because amongst the \eqsmall{$\binom{10}{2}$}~CIFAR10 class pairs, \texttt{frog} vs.\ \texttt{airplane}'s average MNIST \textit{test} misclassification rate was closest to random (47.5\% vs.\ 50\%~ideal).%
}
In contrast, when training on \eqsmall{${\fullTrain \defeq } {\advTrain \sqcup \cleanTrain}$}, MNIST \texttt{0}~test instances were always classified as \texttt{airplane}, meaning \eqsmall{$\advTrain$}~is overwhelmingly influential on MNIST predictions.
\revised{%
  MNIST is used instead of other CIFAR10 classes because the large (and simple~\citep{Shah:2020:SimplicityBias}) difference between the data distributions leads to a strong signal that can be consistently learned from relatively few examples -- much like backdoor or poisoning attacks~\citep{Yu:2021:PoisonShortcuts}.%
}%

We use this simple setup to evaluate different influence estimation methods.
We trained 30~randomly-initialized state-of-the-art ResNet9 networks, and on each network,
we performed influence estimation for a random \underline{MNIST \texttt{0}~test instance} to determine how well each estimator identified adversarial set~\eqsmall{$\advTrain$} provided a known target.\footnote{See supplemental Section~\ref{sec:App:ExpSetup} for the complete experimental setup details.}
Given the large imbalance between the amount of clean and ``adversarial'' data, i.e.,~\eqsmall{${\abs{\advTrain} \ll \abs{\cleanTrain}}$}, performance is measured using area under the precision-recall curve (AUPRC), which quantifies how well \eqsmall{$\advTrain$}'s~influence ranks relative to~\eqsmall{$\cleanTrain$}.  Precision-recall curves are preferred for highly-skewed classification tasks since they provide more insight into the false-positive rate \citep{Davis:2006}.

Figure~\ref{fig:RenormInfluence:CifarVsMnist:Comparison}'s upper half shows how well each influence estimator in Section~\ref{sec:RelatedWork:Influence} identifies~\eqsmall{$\advTrain$}, both quantitatively and qualitatively.  Dynamic estimators significantly outperformed their static counterparts, with \tracinCP{} the overall top performer.  However, no influence estimator consistently ranked MNIST instances (i.e.,~\eqsmall{$\advTrain$}) in the top\=/5 most influential, with influence functions marking instances from the other class (\texttt{frog}) as most influential.  Influence estimation's poor performance here is particularly noteworthy as the task was designed to be unrealistically easy.

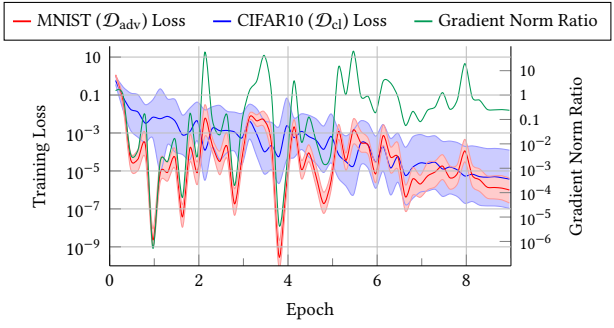
\begin{figure}[t]
  \centering
\newcommand{\legendSpacer}{\hspace*{8pt}}
\begin{tikzpicture}
  \begin{axis}[%
      hide axis,  %
      xmin=0,  %
      xmax=1,
      ymin=0,
      ymax=1,
      scale only axis,width=1mm, %
      legend cell align={left},              %
      legend style={font=\footnotesize},
      legend columns=3,
      no markers,
    ]
    \addplot[red, xscale=0.4, line width=0.7] coordinates {(0,0)};
    \label{leg:CifarVMnist:MNIST}
    \addlegendentry{MNIST ($\advTrain$) Loss\legendSpacer}

    \addplot[blue, xscale=0.4, line width=0.7] coordinates {(0,0)};%
    \label{leg:CifarVMnist:CIFAR}
    \addlegendentry{CIFAR10 ($\cleanTrain$) Loss\legendSpacer}

    \addplot[ForestGreen, xscale=0.4, line width=0.7] coordinates {(0,0)};%
    \label{leg:CifarVMnist:GradNormRatio}
    \addlegendentry{Gradient Norm Ratio}
  \end{axis}
\end{tikzpicture}

  \newcommand{\CvmFontSize}{\footnotesize}
  \pgfplotstableread[col sep=comma]{plots/data/cifar_vs_mnist_loss_trend.csv}\datatable%
  \begin{tikzpicture}
      \pgfplotsset{
          width={2.1in},
          height={1.15in},
          scale only axis,
          xmin=0,
          xmax=9,
          xtick distance={2},
          minor x tick num={1},
          xlabel={\CvmFontSize Epoch},
          x tick label style={font=\CvmFontSize},
          axis on top=true,  %
      }

      \begin{semilogyaxis}
          [
              axis lines*=left,
              xmajorgrids,
              ymajorgrids,  %
              ylabel={\CvmFontSize Training Loss},%
              y tick label style={font=\CvmFontSize},
              ytick distance=100,
              yticklabels={,${10^{-9}}$, ${10^{-7}}$, ${10^{-5}}$, ${10^{-3}}$, 0.1, 10},
              ymin=1E-10,%
              ymax=37,%
          ]
          \addplot[blue, smooth] table[x index=0, y index=2] {\datatable};
          \addplot[name path=Cifar25p, blue!40, smooth] table[x index=0, y index=1] {\datatable};  %
          \addplot[name path=Cifar75p, blue!40, smooth] table[x index=0, y index=3] {\datatable};  %
          \addplot[blue!20] fill between[of=Cifar25p and Cifar75p];
          \addplot[red, smooth] table[x index=0, y index=5] {\datatable};
          \addplot[name path=Mnist25p, red!40, smooth] table[x index=0, y index=4] {\datatable};  %
          \addplot[name path=Mnist75p, red!40, smooth] table[x index=0, y index=6] {\datatable};  %
          \addplot[red!20] fill between[of=Mnist25p and Mnist75p];
      \end{semilogyaxis}

      \begin{semilogyaxis}
          [
              axis y line*=right,
              axis x line=none,
              ylabel={\CvmFontSize Gradient Norm Ratio},%
              y tick label style={font=\scriptsize},
              ymin=1E-7,%
              ymax=1E2,%
              ytick = {1, 10, 0.1, 0.01,0.001,10^-4,10^-5, 10^-6},
              yticklabels={${1}$, ${10}$, ${0.1}$, ${10^{-2}}$, ${10^{-3}}$, ${10^{-4}}$, ${10^{-5}}$, ${10^{-6}}$,},
          ]
          \addplot[ForestGreen, line width=0.4, smooth] table[x index=0, y index=7] {\datatable};
      \end{semilogyaxis}
  \end{tikzpicture}
   \caption{%
    \textit{CIFAR10 \& MNIST Intra-training Loss Tracking}:
    \eqsmall{$\advTrain$}'s (\tikzCaption{\ref{leg:CifarVMnist:MNIST}}) \& \eqsmall{$\cleanTrain$}'s (\tikzCaption{\ref{leg:CifarVMnist:CIFAR}}) median cross-entropy losses~($\loss$) at each training checkpoint for binary classification -- \texttt{frog} vs.\ \texttt{airplane}~\&~MNIST~\texttt{0}.
    The shaded regions correspond to each training set loss's interquartile range.
    MNIST's training losses are generally several orders of magnitude smaller than CIFAR10's losses.
    Gradient norm ratio~(\tikzCaption{\ref{leg:CifarVMnist:GradNormRatio}}) shows the tight coupling of loss \& training gradient magnitude.%
  }%
  \label{fig:RenormInfluence:CifarVsMnist:Trend}
\end{figure}

\subsection{Why Influence Estimation Performs Poorly}\label{sec:RenormInfluence:WhyPerformsPoorly}
Intra-training dynamics illuminate the primary cause of influence estimation's poor performance in our toy experiment.
Fig.~\ref{fig:RenormInfluence:CifarVsMnist:Trend} visualizes the median training loss of \eqsmall{$\advTrain$} and \eqsmall{$\cleanTrain$} at each training checkpoint.
Also shown is the \keyword{gradient norm ratio}, which compares the median gradient magnitude of the adversarial and clean sets at each iteration, or formally
{%
  \small%
  \begin{equation}\label{eq:LowLoss:GradientNormRatio}
    \text{GNR}_{\itr} \defeq
    \frac{%
           \median \setbuild{\norm{\lFunc{\z}{\wT}}}%
                            {\z \in \advTrain}%
         }%
         {%
           \median \setbuild{\norm{\lFunc{\z}{\wT}}}%
                            {\z \in \cleanTrain}%
         }%
    \text{.}
  \end{equation}
}%
The gradient norm ratio closely tracks both training sets' loss values.
Both during and at the end of training, \eqsmall{$\advTrain$}'s \textit{median loss is significantly smaller} than many instances in~\eqsmall{$\cleanTrain$} -- often by several orders of magnitude.

\paragraph{The Low-Loss Penalty}
Observe that all influence methods in Sec.~\ref{sec:RelatedWork:Influence}'s scale their influence estimates by \eqsmall{$\frac{\partial \lActsFunc{\acts}{\y}}{\partial \acts}$} either directly (representer point~\eqref{eq:RelatedWork:RepPt}) or indirectly via the \keyword{chain rule} (influence functions~\eqref{eq:RelatedWork:InfFunc}, \tracin{}~\eqref{eq:RelatedWork:TracIn}, and \tracinCP{}~\eqref{eq:RelatedWork:TracInCP}) as
{
  \small
  \begin{equation}\label{eq:LowLoss:ChainRule}
    \gradLoss{\z}{\w} %
        \defeq %
                \frac{\partial \lActsFunc{\dec(\X)}{\y}} %
                     {\partial \w} %
        = \frac{\partial \lActsFunc{\acts}{\y}} %
               {\partial \acts} %
          \cdot %
          \frac{\partial \acts}%
               {\partial \w} %
          \text{.}
  \end{equation}
}%
\noindent%
Therefore, gradient-based influence estimators \textit{implicitly penalizes all training \underline{instances}~$\itr$ with low training loss}, including \eqsmall{$\advTrain$} (MNIST~0) in our toy experiment above.

Theorem~\ref{thm:LowLoss:LossVsNorm} summarizes this relationship when there is a single output activation (${\abs{\acts} = 1}$), e.g.,~binary classification and univariate regression.  In short, when Theorem~\ref{thm:LowLoss:LossVsNorm}'s conditions are met, loss induces a perfect ordering on the corresponding norm.
\begin{theorem}\label{thm:LowLoss:LossVsNorm}
  Let loss function~$\func{\lossScalar}{\real}{\realnn}$ be twice-differentiable and strictly convex as well as either even\footnote{%
        ``Even'' denotes that the function satisfies ${\forall_{\acts}~\lScalarFunc{\acts} = \lScalarFunc{-\acts}}$.
  }
  or monotonically decreasing.
  Then, it holds that
  {
    \begin{equation}\label{eq:RenormInfluence:LowLoss}
      \lScalarFunc{\acts} < \lScalarFunc{\acts'} \implies \normActs{\gradScalarLoss{\acts}} < \normActs{\gradScalarLoss{\acts'}} \text{.}
    \end{equation}
  }
\end{theorem}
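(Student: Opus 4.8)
The plan is to reduce the statement to two elementary monotonicity facts. Since there is a single output activation ($\abs{\acts}=1$), the gradient $\gradScalarLoss{\acts}$ is just the ordinary derivative of $\lossScalar$, and $\normActs{\gradScalarLoss{\acts}}$ is its absolute value; so the claim is exactly $\lScalarFunc{\acts}<\lScalarFunc{\acts'}\implies\abs{\gradScalarLoss{\acts}}<\abs{\gradScalarLoss{\acts'}}$. The one structural input I would use repeatedly is that a differentiable strictly convex function on $\real$ has a \emph{strictly increasing} derivative (twice-differentiability supplies continuity of that derivative, though it is not otherwise essential): if $\gradScalarLoss{\cdot}$ agreed at two points it would be constant in between, making $\lossScalar$ affine there and contradicting strict convexity. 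With that in hand I treat the two hypothesized cases separately.

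\emph{Monotonically decreasing case.} If $\lossScalar$ is (weakly) decreasing then $\gradScalarLoss{\acts}\leq 0$ everywhere. I first upgrade this to a strict inequality: were $\gradScalarLoss{\acts_0}=0$ at some point, strict monotonicity of $\gradScalarLoss{\cdot}$ would force $\gradScalarLoss{\acts}>0$ for all $\acts>\acts_0$, contradicting monotonicity; hence $\gradScalarLoss{\acts}<0$ for all $\acts$, so $\lossScalar$ is in fact \emph{strictly} decreasing, and $\abs{\gradScalarLoss{\acts}}=-\gradScalarLoss{\acts}$ is \emph{strictly} decreasing as the negative of a strictly increasing function. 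Both $\lScalarFunc{\cdot}$ and $\abs{\gradScalarLoss{\cdot}}$ being strictly decreasing, we get $\lScalarFunc{\acts}<\lScalarFunc{\acts'}\iff\acts>\acts'\iff\abs{\gradScalarLoss{\acts}}<\abs{\gradScalarLoss{\acts'}}$, which is stronger than required.

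\emph{Even case.} If $\lossScalar$ is even then $\gradScalarLoss{\cdot}$ is odd, so $\gradScalarLoss{0}=0$; combined with strict monotonicity this pins the sign: $\gradScalarLoss{\acts}<0$ for $\acts<0$ and $\gradScalarLoss{\acts}>0$ for $\acts>0$. Consequently $\abs{\gradScalarLoss{\acts}}$ equals $\gradScalarLoss{\cdot}$ evaluated at $\abs{\acts}$, and $\gradScalarLoss{\cdot}$ restricted to $[0,\infty)$ is nonnegative and strictly increasing, so $\abs{\gradScalarLoss{\acts}}$ is a strictly increasing function of $\abs{\acts}$. Likewise $\gradScalarLoss{\acts}>0$ for $\acts>0$ makes $\lossScalar$ strictly increasing on $[0,\infty)$, and evenness gives $\lScalarFunc{\acts}=\lScalarFunc{\abs{\acts}}$, so $\lScalarFunc{\acts}$ is also a strictly increasing function of $\abs{\acts}$. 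Chaining these two strictly monotone relations through $\abs{\acts}$ yields $\lScalarFunc{\acts}<\lScalarFunc{\acts'}\iff\abs{\acts}<\abs{\acts'}\iff\abs{\gradScalarLoss{\acts}}<\abs{\gradScalarLoss{\acts'}}$.

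\textbf{Expected main obstacle.} There is essentially no hard computation; the care is concentrated in the two ``upgrade to strict'' steps — that strict convexity makes $\gradScalarLoss{\cdot}$ \emph{strictly} increasing, and that in the monotone-decreasing case $\gradScalarLoss{\cdot}$ is \emph{strictly} negative (hence $\lossScalar$ strictly, not merely weakly, decreasing) — since without them one recovers only a non-strict conclusion. A secondary bookkeeping subtlety is tracking the sign of $\gradScalarLoss{\cdot}$ relative to the location of its unique zero in the even case; once that is settled, the implication (indeed, the two-sided equivalence) follows immediately by composing monotonicities.
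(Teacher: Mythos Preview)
Your proposal is correct and follows essentially the same two-case strategy as the paper: use strict convexity to make the derivative strictly monotone, then chain monotonicity of $\lossScalar$ and of $\abs{\gradScalarLoss{\cdot}}$ through the sign information coming from the monotone/even hypothesis. The only cosmetic differences are that in the even case you work on $[0,\infty)$ directly rather than reducing to the monotone-decreasing case on $(-\infty,0]$, and you take a bit more care to upgrade weak to strict inequalities (yielding the two-sided equivalence), but the substance is the same.
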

Loss functions satisfying Theorem~\ref{thm:LowLoss:LossVsNorm}'s conditions include binary cross-entropy (i.e.,~logistic) and quadratic losses.  Theorem~\ref{thm:LowLoss:LossVsNorm} generally applies to multiclass losses, but there are cases where the ordering is not perfect.
Although Theorem~\ref{thm:LowLoss:LossVsNorm} primarily relates to training instance gradients and losses, the theorem applies to test examples as well since dynamic estimators also apply a low-loss penalty to any \underline{iteration} where test instance~$\zHatTe$ has low loss.

The preceding should \textit{not} be interpreted to imply that large gradient magnitudes are unimportant.  Quite the opposite, large gradients have large influences on the model. However, the approximations necessary to make influence estimation tractable go too far by often focusing almost exclusively on training loss -- and by extension gradient magnitude -- leading these estimators to systematically overlook training instances with smaller gradients.
\revised{%
  This overemphasis of instances with large losses and gradient magnitudes can also be viewed as a bias towards instances that are globally influential --- affecting many examples' predictions --- over those that are locally influential -- mainly affecting a small number of targets~\citep{Barshan:2020:RelatIF}.
}

\InfVsSimParagraph{Static Influence \& the Low Loss Penalty}
Fig.~\ref{fig:RenormInfluence:CifarVsMnist:Comparison}'s static estimators (representer point \& influence functions) significantly underperformed dynamic estimators (\tracin{} \& \tracinCP{}) by up to an order of magnitude.  Static estimators only consider final model parameters~$\wFin$, meaning they may only see the low-loss case.  In contrast, dynamic estimators consider all of training, in particular iterations where \eqsmall{$\advTrain$}'s~loss exceeds that of~\eqsmall{$\cleanTrain$}.
This allows dynamic estimators to outperform static methods, albeit still poorly.

\InfVsSimParagraph{Training Randomness \& the Low-Loss Penalty}
\tracinCP{} significantly outperformed \tracin{} in Fig.~\ref{fig:RenormInfluence:CifarVsMnist:Comparison} despite the \tracin{} being more theoretically sound.  As intuition why, imagine the training set contains two identical copies of some instance.  In expectation, these duplicates have equivalent influence on any test instance. However, \tracin{} assigns identical training examples different influence estimates based on their batch assignments; this difference can potentially be very large depending on training dynamics.

Fig.~\ref{fig:RenormInfluence:CifarVsMnist:Trend} exhibits this behavior where training loss fluctuates considerably intra\=/epoch.  For example, \eqsmall{$\advTrain$}'s~median loss varies by seven orders of magnitude across the third epoch. \tracin{}'s low-loss penalty attributes much more influence to~\eqsmall{$\advTrain$} instances early in that epoch compared to those later despite all MNIST instances having similar influence.
By considering all examples at each checkpoint, \tracinCP{} removes batch randomization's direct effect on influence estimation,\footnote{Batch randomization still indirectly affects \tracinCP{} and \method{} (Sec.~\ref{sec:RenormInfluence:Measures}) through the model parameters. This effect could be mitigated by training multiple models and averaging the (renormalized) influence, but that is beyond the scope of this work.} meaning \tracinCP{} simulates \keyword{influence expectation} without needing to train and analyze multiple models.

\subsection{Renormalizing Influence Estimation}\label{sec:RenormInfluence:Measures}

Our CIFAR10 \& MNIST joint classification experiment above demonstrates that a training example having low loss does \textit{not} imply that it and related instances are uninfluential.
Most importantly in the context of adversarial attacks, highly-related groups of (adversarial) training instances may collectively cause those group members' to have very low training losses -- so-called \keyword{group effects}.  Generally, targeted attacks succeed by leveraging the group effect of adversarial set~\eqsmall{$\advTrain$} on the target(s).
We address these group effects via \keyword{renormalization}, which is defined below.

\begin{definition}
  For influence estimator~$\influence$, the \keyword{renormalized influence},~$\renormInf$, replaces each gradient~${\gradLetter}$ in~$\influence$ by its corresponding unit vector \eqsmall{$\frac{\gradLetter}{\norm{\gradLetter}}$}.
\end{definition}

We refer to this computation as {renormalization} since \underline{re}scaling gradients \underline{re}moves the low-loss penalty.
Renormalization places all training instances on equal footing and ensures that gradient and/or feature similarity is prioritized -- not loss.

\revised{%
Renormalization is related to the relative influence (RelatIF) method introduced by \citet{Barshan:2020:RelatIF}, since both methods use a function of the gradient to downweight training instances with high losses. However, RelatIF only applies to influence functions and requires computing expensive Hessian-vector products, while renormalization is more efficient and can be applied to many influence estimators, as we show below. See suppl.\ Section~\ref{sec:App:MoreExps:LossOnly} for additional discussion of alternative renormalization schemes.
}

Renormalized versions of Section~\ref{sec:RelatedWork:Influence}'s static influence estimators are below.  \keyword{Renormalized influence functions} in Eq.~\eqref{eq:RenormInfluence:IF} does not include target gradient norm \eqsmall{$\norm{\gradHatTe}$} since it is a constant factor.  For simplicity, Eq.~\eqref{eq:RenormInfluence:RepPt}'s \keyword{renormalized representer point} uses signum function $\sgnp{\cdot}$ since for any scalar~${\indVar \ne 0}$, \eqsmall{${\sgnp{u} = \frac{u}{\abs{u}}}$}, i.e.,~signum is equivalent to normalizing by magnitude.
{%
  \small%
  \begin{align}%
      \infFunc{\simIF}{\zI}{\zHatTe} \defeq&%
            \ifSign{}
            \frac{1}{\nTr} %
            \gradLoss{\zHatTe}{\wFin}\transpose %
            \invHess %
            \left(\frac{\gradLoss{\zI}{\wFin}}{\norm{\gradLoss{\zI}{\wFin}}}\right) %
            \label{eq:RenormInfluence:IF}\\%
      \infFunc{\simRepPt}{\zI}{\zHatTe} \defeq&
           -\frac{1}{2 \wdecay \nTr}%
           ~\sgnp{\rpDeriv}%
           \dotprodbig{\fRP{\trIdx}}{\fRP{\teStr}} %
           \label{eq:RenormInfluence:RepPt}%
  \end{align}%
}%

Renormalized versions of Section~\ref{sec:RelatedWork:Influence}'s dynamic influence estimators appear below.
Going forward, we refer to renormalized \tracinCP{} (Eq.~\eqref{eq:RenormInfluence:GAS}) as \keyword{\underline{g}radient \underline{a}ggregated \underline{s}imilarity}, \method{}, since it is essentially the weighted, gradient cosine similarity averaged across all of training. \method{}'s procedure is detailed in Algorithm~\ref{alg:CosIn}.\footnote{As shown in Algorithm~\ref{alg:CosIn}, \tracinCP{}'s procedure (Line~7) is identical to \method{} (Line~9) other than influence renormalization.}
{%
  \small%
  \begin{align}%
      \simTracInFunc \defeq&%
                          \sum_{\zI \in \batchT} \frac{\lrT}{\batchSize}%
                                \frac%
                                {\dotprod{\vphantom{\Big(\Big)}\gradLoss{\zI}{\wTOne}}%
                                         {\gradLoss{\zHatTe}{\wTOne}}}%
                                {
                                  \norm{\gradLoss{\zI}{\wTOne}}
                                  \hspace{1.3pt}%
                                  \norm{\gradW \lFunc{\zHatTe}{\wTOne}}%
                                } \\%
      \simTracInCPFunc \defeq&%
                          \sum_{\itr \in \subsetItr}%
                          \frac{\lrT}{\batchSize}%
                          \hspace{2.5pt}%
                          \frac{%
                            \dotprod%
                            {\vphantom{\Big(\Big)}%
                             \gradW \lFunc{\zI}{\wTOne}}%
                            {\gradW \lFunc{\zHatTe}{\wTOne}}%
                          }%
                          {%
                            \norm{\gradW \lFunc{\zI}{\wTOne}}
                            \hspace{1.3pt}%
                            \norm{\gradW \lFunc{\zHatTe}{\wTOne}}%
                          }%
                          \nonumber \\
                          &\fedeq \method(\zI, \zHatTe) %
                          \label{eq:RenormInfluence:GAS}
  \end{align}%
}%

Unlike static estimators, rescaling dynamic influence by target gradient norm~\eqsmall{$\norm{\gradHatTe}$} is quite important as mentioned earlier.
Intuitively, \eqsmall{$\norm{\zHatTe}$} tends to be largest in two cases: (1)~early in training due to initial parameter randomness and (2)~when iteration~$\itr$'s predicted label conflicts with final label~\eqsmall{$\yHatTe$}.  Both cases are consistent with the features most responsible for predicting~\eqsmall{$\yHatTe$} not yet dominating.
Therefore, rescaling dynamic influence by~\eqsmall{$\norm{\gradHatTe}$} implicitly upweights iterations where \eqsmall{$\yHatTe$} is predicted confidently. It also inhibits any single checkpoint dominating the estimate.

\InfVsSimParagraph{Applying Renormalization to CIFAR10 \& MNIST Joint Classification}
Figure~\ref{fig:RenormInfluence:CifarVsMnist:Comparison}'s lower half demonstrates renormalization's significant performance advantage over standard influence estimation %
-- with the improvement in AUPRC as large as~$25\times$.  In particular, our renormalized estimators' top\=/5 highest-ranked instances were all consistently from MNIST, unlike any of the standard influence estimators.  Overall, \method{} (renormalized \tracinCP{}) was the top performer -- even outperforming our other renormalized estimators by a wide margin.

\begin{algorithm}[t!]
  {
    \AlgFontSize%
    \caption{\method{} vs.\ \tracinCP{}}\label{alg:CosIn}
    \newcommand{\InputDesc}{Training params.~$\trainParams$, training set~\eqsmall{$\fullTrain$}, batch size~$\batchSize$, \& test ex.~\eqsmall{$\zHatTe$}}
\newcommand{\OutputDesc}{(Renormalized) influence vector~$\infVec$}
\begin{algorithmic}[1]
  \algSetStretch%
  \REQUIRE \InputDesc{}
  \ENSURE \OutputDesc{}
    \STATE $\infVec \gets \vec{0}$ \algcomment{Initialize}
    \FOR{\textbf{each} ${(\lrT,~\wTOne) \in \trainParams}$}
      \STATE $\gradHatTe \gets \gradW \lFunc{\zHatTe}{\wTOne}$
      \FOR[\algcomment{All examples}]{\textbf{each} ${\zI \in \fullTrain}$}
        \STATE $\gradI \gets \gradW \lFunc{\zI}{\wTOne}$
        \IF{calculating \tracinCP{}}
          \STATE $\infScalarI \gets \infVec_{\trIdx}
                      + \frac{\lrT}{\batchSize}
                          \dotprod%
                                  {\gradI}%
                                  {\gradHatTe}%
                 $ \algcomment{Unnormalized (Sec.~\ref{sec:RelatedWork:Influence})}
        \ELSIF{calculating \method{}}
          \STATE $\infScalarI \gets \infVec_{\trIdx}
                      + \frac{\lrT}{\batchSize}
                          \dotprod%
                                  {\frac{\gradI    }{\norm{\gradI\vphantom{\gradHatTe}}}}%
                                  {\frac{\gradHatTe}{\norm{\gradHatTe}}}%
                 $ \algcomment{Renormalized (Sec.~\ref{sec:RenormInfluence:Measures})}
        \ENDIF
      \ENDFOR
    \ENDFOR
    \STATE \RETURN $\infVec$
\end{algorithmic}

  }
\end{algorithm}

\subsection{Renormalization \& More Advanced Attacks}\label{sec:RenormInfluence:AdvancedAttacks}

Section~\ref{sec:RenormInfluence:WhyPerformsPoorly} illustrates why influence performs poorly under a naive backdoor-style attack where the adversary does not optimize the adversarial set.
Those concepts also generalize to more sophisticated attacks.
For example, recent work shows that deep networks often predict the adversarial set with especially high confidence (i.e.,~low loss) due to \revised{\keyword{shortcut learning}} -- even on advanced attacks~\citep{Yu:2021:PoisonShortcuts,Geirhos:2020}.  Those findings reinforce the need for renormalization.
\revised{%
  This can be viewed through the lens of \keyword{simplicity bias} where neural networks tend to confidently learn simple features (shortcuts) -- regardless of whether those features actually generalize~\citep{Shah:2020:SimplicityBias}.
}

Dynamic estimators -- both \tracin{} and \method{} -- outperform static ones for Sec.~\ref{sec:RenormInfluence:SimpleExperiment}'s naive attack.
The same can be expected for sophisticated attacks including ones that track adversarial-set gradients through simulated training~\citep{Huang:2020,Wallace:2021}.
For those attacks, adversaries can craft \eqsmall{$\advTrain$} to exhibit particular gradient signatures at the end of training to avoid static detection.
Moreover, models learn adversarial data faster than clean data meaning training loss often drops abruptly and significantly early in training~\citep{Li:2021:Antibackdoor}.
For an attack to succeed, adversarial instances must align with the target at some point during training, meaning dynamic methods can detect them.

Lastly, our threat model specifies that attackers never know the random batch sequence nor any randomly initialized parameters.  Therefore, attackers can only craft~\eqsmall{$\advTrain$} to be \textit{influential in expectation} over that randomness.  Influence is stochastic, varying significantly across random seeds.  However, estimating the true expected influence is computationally expensive. \method{} and \tracinCP{}, which simulate expectation, better align with how the adversary actually crafts the adversarial set, resulting in better \eqsmall{$\advTrain$}~identification.

Below we detail how renormalization can be specialized further for better adversarial-set identification.

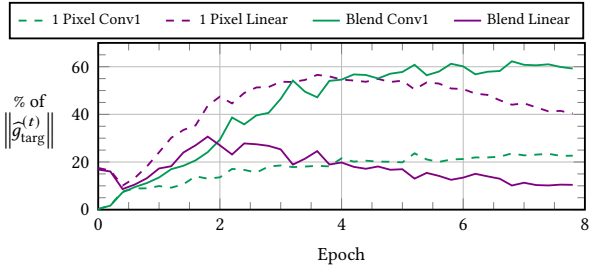
\begin{figure}[t]
  \centering
\newcommand{\legendSpacer}{\hspace*{7pt}}
\begin{tikzpicture}
  \begin{axis}[%
      hide axis,  %
      xmin=0,  %
      xmax=1,
      ymin=0,
      ymax=1,
      scale only axis,width=1mm, %
      line width=\CdfLineWidth,
      legend cell align={left},              %
      legend style={font=\scriptsize},  %
      legend columns=4,
      legend image post style={xscale=0.6},  %
    ]
    \addplot [ForestGreen, dashed] coordinates {(0,0)};
    \addlegendentry{1~Pixel Conv1\legendSpacer}
    \addplot [violet, dashed] coordinates {(0,0)};
    \addlegendentry{1~Pixel Linear\legendSpacer}

    \addplot [ForestGreen] coordinates {(0,0)};
    \addlegendentry{Blend Conv1\legendSpacer}
    \addplot [violet] coordinates {(0,0)};
    \addlegendentry{Blend Linear}
\end{axis}
\end{tikzpicture}

\newcommand{\layerFontSize}{\footnotesize}
\pgfplotstableread[col sep=comma] {plots/data/layer_trend.csv}\thedata%
\begin{tikzpicture}
  \begin{axis}[
    width=0.95\columnwidth,%
    height=\LayerTrendHeight,%
    xmin=0,%
    xmax=8,%
    xtick distance={2},
    minor x tick num={3},
    x tick label style={font=\layerFontSize,align=center},%
    xlabel={\layerFontSize Epoch},
    xmajorgrids,
    ymin=0,
    ymax=70,
    ytick distance={20},
    minor y tick num={3},
    ymajorgrids,
    y tick label style={font=\layerFontSize,align=center},%
    ylabel={\layerFontSize \% of\\\vspace{4pt}\layerFontSize$\norm{\gradItr{\gradHatTarg}}$},
    ylabel style={rotate=-90, align=center},  %
    mark size=0pt,
    line width=\CdfLineWidth,
    ]
    \addplot [ForestGreen, dashed] table [x index=0, y index=1] \thedata;
    \addplot [violet, dashed] table [x index=0, y index=2] \thedata;

    \addplot [ForestGreen] table [x index=0, y index=3] \thedata;
    \addplot [violet] table [x index=0, y index=4] \thedata;
  \end{axis}
\end{tikzpicture}
  \caption{%
    \textit{Layerwise Decomposition of an Attack Target's Intra-Training Gradient Magnitude}: One-pixel \& blend backdoor adversarial triggers (dashed \& solid lines respectively) trained separately on CIFAR10 binary classification (${\yTarg = \texttt{airplane}}$ \& ${\yAdv = \texttt{bird}}$) using ResNet9.
    The network's first convolutional~(Conv1) and final linear layers are a small fraction of the parameters (0.03\% \& 0.01\% resp.) but constitute most of the target's gradient magnitude (\eqsmall{$\norm*{\gradHatTarg}$}) with the dominant layer attack dependent.
  Results are averaged over 20~trials.%
  }
  \label{fig:Method:LayerTrend}
\end{figure}

\vspace{3pt}
\textbf{Extending Renormalization Layerwise}:
In practice,
gradient magnitudes are often unevenly distributed across a neural network's layers.  For example, Figure~\ref{fig:Method:LayerTrend} tracks an attack target's average intra\=/training gradient magnitude for two different backdoor adversarial triggers on CIFAR10 binary classification (\eqsmall{${\yTarg = \texttt{airplane}}$} and \eqsmall{${\yAdv = \texttt{bird}}$}).\footnote{See supplemental Section~\ref{sec:App:ExpSetup} for the complete experimental setup. The class pair and adversarial triggers were proposed by \citet{Weber:2021}.}
Specifically, target gradient norm, \eqsmall{${\norm{\gradW \lFunc{\zHatTarg}{\wT}}}$}, is decomposed into just the contributions of the network's first convolutional layer (Conv1) and the final linear layer. Despite being only 0.04\% of the model parameters, these two layers combined constitute ${{>}50}$\%~of the gradient norm.  Therefore, the first and last layers' parameters are, on average, weighted ${{>}2,000\times}$~more than other layers' parameters.  With simple renormalization, important parameters in those other layers may go undetected.

As an alternative to simply renormalizing by \eqsmall{${\norm{\gradW \lFunc{\z}{\wT}}}$}, partition gradient vector~$\gradLetter$ by layer into $\nLayer$~disjoint vectors (where $\nLayer$~is model~$\dec$'s layer count) and then independently renormalize each subvector separately.  This \keyword{layerwise renormalization} can be applied to any estimator that uses training gradient~$\gradI$ or test gradient~$\gradHatTe$, including influence functions, \tracin{}, and \tracinCP{}.
Layerwise renormalization still corrects for the low-loss penalty and does not change the asymptotic complexity.  To switch \method{} to layerwise, the only modification to Algorithm~\ref{alg:CosIn} is on Line~10 where each dimension is divided by its corresponding layer's norm instead of the full gradient norm.  %

\textbf{Notation}:
``\layerwiseSuffix''~denotes layerwise renormalization, e.g.,~\keyword{layerwise \method{}} is \layer{}.   Suffix~``\bothSuffix'', e.g.,~\bothMethod{}, signifies that a statement applies irrespective of whether the renormalization is layerwise.

\subsection{Renormalization \& Non-Adversarial Data}\label{sec:RenormInfluence:Filtering}

Renormalization not only improves performance identifying an inserted adversarial set; it also improves performance in \textit{non-adversarial} settings.
Sec.~\ref{sec:RelatedWork:Influence} defines influence w.r.t.\ a single training example.  Just as one instance may be more influential on a prediction than another, a group of training instances may be more influential than a different group.  Renormalization improves identification of influential \textit{groups} of examples, even on non-adversarial data.

To empirically demonstrate this, consider CIFAR10 binary classification again.  In each trial, ResNet9 was pre\=/trained on eight (${=10-2}$) held-out CIFAR10 classes.  From the other two classes, test example~\eqsmall{$\zTeFilt$} was selected u.a.r.\ from those test instances with a moderate misclassification rate (10\=/20\%) across multiple retrainings (i.e.,~fine-tunings) of the pre\=/trained network.\footnote{Using examples with a moderate misclassification rate ensures that dataset filtering's effects are measurable even with a small fraction of the training data removed.}
(Renormalized) influence was then calculated for~\eqsmall{$\zTeFilt$}, with each estimator yielding a training-set ranking. Each estimator's top~$p$\% ranked instances were removed from the training set and 20~models trained from the pre\=/trained parameters using these reduced training sets. Performance is measured using \eqsmall{$\zTeFilt$}'s misclassification rate across those 20~models where a larger error rate entails a better overall ranking.

\begin{figure}[t]
  \centering
\newcommand{\legendSpacer}{\hspace*{8pt}}
\newcommand{\oursText}{ {\scriptsize (ours)}}

\begin{tikzpicture}
  \begin{axis}[%
      hide axis,  %
      xmin=0,  %
      xmax=1,
      ymin=0,
      ymax=1,
      scale only axis,width=1mm, %
      line width=\CdfLineWidth,
      legend cell align={left},              %
      legend style={font=\scriptsize},
      legend columns=4,
      legend image post style={xscale=0.6},  %
    ]
    \addplot [Random Line] coordinates {(0,0)};
    \addlegendentry{Random\legendSpacer}

    \addplot [Influence Functions Line] coordinates {(0,0)};
    \addlegendentry{Inf.\ Func.\legendSpacer}
    \addplot [Influence Functions Sim Line] coordinates {(0,0)};
    \addlegendentry{Inf.\ Func.\ \simShort\oursText{}\legendSpacer}
    \addplot [Influence Functions Layer Line] coordinates {(0,0)};
    \addlegendentry{Inf.\ Func.\ \simShort-L\oursText{}}

    \addplot [TracIn Line] coordinates {(0,0)};
    \addlegendentry{\tracin{}\legendSpacer}
    \addplot [TracInCP Line] coordinates {(0,0)};
    \addlegendentry{\tracinCP{}\legendSpacer}

    \addplot [CosIn Line] coordinates {(0,0)};
    \addlegendentry{\method{}\oursText{}\legendSpacer}
    \addplot [LayIn Line] coordinates {(0,0)};
    \addlegendentry{\layer{}\oursText{}}
\end{axis}
\end{tikzpicture}
 
  \begin{subfigure}{\columnwidth}
    \centering
\newcommand{\ifFontSize}{\scriptsize}

\pgfplotstableread[col sep=comma] {plots/data/inf_filt_inf-func.csv}\thedata%
\begin{tikzpicture}
  \begin{axis}[
      smooth,
      width=\columnwidth,%
      height=\InfFiltHeight,%
      xmin=0,%
      xmax=30,%
      xtick distance={5},
      x tick label style={font=\ifFontSize,align=center},%
      xlabel={\ifFontSize \% Training Set Removed},
      xmajorgrids,
      ymin=0,
      ymax=1,
      ytick distance={0.2},
      minor y tick num={1},
      ymajorgrids,
      y tick label style={font=\ifFontSize,align=center},%
      ylabel={\ifFontSize $\zTeFilt$ Misclass. Rate},
      mark size=0pt,
      line width=\CdfLineWidth,
    ]
    \addplot [Influence Functions Line] table [x index=0, y index=1] \thedata;
    \addplot [Influence Functions Sim Line] table [x index=0, y index=2] \thedata;

    \addplot [Influence Functions Layer Line] table [x index=0, y index=3] \thedata;

    \addplot [Random Line] table [x index=0, y index=4] \thedata;
  \end{axis}
\end{tikzpicture}
 
    \caption{Influence functions-based methods}
    \label{fig:RenormInfluence:InfFilt:InfFunc}
  \end{subfigure}

  \begin{subfigure}{\columnwidth}
    \centering
\newcommand{\ifFontSize}{\scriptsize}

\pgfplotstableread[col sep=comma] {plots/data/inf_filt_tracin.csv}\thedata%
\begin{tikzpicture}
  \begin{axis}[
      smooth,
      width=\columnwidth,%
      height=\InfFiltHeight,%
      xmin=0,%
      xmax=30,%
      xtick distance={5},
      x tick label style={font=\ifFontSize,align=center},%
      xlabel={\ifFontSize \% Training Set Removed},
      xmajorgrids,
      ymin=0,
      ymax=1,
      ytick distance={0.2},
      minor y tick num={1},
      ymajorgrids,
      y tick label style={font=\ifFontSize,align=center},%
      ylabel={\ifFontSize $\zTeFilt$ Misclass. Rate},
      mark size=0pt,
      line width=\CdfLineWidth,
    ]
    \addplot [TracIn Line] table [x index=0, y index=1] \thedata;
    \addplot [TracInCP Line] table [x index=0, y index=2] \thedata;

    \addplot [CosIn Line] table [x index=0, y index=3] \thedata;

    \addplot [LayIn Line] table [x index=0, y index=4] \thedata;

    \addplot [Random Line] table [x index=0, y index=5] \thedata;
  \end{axis}
\end{tikzpicture}
 
    \caption{\tracin{}-based methods}
    \label{fig:RenormInfluence:InfFilt:TracIn}
  \end{subfigure}

  \caption{%
    \textit{Effect of Removing Influential, Non-Adversarial Training Data}:
    Test example \eqsmall{$\zTeFilt$}'s misclassification rate (larger is better) when filtering the training set using influence rankings based on influence functions (top) and \tracin{} (bottom).
    Renormalization~(Rn.) always improved mean performance across all training-set filtering percentages. Results are averaged across five CIFAR10 class pairs with 30~trials per class pair and 20~models trained per method per trial.
    Results are separated by the reference influence estimator.
  }
  \label{fig:RenormInfluence:InfFilt}
\end{figure}
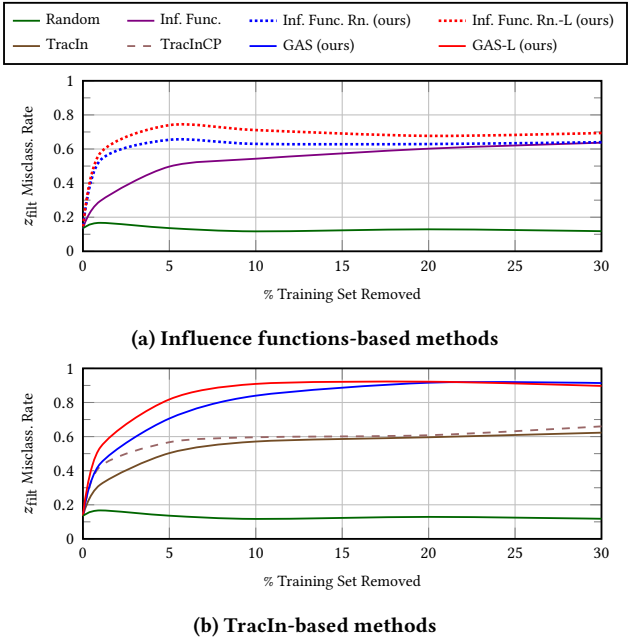
 
Figure~\ref{fig:RenormInfluence:InfFilt} compares influence estimation's filtering performance, with and without renormalization, against a random baseline averaged across five CIFAR10 class pairs, namely the two pairs specified by \citet{Weber:2021} and three additional random pairs. Influence, irrespective of renormalization, significantly outperformed random removal, meaning all of these estimators found influential subsets, albeit of varying quality.\footnote{Representer point (Eq.~\eqref{eq:RelatedWork:RepPt}) is excluded as it underperformed random filtering.}  In all cases, renormalized influence had better or equivalent performance to the original estimator across all filtering fractions. This demonstrates that renormalization generalizes across estimators even beyond adversarial settings.

Overall, layerwise renormalization was the top performer across all setups except for large filtering percentages where \method{} surpassed it slightly.
Renormalized\bothSuffix{} Influence functions and \bothMethod{} performed similarly when filtering a small fraction (e.g.,~${\leq}5\%$) of the training data.
However, the performance of renormalized influence functions plateaued for larger filtering fractions~(${{\geq}10\%}$) while \bothMethod{}'s performance continued to improve.  In addition, renormalization's performance advantage over vanilla influence functions narrowed at larger filtering fractions. In contrast, \bothMethod{}'s advantage over \tracin{} and \tracinCP{} remained consistent.  Recall that dynamic methods (e.g.,~\bothMethod{} and~\tracin{}) use significantly more gradient information than static methods (e.g.,~influence functions).  This experiment again demonstrates that loss-based renormalization's benefits increase as more gradient information is used.

\section{Identifying Attack Targets}\label{sec:Method}

Recall that non-targets have primarily weak influences and few very strong ones.
Target instances are \revised{anomalous in that} they have an unusual number of highly-influential training instances \revised{(Figure~\ref{fig:QQ})}.
This idea is the core of our \keyword{\underline{f}ramework for \underline{i}dentifying \underline{t}argets} of training-set attacks, \fit{}.
Alg.~\ref{alg:TargDetect} formalizes \fit{} as an end-to-end procedure to identify any targets in test example analysis set~\eqsmall{$\targAnalysisSet$}.\footnote{For simplicity, Alg.~\ref{alg:TargDetect} considers a single identified target. If there are multiple identified targets, \MitigateMethodName{} is invoked on each target serially with parameters~{$\filtWFin$} and~{$\filtTrain$}.}  Overall, \fit{} has three sub\=/steps, described chronologically:
\begin{enumerate}
  \item \RenormInfMethodName: Calculates (renormalized) influence vector~$\infVec$ for each test instance in analysis set~\eqsmall{$\targAnalysisSet$}.
  \item \AnomScoreMethodName: Targets have an unusual number of highly-influential instances.  \revised{Leveraging ideas from anomaly detection}, this step analyzes each test instance's influence vector~$\infVec$ and ranks those instances based on how anomalous their influence values are.
   \item \MitigateMethodName: Target-driven mitigation sanitizes model parameters~\eqsmall{$\wFin$} and training set~\eqsmall{$\fullTrain$} to remove the attack's influence on the most likely target \eqsmall{$\zHatTarg$} (e.g.,~the most anomalous misclassified instance).
\end{enumerate}
\fit{} is referred to as a ``framework'' since these subroutines are general and their underlying algorithms can change as new versions are developed.  The next three subsections describe our implementation of each of these methods.  For reference, suppl.\ Alg.~\ref{alg:TargDetect:Implementation} specializes Alg.~\ref{alg:TargDetect} to more closely align with the implementation details below.  \revised{Suppl.\ Sec.~\ref{sec:App:Alg:Complexity} details \fit{}'s end\=/to\=/end computational complexity.}

\begin{algorithm}[t]
  {%
    \AlgFontSize%
    \caption{\fit{} target identification \& mitigation}\label{alg:TargDetect}
    \newcommand{\InputDesc}{Training set $\fullTrain$, test example set~$\targAnalysisSet$, and final params.~$\wFin$}
\newcommand{\OutputDesc}{Sanitized model parameters~$\filtWFin$ \& training set~$\filtTrain$}
\begin{algorithmic}[1]
  \algSetStretch%
  \REQUIRE \InputDesc{}

  \ENSURE \OutputDesc{}

  \STATE $\simDistSet \gets
                \setbuildDynamic{\SimFunc{\zHat}{\fullTrain}{}}
                                {\zHat \in \targAnalysisSet}$
                \algcomment{(Renorm.) Inf.\ (Alg.~\ref{alg:CosIn})}
  \STATE $\scoreDistSet \gets
                \setbuildDynamic{\AnomScoreFunc{{\infVec}}{\simDistSet}}
                                {{\infVec} \in \simDistSet}$
                \algcomment{Anomaly score (Sec.~\ref{sec:Method:TargetDetection})}
  \STATE Rank $\targAnalysisSet$ by anomaly scores~$\scoreDistSet$
  \STATE $\zHatTarg \gets $ Most anomalous test example in $\targAnalysisSet$
    \STATE $\filtWFin$,~~$\filtTrain$ $\gets$ \mitigateFunc{$\zHatTarg$}{$\wFin$}{$\fullTrain$} \algcomment{Sec.~\ref{sec:Method:Mitigation}}
  \STATE \RETURN $\filtWFin$,~~$\filtTrain$
\end{algorithmic}

  }
\end{algorithm}

\begin{figure*}[!t]
\newcommand{\pdfFontSize}{\small}

\newcommand{\gauss}[2]{1/(#2*sqrt(2*pi))*exp(-((x-#1)^2)/(2*#2^2))} %
\newcommand{\PlotGaussian}[2]{%
   \addplot+[dashed, line width=0.5, samples=30, domain=-3:3, black] {\gauss{#1}{#2}};
}

\newcommand{\PdfPlot}[7]{%
  \pgfplotstableread[col sep=comma] {plots/data/#1}\thedata%
  \begin{tikzpicture}
    \begin{axis}[
        width=\columnwidth,%
        height=\QQHeight,%
        xmin=#2,%
        xmax=#3,%
        xticklabels={,,,},
        x tick label style={font=\pdfFontSize,align=center},%
        xtick={#4},
        xmajorgrids,
        ymin=0,
        ymax=#5,
        yticklabels={,,,},
        ytick distance={#6},
        minor y tick num=1,
        ymajorgrids,
        y tick label style={font=\pdfFontSize,align=center},%
        ylabel={\pdfFontSize Density},
        mark size=0pt,
      ]
      \addplot +[
        line width=\CdfLineWidth,
        smooth,
        color=blue,
        hist={
          density,
          bins=30,
          data min=#2,
          data max=#3,
          handler/.style={
            sharp plot,
            smooth,
          }
        }
      ] table [y index=0] \thedata;
      \addplot +[
        line width=\CdfLineWidth,
        smooth,
        color=red,
        hist={
          density,
          bins=30,
          data min=#2,
          data max=#3,
          handler/.style={
            sharp plot,
            smooth,
          }
        }
      ] table [y index=1] \thedata;
      #7
    \end{axis}
  \end{tikzpicture}
}
   \captionsetup[subfigure]{justification=centering}  %
  \newcommand{\datasetTitle}[1]{\rotatebox{90}{\textbf{\footnotesize #1}}}
  \newcommand{\qqPreSpacer}{\hspace{-7.3pt}}
  \newcommand{\qqImgWidth}{0.329\textwidth}
  \newcommand{\twoQqImgWidth}{0.660\textwidth}
  \newcommand{\RenormInfStr}{renormed.\ influence}
  \centering

  \begin{minipage}{0.035\textwidth}
    \hfill
  \end{minipage}
  \qqPreSpacer
  \begin{minipage}{0.96\textwidth}
    \centering
\newcommand{\legendSpacer}{\hspace*{8pt}}
\begin{tikzpicture}
  \begin{axis}[%
      hide axis,  %
      xmin=0,  %
      xmax=1,
      ymin=0,
      ymax=1,
      scale only axis,width=1mm, %
      legend cell align={left},              %
      legend style={font=\legendFontSize},
      legend columns=3,
    ]
    \addplot [smooth, color=red, xscale=0.4, line width=\CdfLineWidth] coordinates {(0,0)};
    \addlegendentry{$\advTrain$\legendSpacer}
    \label{leg:QQ:Adv}

    \addplot [smooth, color=blue, xscale=0.4, line width=\CdfLineWidth] coordinates {(0,0)};
    \addlegendentry{$\cleanTrain$\legendSpacer}
    \label{leg:QQ:Clean}

    \addplot [dashed, line width=0.5, xscale=0.6, black, line width=\CdfLineWidth] coordinates {(0,0)};%
    \addlegendentry{Theoretical Normal}%
    \label{leg:QQ:Normal}
  \end{axis}
\end{tikzpicture}
  \end{minipage}

  \begin{minipage}{0.035\textwidth}
    \datasetTitle{Vision~Poison}
  \end{minipage}
  \qqPreSpacer
  \begin{minipage}{0.96\textwidth}
    \begin{subfigure}{\qqImgWidth}
      \centering
      \PdfPlot{pdf_pois_cifar_adv.csv}{-3}{9.5}{-0.9166,1.1667,3.250,5.33333,7.416667}{0.93}{0.155}{\PlotGaussian{-0.00601}{0.947}}
      \caption{\method{} target \RenormInfStr}\label{fig:QQ:Vision:Target}
    \end{subfigure}
    \hfill
    \begin{subfigure}{\qqImgWidth}
      \centering
      \PdfPlot{pdf_pois_cifar_clean.csv}{-3}{9.5}{-0.9166,1.1667,3.250,5.33333,7.416667}{0.93}{0.155}{\PlotGaussian{0.00658}{1.01}}
      \caption{\method{} non-target \RenormInfStr}\label{fig:QQ:Vision:Clean}
    \end{subfigure}
    \hfill
    \begin{subfigure}{\qqImgWidth}
      \centering
      \PdfPlot{norm_cifar_cdf.csv}{-2.5}{3.0}{-1.583,-0.667,0.25,1.167,2.083}{1.20}{0.2}{}
      \caption{Training-set gradient norm}\label{fig:QQ:Vision:Grad}
    \end{subfigure}
  \end{minipage}

  \vspace{10pt}
  \begin{minipage}{0.035\textwidth}
    \datasetTitle{Speech Backdoor}
  \end{minipage}
  \qqPreSpacer
  \begin{minipage}{0.96\textwidth}
    \begin{subfigure}{\qqImgWidth}
      \centering
      \PdfPlot{pdf_bd_speech_adv.csv}{-2.5}{8.25}{-0.7083,1.0833,2.875,4.66667,6.4583}{1.75}{0.291667}{\PlotGaussian{-0.168}{0.804}}
      \caption{\method{} target \RenormInfStr}\label{fig:QQ:Speech:Target}
    \end{subfigure}
    \hfill
    \begin{subfigure}{\qqImgWidth}
      \centering
      \PdfPlot{pdf_bd_speech_clean.csv}{-2.5}{8.25}{-0.7083,1.0833,2.875,4.66667,6.4583}{2.40}{0.400}{\PlotGaussian{-0.184}{0.532}}
      \caption{\method{} non-target \RenormInfStr}\label{fig:QQ:Speech:Clean}
    \end{subfigure}
    \hfill
    \begin{subfigure}{\qqImgWidth}
      \centering
      \PdfPlot{norm_speech_cdf.csv}{-0.4}{0.9}{-0.1833,0.033,0.25,0.46667,0.68333}{9}{1.5}{}
      \caption{Training-set gradient norm}\label{fig:QQ:Speech:Grad}
    \end{subfigure}
  \end{minipage}
  \caption{%
    \method{} renormalized influence,~$\infVec$, density distributions for two training-set attacks: CIFAR10 vision poisoning~\citep{Zhu:2019} (${\yTarg = }{~\texttt{dog}}$ \& ${\yAdv = }{~\texttt{bird}}$) \& speech-recognition backdoor \citep{SpeechDataset} (${\yTarg = \texttt{4}}$ \& ${\yAdv = \texttt{5}}$).
    Theoretical normal (\tikzCaption{\ref{leg:QQ:Normal}}) is w.r.t.\ \eqsmall{${\fullTrain \defeq \advTrain \cup \fullTrain}$}.
    Observe that target examples (Figs.~\ref{fig:QQ:Vision:Target} \&~\ref{fig:QQ:Speech:Target}) have significant \eqsmall{$\advTrain$}~mass~(\tikzCaption{\ref{leg:QQ:Adv}}) well to the right of \eqsmall{$\cleanTrain$}'s mass~(\tikzCaption{\ref{leg:QQ:Clean}}). This upper-mass phenomenon is absent in non-targets (Figs.~\ref{fig:QQ:Vision:Clean} \&~\ref{fig:QQ:Speech:Clean}).
    Training example gradient norms (Fig.~\ref{fig:QQ:Vision:Grad} \&~\ref{fig:QQ:Speech:Grad}) are poorly correlated with whether the training example is adversarial. For example, speech recognition has \eqsmall{$\cleanTrain$}~mass well to the right of even the right\=/most \eqsmall{$\advTrain$} mass, necessitating renormalization.
    See Sections~\ref{sec:ExpRes:Attacks} and~\ref{sec:App:ExpSetup} for more details on these attacks.%
  }
  \label{fig:QQ}
\end{figure*}
 
\subsection{Measuring (Renormalized) Influence}\label{sec:Method:RenormInfluence}

Algorithm~\ref{alg:TargDetect} is agnostic of the specific (renormalized) influence estimator used to calculate~$\infVec$, provided that method is sufficiently adept at identifying adversarial set~\eqsmall{$\advTrain$}. %
We use \bothMethod{} for the reasons explained in Section~\ref{sec:RenormInfluence} as well as its simplicity, computational efficiency, and strong, consistent empirical performance.

\vspace{3pt}
\textit{Time and Space Complexity}:
Computing a gradient requires $\bigO{\dimW}$ time and space.  For fixed~$\nItr$ and~$\dimW$, \tracinCP{}, \method{}, and \layer{} require $\bigO{\nTr}$ time and space to calculate each test instance's influence vector~$\infVec$.
The next section explains that \fit{} analyzes each test instance's influence vector~$\infVec$ meaning \bothMethod{} can be significantly sped-up by amortizing training gradient~(\eqsmall{$\gradItr{\gradI}$}) computation across multiple test examples -- either on a single node (suppl.\ Sec.~\ref{sec:App:MoreExps:Runtime}) or across multiple nodes (e.g.,~using all\=/reduce).
\subsection{Identifying Anomalous Influence}\label{sec:Method:TargetDetection}

To change a prediction, adversarial set~\eqsmall{$\advTrain$} must be highly influential on the target. When visualizing $\zHatTe$'s influence vector~$\infVec$ as a density distribution, an exceptionally influential~\eqsmall{$\advTrain$} manifests as a distinct density mass at the distribution's positive extreme.

Figures~\ref{fig:QQ:Vision:Target} and~\ref{fig:QQ:Speech:Target} each plot an attack \underline{target}'s \method{} influence as a density for two different training-set attacks -- the first poisoning on vision~\citep{Zhu:2019} and the other a backdoor attack on speech recognition~\citep{SpeechDataset}.  For both attacks, adversarial set \eqsmall{$\advTrain$}'s influence significantly exceeds that of~\eqsmall{$\cleanTrain$}.  When compared to theoretical normal (calculated\footnote{The plotted theoretical normal used robust statistics median and~$\Qn$ in place of mean and standard deviation.} w.r.t.\ complete training set~\eqsmall{$\fullTrain$}), \eqsmall{$\advTrain$}'s target influence is highly anomalous.  In Figures~\ref{fig:QQ:Vision:Clean} and~\ref{fig:QQ:Speech:Clean}, which plot the \method{} influence of \underline{non-targets} for the same two attacks, no extremely high influence instances are present.

Going forward, influence vectors~$\infVec$ with exceptionally high influence instances are referred to as having a \keyword{heavy upper tail}.
Then, \textit{target identification simplifies to identifying influence vectors whose values have anomalously heavy upper tails}.
The preceding insight is relative and is w.r.t.\ to other test instances' influence value distributions.
Non-target baseline anomaly quantities vary with model, dataset, and hyperparameters.
That is why suppl.\ Algorithm~\ref{alg:TargDetect:Implementation} ranks candidates in \eqsmall{$\targAnalysisSet$} based on their upper-tail heaviness.

\textbf{Quantifying Tail Heaviness}:
Determining whether $\zHatTe$'s influence vector~$\infVec$ is abnormal simplifies to univariate anomaly detection for which significant previous work exists \citep{Barnett:1978,Rousseeuw:1997,Hodge:2004,Rousseeuw:2017}. Observe in Figures~\ref{fig:QQ:Vision:Clean} and~\ref{fig:QQ:Speech:Clean} that \eqsmall{$\cleanTrain$}'s \method{} influence vector~$\infVec$ tends to be normally distributed (see the close alignment to the dashed line).
We, therefore, use the traditional \keyword{anomaly score},
\eqsmall{$\anomScoreVec \defeq \frac{\infVec - \scoreCenter}{\scoreSpread}$},
where $\scoreCenter$ and $\scoreSpread$ are each~$\infVec$'s center and dispersion statistics, resp.\footnote{In suppl.\ Algorithm~\ref{alg:TargDetect:Implementation}, statistics~$\TeIdx{\scoreCenter}$ and~$\TeIdx{\Qn}$ are calculated separately for each test instance~$\zHat_j$'s influence vector~$\TeIdx{\infVec}$.}
Mean and standard deviation, the traditional center and dispersion statistics, resp., are not robust to outliers. Both have an asymptotic \keyword{breakdown point} of~0 (one anomaly can shift the estimator arbitrarily). Since \eqsmall{$\advTrain$} instances are inherently outliers, robust statistics are required.

Median serves as our center statistic~$\scoreCenter$ given its optimal breakdown~(50\%).
Although median absolute deviation~(MAD) is the best known robust dispersion statistic, we use \citepos{Rousseeuw:1993} $\Qn$~estimator, which retains MAD's~benefits while addressing its weaknesses.
Specifically, both MAD and $\Qn$ have optimal breakdowns, but $\Qn$~has better Gaussian data \keyword{efficiency} (82\% vs.~37\%).  Critically for our setting with one-sided anomalies, $\Qn$~does not assume data symmetry -- unlike MAD.  Formally,
{%
  \begin{equation}\label{eq:Method:TargetDetection:Qn}
    \Qn \defeq \dConsist \setbuild{ \abs{\infScalarI - \infScalarIP} }{1 \leq \trIdx < \trIdxP \leq \nTr}_{(\qnOrderStat)} \text{,}
  \end{equation}%
}%
\noindent%
where
\eqsmall{$\set{\cdot}_{(\qnOrderStat)}$} denotes the set's $\qnOrderStat$\=/th~\keyword{order statistic} with \eqsmall{${\qnOrderStat = \binom{\floor{\frac{\nTr}{2}} + 1}{2}}$} %
and
$\dConsist$~is a distribution consistency constant which for Gaussian data, ${\dConsist \approx 2.2219}$~\citep{Rousseeuw:2017}.
Eq.~\eqref{eq:Method:TargetDetection:Qn} requires only ${\bigO{\nTr}}$ space and ${\bigO{\nTr \lg \nTr}}$ time as proven by \citet{Croux:1992}.
Provided anomaly score vector~$\anomScoreVec$, upper-tail heaviness is simply~\eqsmall{$\anomScoreVec_{(\nTr - \anomCount)}$}, which is $\anomScoreVec$'s ${(\nTr - \anomCount)}$\textsuperscript{th}~order statistic, i.e.,~\eqsmall{$\zHatTe$}'s $\anomCount$\textsuperscript{th}~largest anomaly score value.
\revTwo{%
The value of~$\anomCount$ implicitly affects the size of the smallest detectable attack, where any attack with \eqsmall{${\abs{\advTrain} < \anomCount}$} is much harder to detect.
}%

\methodparagraph{Multiclass vs.\ Binary Classification}
Different classes are implicitly generated from different data distributions. Each class's data distribution may have different influence tails -- in particular in multiclass settings.  Target identification performance generally improves (1)~ when $\scoreCenter$ and~$\Qn$ are calculated w.r.t.\ only training instances labeled~$\yHatTe$ and (2)~$\zHatTe$'s upper-tail heaviness is ranked w.r.t.\ other test instances labeled~$\yHatTe$.

\methodparagraph{Faster \fit{}}
The execution time of \tracinCP{} and by extension \bothMethod{}, depends on parameter count~$\abs{\w}$.  For very large models, target identification can be significantly sped up via a two-phase strategy.  In phase~1, \bothMethod{} uses a very small iteration subset (e.g.,~\eqsmall{${\subsetItr = \set{\nItr}}$}) to coarsely rank analysis set~\eqsmall{$\targAnalysisSet$}. Phase~2 then uses the complete~$\subsetItr$ but only on a small fraction (e.g.,~10\%) of \eqsmall{$\targAnalysisSet$} with the heaviest phase~1 tails. Section~\ref{sec:ExpRes:TargetIdent} applies this approach to natural-language data poisoning on \robertaBase{} \citep{RoBERTa}.

Computing each test instance's (\eqsmall{${\zHatTe \in \targAnalysisSet}$}) influence vector~$\infVec$ is independent. Each dimension~$\infScalarI$ is also independent and can be separately computed.  Hence, \bothMethod{} is embarrassingly parallel allowing linear speed-up of target identification via parallelization.%
\subsection{Target-Driven Attack Mitigation}\label{sec:Method:Mitigation}

A primary benefit of target identification is that attack mitigation becomes straightforward.
Algorithm~\ref{alg:Mitigation} mitigates attacks by sanitizing training set~\eqsmall{$\fullTrain$} of adversarial set~\eqsmall{$\advTrain$}.
Most importantly, target identification solves data sanitization's common pitfall (Sec.~\ref{sec:RelatedWork:Defenses}) of determining how much data to remove. \textit{Sanitization stops when the target's misprediction is eliminated}.
Therefore, successfully identifying a target means sanitization is \textit{guaranteed to succeed tautologically} (i.e.,~attack success rate on any analyzed targets is~0).

More concretely, Alg.~\ref{alg:Mitigation} iteratively filters~\eqsmall{$\fullTrain$} by thresholding anomaly score vector,~$\anomScoreVec$.\footnote{Alg.~\ref{alg:Mitigation} considers the more general case of a single identified target but can be extended to consider multiple targets. For instance, provided there is a single attack, average~$\infVec$ across all targets, and stop sanitizing once all targets are classified correctly.}  Since adversarial instances are abnormally influential on targets, Alg.~\ref{alg:Mitigation} filters \eqsmall{$\advTrain$} instances first. After each iteration, influence is remeasured to account for estimation stochasticity and because training dynamics may change with different training sets.
Data removal \revTwo{cutoff}~$\anomCutoff$ is tuned based on computational constraints -- larger~$\anomCutoff$ results in less clean data removed but may take more iterations.  Slowly annealing~$\anomCutoff$ also results in less clean-data removal.

\begin{algorithm}[t]
  {%
    \AlgFontSize
    \caption{Target-driven mitigation \& sanitization}\label{alg:Mitigation}
    \newcommand{\InputDesc}{Target~\eqsmall{$\zHatTarg \defeq {(\xTarg,\yAdv)}$}, anomaly cutoff~$\anomCutoff$, model~$\dec$, initial params.~$\wZero$, final params.~$\wFin$, and training set~\eqsmall{$\fullTrain$}}
\newcommand{\OutputDesc}{Clean model parameters~\eqsmall{$\filtWFin$} \& sanitized training set~\eqsmall{$\filtTrain$}}
\begin{algorithmic}[1]
  \algSetStretch%
  \REQUIRE \InputDesc{}
  \ENSURE \OutputDesc{}
  \FUNCTION{\mitigateFunc{$\zHatTarg$}{$\wFin$}{$\fullTrain$}}{}{}
    \STATE $\filtWFin,~~\filtTrain \gets \wFin,~~\fullTrain$
    \WHILE{$\argmax \left(\decFunc{\xTarg}{\filtWFin}\right) = \yAdv$}
      \STATE $\infVec \gets \SimFunc{\zHatTarg}{\fullTrain}{}$ \algcomment{Renorm.\ Influence (Alg.~\ref{alg:CosIn})}
      \STATE $\anomScoreVec \gets \frac{\infVec - \scoreCenter}{\Qn}$ \algcomment{Anomaly score (Sec.~\ref{sec:Method:TargetDetection})}
      \STATE $\filtTrain \gets \filtTrain \setminus \setbuildDynamic{\zI}{\anomScoreI \geq \anomCutoff \wedge \zI \in \filtTrain}$  \algcomment{Sanitize}
      \STATE $\filtWFin \gets \retrainFunc{\wZero}{\filtTrain}$
      \STATE Optionally anneal $\anomCutoff$
    \ENDWHILE
    \STATE \RETURN $\filtWFin,~~\filtTrain$
  \ENDFUNCTION
\end{algorithmic}

  }
\end{algorithm}

Given forensic or human analysis of the identified target(s), simpler mitigation than Algorithm~\ref{alg:Mitigation} is possible, e.g.,~a naive, rule-based, corrective lookup table that entails no clean data removal at all.

\revised{%
For learning environments where \keyword{certified training data deletion} is possible \citep{Guo:2020:CertifiedRemoval,Marchant:2022:HardToForget}, retraining (Alg.~\ref{alg:Mitigation} Line~7) may not even be required --- making our method even more efficient.
}

\revTwo{%
\methodparagraph{Enhancing Mitigation's Robustness}
An adversary could attack \fit{} by injecting adversarial instances into~\eqsmall{$\fullTrain$} to specifically trigger excessive, unnecessary sanitization.
To mitigate such a risk, Alg.~\ref{alg:Mitigation} could be tweaked to include a maximum sanitization threshold\footnote{\revTwo{This threshold could be w.r.t.\ the number of examples removed or the change in held-out loss. These quantities can be measured cumulatively or for targets individually.}} that would trigger additional (e.g.,~human, forensic) analysis.  This threshold could be set empirically or using domain-specific knowledge (e.g.,~maximum possible poisoning rate).
See supplemental Section~\ref{sec:App:MoreExps:MitigationTriggering} for further discussion.
}%
\newcommand{\minipageWidth}{0.48\textwidth}
\newcommand{\evalparagraph}[1]{%
\noindent%
\textbf{#1}~~~}

\begin{figure*}[t]
  \centering
  \begin{minipage}[t]{\minipageWidth}
    \centering
\newcommand{\legendSpacer}{\hspace*{8pt}}

\begin{tikzpicture}
  \begin{axis}[%
    width=\textwidth,
      ybar,
      hide axis,  %
      xmin=0,  %
      xmax=1,
      ymin=0,
      ymax=1,
      scale only axis,width=1mm, %
      legend cell align={left},              %
      legend style={font=\legendFontSize},
      legend columns=4,
    ]
    \addplot [cosin color] coordinates {(0,0)};
    \addlegendentry{\method\ours\legendSpacer}
    \addplot [layer color] coordinates {(0,0)};
    \addlegendentry{\layer\ours\legendSpacer}

    \addplot[TracInCP color] coordinates {(0,0)};
    \addlegendentry{\tracinCP{}\legendSpacer}
    \pgfplotsset{cycle list shift=-1}  %

    \addplot coordinates {(0,0)};
    \addlegendentry{\tracin{}}

    \addplot coordinates {(0,0)};
    \addlegendentry{Influence Func.\legendSpacer}

    \addplot coordinates {(0,0)};
    \addlegendentry{Representer Pt.\legendSpacer}
    \addplot [deep knn color] coordinates {(0,0)};
    \addlegendentry{\deepknn\legendSpacer}
  \end{axis}
\end{tikzpicture}

\pgfplotstableread[col sep=comma]{plots/data/main_ident.csv}\datatable%
\begin{tikzpicture}%
  \begin{axis}[%
        ybar={\BarLineWidth},%
        height={\BarDetectMainHeight},%
        width={\BarDetectMainWidth},%
        axis lines*=left,%
        bar width={\DetectBarWidthVal},%
        xtick=data,%
        xticklabels={Speech,Vision,NLP,Vision},%
        x tick label style={font=\plotFontSize,align=center},%
        ymin=0,%
        ymax=1,%
        ytick distance={0.20},%
        minor y tick num={1},%
        y tick label style={font=\plotFontSize},%
        ylabel={\plotFontSize \IdentYLabel},%
        ymajorgrids,  %
        typeset ticklabels with strut,  %
        every tick/.style={color=black, line width=0.4pt},%
        enlarge x limits={0.175},%
        draw group line={[index]13}{1}{Backdoor}{-5.0ex}{12pt},
        draw group line={[index]13}{2}{Poison}{-5.0ex}{12pt},
    ]%
    \addplot[cosin color] table [x index=0, y index=1] {\datatable};%
    \addplot[layer color] table [x index=0, y index=2] {\datatable};%

    \addplot[TracInCP color] table [x index=0, y index=3] {\datatable};%
    \pgfplotsset{cycle list shift=-1}  %

    \foreach \k in {4, ..., 6} {%
      \addplot table [x index=0, y index=\k] {\datatable};%
    }%
    \addplot [deep knn color] table [x index=0, y index=7] {\datatable};%
  \end{axis}%
\end{tikzpicture}%
     \caption{%
        \textit{Adversarial-Set Identification}: Mean AUPRC identifying adversarial set~\eqsmall{$\advTrain$}
        using a randomly selected target for Sec.~\ref{sec:ExpRes:Attacks}'s four attacks. Results averaged across related
        setups with ${{\geq}10}$~trials per setup. See supplemental
        Section~\ref{sec:App:MoreExps:FullResults} for the full granular results, including variance.
    }
    \label{fig:ExpRes:AdvIdent}
  \end{minipage}
  \hfill
  \begin{minipage}[t]{\minipageWidth}
    \centering
\newcommand{\legendSpacer}{\hspace*{8pt}}

\begin{tikzpicture}
  \begin{axis}[%
      width=\textwidth,
      ybar,
      hide axis,  %
      xmin=0,  %
      xmax=1,
      ymin=0,
      ymax=1,
      scale only axis,width=1mm, %
      legend cell align={left},              %
      legend style={font=\legendFontSize},
      legend columns=3,
      cycle list name=DetectCycleList,
    ]
    \addplot[cosin color] coordinates {(0,0)};
    \addlegendentry{\fitWith{\method}\ours\legendSpacer}
    \addplot[layer color] coordinates {(0,0)};
    \addlegendentry{\fitWith{\layer}\ours\legendSpacer}

    \addplot coordinates {(0,0)};
    \addlegendentry{Max.\ \KNN{} Dist.}

    \addplot coordinates {(0,0)};
    \addlegendentry{Min.\ \KNN{} Dist.\legendSpacer}
    \addplot coordinates {(0,0)};
    \addlegendentry{Most Certain\legendSpacer};

    \addplot coordinates {(0,0)};
    \addlegendentry{Least Certain};
  \end{axis}
\end{tikzpicture}

\pgfplotstableread[col sep=comma]{plots/data/main_detect.csv}\datatable%
\begin{tikzpicture}%
  \begin{axis}[%
        ybar={\BarLineWidth},%
        height={\BarDetectMainHeight},%
        width={\BarDetectMainWidth},%
        axis lines*=left,%
        bar width={\DetectBarWidthVal},%
        xtick=data,%
        xticklabels={Speech,Vision,NLP,Vision},%
        x tick label style={font=\plotFontSize,align=center},%
        ymin=0,%
        ymax=1,%
        ytick distance={0.20},%
        minor y tick num={1},%
        y tick label style={font=\plotFontSize},%
        ylabel={\plotFontSize \DetectYLabel},%
        ymajorgrids,  %
        typeset ticklabels with strut,  %
        every tick/.style={color=black, line width=0.4pt},%
        enlarge x limits={0.175},%
        draw group line={[index]8}{1}{Backdoor}{-5.0ex}{12pt},
        draw group line={[index]8}{2}{Poison}{-5.0ex}{12pt},
        cycle list name=DetectCycleList,
    ]%
    \addplot[cosin color] table [x index=0, y index=1] {\datatable};%
    \addplot[layer color] table [x index=0, y index=2] {\datatable};%
    \foreach \k in {3, ..., 6} {%
      \addplot table [x index=0, y index=\k] {\datatable};%
    }%
  \end{axis}%
\end{tikzpicture}%
     \caption{%
      \textit{Target Identification}: Mean target identification AUPRC for Sec.~\ref{sec:ExpRes:Attacks}'s four attacks. ``\fitWith{\method{}}'' denotes \method{} was \fit{}'s influence estimator with matching notation for \layer{}.
        Results averaged across setups with ${{\geq}10}$~trials per setup.
        See Sec.~\ref{sec:App:MoreExps:FullResults} for the full granular results, inc.\ variance.
    }
    \label{fig:ExpRes:TargDetect}
  \end{minipage}
\end{figure*}
 
\section{Evaluation}\label{sec:ExpRes}

We empirically demonstrate our method's generality by evaluating training-set attacks on different data modalities, including text, vision, and speech recognition. We consider both poisoning and backdoor attacks on pre\=/trained and randomly-initialized, state-of-the-art models in binary and multiclass settings.
Due to space, most evaluation setup details (e.g.,~hyperparameters) are deferred to suppl.\ Section~\ref{sec:App:ExpSetup}.
\revTwo{%
  Additional experimental results also appear in the supplement, including
    an analysis of a novel adversarial attack on target-driven mitigation (Sec.~\ref{sec:App:MoreExps:MitigationTriggering}),
    a poisoning-rate ablation study (Sec.~\ref{sec:App:MoreExps:Ablation:PoisRate}),
    a hyperparameter sensitivity study (Sec.~\ref{sec:App:MoreExps:EffectAnomCount}),
    an alternative renormalization approach (Sec.~\ref{sec:App:MoreExps:LossOnly}),
    analysis of gradient aggregation's benefits (Sec.~\ref{sec:App:MoreExps:Aggregation}),
    \&
    execution times (Sec.~\ref{sec:App:MoreExps:Runtime}).%
}%

\subsection{Training-Set Attacks Evaluated}\label{sec:ExpRes:Attacks}

We evaluated our method on four published training-set attacks -- two \keyword{single-target} data poisoning and two multi\=/target backdoor.  Below are brief details regarding how each attack crafts adversarial set~\eqsmall{$\advTrain$}, with the full details in suppl.\ Sec.~\ref{sec:App:ExpSetup:Hyperparams:Crafting}.   Representative clean and adversarial training instances for each attack appear in suppl.\ Sec.~\ref{sec:App:PerturbEx}.
\revised{%
  Table~\ref{tab:ExpRes:Mitigation} lists each attack's mean \keyword{success rate} aggregated across all related setups.  Full granular results are in Section~\ref{sec:App:MoreExps:FullResults}.
}

Below, \eqsmall{${\yTarg \rightarrow \yAdv}$} denotes the target's true and adversarial labels, respectively.  When an attack considers multiple class pairs or setups, each is evaluated separately.

(1)~\textit{Speech Backdoor}: \citepos{SpeechDataset} speech recognition dataset contains spectrograms of human speech pronouncing in English digits~0 to~9 (10~classes, \eqsmall{${\abs{\cleanTrain} = 3,000}$} \revised{-- 1\%~backdoors}). \citeauthor{SpeechDataset}\ also provide 300~backdoored training instances evenly split between the 10~classes. Each class's adversarial trigger -- a short burst of white noise at the recording's beginning -- induces the spoken digit to be misclassified as the next largest digit (e.g.,~${\texttt{0} \rightarrow \texttt{1}}$, ${\texttt{1} \rightarrow \texttt{2}}$, etc.).
This small input-space signal induces a large feature-space perturbation -- too large for many certified methods.
Following \citeauthor{SpeechDataset}, our evaluation used a speech recognition CNN trained from scratch.

(2)~\textit{Vision Backdoor}: \citet{Weber:2021} consider three different backdoor adversarial trigger patterns on CIFAR10 binary classification.  Specifically, \citeauthor{Weber:2021}'s ``pixel'' attack patterns increase the pixel value of either one or four central pixel(s) by a specified maximum $\ell_2$~perturbation distance while their ``blend'' trigger pattern adds fixed \eqsmall{${\mathcal{N}(\textbf{0},~I)}$}~Gaussian noise across all perturbed images.  We considered the same class pairs as \citeauthor{Weber:2021}\ (${\texttt{auto} \rightarrow}{~\texttt{dog}}$ and ${\texttt{plane} \rightarrow}{~\texttt{bird}}$) on the state-of-the-art ResNet9 \citep{ResNet9} CNN trained from scratch with \eqsmall{${\abs{\advTrain} = 150}$} and \eqsmall{${\abs{\fullTrain} = 10,000}$} \revised{(1.5\% backdoors)}.

(3)~\textit{Natural Language Poison}: \citet{Wallace:2021} construct text-based poison by simulating bilevel optimization via second-order gradients.  \eqsmall{$\advTrain$}'s instances are crafted via iterative word-level substitution given a target phrase. We follow \citepos{Wallace:2021} experimental setup of poisoning %
the Stanford Sentiment Treebank~v2 (\SST) sentiment analysis dataset \citep{SST2} (\eqsmall{${\abs{\cleanTrain} = 67,349}$} \& \eqsmall{${\abs{\advTrain} = 50}$} \revised{-- 0.07\%~poison}) on the \robertaBase{} \revised{transformer architecture} (125M~parameters) \citep{RoBERTa}.

(4)~\textit{Vision Poison}: \citepos{Zhu:2019} targeted, clean\=/label attack crafts poisons by forming a convex polytope around a single target's feature representation.  Following \citeauthor{Zhu:2019}, the pre\=/train then fine\=/tune paradigm was used.
In each trial, ResNet9 was pre\=/trained using half the classes (none were~\eqsmall{$\yTarg$} or~\eqsmall{$\yAdv$}). Targets were selected \revised{uniformly at random (u.a.r.)} from test examples labeled~\eqsmall{$\yTarg$}, and 50~poison instances \revised{(0.2\% of~\eqsmall{$\fullTrain$})} were then crafted from seed examples labeled~\eqsmall{$\yAdv$}.  The pre\=/trained network was fine-tuned using~\eqsmall{$\advTrain$} and the five held-out classes' training data (\eqsmall{${\abs{\fullTrain} = 25,000}$}).
Like previous work \citep{Shafahi:2018,Huang:2020}, CIFAR10 class pairs \texttt{dog}~vs.~\texttt{bird} and \texttt{deer}~vs.~\texttt{frog} were evaluated, where each class in a pair serves alternately as~\eqsmall{$\yTarg$} and~\eqsmall{$\yAdv$}.

While it is not feasible to evaluate our approach on every attack (as new attacks are developed \& published so frequently) we believe this diverse set of attacks is representative of training-set attacks in general and demonstrates our approach's broad applicability. In particular, our method is not tailored to these attacks and could be used against future attacks as well, as long as the attack includes highly-influential training examples that attack specific targets.

\subsection{Identifying \texorpdfstring{Adversarial Set~$\advTrain$}{the Adversarial Set}}\label{sec:ExpRes:AdvIdent}

To identify the target (Alg.~\ref{alg:TargDetect}) or mitigate the attack (Alg.~\ref{alg:Mitigation}), we must be able to identify the likely adversarial instances \eqsmall{$\advTrain$} associated with a possible target \eqsmall{$\zHatTarg$}. Our approach is to use influence-estimation methods, which should rank an actual adversarial attack \eqsmall{$\advTrain$} as more influential than clean instances \eqsmall{$\cleanTrain$} on the target. In this section, we evaluate how well different influence-estimation methods succeed at performing this ranking for a given target.

We compare the performance of our renormalized estimators, \method{} and \layer{}, against Section~\ref{sec:RelatedWork:Influence}'s four influence estimators: \tracinCP{}, \tracin{}, influence functions, and representer point.  As an even stronger baseline, where applicable, we also compare against \citepos{Peri:2020} \deepknn{} empirical training-set defense specifically designed for \citepos{Zhu:2019}'s vision, clean-label poisoning attack; described briefly, \deepknn{} sanitizes the training set of instances whose nearest feature-space neighbors have a different label.
Like Section~\ref{sec:RenormInfluence}'s CIFAR10 \& MNIST joint classification experiment, class sizes are imbalanced (\eqsmall{${\abs{\advTrain} \ll \abs{\cleanTrain}}$}) so performance is again measured using AUPRC.

For targets selected u.a.r.,
Figure~\ref{fig:ExpRes:AdvIdent} details each method's averaged adversarial-set identification AUPRC for Section~\ref{sec:ExpRes:Attacks}'s four attacks. %
In summary, \method{} and \layer{} were each the top performer for one attack and had comparable performance for the other two.

\method{} and \layer{} identified the adversarial instances nearly perfectly for \citeauthor{Liu:2018}'s speech backdoor and \citeauthor{Wallace:2021}'s text poisoning attacks.
Standard influence estimation performed poorly on the text poisoning attack (in particular the static estimators) due to the large model, \robertaBase, that \citeauthor{Wallace:2021}'s attack considers. For the vision backdoor and poisoning attacks, our renormalized estimators successfully identified most of~\eqsmall{$\advTrain$} -- again, much better than the four original estimators.  While \citepos{Peri:2020} \deepknn{} defense can be effective at stopping clean-label vision poisoning, it does so by removing a comparatively large fraction of clean data (up to~4.3\% on average) resulting in poor AUPRC.

For completeness, Figure~\ref{fig:ExpRes:AdvIdent:Sim} provides adversarial-set identification results for our renormalized, static influence estimators.  In all cases, renormalization improved the estimator's performance, generally by an order of magnitude with a maximum improvement of~${600\times}$.
These experiments highlight layerwise renormalization's benefits.  Influence functions' Hessian-vector product algorithm~\citep{Pearlmutter:1994} can assign a large magnitude to some layers, and these layers then dominate the influence and \method{} estimates.  Layerwise renormalization addresses this, improving renormalized influence function's adversarial-set identification AUPRC by up to~${3.5\times}$.

\subsection{Identifying Attack Targets}\label{sec:ExpRes:TargetIdent}

The previous experiments demonstrate that knowledge of a target enables identification of the adversarial set when using renormalization.  This section demonstrates that the distribution of renormalized influence values actually enables us to identify target(s) in the first place, through the interplay foundational to our target identification framework, \fit{}.
Since target identification is a new task, we propose four target identification baselines. %
First, inspired by \citepos{Peri:2020} \deepknn{} empirical defense,
\keyword{maximum \KNN{} distance} computes the distance from each test instance to its $\anomCount^\text{th}$ nearest neighbor in the training data, as measured by the $L_2$ distance between their penultimate feature representations~($\fRP{}$). It orders them by this distance, starting with the largest distance to the $\anomCount^\text{th}$ neighbor, thus prioritizing outliers and instances in sparse regions of the learned representation space.
\keyword{Minimum \KNN{} distance} is the reverse ordering, prioritizing instances in dense regions.
The other two baselines are \keyword{most certain}, which ranks test examples in ascending order by loss while \keyword{least certain} ranks by descending loss.

There are far fewer targets than possible test examples so performance is again measured using AUPRC. See suppl.\ Table~\ref{tab:App:ExpSetup:Datasets:TargetSetSizes} for the number of targets and non-targets analyzed for each attack.  For single-target attacks (vision and natural language poisoning), target identification AUPRC is equivalent to the target's inverse rank, causing AUPRC to decline geometrically.

\begin{figure}[t]
  \centering
\newcommand{\legendSpacer}{\hspace*{8pt}}
\begin{tikzpicture}
  \begin{axis}[%
    width=\textwidth,
      ybar,
      hide axis,  %
      xmin=0,  %
      xmax=1,
      ymin=0,
      ymax=1,
      scale only axis,width=1mm, %
      legend cell align={left},              %
      legend style={font=\legendFontSize},
      legend columns=3,
    ]
    \addplot [inf func sim] coordinates {(0,0)};
    \addlegendentry{Inf.\ Func.\ \simShort\ours\legendSpacer}
    \addplot [inf func layer] coordinates {(0,0)};
    \addlegendentry{Inf.\ Func.\ \simShort\layerwiseSuffix\ours\legendSpacer}
    \addplot [inf func] coordinates {(0,0)};
    \addlegendentry{Inf.\ Func.}

    \addplot [rep pt sim] coordinates {(0,0)};
    \addlegendentry{Rep.\ Pt.\ \simShort\ours\legendSpacer}
    \addplot [rep pt] coordinates {(0,0)};
    \addlegendentry{Representer Pt.}
  \end{axis}
\end{tikzpicture}

\pgfplotstableread[col sep=comma]{plots/data/main_ident.csv}\datatable%
\begin{tikzpicture}%
  \begin{axis}[%
        ybar={\StaticIdentInterBarSpacing},%
        height={\BarDetectMainHeight},%
        width={\BarDetectMainWidth},%
        axis lines*=left,%
        bar width={\StaticIdentBarWidth},%
        xtick=data,%
        xticklabels={Speech,Vision,NLP,Vision},%
        x tick label style={font=\plotFontSize,align=center},%
        ymin=0,%
        ymax=1,%
        ytick distance={0.20},%
        minor y tick num={1},%
        y tick label style={font=\plotFontSize},%
        ylabel={\plotFontSize \IdentYLabel},%
        ymajorgrids,  %
        typeset ticklabels with strut,  %
        every tick/.style={color=black, line width=0.4pt},%
        enlarge x limits={0.195},%
        draw group line={[index]13}{1}{Backdoor}{-5.0ex}{12pt},
        draw group line={[index]13}{2}{Poison}{-5.0ex}{12pt},
    ]%
    \addplot [inf func sim] table [x index=0, y index=10] {\datatable};%
    \addplot [inf func layer] table [x index=0, y index=11] {\datatable};%
    \addplot [inf func] table [x index=0, y index=5] {\datatable};%

    \addplot [rep pt sim] table [x index=0, y index=12] {\datatable};%
    \addplot [rep pt] table [x index=0, y index=6] {\datatable};%
  \end{axis}%
\end{tikzpicture}%
   \caption{%
    \textit{Static Influence Adversarial-Set Identification}: Comparing the mean adversarial-set identification AUPRC of the static influence estimators and their corresponding renormalized~(Rn.) versions.
    For all attacks, renormalization improved the static estimators' mean performance by up to a factor of~${{>}600\times}$.
    These experiments also highlight layerwise renormalization's performance gains, e.g.,~influence functions on natural-language poison.
    Results are averaged across related experimental setups with ${{\geq}10}$~trials per setup.
  }
  \label{fig:ExpRes:AdvIdent:Sim}
\end{figure}
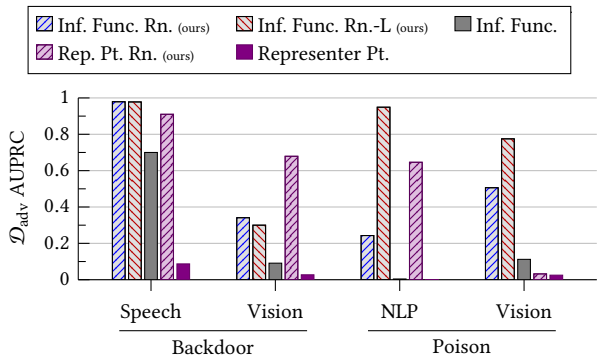
 
Figure~\ref{fig:ExpRes:TargDetect} shows that \fit{} -- using either \method{} or \layer{} as the influence estimator -- achieves near-perfect target identification for both backdoor attacks and natural language poisoning. Overall, \fit{} with \method{} was the top performer on two attacks, and \fit{} with \layer{} was the best for the other two.  Recall that the vision poisoning attack is single target. Hence, \method{}-based \fit{}'s mean AUPRC of~${{>}0.8}$ equates to an average target rank better than~${1.25}$ (${1/0.8}$), i.e.,~three out of four times on average $\zHatTarg$~was the top-ranked -- also very strong target detection. \fit{}'s performance degradation on vision poisoning is due to \method{} and \layer{} identifying this attack's \eqsmall{$\advTrain$} slightly worse (Fig.~\ref{fig:ExpRes:AdvIdent}).
Only maximum \KNN{} approached \fit{}'s performance -- specifically for \citeauthor{Weber:2021}'s vision backdoor attack.
Note also that no baseline consistently outperformed the others.  Hence, these attacks affect network behavior differently, further supporting that \fit{} is attack agnostic.

Suppl.\ Sec.~\ref{sec:App:MoreExps:EffectAnomCount} shows that \fit{}'s performance is stable across a wide range of upper-tail cutoff thresholds~$\anomCount$.  For example, \fit{}'s natural language target identification AUPRC varied only 0.2\% and 2.1\% when using \layer{} and \method{} respectively for ${\anomCount \in \sbrack{1,~25}}$. %

\subsection{Target-Driven Mitigation}\label{sec:ExpRes:Mitigation}

Section~\ref{sec:Method:Mitigation} explains that successfully identifying the target(s) enables \textit{guaranteed attack mitigation} on those instances.  Here, we evaluate \method{} and \layer{}'s effectiveness in targeted data sanitization.
Table~\ref{tab:ExpRes:Mitigation} details our defense's effectiveness against Sec.~\ref{sec:ExpRes:Attacks}'s four attacks. As above, results are averaged across each attack's class pairs/setups.  Section~\ref{fig:ExpRes:AdvIdent}'s baselines all have large false-positive rates when identifying~\eqsmall{$\advTrain$} (Fig.~\ref{fig:ExpRes:AdvIdent}), which caused them to remove a large fraction of~\eqsmall{$\cleanTrain$} and are not reported in these results.

For three of four attacks, clean test accuracy after sanitization either improved or stayed the same.  In the case of \citeauthor{Weber:2021}'s vision backdoor attack, the performance degradation was very small --~0.1\%.
Similarly, owing to renormalized influence's effectiveness identifying~\eqsmall{$\advTrain$} (Fig.~\ref{fig:ExpRes:TargDetect}), our defense removes very little clean data when mitigating the attack -- generally~${{<}0.2}$\% of the clean training set.
For comparison, \citet{Peri:2020} report that their \deepknn{} clean-label, poisoning defense removes on average 4.3\% of~\eqsmall{$\cleanTrain$} on \citepos{Zhu:2019} vision poisoning attack. %
This is despite \citeauthor{Peri:2020}'s method being specifically tuned for \citeauthor{Zhu:2019}'s attack and their evaluation setup being both easier and less realistic by pre\=/training their model using a large known-clean set that is identically distributed to their~\eqsmall{$\cleanTrain$}.  In contrast, target-driven mitigation removed at most~0.03\% of clean data on this attack -- better than \citeauthor{Peri:2020} by two orders of magnitude.

Following Algorithm~\ref{alg:Mitigation}, Table~\ref{tab:ExpRes:Mitigation}'s experiments used only a single, randomly-selected target when performing sanitization.  No steps were taken to account for additional potential targets, e.g.,~over-filtering the training set. Nonetheless, target-driven mitigation still significantly degraded multi-target attacks' performance on other targets not considered when sanitizing.
For example, despite considering one target, speech backdoor's overall attack success rate (ASR) across all targets decreased from 100\% to 4.7\% and 6.5\% for \method{} and \layer{}, respectively -- a $20\times$ reduction.
For \citeauthor{Weber:2021}'s vision backdoor attack, ASR dropped from 90.5\% to 11.9\% and 6.7\% with \method{} and \layer{}, respectively.
The \textit{key takeaway} is that identifying a single target almost entirely mitigates the attack everywhere.

\revTwo{%
\section{Adaptive Attacks}
\label{sec:ExpRes:AdaptiveAttacks}
}%

\begin{table}[t]
  \centering%
  \caption{%
      \textit{Target-Driven Attack Mitigation}:
      Alg.~\ref{alg:Mitigation}'s target-driven, iterative data sanitization applied to Sec.~\ref{sec:ExpRes:Attacks}'s four attacks for randomly selected targets.  The attacks were neutralized with few clean instances removed and little change in test accuracy.  Attack success rate~(ASR) is w.r.t.\ the analyzed target.
      Results are averaged across related setups with ${{\geq}10}$~trials per setup.  \revTwo{Detailed results appear in Sec.~\ref{sec:App:MoreExps:Backdoor:Speech}\==\ref{sec:App:MoreExps:Poison:CIFAR}.}%
  }
  \label{tab:ExpRes:Mitigation}
  {%
    \TableFontSize
\small
\renewcommand{\arraystretch}{1.2}
\setlength{\dashlinedash}{0.4pt}
\setlength{\dashlinegap}{1.5pt}
\setlength{\arrayrulewidth}{0.3pt}

\newcommand{\TwoRowHead}[1]{\multirow{2}{*}{#1}}
\newcommand{\DSName}[1]{\multirow{2}{*}{#1}}
\newcommand{\PercentRemHead}[1]{\TwoRowHead{\% #1 Rem.}}
\newcommand{\PZ}{\phantom{0}}
\newcommand{\ptZ}{\phantom{.}\PZ}
\newcommand{\ptZZ}{\phantom{.}\PZ\PZ}
\newcommand{\ptASR}{}

\newcommand{\MultiHead}[1]{\multicolumn{2}{c}{#1}}

\newcommand{\AtkName}[1]{\multirow{4}{*}{\rotatebox[origin=c]{90}{{#1}}}}
\newcommand{\BDAttack}{\AtkName{Backdoor}}
\newcommand{\PoisAttack}{\AtkName{Poison}}

\newcommand{\CosInM}{\method{}}
\newcommand{\LayerM}{\layer{}}

\newcommand{\PRem}[2]{#1}
\newcommand{\ASR}[2]{\multirow{2}{*}{#1}}
\newcommand{\PAcc}[2]{\multirow{2}{*}{#1}}

\newcommand{\PChg}[1]{{\color{ForestGreen} +#1}}
\newcommand{\NoChg}{0.0}
\newcommand{\NegChg}[1]{{\color{BrickRed} -#1}}

\newcommand{\DsSep}{\cdashline{2-9}}
\newcommand{\MethodSep}{}

\begin{tabular}{@{}lclrrrrrr@{}}
  \toprule
     & \TwoRowHead{Dataset}
     & \TwoRowHead{Method} & \MultiHead{\% Removed} & \MultiHead{ASR \%} & \MultiHead{Test Acc. \%} \\\cmidrule(lr){4-5}\cmidrule(lr){6-7}\cmidrule(l){8-9}
     &&                    & \eqsmall{$\advTrain$}  & \eqsmall{$\cleanTrain$} & Orig.   & Ours   &  Orig. & Chg.   \\
  \midrule
  \BDAttack
  & \DSName{Speech}
     &  \CosInM & \PRem{98.4}{1.5}  & \PRem{0.07}{0.08} & \ASR{99.8}{}      & 0\ptASR{}  & \PAcc{97.7}{0.1} & \NoChg{} \\\MethodSep
     && \LayerM & \PRem{98.1}{1.3}  & \PRem{0.17}{0.17} &                   & 0\ptASR{}  &                  & \NoChg{} \\
  \DsSep
  & \DSName{Vision}
     &  \CosInM & \PRem{87.6}{12.1} & \PRem{0.50}{0.50} & \ASR{90.5}{6.2}   & 0\ptASR{}  & \PAcc{96.2}{2.9} & \NegChg{0.1} \\\MethodSep
     && \LayerM & \PRem{92.3}{8.6}  & \PRem{0.73}{0.47} &                   & 0\ptASR{}  &                  & \NegChg{0.1} \\
  \midrule
  \PoisAttack
  & \DSName{NLP}
     &  \CosInM & \PRem{99.6}{0.9}  & \PRem{0.02}{0.02} & \ASR{97.9}{4.7}   & 0\ptASR{}  & \PAcc{94.2}{0.3} & \PChg{0.2} \\\MethodSep
     && \LayerM & \PRem{99.9}{0.2}  & \PRem{0.03}{0.06} &                   & 0\ptASR{}  &                  & \PChg{0.1} \\
  \DsSep
  & \DSName{Vision}
     &  \CosInM & \PRem{65.1}{60.5} & \PRem{0.02}{0.06} & \ASR{77.9}{}      & 0\ptASR{}  & \PAcc{87.1}{0.3} & \NoChg{} \\\MethodSep
     && \LayerM & \PRem{58.6}{32.6} & \PRem{0.03}{0.09} &                   & 0\ptASR{}  &                  & \NoChg{} \\%
  \bottomrule
\end{tabular}
   }
\end{table}

\begin{figure*}[t]
  \centering
  \revTwo{%
\newcommand{\legendSpacer}{\hspace*{13pt}}

\begin{tikzpicture}
  \begin{axis}[%
    width=\textwidth,
      ybar,
      hide axis,  %
      xmin=0,  %
      xmax=1,
      ymin=0,
      ymax=1,
      scale only axis,width=1mm, %
      legend cell align={left},              %
      legend style={font=\legendFontSize},
      legend columns=4,
    ]
    \addplot [Baseline Joint] coordinates {(0,0)};
    \addlegendentry{Baseline\legendSpacer}

    \addplot [Joint Opt] coordinates {(0,0)};
    \addlegendentry{Adaptive Joint Optimization Attack with \method{}}
  \end{axis}
\end{tikzpicture}
  }

  \begin{minipage}[t]{\minipageWidth}
    \revTwo{%
\pgfplotstableread[col sep=comma]{plots/data/pois_cifar_joint-opt_auprc.csv}\datatable%
\begin{tikzpicture}
  \begin{axis}[
    xbar={\BarLineWidth},
    height={\BottomFiltHeight},%
    width={8cm},%
    bar width={\DetectBarWidthVal},%
    xmin=0,         %
    xmax=0.8,
    xtick distance={0.20},%
    minor x tick num={1},%
    xmajorgrids,
    x tick label style={font=\plotFontSize,align=center},%
    xlabel={\plotFontSize AUPRC},
    ytick=data,     %
    legend style={at={(axis cs:65,0.2)},anchor=south west},
    ytick style={draw=none},%
    y tick label style={font=\plotFontSize},%
    enlarge y limits={0.100},%
    yticklabels={\method{}\ours{},\layer{}\ours{},\tracinCP{},\tracin{},Inf.\ Func.,Rep.\ Pt.},
  ]
  \addplot [Joint Opt] table [x index=3, y index=1] {\datatable};
  \addplot [Baseline Joint] table [x index=2, y index=1] {\datatable};

  \end{axis}
\end{tikzpicture}
     }
    \captionof{figure}{%
      \revTwo{%
      \textit{Adversarial-Set Identification for the Adaptive Vision Poison Attack}:
      Mean AUPRC identifying the adversarial set where \citeauthor{Zhu:2019}'s vision poison attack is adapted to jointly minimize the adversarial loss and the \method{} influence.
      The baseline results (\GreenBarColor) used \citeauthor{Zhu:2019}'s standard attack.
      Our jointly-optimized attack reduced the \method{} similarity by~7\% at the cost of a 19\%~decrease in ASR w.r.t.\ Table~\ref{tab:ExpRes:Mitigation}.
      See suppl.\ Sec.~\ref{sec:App:MoreExps:JointOpt:FullResults} for the granular results, including variance.
      }%
    }%
    \label{fig:ExpRes:AdaptiveAttacker:Pois:CIFAR:Joint:Identification}
  \end{minipage}
  \hfill
  \begin{minipage}[t]{\minipageWidth}
    \revTwo{%
\pgfplotstableread[col sep=comma]{plots/data/pois_cifar_joint-opt_detect.csv}\datatable%
\begin{tikzpicture}
  \begin{axis}[
    xbar={\BarLineWidth},
    height={\BottomFiltHeight},%
    width={0.855\textwidth},%
    bar width={\DetectBarWidthVal},%
    xmin=0,         %
    xmax=0.9,
    xtick distance={0.20},%
    minor x tick num={1},%
    xmajorgrids,
    x tick label style={font=\plotFontSize,align=center},%
    xlabel={\plotFontSize AUPRC},
    ytick=data,     %
    legend style={at={(axis cs:65,0.2)},anchor=south west},
    ytick style={draw=none},%
    y tick label style={font=\plotFontSize},%
    enlarge y limits={0.100},%
    yticklabels={\fitWith{\method}\ours,\fitWith{\layer}\ours,Max.\ \KNN{} Dist.,Min.\ \KNN{} Dist.,Most Certain,Least Certain},
  ]
  \addplot [Joint Opt] table [x index=3, y index=1] {\datatable};
  \addplot [Baseline Joint] table [x index=2, y index=1] {\datatable};

  \end{axis}
\end{tikzpicture}
     }
    \captionof{figure}{%
      \revTwo{%
      \textit{Target Identification for the Adaptive Vision Poison Attack}:
      Mean target identification AUPRC where \citeauthor{Zhu:2019}'s vision poison attack is jointly optimized with minimizing \method{}.
      \fit{} with \method{}'s mean target identification AUPRC declined only~9\% versus the baseline -- an average change in target rank of 1.16 to~1.28 -- still strong performance.
      Results are averaged across related setups with ${{\geq}10}$~trials per setup.
      See suppl.\ Sec.~\ref{sec:App:MoreExps:JointOpt:FullResults} for the full results, including variance.
      }%
    }%
    \label{fig:ExpRes:AdaptiveAttacker:Pois:CIFAR:Joint:TargDetect}
  \end{minipage}
\end{figure*}
 
\revTwo{%
We now consider how an attacker who knows about our defense could evade it or otherwise exploit it.
Our method relies on multiple attack instances having unusually high influence on the target instance, as measured by model gradients during training.
As such, it may fail to detect an attack if (1) there are too few attack instances (relative to upper-tail count~$\anomCount$); (2) the attack is a large fraction of the data (e.g.,~10\%), in which case the instances are too common to be considered outliers; or (3) the attack instances appear no more influential on the target than clean instances.
The first case is only a risk when the target instance is ``easy'' enough to influence that the attack can be carried out with very few instances.
The second case requires a very powerful attacker, one who would be hard to stop without additional constraints or assumptions.
The third case represents a possible weakness: if attackers can craft an attack that successfully changes the target label without appearing unusual, then our defense will fail.
Whether or not the attacker can succeed is an empirical question, which will depend on the dataset, the model, the choice of target instance, and the attack method. Below, we provide evidence that \fit{} (and \method{} in particular) remain effective against an attacker who is trying to evade our defense.%
}

\revTwo{%
\evalparagraph{Seed-Instance Optimization:}
Some training-set attacks rely on a \textit{fixed}, predefined adversarial perturbation which is applied to clean seed instances~\citep{SpeechDataset,Weber:2021}. An attacker who is aware of our defense could choose seed instances that organically appear uninfluential on some target, as estimated by \bothMethod{}.
We apply this idea to \citepos{Weber:2021} CIFAR10 backdoor attack and find that our method continues to perform well against this simple, adaptive attacker: \bothMethod{} achieve 0.93~AUPRC for adversarial-set identification, a 7\%~decline versus the baseline, even when the attacker is given information beyond our threat model, e.g.,~knowledge of the random, initial parameters.
Overall, the attacker's gains from choosing different seed instances are limited. See suppl.\ Sec.~\ref{sec:App:MoreExps:AdaptiveAdvSelection} for full details.

\evalparagraph{Perturbation Optimization:}
A stronger adaptive adversary actively optimizes the adversarial perturbation to be both highly effective \textit{and} have a low (\method) influence estimate. For attacks that find perturbations through gradient-based optimization~\citep{Wallace:2021,Zhu:2019}, the most natural way to incorporate knowledge of our method would be to add some estimate of \method{} to the loss being optimized. Since \method{} --  like poison -- relies on the entire training trajectory of the model, which in turn relies on the perturbations being crafted, computing \method{}'s exact gradient is intractable~\citep{Blum:1992:TrainingNpComplete}. However, the attacker can still use a surrogate that approximates \method{}, such as by using fixed model checkpoints in the computation of \method{}. %

To evaluate the robustness of our methods to adaptive perturbations, we apply this joint optimization idea to \citepos{Zhu:2019} vision poison attack. %
We focused on \citeauthor{Zhu:2019}'s attack because (1)~it is the attack on which our method performed the worst and (2)~the other optimized attack we consider~\citep{Wallace:2021} is restricted to only discrete token replacements, which reduces the attacker's flexibility.

\citeauthor{Zhu:2019} iteratively optimize a set of poison examples to minimize the adversarial loss. To increase the likelihood of successfully changing the target's label, \citeauthor{Zhu:2019} compute this loss over multiple surrogate models. The perturbations of the poison examples are constrained to an $\ell_{\infty}$~ball so that they appear relatively natural to humans.
Our jointly optimized, adaptive attack adds a second term to \citeauthor{Zhu:2019}'s adversarial loss. This new term estimates the \method{} influence using the same surrogate models.
Hyperparameter~$\surrogateHyper$ balances the two objectives. See suppl.\ Section~\ref{sec:App:JointOpt:Setup} for the full details of this jointly-optimized, adaptive attack, including tuning of~$\surrogateHyper$.

Where possible, these adaptive experiments followed the same vision poison evaluation setup detailed in Section~\ref{sec:ExpRes:Attacks}. However, simultaneously optimizing surrogate \method{} has a much higher GPU memory cost, so we were forced to adjust the poison size and number of surrogate checkpoints to 40 and four, respectively, which degrades the attacker's success rate from 77.9\% in Table~\ref{tab:ExpRes:Mitigation} to~64.3\%.

Fig.~\ref{fig:ExpRes:AdaptiveAttacker:Pois:CIFAR:Joint:Identification} summarizes our adversarial-set identification performance on \citeauthor{Zhu:2019}'s vision poisoning attack with and without the jointly optimized surrogate \method{} loss term.\footnote{\revTwo{Both versions of the attack (i.e.,~with \& without joint optimization) used the same evaluation setup, including the reduced surrogate model count.}}
Observe that the attack degraded the performance of \bothMethod{} \& \tracinCP{}, albeit slightly.
After accounting for all other factors, this joint optimization decreased \method{}'s mean adversarial-set identification from a baseline of 0.73~AUPRC to~0.68 (a 7\%~drop).
Fig.~\ref{fig:ExpRes:AdaptiveAttacker:Pois:CIFAR:Joint:TargDetect} visualizes joint attack optimization's effect on target identification.
Overall, joint optimization reduced \fit{} with \method{}'s mean target identification AUPRC from a baseline of~0.86 to~0.78 (9\%~drop).
Since \citeauthor{Zhu:2019}'s attack is single-target, this translates to the target's average rank declining from~1.16 to~1.28 --- still high performance.

Table~\ref{tab:App:MoreExps:JointOpt:Mitigate} details target-driven mitigation's effectiveness under this jointly-optimized attack.
In summary, the attack results in very little clean data removal (at most 0.05\% of \eqsmall{$\cleanTrain$} on average).
Also, the average test accuracy after mitigation either improved or stayed the same in all but one case where it decreased by only 0.1\%.

In summary, even when the adversary specifically optimized for our defense, we still effectively identify both the adversarial set and the target and then mitigate the adaptive attack.%
}

\begin{table}[t]
  \centering
  \revTwo{%
  \caption{%
    \revTwo{%
      \textit{Attack Mitigation for the Adaptive Vision Poison Attack}:
      Algorithm~\ref{alg:Mitigation}'s target-driven data sanitization
      where \citepos{Zhu:2019} vision poison attack is jointly optimized with minimizing the \method{} influence.
      The results below consider exclusively the jointly-optimized attack with ${\surrogateHyper = 10^{-2}}$.
      Clean-data removal remains low, and test accuracy either improved or stayed the same for in but one setup.
      The performance is comparable to the results with \citepos{Zhu:2019}'s standard vision poisoning attack (see Table~\ref{tab:App:MoreExps:Pois:CIFAR:Mitigate}).
      Bold denotes the best mean performance with ${{\geq}10}$~trials per class pair.
    }%
  }\label{tab:App:MoreExps:JointOpt:Mitigate}
  {
    \TableFontSize%
\renewcommand{\arraystretch}{1.2}
\setlength{\dashlinedash}{0.4pt}
\setlength{\dashlinegap}{1.5pt}
\setlength{\arrayrulewidth}{0.3pt}

\newcommand{\MultiHead}[1]{\multicolumn{2}{c}{#1}}

\newcommand{\TwoRowHead}[1]{\multirow{2}{*}{#1}}
\newcommand{\ClassPair}[2]{\multirow{2}{*}{#1} & \multirow{2}{*}{#2}}
\newcommand{\PZ}{\phantom{0}}
\newcommand{\ptZ}{\phantom{.}\PZ}
\newcommand{\ptZZ}{\phantom{.}\PZ\PZ}
\newcommand{\ptASR}{}

\newcommand{\CosInM}{\method{}}
\newcommand{\LayerM}{\layer{}}

\newcommand{\OneRemB}{100\ptZ}
\newcommand{\ZeroRemB}{0\ptZ}

\newcommand{\PRem}[2]{#1}
\newcommand{\PRemB}[2]{\textBF{#1}}
\newcommand{\ASR}[2]{\multirow{2}{*}{#1}}
\newcommand{\PAcc}[2]{\multirow{2}{*}{#1}}
\newcommand{\PAsr}[2]{#1}

\newcommand{\PChg}[2]{{\color{ForestGreen} +#1}}
\newcommand{\NoChg}{0.0}
\newcommand{\NegChg}[2]{{\color{BrickRed} -#1}}

\newcommand{\DsSep}{\cdashline{1-9}}
\newcommand{\MethodSep}{}

\begin{tabular}{cclrrrrrr}
  \toprule
     \MultiHead{Classes}
     & \TwoRowHead{Method} & \MultiHead{\% Removed}      & \MultiHead{ASR \%} & \MultiHead{Test Acc. \%} \\\cmidrule(lr){1-2}\cmidrule(lr){4-5}\cmidrule(lr){6-7}\cmidrule(lr){8-9}
     $\yTarg$ & $\yAdv$
     &                    & $\advTrain$  & $\cleanTrain$ & Orig.   & Ours     &  Orig. & Chg.   \\
  \midrule
  \ClassPair{Bird}{Dog}
     & \CosInM    & \PRemB{36.0}{20.1} & \PRem{0.02}{0.04}   & \PAcc{76.2}{}  & \PAsr{0}{0}   & \PAcc{87.0}{0.3}  & \PChg{0.1}{-0.1} \\
     && \LayerM   & \PRem{30.3}{24.1}  & \PRemB{0.00}{0.01}  &                & \PAsr{0}{0}   &                   & \PChg{0.1}{0.0} \\
  \DsSep
  \ClassPair{Dog}{Bird}
     & \CosInM    & \PRem{21.6}{16.3}  & \PRemB{0.00}{0.00}  & \PAcc{57.1}{}  & \PAsr{0}{0}   & \PAcc{87.1}{0.3}  & \PChg{0.1}{0.1} \\
     && \LayerM   & \PRemB{21.9}{14.8} & \PRemB{0.00}{0.00}  &                & \PAsr{0}{0}   &                   & \NegChg{-0.1}{0.2} \\
  \DsSep
  \ClassPair{Frog}{Deer}
     & \CosInM    & \PRem{17.5}{14.7}  & \PRemB{0.00}{0.01}  & \PAcc{38.1}{}   & \PAsr{0}{0}   & \PAcc{87.1}{0.3}  & \NoChg \\
     && \LayerM   & \PRemB{19.4}{18.8} & \PRemB{0.00}{0.00}  &                 & \PAsr{0}{0}   &                   & \NoChg \\
  \DsSep
  \ClassPair{Deer}{Frog}
     & \CosInM    & \PRemB{85.0}{24.5} & \PRem{0.18}{0.23}   & \PAcc{81.0}{}   & \PAsr{0}{0}   & \PAcc{87.1}{0.2}  & \NoChg  \\
     && \LayerM   & \PRem{82.3}{23.2}  & \PRemB{0.13}{0.14}  &                 & \PAsr{0}{0}   &                   & \PChg{0.1}{0.1} \\
  \bottomrule
\end{tabular}
   }
}%
\end{table}
\section{Discussion and Conclusions}\label{sec:Conclusions}

This paper explores two related tasks.
First, we propose training-set attack \keyword{target identification}.
This task is an important part of protecting critical ML~systems but has thus far received relatively little attention.
For example, it is impossible to conduct a truly informed cost-benefit analysis of risk without knowing the attacker's target and by extension their objective.
Knowledge of the target also enables forensic and security analysts to reason about an attacker's identity -- a key step to permanently stopping attacks by disabling the attacker.
An open question is whether target identification can be combined with certified guarantees, either building on our FIT framework or creating an alternative to it.

\fit{} relies on identifying (groups of) highly influential training instances.
To that end, we propose \keyword{renormalized influence}.
By addressing influence's low-loss penalty, renormalization significantly improves influence estimation in both adversarial and non-adversarial settings -- often by an order of magnitude or more.
\section*{Acknowledgments}
  The authors thank Jonathan Brophy for helpful discussions.
  This work was supported by a grant from the Air Force Research Laboratory and the Defense Advanced Research Projects Agency (DARPA) — agreement number FA8750\=/16\=/C\=/0166, subcontract K001892\=/00\=/S05, as well as a second grant from DARPA, agreement number \linebreak HR00112090135.
  This work benefited from access to the University of Oregon's HPC Talapas.

\bibliographystyle{ACM-Reference-Format}
\balance
\bibliography{bib/ref.bib,bib/daniel.bib}

%%% -*-BibTeX-*-
%%% Do NOT edit. File created by BibTeX with style
%%% ACM-Reference-Format-Journals [18-Jan-2012].

\begin{thebibliography}{86}

%%% ====================================================================
%%% NOTE TO THE USER: you can override these defaults by providing
%%% customized versions of any of these macros before the \bibliography
%%% command.  Each of them MUST provide its own final punctuation,
%%% except for \shownote{}, \showDOI{}, and \showURL{}.  The latter two
%%% do not use final punctuation, in order to avoid confusing it with
%%% the Web address.
%%%
%%% To suppress output of a particular field, define its macro to expand
%%% to an empty string, or better, \unskip, like this:
%%%
%%% \newcommand{\showDOI}[1]{\unskip}   % LaTeX syntax
%%%
%%% \def \showDOI #1{\unskip}           % plain TeX syntax
%%%
%%% ====================================================================

\ifx \showCODEN    \undefined \def \showCODEN     #1{\unskip}     \fi
\ifx \showDOI      \undefined \def \showDOI       #1{#1}\fi
\ifx \showISBNx    \undefined \def \showISBNx     #1{\unskip}     \fi
\ifx \showISBNxiii \undefined \def \showISBNxiii  #1{\unskip}     \fi
\ifx \showISSN     \undefined \def \showISSN      #1{\unskip}     \fi
\ifx \showLCCN     \undefined \def \showLCCN      #1{\unskip}     \fi
\ifx \shownote     \undefined \def \shownote      #1{#1}          \fi
\ifx \showarticletitle \undefined \def \showarticletitle #1{#1}   \fi
\ifx \showURL      \undefined \def \showURL       {\relax}        \fi
% The following commands are used for tagged output and should be
% invisible to TeX
\providecommand\bibfield[2]{#2}
\providecommand\bibinfo[2]{#2}
\providecommand\natexlab[1]{#1}
\providecommand\showeprint[2][]{arXiv:#2}

\bibitem[Agarwal et~al\mbox{.}(2019)]%
        {Agarwal:2019}
\bibfield{author}{\bibinfo{person}{Shruti Agarwal}, \bibinfo{person}{Hany
  Farid}, \bibinfo{person}{Yuming Gu}, \bibinfo{person}{Mingming He},
  \bibinfo{person}{Koki Nagano}, {and} \bibinfo{person}{Hao Li}.}
  \bibinfo{year}{2019}\natexlab{}.
\newblock \showarticletitle{Protecting World Leaders Against Deep Fakes}. In
  \bibinfo{booktitle}{\emph{Proceedings of the {CVPR} Workshop on Media
  Forensics}}. \bibinfo{address}{Long Beach, California}.
\newblock


\bibitem[Aghakhani et~al\mbox{.}(2020)]%
        {Aghakhani:2020}
\bibfield{author}{\bibinfo{person}{Hojjat Aghakhani}, \bibinfo{person}{Thorsten
  Eisenhofer}, \bibinfo{person}{Lea Sch{\"{o}}nherr}, \bibinfo{person}{Dorothea
  Kolossa}, \bibinfo{person}{Thorsten Holz}, \bibinfo{person}{Christopher
  Kruegel}, {and} \bibinfo{person}{Giovanni Vigna}.}
  \bibinfo{year}{2020}\natexlab{}.
\newblock \showarticletitle{{VENOMAVE}: Clean-Label Poisoning Against Speech
  Recognition}.
\newblock \bibinfo{journal}{\emph{CoRR}} (\bibinfo{year}{2020}).
\newblock
\showeprint[arxiv]{2010.10682}~[cs.SD]


\bibitem[Barnett and Lewis(1978)]%
        {Barnett:1978}
\bibfield{author}{\bibinfo{person}{Vic Barnett} {and} \bibinfo{person}{Toby
  Lewis}.} \bibinfo{year}{1978}\natexlab{}.
\newblock \bibinfo{booktitle}{\emph{Outliers in Statistical Data}
  (\bibinfo{edition}{2nd edition} ed.)}.
\newblock \bibinfo{publisher}{John Wiley \& Sons Ltd.},
  \bibinfo{address}{Hoboken, New Jersey, USA}.
\newblock


\bibitem[Barshan et~al\mbox{.}(2020)]%
        {Barshan:2020:RelatIF}
\bibfield{author}{\bibinfo{person}{Elnaz Barshan},
  \bibinfo{person}{Marc{-}Etienne Brunet}, {and}
  \bibinfo{person}{Gintare~Karolina Dziugaite}.}
  \bibinfo{year}{2020}\natexlab{}.
\newblock \showarticletitle{{RelatIF}: Identifying Explanatory Training Samples
  via Relative Influence}. In \bibinfo{booktitle}{\emph{Proceedings of the 23rd
  International Conference on Artificial Intelligence and Statistics}}
  \emph{(\bibinfo{series}{{AISTATS}'20})}.
\newblock


\bibitem[Basu et~al\mbox{.}(2021)]%
        {Basu:2021:Influence}
\bibfield{author}{\bibinfo{person}{Samyadeep Basu}, \bibinfo{person}{Phil
  Pope}, {and} \bibinfo{person}{Soheil Feizi}.}
  \bibinfo{year}{2021}\natexlab{}.
\newblock \showarticletitle{Influence Functions in Deep Learning Are Fragile}.
  In \bibinfo{booktitle}{\emph{Proceedings of the 9th~International Conference
  on Learning Representations}} \emph{(\bibinfo{series}{{ICLR}'21})}.
  \bibinfo{address}{Virtual Only}.
\newblock


\bibitem[Basu et~al\mbox{.}(2020)]%
        {Basu:2020:OnSecond}
\bibfield{author}{\bibinfo{person}{Samyadeep Basu}, \bibinfo{person}{Xuchen
  You}, {and} \bibinfo{person}{Soheil Feizi}.} \bibinfo{year}{2020}\natexlab{}.
\newblock \showarticletitle{On Second-Order Group Influence Functions for
  Black-Box Predictions}. In \bibinfo{booktitle}{\emph{Proceedings of the
  37th~International Conference on Machine Learning}}
  \emph{(\bibinfo{series}{{ICML}'20})}. \bibinfo{publisher}{PMLR},
  \bibinfo{address}{Virtual Only}.
\newblock


\bibitem[Biggio et~al\mbox{.}(2012)]%
        {Biggio:2012:Poisoning}
\bibfield{author}{\bibinfo{person}{Battista Biggio}, \bibinfo{person}{Blaine
  Nelson}, {and} \bibinfo{person}{Pavel Laskov}.}
  \bibinfo{year}{2012}\natexlab{}.
\newblock \showarticletitle{Poisoning Attacks against Support Vector Machines}.
  In \bibinfo{booktitle}{\emph{Proceedings of the 29th International Conference
  on Machine Learning}} \emph{(\bibinfo{series}{{ICML}'12})}.
  \bibinfo{publisher}{PMLR}, \bibinfo{address}{Edinburgh, Great Britain}.
\newblock


\bibitem[Blum and Rivest(1992)]%
        {Blum:1992:TrainingNpComplete}
\bibfield{author}{\bibinfo{person}{Avrim~L. Blum} {and}
  \bibinfo{person}{Ronald~L. Rivest}.} \bibinfo{year}{1992}\natexlab{}.
\newblock \showarticletitle{Training a 3-node neural network is {NP}-complete}.
\newblock \bibinfo{journal}{\emph{Neural Networks}} \bibinfo{volume}{5},
  \bibinfo{number}{1} (\bibinfo{year}{1992}), \bibinfo{pages}{117--127}.
\newblock
\showISSN{0893-6080}


\bibitem[Brophy et~al\mbox{.}(2022)]%
        {Brophy:2022:TreeInfluence}
\bibfield{author}{\bibinfo{person}{Jonathan Brophy}, \bibinfo{person}{Zayd
  Hammoudeh}, {and} \bibinfo{person}{Daniel Lowd}.}
  \bibinfo{year}{2022}\natexlab{}.
\newblock \bibinfo{title}{Adapting and Evaluating Influence-Estimation Methods
  for Gradient-Boosted Decision Trees}.
\newblock
\newblock
\showeprint{2205.00359}~[cs.LG]
\newblock
\shownote{arXiv}.


\bibitem[Chen et~al\mbox{.}(2019)]%
        {Chen:2019}
\bibfield{author}{\bibinfo{person}{Bryant Chen}, \bibinfo{person}{Wilka
  Carvalho}, \bibinfo{person}{Nathalie Baracaldo}, \bibinfo{person}{Heiko
  Ludwig}, \bibinfo{person}{Benjamin Edwards}, \bibinfo{person}{Taesung Lee},
  \bibinfo{person}{Ian Molloy}, {and} \bibinfo{person}{Biplav Srivastava}.}
  \bibinfo{year}{2019}\natexlab{}.
\newblock \showarticletitle{Detecting Backdoor Attacks on Deep Neural Networks
  by Activation Clustering}. In \bibinfo{booktitle}{\emph{Proceedings of the
  AAAI Workshop on Artificial Intelligence Safety}}
  \emph{(\bibinfo{series}{{SafeAI}'19})}. \bibinfo{publisher}{Association for
  the Advancement of Artificial Intelligence}, \bibinfo{address}{Honolulu,
  Hawaii, USA}.
\newblock


\bibitem[Chen et~al\mbox{.}(2017)]%
        {Chen:2017:Targeted}
\bibfield{author}{\bibinfo{person}{Xinyun Chen}, \bibinfo{person}{Chang Liu},
  \bibinfo{person}{Bo Li}, \bibinfo{person}{Kimberly Lu}, {and}
  \bibinfo{person}{Dawn Song}.} \bibinfo{year}{2017}\natexlab{}.
\newblock \bibinfo{title}{Targeted Backdoor Attacks on Deep Learning Systems
  Using Data Poisoning}.
\newblock
\newblock
\showeprint[arxiv]{1712.05526}~[cs.CR]


\bibitem[Chen et~al\mbox{.}(2021)]%
        {Chen:2021:Hydra}
\bibfield{author}{\bibinfo{person}{Yuanyuan Chen}, \bibinfo{person}{Boyang Li},
  \bibinfo{person}{Han Yu}, \bibinfo{person}{Pengcheng Wu}, {and}
  \bibinfo{person}{Chunyan Miao}.} \bibinfo{year}{2021}\natexlab{}.
\newblock \showarticletitle{{HyDRA}: Hypergradient Data Relevance Analysis for
  Interpreting Deep Neural Networks}. In \bibinfo{booktitle}{\emph{Proceedings
  of the 35th {AAAI} Conference on Artificial Intelligence}}
  \emph{(\bibinfo{series}{{AAAI}'21})}. \bibinfo{publisher}{Association for the
  Advancement of Artificial Intelligence}, \bibinfo{address}{Virtual Only}.
\newblock


\bibitem[Coleman et~al\mbox{.}(2017)]%
        {Coleman:2017}
\bibfield{author}{\bibinfo{person}{Cody~A. Coleman}, \bibinfo{person}{Deepak
  Narayanan}, \bibinfo{person}{Daniel Kang}, \bibinfo{person}{Tian Zhao},
  \bibinfo{person}{Jian Zhang}, \bibinfo{person}{Luigi Nardi},
  \bibinfo{person}{Peter Bailis}, \bibinfo{person}{Kunle Olukotun},
  \bibinfo{person}{Chris R\'{e}}, {and} \bibinfo{person}{Matei Zaharia}.}
  \bibinfo{year}{2017}\natexlab{}.
\newblock \showarticletitle{{DAWNBench}: An End-to-End Deep Learning Benchmark
  and Competition}. In \bibinfo{booktitle}{\emph{Proceedings of the 2017
  {NeurIPS} Workshop on Machine Learning Systems}}. \bibinfo{publisher}{Curran
  Associates, Inc.}, \bibinfo{address}{Long Beach, California, USA}.
\newblock


\bibitem[Cook and Weisberg(1982)]%
        {Cook:1982:InfluenceRegression}
\bibfield{author}{\bibinfo{person}{R.~Dennis Cook} {and}
  \bibinfo{person}{Sanford Weisberg}.} \bibinfo{year}{1982}\natexlab{}.
\newblock \bibinfo{booktitle}{\emph{Residuals and Influence in Regression}}.
\newblock \bibinfo{publisher}{Chapman and Hall}, \bibinfo{address}{New York}.
\newblock


\bibitem[Croux and Rousseeuw(1992)]%
        {Croux:1992}
\bibfield{author}{\bibinfo{person}{Christophe Croux} {and}
  \bibinfo{person}{Peter~J. Rousseeuw}.} \bibinfo{year}{1992}\natexlab{}.
\newblock \showarticletitle{A Class of High-Breakdown Scale Estimators Based on
  Subranges}.
\newblock \bibinfo{journal}{\emph{Communications in Statistics - Theory and
  Methods}} \bibinfo{volume}{21}, \bibinfo{number}{7} (\bibinfo{year}{1992}),
  \bibinfo{pages}{1935--1951}.
\newblock


\bibitem[Davis and Goadrich(2006)]%
        {Davis:2006}
\bibfield{author}{\bibinfo{person}{Jesse Davis} {and} \bibinfo{person}{Mark
  Goadrich}.} \bibinfo{year}{2006}\natexlab{}.
\newblock \showarticletitle{The Relationship Between Precision-Recall and {ROC}
  Curves}. In \bibinfo{booktitle}{\emph{Proceedings of the 23rd International
  Conference on Machine Learning}} \emph{(\bibinfo{series}{{ICML}'06})}.
  \bibinfo{publisher}{PMLR}, \bibinfo{address}{Pittsburgh, Pennsylvania}.
\newblock


\bibitem[Doan et~al\mbox{.}(2020)]%
        {Doan:2020}
\bibfield{author}{\bibinfo{person}{Bao~Gia Doan}, \bibinfo{person}{Ehsan
  Abbasnejad}, {and} \bibinfo{person}{Damith~C. Ranasinghe}.}
  \bibinfo{year}{2020}\natexlab{}.
\newblock \showarticletitle{Februus: Input Purification Defense Against
  {Trojan} Attacks on Deep Neural Network Systems}. In
  \bibinfo{booktitle}{\emph{Proceedings of the 36th Annual Computer Security
  Applications Conference}} \emph{(\bibinfo{series}{{ACSAC}'2020})}.
  \bibinfo{publisher}{Association for Computing Machinery},
  \bibinfo{address}{Virtual Only}.
\newblock


\bibitem[Feldman and Zhang(2020)]%
        {Feldman:2020}
\bibfield{author}{\bibinfo{person}{Vitaly Feldman} {and}
  \bibinfo{person}{Chiyuan Zhang}.} \bibinfo{year}{2020}\natexlab{}.
\newblock \showarticletitle{What Neural Networks Memorize and Why: Discovering
  the Long Tail via Influence Estimation}. In
  \bibinfo{booktitle}{\emph{Proceedings of the 34th~Conference on Neural
  Information Processing Systems}} \emph{(\bibinfo{series}{{NeurIPS}'20})}.
  \bibinfo{publisher}{Curran Associates, Inc.}, \bibinfo{address}{Virtual
  Only}.
\newblock


\bibitem[Finn et~al\mbox{.}(2017)]%
        {Finn:2017}
\bibfield{author}{\bibinfo{person}{Chelsea Finn}, \bibinfo{person}{Pieter
  Abbeel}, {and} \bibinfo{person}{Sergey Levine}.}
  \bibinfo{year}{2017}\natexlab{}.
\newblock \showarticletitle{Model-Agnostic Meta-Learning for Fast Adaptation of
  Deep Networks}. In \bibinfo{booktitle}{\emph{Proceedings of the 34th
  International Conference on Machine Learning}}
  \emph{(\bibinfo{series}{{ICML}'17})}. \bibinfo{publisher}{PMLR},
  \bibinfo{address}{Sydney, Australia}.
\newblock


\bibitem[Fowl et~al\mbox{.}(2021)]%
        {Fowl:2021:Adversarial}
\bibfield{author}{\bibinfo{person}{Liam Fowl}, \bibinfo{person}{Micah
  Goldblum}, \bibinfo{person}{Ping{-}yeh Chiang}, \bibinfo{person}{Jonas
  Geiping}, \bibinfo{person}{Wojtek Czaja}, {and} \bibinfo{person}{Tom
  Goldstein}.} \bibinfo{year}{2021}\natexlab{}.
\newblock \showarticletitle{Adversarial Examples Make Strong Poisons}. In
  \bibinfo{booktitle}{\emph{Proceedings of the 35th~Conference on Neural
  Information Processing Systems}} \emph{(\bibinfo{series}{{NeurIPS}'21})}.
  \bibinfo{publisher}{Curran Associates, Inc.}, \bibinfo{address}{Virtual
  Only}.
\newblock


\bibitem[Gao et~al\mbox{.}(2019)]%
        {Gao:2019}
\bibfield{author}{\bibinfo{person}{Yansong Gao}, \bibinfo{person}{Change Xu},
  \bibinfo{person}{Derui Wang}, \bibinfo{person}{Shiping Chen},
  \bibinfo{person}{Damith~C. Ranasinghe}, {and} \bibinfo{person}{Surya Nepal}.}
  \bibinfo{year}{2019}\natexlab{}.
\newblock \showarticletitle{{STRIP}: A Defence against {Trojan} Attacks on Deep
  Neural Networks}. In \bibinfo{booktitle}{\emph{Proceedings of the 35th Annual
  Computer Security Applications Conference}}
  \emph{(\bibinfo{series}{{ACSAC}'19})}. \bibinfo{publisher}{Association for
  Computing Machinery}, \bibinfo{address}{San Juan, Puerto Rico, USA}.
\newblock


\bibitem[Geiping et~al\mbox{.}(2021)]%
        {Geiping:2021}
\bibfield{author}{\bibinfo{person}{Jonas Geiping}, \bibinfo{person}{Liam Fowl},
  \bibinfo{person}{W.~Ronny Huang}, \bibinfo{person}{Wojciech Czaja},
  \bibinfo{person}{Gavin Taylor}, \bibinfo{person}{Michael Moeller}, {and}
  \bibinfo{person}{Tom Goldstein}.} \bibinfo{year}{2021}\natexlab{}.
\newblock \showarticletitle{Witches' Brew: Industrial Scale Data Poisoning via
  Gradient Matching}. In \bibinfo{booktitle}{\emph{Proceedings of the 9th
  International Conference on Learning Representations}}
  \emph{(\bibinfo{series}{{ICLR}'21})}. \bibinfo{address}{Virtual Only}.
\newblock


\bibitem[Geirhos et~al\mbox{.}(2020)]%
        {Geirhos:2020}
\bibfield{author}{\bibinfo{person}{Robert Geirhos},
  \bibinfo{person}{J\"{o}rn-Henrik Jacobsen}, \bibinfo{person}{Claudio
  Michaelis}, \bibinfo{person}{Richard Zemel}, \bibinfo{person}{Wieland
  Brendel}, \bibinfo{person}{Matthias Bethge}, {and} \bibinfo{person}{Felix~A.
  Wichmann}.} \bibinfo{year}{2020}\natexlab{}.
\newblock \showarticletitle{Shortcut Learning in Deep Neural Networks}.
\newblock \bibinfo{journal}{\emph{Nature Machine Intelligence}}
  \bibinfo{volume}{2}, \bibinfo{number}{11} (\bibinfo{year}{2020}),
  \bibinfo{pages}{665--673}.
\newblock


\bibitem[Gu et~al\mbox{.}(2019)]%
        {Gu:2019}
\bibfield{author}{\bibinfo{person}{Tianyu Gu}, \bibinfo{person}{Kang Liu},
  \bibinfo{person}{Brendan Dolan-Gavitt}, {and} \bibinfo{person}{Siddharth
  Garg}.} \bibinfo{year}{2019}\natexlab{}.
\newblock \showarticletitle{{BadNets}: Evaluating Backdooring Attacks on Deep
  Neural Networks}.
\newblock \bibinfo{journal}{\emph{IEEE Access}}  \bibinfo{volume}{7}
  (\bibinfo{year}{2019}), \bibinfo{pages}{47230--47244}.
\newblock


\bibitem[Guo et~al\mbox{.}(2020a)]%
        {Guo:2020:CertifiedRemoval}
\bibfield{author}{\bibinfo{person}{Chuan Guo}, \bibinfo{person}{Tom Goldstein},
  \bibinfo{person}{Awni~Y. Hannun}, {and} \bibinfo{person}{Laurens van~der
  Maaten}.} \bibinfo{year}{2020}\natexlab{a}.
\newblock \showarticletitle{Certified Data Removal from Machine Learning
  Models}. In \bibinfo{booktitle}{\emph{Proceedings of the 37th International
  Conference on Machine Learning}} \emph{(\bibinfo{series}{{ICML}'20},
  Vol.~\bibinfo{volume}{119})}. \bibinfo{pages}{3832--3842}.
\newblock


\bibitem[Guo et~al\mbox{.}(2020b)]%
        {Guo:2020:FastIF}
\bibfield{author}{\bibinfo{person}{Han Guo}, \bibinfo{person}{Nazneen~Fatema
  Rajani}, \bibinfo{person}{Peter Hase}, \bibinfo{person}{Mohit Bansal}, {and}
  \bibinfo{person}{Caiming Xiong}.} \bibinfo{year}{2020}\natexlab{b}.
\newblock \showarticletitle{{FastIF}: Scalable Influence Functions for
  Efficient Model Interpretation and Debugging}.
\newblock \bibinfo{journal}{\emph{CoRR}}  \bibinfo{volume}{abs/2012.15781}
  (\bibinfo{year}{2020}).
\newblock
\showeprint[arxiv]{2012.15781}
\urldef\tempurl%
\url{http://arxiv.org/abs/2012.15781}
\showURL{%
\tempurl}


\bibitem[Hammoudeh and Lowd(2022a)]%
        {Hammoudeh:2022:GAS}
\bibfield{author}{\bibinfo{person}{Zayd Hammoudeh} {and}
  \bibinfo{person}{Daniel Lowd}.} \bibinfo{year}{2022}\natexlab{a}.
\newblock \showarticletitle{Identifying a Training-Set Attack's Target Using
  Renormalized Influence Estimation}. In \bibinfo{booktitle}{\emph{Proceedings
  of the 29th {ACM} {SIGSAC} Conference on Computer and Communications
  Security}} \emph{(\bibinfo{series}{{CCS}'22})}.
  \bibinfo{publisher}{Association for Computing Machinery}.
\newblock


\bibitem[Hammoudeh and Lowd(2022b)]%
        {Hammoudeh:2022:CertifiedRegression}
\bibfield{author}{\bibinfo{person}{Zayd Hammoudeh} {and}
  \bibinfo{person}{Daniel Lowd}.} \bibinfo{year}{2022}\natexlab{b}.
\newblock \bibinfo{title}{Reducing Certified Regression to Certified
  Classification}.
\newblock
\newblock
\showeprint[arxiv]{2208.13904}~[cs.LG]


\bibitem[Hara et~al\mbox{.}(2019)]%
        {Hara:2019}
\bibfield{author}{\bibinfo{person}{Satoshi Hara}, \bibinfo{person}{Atsushi
  Nitanda}, {and} \bibinfo{person}{Takanori Maehara}.}
  \bibinfo{year}{2019}\natexlab{}.
\newblock \showarticletitle{Data Cleansing for Models Trained with {SGD}}. In
  \bibinfo{booktitle}{\emph{Proceedings of the 33rd~Conference on Neural
  Information Processing Systems}} \emph{(\bibinfo{series}{{NeurIPS}'19})}.
  \bibinfo{publisher}{Curran Associates, Inc.}, \bibinfo{address}{Vancouver,
  Canada}.
\newblock


\bibitem[Hodge and Austin(2004)]%
        {Hodge:2004}
\bibfield{author}{\bibinfo{person}{Victoria~J. Hodge} {and}
  \bibinfo{person}{Jim Austin}.} \bibinfo{year}{2004}\natexlab{}.
\newblock \showarticletitle{A Survey of Outlier Detection Methodologies}.
\newblock \bibinfo{journal}{\emph{Artificial Intelligence Review}}
  \bibinfo{volume}{22}, \bibinfo{number}{2} (\bibinfo{date}{Oct}
  \bibinfo{year}{2004}), \bibinfo{pages}{85--126}.
\newblock
\showISSN{1573-7462}


\bibitem[Huang et~al\mbox{.}(2020)]%
        {Huang:2020}
\bibfield{author}{\bibinfo{person}{W.~Ronny Huang}, \bibinfo{person}{Jonas
  Geiping}, \bibinfo{person}{Liam Fowl}, \bibinfo{person}{Gavin Taylor}, {and}
  \bibinfo{person}{Tom Goldstein}.} \bibinfo{year}{2020}\natexlab{}.
\newblock \showarticletitle{{MetaPoison}: Practical General-purpose Clean-label
  Data Poisoning}. In \bibinfo{booktitle}{\emph{Proceedings of the
  34th~Conference on Neural Information Processing Systems}}
  \emph{(\bibinfo{series}{{NeurIPS}'20})}. \bibinfo{publisher}{Curran
  Associates, Inc.}, \bibinfo{address}{Virtual Only}.
\newblock


\bibitem[Jagielski et~al\mbox{.}(2021)]%
        {Jagielski:2021:Subpopulation}
\bibfield{author}{\bibinfo{person}{Matthew Jagielski}, \bibinfo{person}{Giorgio
  Severi}, \bibinfo{person}{Niklas Pousette~Harger}, {and}
  \bibinfo{person}{Alina Oprea}.} \bibinfo{year}{2021}\natexlab{}.
\newblock \showarticletitle{Subpopulation Data Poisoning Attacks}. In
  \bibinfo{booktitle}{\emph{Proceedings of the 28th {ACM} {SIGSAC} Conference
  on Computer and Communications Security}} \emph{(\bibinfo{series}{CCS '21})}.
  \bibinfo{publisher}{Association for Computing Machinery},
  \bibinfo{address}{Virtual Only}.
\newblock


\bibitem[Jia et~al\mbox{.}(2021)]%
        {Jia:2021}
\bibfield{author}{\bibinfo{person}{Jinyuan Jia}, \bibinfo{person}{Xiaoyu Cao},
  {and} \bibinfo{person}{Neil~Zhenqiang Gong}.}
  \bibinfo{year}{2021}\natexlab{}.
\newblock \showarticletitle{Intrinsic Certified Robustness of Bagging against
  Data Poisoning Attacks}. In \bibinfo{booktitle}{\emph{Proceedings of the 35th
  {AAAI} Conference on Artificial Intelligence}}
  \emph{(\bibinfo{series}{{AAAI}'21})}.
\newblock


\bibitem[Kingma and Ba(2015)]%
        {Kingma:2015}
\bibfield{author}{\bibinfo{person}{Diederik~P. Kingma} {and}
  \bibinfo{person}{Jimmy Ba}.} \bibinfo{year}{2015}\natexlab{}.
\newblock \showarticletitle{{Adam}: {A} Method for Stochastic Optimization}. In
  \bibinfo{booktitle}{\emph{Proceedings of the 3rd International Conference on
  Learning Representations}} \emph{(\bibinfo{series}{{ICLR}'15})}.
\newblock


\bibitem[Koh and Liang(2017)]%
        {Koh:2017:Understanding}
\bibfield{author}{\bibinfo{person}{Pang~Wei Koh} {and} \bibinfo{person}{Percy
  Liang}.} \bibinfo{year}{2017}\natexlab{}.
\newblock \showarticletitle{Understanding Black-box Predictions via Influence
  Functions}. In \bibinfo{booktitle}{\emph{Proceedings of the
  34th~International Conference on Machine Learning}}
  \emph{(\bibinfo{series}{{ICML}'17})}. \bibinfo{publisher}{PMLR},
  \bibinfo{address}{Sydney, Australia}.
\newblock


\bibitem[Koh et~al\mbox{.}(2018)]%
        {Koh:2018:Stronger}
\bibfield{author}{\bibinfo{person}{Pang~Wei Koh}, \bibinfo{person}{Jacob
  Steinhardt}, {and} \bibinfo{person}{Percy Liang}.}
  \bibinfo{year}{2018}\natexlab{}.
\newblock \bibinfo{title}{Stronger Data Poisoning Attacks Break Data
  Sanitization Defenses}.
\newblock
\newblock
\showeprint[arxiv]{1811.00741}~[cs.LG]


\bibitem[Kolouri et~al\mbox{.}(2019)]%
        {Kolouri:2019}
\bibfield{author}{\bibinfo{person}{Soheil Kolouri}, \bibinfo{person}{Aniruddha
  Saha}, \bibinfo{person}{Hamed Pirsiavash}, {and} \bibinfo{person}{Heiko
  Hoffmann}.} \bibinfo{year}{2019}\natexlab{}.
\newblock \showarticletitle{Universal Litmus Patterns: Revealing Backdoor
  Attacks in {CNNs}}. In \bibinfo{booktitle}{\emph{Proceedings of the 32nd
  Conference on Computer Vision and Pattern Recognition}}
  \emph{(\bibinfo{series}{{CVPR}'19})}. \bibinfo{address}{Long Beach,
  California, USA}.
\newblock


\bibitem[Krizhevsky et~al\mbox{.}(2014)]%
        {CIFAR10}
\bibfield{author}{\bibinfo{person}{Alex Krizhevsky}, \bibinfo{person}{Vinod
  Nair}, {and} \bibinfo{person}{Geoffrey Hinton}.}
  \bibinfo{year}{2014}\natexlab{}.
\newblock \bibinfo{title}{The {CIFAR}-10 Dataset}.
\newblock
\newblock


\bibitem[Kumar et~al\mbox{.}(2020)]%
        {Kumar:2020}
\bibfield{author}{\bibinfo{person}{Ram Shankar~Siva Kumar},
  \bibinfo{person}{Magnus Nyström}, \bibinfo{person}{John Lambert},
  \bibinfo{person}{Andrew Marshall}, \bibinfo{person}{Mario Goertzel},
  \bibinfo{person}{Andi Comissoneru}, \bibinfo{person}{Matt Swann}, {and}
  \bibinfo{person}{Sharon Xia}.} \bibinfo{year}{2020}\natexlab{}.
\newblock \showarticletitle{Adversarial Machine Learning -- Industry
  Perspectives}. In \bibinfo{booktitle}{\emph{Proceedings of the 2020 {IEEE}
  Security and Privacy Workshops}} \emph{(\bibinfo{series}{{SPW}'20})}.
\newblock


\bibitem[Lee(2016)]%
        {Lee:2016:Learning}
\bibfield{author}{\bibinfo{person}{Peter Lee}.}
  \bibinfo{year}{2016}\natexlab{}.
\newblock \bibinfo{title}{Learning from {Tay}’s Introduction}.
\newblock
\newblock
\urldef\tempurl%
\url{https://blogs.microsoft.com/blog/2016/03/25/learning-tays-introduction/}
\showURL{%
\tempurl}


\bibitem[Levchenko et~al\mbox{.}(2011)]%
        {Levchenko:2011}
\bibfield{author}{\bibinfo{person}{Kirill Levchenko}, \bibinfo{person}{Andreas
  Pitsillidis}, \bibinfo{person}{Neha Chachra}, \bibinfo{person}{Brandon
  Enright}, \bibinfo{person}{M{\'a}rk F{\'e}legyh{\'a}zi},
  \bibinfo{person}{Chris Grier}, \bibinfo{person}{Tristan Halvorson},
  \bibinfo{person}{Chris Kanich}, \bibinfo{person}{Christian Kreibich},
  \bibinfo{person}{He Liu}, \bibinfo{person}{Damon McCoy},
  \bibinfo{person}{Nicholas~C. Weaver}, \bibinfo{person}{Vern Paxson},
  \bibinfo{person}{Geoffrey~M. Voelker}, {and} \bibinfo{person}{Stefan
  Savage}.} \bibinfo{year}{2011}\natexlab{}.
\newblock \showarticletitle{Click Trajectories: End-to-End Analysis of the Spam
  Value Chain}.
\newblock \bibinfo{journal}{\emph{2011 IEEE Symposium on Security and Privacy}}
  (\bibinfo{year}{2011}), \bibinfo{pages}{431--446}.
\newblock


\bibitem[Levine and Feizi(2021)]%
        {Levine:2021}
\bibfield{author}{\bibinfo{person}{Alexander Levine} {and}
  \bibinfo{person}{Soheil Feizi}.} \bibinfo{year}{2021}\natexlab{}.
\newblock \showarticletitle{Deep Partition Aggregation: Provable Defenses
  against General Poisoning Attacks}. In \bibinfo{booktitle}{\emph{Proceedings
  of the 9th International Conference on Learning Representations}}
  \emph{(\bibinfo{series}{{ICLR}'21})}. \bibinfo{address}{Virtual Only}.
\newblock


\bibitem[Li et~al\mbox{.}(2021)]%
        {Li:2021:Antibackdoor}
\bibfield{author}{\bibinfo{person}{Yige Li}, \bibinfo{person}{Xixiang Lyu},
  \bibinfo{person}{Nodens Koren}, \bibinfo{person}{Lingjuan Lyu},
  \bibinfo{person}{Bo Li}, {and} \bibinfo{person}{Xingjun Ma}.}
  \bibinfo{year}{2021}\natexlab{}.
\newblock \showarticletitle{Anti-Backdoor Learning: Training Clean Models on
  Poisoned Data}. In \bibinfo{booktitle}{\emph{Proceedings of the
  35th~Conference on Neural Information Processing Systems}}
  \emph{(\bibinfo{series}{{NeurIPS}'21})}. \bibinfo{publisher}{Curran
  Associates, Inc.}, \bibinfo{address}{Virtual Only}.
\newblock


\bibitem[Li et~al\mbox{.}(2020)]%
        {Li:2020:Survey}
\bibfield{author}{\bibinfo{person}{Yiming Li}, \bibinfo{person}{Baoyuan Wu},
  \bibinfo{person}{Yong Jiang}, \bibinfo{person}{Zhifeng Li}, {and}
  \bibinfo{person}{Shu{-}Tao Xia}.} \bibinfo{year}{2020}\natexlab{}.
\newblock \bibinfo{title}{Backdoor Learning: {A} Survey}.
\newblock
\newblock
\showeprint[arxiv]{2007.08745}~[cs.CR]


\bibitem[Lin et~al\mbox{.}(2020)]%
        {Lin:2020:Composite}
\bibfield{author}{\bibinfo{person}{Junyu Lin}, \bibinfo{person}{Lei Xu},
  \bibinfo{person}{Yingqi Liu}, {and} \bibinfo{person}{Xiangyu Zhang}.}
  \bibinfo{year}{2020}\natexlab{}.
\newblock \showarticletitle{Composite Backdoor Attack for Deep Neural Network
  by Mixing Existing Benign Features}. In \bibinfo{booktitle}{\emph{Proceedings
  of the 2020 {ACM} {SIGSAC} Conference on Computer and Communications
  Security}} \emph{(\bibinfo{series}{{CCS}'20})}.
  \bibinfo{publisher}{Association for Computing Machinery},
  \bibinfo{address}{Virtual Only}.
\newblock


\bibitem[Liu et~al\mbox{.}(2018a)]%
        {Liu:2018}
\bibfield{author}{\bibinfo{person}{Kang Liu}, \bibinfo{person}{Brendan
  Dolan-Gavitt}, {and} \bibinfo{person}{Siddharth Garg}.}
  \bibinfo{year}{2018}\natexlab{a}.
\newblock \showarticletitle{Fine-Pruning: Defending Against Backdooring Attacks
  on Deep Neural Networks}. In \bibinfo{booktitle}{\emph{Proceedings of the
  International Symposium on Research in Attacks, Intrusions, and Defenses}}
  \emph{(\bibinfo{series}{{RAID}'18})}. \bibinfo{publisher}{Springer},
  \bibinfo{address}{Heraklion, Crete, Greece}, \bibinfo{pages}{273--294}.
\newblock


\bibitem[Liu et~al\mbox{.}(2018b)]%
        {SpeechDataset}
\bibfield{author}{\bibinfo{person}{Yingqi Liu}, \bibinfo{person}{Shiqing Ma},
  \bibinfo{person}{Yousra Aafer}, \bibinfo{person}{Wen-Chuan Lee},
  \bibinfo{person}{Juan Zhai}, \bibinfo{person}{Weihang Wang}, {and}
  \bibinfo{person}{Xiangyu Zhang}.} \bibinfo{year}{2018}\natexlab{b}.
\newblock \showarticletitle{Trojaning Attack on Neural Networks}. In
  \bibinfo{booktitle}{\emph{Proceedings of the 25th~Annual Network and
  Distributed System Security Symposium}} \emph{(\bibinfo{series}{{NDSS}'18})}.
  \bibinfo{address}{San Diego, California, USA}.
\newblock


\bibitem[Liu et~al\mbox{.}(2020)]%
        {RoBERTa}
\bibfield{author}{\bibinfo{person}{Yinhan Liu}, \bibinfo{person}{Myle Ott},
  \bibinfo{person}{Naman Goyal}, \bibinfo{person}{Jingfei Du},
  \bibinfo{person}{Mandar Joshi}, \bibinfo{person}{Danqi Chen},
  \bibinfo{person}{Omer Levy}, \bibinfo{person}{Mike Lewis},
  \bibinfo{person}{Luke Zettlemoyer}, {and} \bibinfo{person}{Veselin
  Stoyanov}.} \bibinfo{year}{2020}\natexlab{}.
\newblock \showarticletitle{{RoBERTa}: A Robustly Optimized {BERT} Pretraining
  Approach}. In \bibinfo{booktitle}{\emph{Proceedings of the 8th~International
  Conference on Learning Representations}}
  \emph{(\bibinfo{series}{{ICLR}'20})}. \bibinfo{address}{Virtual Only}.
\newblock


\bibitem[Marchant et~al\mbox{.}(2022)]%
        {Marchant:2022:HardToForget}
\bibfield{author}{\bibinfo{person}{Neil~G. Marchant}, \bibinfo{person}{Benjamin
  I.~P. Rubinstein}, {and} \bibinfo{person}{Scott Alfeld}.}
  \bibinfo{year}{2022}\natexlab{}.
\newblock \showarticletitle{Hard to Forget: Poisoning Attacks on Certified
  Machine Unlearning}. In \bibinfo{booktitle}{\emph{Proceedings of the 36th
  {AAAI} Conference on Artificial Intelligence}}
  \emph{(\bibinfo{series}{{AAAI}'22})}.
\newblock


\bibitem[Mu{\~n}oz-Gonz{\'a}lez et~al\mbox{.}(2017)]%
        {Munoz:2017}
\bibfield{author}{\bibinfo{person}{Luis Mu{\~n}oz-Gonz{\'a}lez},
  \bibinfo{person}{Battista Biggio}, \bibinfo{person}{Ambra Demontis},
  \bibinfo{person}{Andrea Paudice}, \bibinfo{person}{Vasin Wongrassamee},
  \bibinfo{person}{Emil~C Lupu}, {and} \bibinfo{person}{Fabio Roli}.}
  \bibinfo{year}{2017}\natexlab{}.
\newblock \showarticletitle{{Towards Poisoning of Deep Learning Algorithms with
  Back-gradient Optimization}}. In \bibinfo{booktitle}{\emph{Proceedings of the
  10th ACM Workshop on Artificial Intelligence and Security}}
  \emph{(\bibinfo{series}{{AISec}'17})}. \bibinfo{publisher}{Association for
  Computing Machinery}, \bibinfo{address}{Dallas, Texas, USA}.
\newblock


\bibitem[Ott et~al\mbox{.}(2019)]%
        {fairseq}
\bibfield{author}{\bibinfo{person}{Myle Ott}, \bibinfo{person}{Sergey Edunov},
  \bibinfo{person}{Alexei Baevski}, \bibinfo{person}{Angela Fan},
  \bibinfo{person}{Sam Gross}, \bibinfo{person}{Nathan Ng},
  \bibinfo{person}{David Grangier}, {and} \bibinfo{person}{Michael Auli}.}
  \bibinfo{year}{2019}\natexlab{}.
\newblock \showarticletitle{{fairseq}: A Fast, Extensible Toolkit for Sequence
  Modeling}. In \bibinfo{booktitle}{\emph{Proceedings of {NAACL-HLT} 2019:
  Demonstrations}}.
\newblock


\bibitem[Page(2020)]%
        {ResNet9}
\bibfield{author}{\bibinfo{person}{David Page}.}
  \bibinfo{year}{2020}\natexlab{}.
\newblock \bibinfo{title}{How to Train Your {ResNet}}.
\newblock
\newblock
\urldef\tempurl%
\url{https://myrtle.ai/learn/how-to-train-your-resnet/}
\showURL{%
\tempurl}


\bibitem[Paszke et~al\mbox{.}(2019)]%
        {PyTorch}
\bibfield{author}{\bibinfo{person}{Adam Paszke}, \bibinfo{person}{Sam Gross},
  \bibinfo{person}{Francisco Massa}, \bibinfo{person}{Adam Lerer},
  \bibinfo{person}{James Bradbury}, \bibinfo{person}{Gregory Chanan},
  \bibinfo{person}{Trevor Killeen}, \bibinfo{person}{Zeming Lin},
  \bibinfo{person}{Natalia Gimelshein}, \bibinfo{person}{Luca Antiga},
  \bibinfo{person}{Alban Desmaison}, \bibinfo{person}{Andreas Kopf},
  \bibinfo{person}{Edward Yang}, \bibinfo{person}{Zachary DeVito},
  \bibinfo{person}{Martin Raison}, \bibinfo{person}{Alykhan Tejani},
  \bibinfo{person}{Sasank Chilamkurthy}, \bibinfo{person}{Benoit Steiner},
  \bibinfo{person}{Lu Fang}, \bibinfo{person}{Junjie Bai}, {and}
  \bibinfo{person}{Soumith Chintala}.} \bibinfo{year}{2019}\natexlab{}.
\newblock \showarticletitle{{PyTorch}: An Imperative Style, High-Performance
  Deep Learning Library}. In \bibinfo{booktitle}{\emph{Proceedings of the 33rd
  Conference on Neural Information Processing Systems}}
  \emph{(\bibinfo{series}{{NeurIPS}’19})}. \bibinfo{publisher}{Curran
  Associates, Inc.}, \bibinfo{address}{Vancouver, Canada}.
\newblock


\bibitem[Pearlmutter(1994)]%
        {Pearlmutter:1994}
\bibfield{author}{\bibinfo{person}{Barak~A. Pearlmutter}.}
  \bibinfo{year}{1994}\natexlab{}.
\newblock \showarticletitle{Fast Exact Multiplication by the {Hessian}}.
\newblock \bibinfo{journal}{\emph{Neural Computation}}  \bibinfo{volume}{6}
  (\bibinfo{year}{1994}), \bibinfo{pages}{147--160}.
\newblock


\bibitem[Peri et~al\mbox{.}(2020)]%
        {Peri:2020}
\bibfield{author}{\bibinfo{person}{Neehar Peri}, \bibinfo{person}{Neal Gupta},
  \bibinfo{person}{W.~Ronny Huang}, \bibinfo{person}{Liam Fowl},
  \bibinfo{person}{Chen Zhu}, \bibinfo{person}{Soheil Feizi},
  \bibinfo{person}{Tom Goldstein}, {and} \bibinfo{person}{John~P. Dickerson}.}
  \bibinfo{year}{2020}\natexlab{}.
\newblock \showarticletitle{Deep {k-NN} Defense Against Clean-label Data
  Poisoning Attacks}. In \bibinfo{booktitle}{\emph{Proceedings of the {ECCV}
  Workshop on Adversarial Robustness in the Real World}}
  \emph{(\bibinfo{series}{{AROW}'20})}. \bibinfo{address}{Virtual Only}.
\newblock


\bibitem[Pita et~al\mbox{.}(2009)]%
        {Pita:2009}
\bibfield{author}{\bibinfo{person}{James Pita}, \bibinfo{person}{Manish Jain},
  \bibinfo{person}{Fernando Ord{\'o}{\~n}ez}, \bibinfo{person}{Christopher
  Portway}, \bibinfo{person}{Milind Tambe}, \bibinfo{person}{Craig Western},
  \bibinfo{person}{Praveen Paruchuri}, {and} \bibinfo{person}{Sarit Kraus}.}
  \bibinfo{year}{2009}\natexlab{}.
\newblock \showarticletitle{Using Game Theory for {Los} {Angeles} Airport
  Security}.
\newblock \bibinfo{journal}{\emph{AI Magazine}}  \bibinfo{volume}{30}
  (\bibinfo{year}{2009}), \bibinfo{pages}{43--57}.
\newblock


\bibitem[Pruthi et~al\mbox{.}(2020)]%
        {Pruthi:2020}
\bibfield{author}{\bibinfo{person}{Garima Pruthi}, \bibinfo{person}{Frederick
  Liu}, \bibinfo{person}{Satyen Kale}, {and} \bibinfo{person}{Mukund
  Sundararajan}.} \bibinfo{year}{2020}\natexlab{}.
\newblock \showarticletitle{Estimating Training Data Influence by Tracing
  Gradient Descent}. In \bibinfo{booktitle}{\emph{Proceedings of the
  34th~Conference on Neural Information Processing Systems}}
  \emph{(\bibinfo{series}{{NeurIPS}'20})}. \bibinfo{publisher}{Curran
  Associates, Inc.}, \bibinfo{address}{Virtual Only}.
\newblock


\bibitem[Radford et~al\mbox{.}(2019)]%
        {Radford:2019}
\bibfield{author}{\bibinfo{person}{Alec Radford}, \bibinfo{person}{Jeff Wu},
  \bibinfo{person}{Rewon Child}, \bibinfo{person}{David Luan},
  \bibinfo{person}{Dario Amodei}, {and} \bibinfo{person}{Ilya Sutskever}.}
  \bibinfo{year}{2019}\natexlab{}.
\newblock \bibinfo{title}{Language Models are Unsupervised Multitask Learners}.
\newblock
\newblock


\bibitem[Rousseeuw and Croux(1993)]%
        {Rousseeuw:1993}
\bibfield{author}{\bibinfo{person}{Peter Rousseeuw} {and}
  \bibinfo{person}{Christophe Croux}.} \bibinfo{year}{1993}\natexlab{}.
\newblock \showarticletitle{Alternatives to the Median Absolute Deviation}.
\newblock \bibinfo{journal}{\emph{J. Amer. Statist. Assoc.}}
  (\bibinfo{year}{1993}).
\newblock


\bibitem[Rousseeuw and Hubert(2017)]%
        {Rousseeuw:2017}
\bibfield{author}{\bibinfo{person}{Peter~J. Rousseeuw} {and}
  \bibinfo{person}{Mia Hubert}.} \bibinfo{year}{2017}\natexlab{}.
\newblock \showarticletitle{Anomaly Detection by Robust Statistics}.
\newblock \bibinfo{journal}{\emph{{WIREs} Data Mining and Knowledge Discovery}}
  \bibinfo{volume}{8}, \bibinfo{number}{2} (\bibinfo{date}{Nov}
  \bibinfo{year}{2017}).
\newblock
\showISSN{1942-4795}


\bibitem[Rousseeuw and Leroy(1987)]%
        {Rousseeuw:1997}
\bibfield{author}{\bibinfo{person}{P.~J. Rousseeuw} {and}
  \bibinfo{person}{A.~M. Leroy}.} \bibinfo{year}{1987}\natexlab{}.
\newblock \bibinfo{booktitle}{\emph{Robust Regression and Outlier Detection}}.
\newblock \bibinfo{publisher}{John Wiley \& Sons, Inc.},
  \bibinfo{address}{USA}.
\newblock
\showISBNx{0471852333}


\bibitem[Saha et~al\mbox{.}(2019)]%
        {Saha:2019:Attack}
\bibfield{author}{\bibinfo{person}{Sambuddha Saha}, \bibinfo{person}{Aashish
  Kumar}, \bibinfo{person}{Pratyush Sahay}, \bibinfo{person}{George Jose},
  \bibinfo{person}{Srinivas Kruthiventi}, {and} \bibinfo{person}{Harikrishna
  Muralidhara}.} \bibinfo{year}{2019}\natexlab{}.
\newblock \showarticletitle{Attack Agnostic Statistical Method for Adversarial
  Detection}. In \bibinfo{booktitle}{\emph{Proceedings of the 1st~{ICCV}
  Workshop on Statistical Deep Learning for Computer Vision}}.
  \bibinfo{address}{Seoul, Korea}.
\newblock


\bibitem[Salem et~al\mbox{.}(2020)]%
        {Salem:2020}
\bibfield{author}{\bibinfo{person}{Ahmed Salem}, \bibinfo{person}{Rui Wen},
  \bibinfo{person}{Michael Backes}, \bibinfo{person}{Shiqing Ma}, {and}
  \bibinfo{person}{Yang Zhang}.} \bibinfo{year}{2020}\natexlab{}.
\newblock \showarticletitle{Dynamic Backdoor Attacks Against Machine Learning
  Models}.
\newblock \bibinfo{journal}{\emph{CoRR}} (\bibinfo{year}{2020}).
\newblock
\showeprint[arxiv]{2003.03675}~[cs.CR]


\bibitem[Shafahi et~al\mbox{.}(2018)]%
        {Shafahi:2018}
\bibfield{author}{\bibinfo{person}{Ali Shafahi}, \bibinfo{person}{W.~Ronny
  Huang}, \bibinfo{person}{Mahyar Najibi}, \bibinfo{person}{Octavian Suciu},
  \bibinfo{person}{Christoph Studer}, \bibinfo{person}{Tudor Dumitras}, {and}
  \bibinfo{person}{Tom Goldstein}.} \bibinfo{year}{2018}\natexlab{}.
\newblock \showarticletitle{Poison Frogs! Targeted Clean-Label Poisoning
  Attacks on Neural Networks}. In \bibinfo{booktitle}{\emph{Proceedings of the
  32nd~Conference on Neural Information Processing Systems}}
  \emph{(\bibinfo{series}{{NeurIPS}'18})}. \bibinfo{publisher}{Curran
  Associates, Inc.}, \bibinfo{address}{Montreal, Canada}.
\newblock


\bibitem[Shah et~al\mbox{.}(2020)]%
        {Shah:2020:SimplicityBias}
\bibfield{author}{\bibinfo{person}{Harshay Shah}, \bibinfo{person}{Kaustav
  Tamuly}, \bibinfo{person}{Aditi Raghunathan}, \bibinfo{person}{Prateek Jain},
  {and} \bibinfo{person}{Praneeth Netrapalli}.}
  \bibinfo{year}{2020}\natexlab{}.
\newblock \showarticletitle{The Pitfalls of Simplicity Bias in Neural
  Networks}. In \bibinfo{booktitle}{\emph{Proceedings of the 34th~Conference on
  Neural Information Processing Systems}}
  \emph{(\bibinfo{series}{{NeurIPS}'20})}.
\newblock


\bibitem[Socher et~al\mbox{.}(2013)]%
        {SST2}
\bibfield{author}{\bibinfo{person}{Richard Socher}, \bibinfo{person}{Alex
  Perelygin}, \bibinfo{person}{Jean Wu}, \bibinfo{person}{Jason Chuang},
  \bibinfo{person}{Christopher~D. Manning}, \bibinfo{person}{Andrew Ng}, {and}
  \bibinfo{person}{Christopher Potts}.} \bibinfo{year}{2013}\natexlab{}.
\newblock \showarticletitle{Recursive Deep Models for Semantic Compositionality
  Over a Sentiment Treebank}. In \bibinfo{booktitle}{\emph{Proceedings of the
  8th~Conference on Empirical Methods in Natural Language Processing}}
  \emph{(\bibinfo{series}{{EMNLP}'13})}.
\newblock


\bibitem[Soremekun et~al\mbox{.}(2020)]%
        {Soremekun:2020}
\bibfield{author}{\bibinfo{person}{Ezekiel~O. Soremekun},
  \bibinfo{person}{Sakshi Udeshi}, \bibinfo{person}{Sudipta Chattopadhyay},
  {and} \bibinfo{person}{Andreas Zeller}.} \bibinfo{year}{2020}\natexlab{}.
\newblock \bibinfo{title}{Exposing Backdoors in Robust Machine Learning
  Models}.
\newblock
\newblock
\showeprint[arxiv]{2003.00865}~[cs.LG]


\bibitem[Steinhardt et~al\mbox{.}(2017)]%
        {Steinhardt:2017}
\bibfield{author}{\bibinfo{person}{Jacob Steinhardt}, \bibinfo{person}{Pang~Wei
  Koh}, {and} \bibinfo{person}{Percy Liang}.} \bibinfo{year}{2017}\natexlab{}.
\newblock \showarticletitle{Certified Defenses for Data Poisoning Attacks}. In
  \bibinfo{booktitle}{\emph{Proceedings of the 31st~Conference on Neural
  Information Processing Systems}} \emph{(\bibinfo{series}{{NeurIPS}'17})}.
  \bibinfo{publisher}{Curran Associates, Inc.}, \bibinfo{address}{Long Beach,
  California, USA}.
\newblock


\bibitem[Szegedy et~al\mbox{.}(2013)]%
        {Szegedy:2013}
\bibfield{author}{\bibinfo{person}{Christian Szegedy},
  \bibinfo{person}{Wojciech Zaremba}, \bibinfo{person}{Ilya Sutskever},
  \bibinfo{person}{Joan Bruna}, \bibinfo{person}{Dumitru Erhan},
  \bibinfo{person}{Ian Goodfellow}, {and} \bibinfo{person}{Rob Fergus}.}
  \bibinfo{year}{2013}\natexlab{}.
\newblock \bibinfo{title}{Intriguing Properties of Neural Networks}.
\newblock
\newblock
\showeprint[arxiv]{1312.6199}~[cs.CV]


\bibitem[Tran et~al\mbox{.}(2018)]%
        {Tran:2018}
\bibfield{author}{\bibinfo{person}{Brandon Tran}, \bibinfo{person}{Jerry Li},
  {and} \bibinfo{person}{Aleksander Madry}.} \bibinfo{year}{2018}\natexlab{}.
\newblock \showarticletitle{Spectral Signatures in Backdoor Attacks}. In
  \bibinfo{booktitle}{\emph{Proceedings of the 32nd~Conference on Neural
  Information Processing Systems}} \emph{(\bibinfo{series}{{NeurIPS}'18})}.
  \bibinfo{publisher}{Curran Associates, Inc.}, \bibinfo{address}{Montreal,
  Canada}.
\newblock


\bibitem[Udeshi et~al\mbox{.}(2019)]%
        {Udeshi:2019}
\bibfield{author}{\bibinfo{person}{Sakshi Udeshi}, \bibinfo{person}{Shanshan
  Peng}, \bibinfo{person}{Gerald Woo}, \bibinfo{person}{Lionell Loh},
  \bibinfo{person}{Louth Rawshan}, {and} \bibinfo{person}{Sudipta
  Chattopadhyay}.} \bibinfo{year}{2019}\natexlab{}.
\newblock \bibinfo{title}{Model Agnostic Defence against Backdoor Attacks in
  Machine Learning}.
\newblock
\newblock
\showeprint[arxiv]{1908.02203}~[cs.LG]


\bibitem[Villarreal{-}Vasquez and Bhargava(2020)]%
        {VillarealVasquez:2020}
\bibfield{author}{\bibinfo{person}{Miguel Villarreal{-}Vasquez} {and}
  \bibinfo{person}{Bharat~K. Bhargava}.} \bibinfo{year}{2020}\natexlab{}.
\newblock \bibinfo{title}{{ConFoc}: Content-Focus Protection Against {Trojan}
  Attacks on Neural Networks}.
\newblock
\newblock
\showeprint[arXiv]{2007.00711}~[cs.CV]


\bibitem[Wallace et~al\mbox{.}(2021)]%
        {Wallace:2021}
\bibfield{author}{\bibinfo{person}{Eric Wallace}, \bibinfo{person}{Tony~Z.
  Zhao}, \bibinfo{person}{Shi Feng}, {and} \bibinfo{person}{Sameer Singh}.}
  \bibinfo{year}{2021}\natexlab{}.
\newblock \showarticletitle{Concealed Data Poisoning Attacks on {NLP} Models}.
  In \bibinfo{booktitle}{\emph{Proceedings of the North American Chapter of the
  Association for Computational Linguistics}}
  \emph{(\bibinfo{series}{{NAACL}'21})}.
\newblock


\bibitem[Wang et~al\mbox{.}(2020)]%
        {Wang:2020}
\bibfield{author}{\bibinfo{person}{Binghui Wang}, \bibinfo{person}{Xiaoyu Cao},
  \bibinfo{person}{Jinyuan Jai}, {and} \bibinfo{person}{Neil~Zhenqiang Gong}.}
  \bibinfo{year}{2020}\natexlab{}.
\newblock \showarticletitle{On Certifying Robustness against Backdoor Attacks
  via Randomized Smoothing}. In \bibinfo{booktitle}{\emph{Proceedings of the
  {CVPR} Workshop on Adversarial Machine Learning in Computer Vision}}.
\newblock


\bibitem[Wang et~al\mbox{.}(2019)]%
        {Wang:2019}
\bibfield{author}{\bibinfo{person}{Bolun Wang}, \bibinfo{person}{Yuanshun Yao},
  \bibinfo{person}{Shawn Shan}, \bibinfo{person}{Huiying Li},
  \bibinfo{person}{Bimal Viswanath}, \bibinfo{person}{Haitao Zheng}, {and}
  \bibinfo{person}{Ben~Y. Zhao}.} \bibinfo{year}{2019}\natexlab{}.
\newblock \showarticletitle{Neural Cleanse: Identifying and Mitigating Backdoor
  Attacks in Neural Networks}. In \bibinfo{booktitle}{\emph{Proceedings of the
  40th~{IEEE} Symposium on Security and Privacy}}
  \emph{(\bibinfo{series}{{SP}'19})}. \bibinfo{address}{San Francisco, CA}.
\newblock


\bibitem[Wang et~al\mbox{.}(2018)]%
        {Wang:2018}
\bibfield{author}{\bibinfo{person}{Tongzhou Wang}, \bibinfo{person}{Jun{-}Yan
  Zhu}, \bibinfo{person}{Antonio Torralba}, {and} \bibinfo{person}{Alexei~A.
  Efros}.} \bibinfo{year}{2018}\natexlab{}.
\newblock \bibinfo{title}{Dataset Distillation}.
\newblock
\newblock
\showeprint[arxiv]{1811.10959}~[cs]


\bibitem[Wang et~al\mbox{.}(2022)]%
        {Wang:2022:DeterministicAggregation}
\bibfield{author}{\bibinfo{person}{Wenxiao Wang}, \bibinfo{person}{Alexander
  Levine}, {and} \bibinfo{person}{Soheil Feizi}.}
  \bibinfo{year}{2022}\natexlab{}.
\newblock \showarticletitle{Improved Certified Defenses against Data Poisoning
  with (Deterministic) Finite Aggregation}. In
  \bibinfo{booktitle}{\emph{Proceedings of the 39th International Conference on
  Machine Learning}} \emph{(\bibinfo{series}{{ICML}'22})}.
\newblock


\bibitem[Weber et~al\mbox{.}(2021)]%
        {Weber:2021}
\bibfield{author}{\bibinfo{person}{Maurice Weber}, \bibinfo{person}{Xiaojun
  Xu}, \bibinfo{person}{Bojan Karlaš}, \bibinfo{person}{Ce Zhang}, {and}
  \bibinfo{person}{Bo Li}.} \bibinfo{year}{2021}\natexlab{}.
\newblock \bibinfo{title}{{RAB}: Provable Robustness Against Backdoor Attacks}.
\newblock
\newblock
\showeprint[arxiv]{2003.08904}~[cs.LG]


\bibitem[Xiao et~al\mbox{.}(2015)]%
        {Xiao:2015:IsFeature}
\bibfield{author}{\bibinfo{person}{Huang Xiao}, \bibinfo{person}{Battista
  Biggio}, \bibinfo{person}{Gavin Brown}, \bibinfo{person}{Giorgio Fumera},
  \bibinfo{person}{Claudia Eckert}, {and} \bibinfo{person}{Fabio Roli}.}
  \bibinfo{year}{2015}\natexlab{}.
\newblock \showarticletitle{Is Feature Selection Secure against Training Data
  Poisoning?}. In \bibinfo{booktitle}{\emph{Proceedings of the 32nd
  International Conference on Machine Learning}}
  \emph{(\bibinfo{series}{{ICML}'15})}. \bibinfo{publisher}{PMLR},
  \bibinfo{address}{Lille, France}.
\newblock


\bibitem[Xu et~al\mbox{.}(2021)]%
        {Xu:2021}
\bibfield{author}{\bibinfo{person}{Xiaojun Xu}, \bibinfo{person}{Qi Wang},
  \bibinfo{person}{Huichen Li}, \bibinfo{person}{Nikita Borisov},
  \bibinfo{person}{Carl~A. Gunter}, {and} \bibinfo{person}{Bo Li}.}
  \bibinfo{year}{2021}\natexlab{}.
\newblock \showarticletitle{Detecting {AI} Trojans Using Meta Neural Analysis}.
  In \bibinfo{booktitle}{\emph{Proceedings of the 42nd {IEEE} Symposium on
  Security and Privacy}} \emph{(\bibinfo{series}{{SP}'21})}.
  \bibinfo{publisher}{IEEE}, \bibinfo{address}{Virtual Only}.
\newblock


\bibitem[Yeh et~al\mbox{.}(2018)]%
        {Yeh:2018:Representer}
\bibfield{author}{\bibinfo{person}{Chih\=/Kuan Yeh}, \bibinfo{person}{Joon~Sik
  Kim}, \bibinfo{person}{Ian~E.H. Yen}, {and} \bibinfo{person}{Pradeep
  Ravikumar}.} \bibinfo{year}{2018}\natexlab{}.
\newblock \showarticletitle{Representer Point Selection for Explaining Deep
  Neural Networks}. In \bibinfo{booktitle}{\emph{Proceedings of the
  32nd~Conference on Neural Information Processing Systems}}
  \emph{(\bibinfo{series}{{NeurIPS}'18})}. \bibinfo{publisher}{Curran
  Associates, Inc.}, \bibinfo{address}{Montreal, Canada}.
\newblock


\bibitem[Yu et~al\mbox{.}(2021)]%
        {Yu:2021:PoisonShortcuts}
\bibfield{author}{\bibinfo{person}{Da Yu}, \bibinfo{person}{Huishuai Zhang},
  \bibinfo{person}{Wei Chen}, \bibinfo{person}{Jian Yin}, {and}
  \bibinfo{person}{Tie-Yan Liu}.} \bibinfo{year}{2021}\natexlab{}.
\newblock \bibinfo{title}{Indiscriminate Poisoning Attacks are Shortcuts}.
\newblock
\newblock
\showeprint[arxiv]{2111.00898}~[cs.LG]


\bibitem[Zhang and Zhang(2022)]%
        {Zhang:2022:Rethinking}
\bibfield{author}{\bibinfo{person}{Rui Zhang} {and} \bibinfo{person}{Shihua
  Zhang}.} \bibinfo{year}{2022}\natexlab{}.
\newblock \showarticletitle{Rethinking Influence Functions of Neural Networks
  in the Over-Parameterized Regime}. In \bibinfo{booktitle}{\emph{Proceedings
  of the 36th {AAAI} Conference on Artificial Intelligence}}
  \emph{(\bibinfo{series}{{AAAI}'22})}. \bibinfo{publisher}{Association for the
  Advancement of Artificial Intelligence}, \bibinfo{address}{Vancouver,
  Canada}.
\newblock


\bibitem[Zhu et~al\mbox{.}(2019)]%
        {Zhu:2019}
\bibfield{author}{\bibinfo{person}{Chen Zhu}, \bibinfo{person}{W~Ronny Huang},
  \bibinfo{person}{Ali Shafahi}, \bibinfo{person}{Hengduo Li},
  \bibinfo{person}{Gavin Taylor}, \bibinfo{person}{Christoph Studer}, {and}
  \bibinfo{person}{Tom Goldstein}.} \bibinfo{year}{2019}\natexlab{}.
\newblock \showarticletitle{Transferable Clean-Label Poisoning Attacks on Deep
  Neural Nets}. In \bibinfo{booktitle}{\emph{Proceedings of the
  36th~International Conference on Machine Learning}}
  \emph{(\bibinfo{series}{{ICML}'19})}. \bibinfo{publisher}{PMLR},
  \bibinfo{address}{Los Angeles, CA}.
\newblock


\bibitem[Zhu et~al\mbox{.}(2020)]%
        {Zhu:2020}
\bibfield{author}{\bibinfo{person}{Liuwan Zhu}, \bibinfo{person}{Rui Ning},
  \bibinfo{person}{Cong Wang}, \bibinfo{person}{Chunsheng Xin}, {and}
  \bibinfo{person}{Hongyi Wu}.} \bibinfo{year}{2020}\natexlab{}.
\newblock \showarticletitle{{GangSweep}: Sweep out Neural Backdoors by {GAN}}.
  In \bibinfo{booktitle}{\emph{Proceedings of the 28th ACM International
  Conference on Multimedia}} \emph{(\bibinfo{series}{{MM}'20})}.
\newblock


\bibitem[Zhu et~al\mbox{.}(2021)]%
        {Zhu:2021}
\bibfield{author}{\bibinfo{person}{Liuwan Zhu}, \bibinfo{person}{Rui Ning},
  \bibinfo{person}{Chunsheng Xin}, \bibinfo{person}{Chonggang Wang}, {and}
  \bibinfo{person}{Hongyi Wu}.} \bibinfo{year}{2021}\natexlab{}.
\newblock \showarticletitle{{CLEAR}: Clean-up Sample-Targeted Backdoor in
  Neural Networks}. In \bibinfo{booktitle}{\emph{Proceedings of the
  18th~International Conference on Computer Vision}}
  \emph{(\bibinfo{series}{{ICCV}'21})}.
\newblock


\end{thebibliography}

\startcontents  %
\newpage
\onecolumn
\thispagestyle{empty}
\pagenumbering{arabic}%
\renewcommand*{\thepage}{A\arabic{page}}
\appendix
\SupplementaryMaterialsTitle{}

\begin{center}
  \textbf{\Large Table of Contents}
\end{center}
\printcontents{Appendix}{1}[3]{}

\clearpage
\newpage
\section{Additional Algorithms}\label{sec:App:Alg}

Algorithm~\ref{alg:TracIn} outlines \tracin{}'s influence estimation procedure for \textit{a~priori} unknown test instance~$\zHatTe$.
Algorithm~\ref{alg:CosIn}, which combines the procedures of \method{} and \tracinCP{}, is re-included below to facilitate easier side-by-side comparison.

\begin{figure}[h]
  \centering
  \begin{minipage}[t]{0.48\textwidth}
    \begin{algorithm}[H]
      {%
        \AlgFontSize
        \caption{\tracin{} influence estimation}\label{alg:TracIn}
        \input{alg/tracin_alg.tex}%
      }
    \end{algorithm}
    \vspace{\fill}
  \end{minipage}
  \begin{minipage}{0.04\textwidth}
    ~\hfill~
  \end{minipage}
  \begin{minipage}[t]{0.47\textwidth}
    \begin{algorithm}[H]
      {%
        \AlgFontSize
        \caption*{\textbf{Algorithm~\ref{alg:CosIn}}: \method{} vs.\ \tracinCP{}}
        \input{alg/cosin_alg.tex}%
      }
    \end{algorithm}
  \end{minipage}
\end{figure}

Algorithm~\ref{alg:TargDetect:Implementation} overviews the implementation strategy of \fit{} that was used in our evaluation.

\begin{figure}[h]
  \centering
  \begin{minipage}{0.5\textwidth}
    \begin{algorithm}[H]
      {%
        \AlgFontSize%
        \caption{\fit{} target identification implementation}\label{alg:TargDetect:Implementation}
        \input{alg/targ_ident.tex}
      }
    \end{algorithm}
  \end{minipage}
\end{figure}

\subsection{Runtime Complexity of \fit{}}\label{sec:App:Alg:Complexity}

\newcommand{\nTe}{m}
\newcommand{\nSubItr}{\abs{\subsetItr}}
\newcommand{\nRemove}{l}

We assume that the model being attacked (and defended) is a neural network with parameter vector~$\w$, trained from dataset $\fullTrain$ using some form of stochastic descent for $\nItr$~epochs, and evaluated on test set $\targAnalysisSet$. For convenience, let ${\nTr \defeq \abs{\fullTrain}}$ and ${\nTe \defeq \abs{\targAnalysisSet}}$. Computing the gradient for a single example can be done in time $\bigO{\dimW}$, so training the network by computing gradient updates for all instances in all epochs is $\bigO{\dimW \nTr \nItr}$.

For \fit{} (Algorithm~\ref{alg:TargDetect:Implementation}), we need to estimate the influence of each training instance on each test instance. If we save the parameters from ${\nSubItr \leq \nItr}$ checkpoints, we can run \tracin{}, \tracinCP{}, or \bothMethod{} once to compute the influence of each training instance on a single target, requiring time $\bigO{\dimW \nTr \nSubItr}$. To compute the influence on all test examples requires computing their gradients at each checkpoint and computing dot products with each training example, for a total runtime of: $\bigO{\dimW \nTr \nSubItr \nTe}$. This grows linearly in the number of training examples times the number of test examples and is the slowest part of \fit{} in practice. An important question for future work is how to accelerate this procedure, such as heuristically pruning the number of training and test examples under consideration, similar to previous work~\citep{Guo:2020:FastIF}.

For mitigation (Algorithm~\ref{alg:Mitigation}), model training and influence estimation are repeated as each test example or group of test examples is removed, until the predicted label changes. Let $\nRemove$~denote the number of iterations before the label flips. However, the influence estimation only needs to be done for the selected target example, not for all test examples. Thus, the model retraining requires time
${\bigO{\dimW \nTr \nItr \nRemove}}$, and the influence re-estimation requires time
${\bigO{\dimW \nSubItr \nRemove}}$, which is less than the initial influence estimation time because ${\nRemove < \nTe}$.
 \FloatBarrier
\section{Proof}

\subsection*{Proof of Theorem~\ref{thm:LowLoss:LossVsNorm}}

\begin{theorem*}
  Let loss function~$\func{\lossScalar}{\real}{\realnn}$ be twice-differentiable and strictly convex as well as either even\footnote{%
        ``Even'' denotes that the function satisfies ${\forall_{\acts}~\lScalarFunc{\acts} = \lScalarFunc{-\acts}}$.
  }
  or monotonically decreasing.
  Then, it holds that
  {
    \small%
    \begin{equation}
      \lScalarFunc{\acts} < \lScalarFunc{\acts'} \implies \normActs{\gradScalarLoss{\acts}} < \normActs{\gradScalarLoss{\acts'}} \text{.}
    \end{equation}
  }
\end{theorem*}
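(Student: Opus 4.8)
The plan is to reduce the statement to a purely one-dimensional fact about $\abs{\lossScalar'}$ and then extract everything from the monotonicity of $\lossScalar'$ that strict convexity forces. Since here the activation $\acts$ is a scalar, $\normActs{\gradScalarLoss{\acts}} = \abs{\lossScalar'(\acts)}$, so the goal becomes to show $\lScalarFunc{\acts} < \lScalarFunc{\acts'} \implies \abs{\lossScalar'(\acts)} < \abs{\lossScalar'(\acts')}$. The first step I would record is that strict convexity of a differentiable function is equivalent to $\lossScalar'$ being strictly increasing on $\real$ (if $\lossScalar'$ agreed at two points it would be constant, hence $\lossScalar$ affine, between them); twice-differentiability additionally gives $\lossScalar'' \ge 0$, which I would use only to keep edge cases clean.

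Then I would split into the two cases the hypothesis allows. If $\lossScalar$ is monotonically decreasing, then $\lossScalar' \le 0$ everywhere; combined with $\lossScalar'$ strictly increasing this forces $\lossScalar' < 0$ everywhere (were $\lossScalar'(\acts_0) = 0$, then $\lossScalar'(\acts) > 0$ for $\acts > \acts_0$, a contradiction), so $\lossScalar$ is in fact strictly decreasing and $\abs{\lossScalar'} = -\lossScalar'$ is strictly decreasing. Hence $\lScalarFunc{\acts} < \lScalarFunc{\acts'}$ holds exactly when $\acts > \acts'$, which immediately yields $\abs{\lossScalar'(\acts)} < \abs{\lossScalar'(\acts')}$.

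If instead $\lossScalar$ is even, then $\lossScalar'$ is odd, so $\lossScalar'(0) = 0$, and since $\lossScalar'$ is strictly increasing we get $\lossScalar'(\acts) < 0$ for $\acts < 0$ and $\lossScalar'(\acts) > 0$ for $\acts > 0$. Thus $\lossScalar$ is strictly decreasing on $(-\infty, 0]$ and strictly increasing on $[0, \infty)$, so both $\lScalarFunc{\acts}$ and $\abs{\lossScalar'(\acts)} = \lossScalar'(\abs{\acts})$ (using oddness) are strictly increasing functions of $\abs{\acts}$. Therefore $\lScalarFunc{\acts} < \lScalarFunc{\acts'}$ holds exactly when $\abs{\acts} < \abs{\acts'}$, which again gives $\abs{\lossScalar'(\acts)} < \abs{\lossScalar'(\acts')}$, completing the argument.

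I do not expect a genuine obstacle here; the only delicate point is the sign bookkeeping in the even case and using the monotonicity hypotheses in their strict form only where licensed — it is precisely strict convexity that upgrades ``non-increasing'' to ``strictly decreasing'' and ``even'' to strict unimodality, turning both implications into equivalences. If space permits, I would close with the remark that this is exactly where the multiclass case can break: when $\acts$ is a vector, $\gradActs \lossScalar$ need not vary monotonically with $\lossScalar$ along every direction, so the perfect ordering between loss and gradient norm is no longer guaranteed.
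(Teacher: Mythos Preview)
Your proposal is correct and follows essentially the same strategy as the paper: split into the monotonically-decreasing and even cases, use strict convexity to make $\lossScalar'$ strictly increasing, and track signs to turn the ordering on $\lossScalar$ into an ordering on $\abs{\lossScalar'}$. If anything, your version is slightly more careful than the paper's---you explicitly upgrade ``monotonically decreasing'' to ``strictly decreasing'' via strict convexity before using $\lScalarFunc{\acts} < \lScalarFunc{\acts'} \Leftrightarrow \acts > \acts'$, and you rely on $\lossScalar'$ being strictly increasing rather than on $\lossScalar'' > 0$ pointwise, which strict convexity alone does not guarantee.
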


\begin{proof}~

  Theorem~\ref{thm:LowLoss:LossVsNorm} specifies that property,
  \begin{equation*}
    \lScalarFunc{\acts} < \lScalarFunc{\acts'} \implies \normActs{\gradScalarLoss{\acts}} < \normActs{\gradScalarLoss{\acts'}}
    \text{,}
  \end{equation*}
  \noindent
  holds when loss function~$\lossScalar$ is strictly convex (i.e., \eqsmall{${\forall_{\acts \in \real}~\gradActsSq \lScalarFunc{\acts} > 0}$}) and either monotonically decreasing or even.  We prove the claim separately for these two disjoint cases.

  \noindent%
  \tocless%
  \paragraph{Case \#1: Monotonically Decreasing} For any monotonically decreasing~$\lossScalar$, by definition
  \begin{equation*}
    \lScalarFunc{\acts} < \lScalarFunc{\acts'} \implies \acts > \acts' \text{.}
  \end{equation*}
  \noindent
  Then, given ${\forall_{\acts \in \real}~\gradActsSq \lScalarFunc{\acts} > 0}$, it holds that
  \begin{equation}\label{eq:App:Proofs:LossVsNorm:AbsLeq}
    \gradScalarLoss{\acts} > \gradScalarLoss{\acts'} \text{.}
  \end{equation}

  For any scalar, monotonically decreasing function~$\lossScalar$, it holds that ${\gradScalarLoss{\acts'} \leq 0}$ meaning Eq.~\eqref{eq:App:Proofs:LossVsNorm:AbsLeq}'s inequality flips w.r.t.\ $L_2$~norms,~i.e.,
  \begin{equation}\label{eq:App:Proofs:LossVsNorm:NormsLeq}
    \normActs{\gradScalarLoss{\acts}} < \normActs{\gradScalarLoss{\acts'}} \text{,}
  \end{equation}
  \noindent
  as for any ${x,x' \in \realnp}$ it holds that ${x > x' \implies \normActs{x} < \normActs{x'}}$.

  \noindent%
  \tocless%
  \paragraph{Case \#2: Even} Formally, a function~$\lossScalar$ is even if
  \begin{equation}\label{eq:App:Proofs:LossVsNorm:Even:SymVal}
    \forall_{\acts}~\lScalarFunc{\acts} = \lScalarFunc{-\acts} \text{.}
  \end{equation}
  \noindent
  For even~$\lossScalar$, it holds that ${\gradScalarLoss{0} = 0}$ provided twice differentiability. Given ${\forall_{\acts \in \real}~\gradActsSq \lScalarFunc{\acts} > 0}$, then ${\forall_{\acts < 0}~\gradScalarLoss{\acts} < 0}$.  Hence over restricted domain~$\realnp$, $\lossScalar$~is monotonically decreasing. Above it was shown that Eq.~\eqref{eq:RenormInfluence:LowLoss} holds for monotonically decreasing functions so
  \begin{equation}\label{eq:App:Proofs:LossVsNorm:Even:AbsProp}
    \lScalarFunc{-\abs{\acts}} < \lScalarFunc{-\abs{\acts'}} \implies \normActs{\gradScalarLoss{-\abs{\acts}}} < \normActs{\gradScalarLoss{-\abs{\acts'}}}
    \text{.}
  \end{equation}

  Evenness induces function symmetry about the origin so
  \begin{equation}\label{eq:App:Proofs:LossVsNorm:Even:SymDeriv}
    \forall_{\acts}~\abs{\gradScalarLoss{\acts}} = \abs{\gradScalarLoss{-\acts}} \text{,}
  \end{equation}
  \noindent
  and by extension
  \begin{equation}\label{eq:App:Proofs:LossVsNorm:Even:SymNorm}
    \forall_{\acts}~\normActs{\gradScalarLoss{\acts}} = \normActs{\gradScalarLoss{-\acts}} \text{.}
  \end{equation}
  Eqs.~\eqref{eq:App:Proofs:LossVsNorm:Even:SymVal} and~\eqref{eq:App:Proofs:LossVsNorm:Even:SymNorm} allow Eq.~\eqref{eq:App:Proofs:LossVsNorm:Even:AbsProp}'s absolute values and negations to be dropped completing the proof.
\end{proof}
\section{Detailed Experimental Setup}\label{sec:App:ExpSetup}

This section details the evaluation setup used in Section~\ref{sec:RenormInfluence} and~\ref{sec:ExpRes}'s experiments, including dataset specifics, hyperparameters, and the neural network architectures.

Our source code can be downloaded from~\url{\sourceCodeUrl}.  All experiments used the PyTorch automatic differentiation framework~\citep{PyTorch} and were tested with Python~3.6.5.  \citepos{Wallace:2021} sentiment analysis data poisoning source code will be published by its authors at~\url{https://github.com/Eric-Wallace/data-poisoning}.

\subsection{Dataset Configurations}

This subsection provides details related to dataset configurations.

Section~\ref{sec:RenormInfluence:SimpleExperiment} performs binary classification of \texttt{frog} vs.\ \texttt{airplane} from CIFAR10.  Added as a small adversarial set~($\advTrain$) is 150 MNIST~\texttt{0} training instances selected at random.  We considered this class pair specifically since among the \eqsmall{$\binom{10}{2}$}~possible CIFAR10 class pairs, the MNIST test misclassification rate was closest to uniformly at random (u.a.r.) for \texttt{frog} vs.\ \texttt{airplane} (47.5\%~actual vs.\ 50\%~u.a.r. -- uniformly at random).  Hence, on average, neither \texttt{frog} nor \texttt{airplane} is overly influential on MNIST.  Note that no external constraints induced this near u.a.r.\ misclassification rate.

Section~\ref{sec:RenormInfluence:Filtering} compares the ability of influence estimators, with and without renormalization, to identify influential groups of training examples on non\=/adversarial, CIFAR10, binary classification with Figure~\ref{fig:RenormInfluence:InfFilt}'s results averaged across five class pairs.  Two of the class pairs, \texttt{airplane}~vs.~\texttt{bird} and \texttt{automobile}~vs.~\texttt{dog}, were studied by \citet{Weber:2021} in relation to certified defenses.  The three other class pairs -- \texttt{cat}~vs.~\texttt{ship}, \texttt{frog}~vs.~\texttt{horse}, and \texttt{frog}~vs.~\texttt{truck} -- were selected at random.

\citepos{Wallace:2021} poisoning method attacks the SST\=/2 dataset \citep{SST2}.  We consider detection on 8~short movie reviews -- four positive and four negative -- all selected at random by \citeauthor{Wallace:2021}'s implementation.  The specific reviews considered appear in Table~\ref{tab:App:ExpSetup:SST2AttackReviews}.

\begin{table}[h!]
  \centering
  \caption{SST\=/2 movie reviews selected by \citepos{Wallace:2021} poisoning attack implementation.}\label{tab:App:ExpSetup:SST2AttackReviews}
  {
    \TableFontSize%
\newcommand{\RevType}[1]{\multirow{4}{*}{\shortstack{{\large $\uparrow$} \\ #1 \\ {\large $\downarrow$}}}}
\newcommand{\RevNum}[1]{#1}
\newcommand{\RevText}[1]{\textit{#1}}

\newcommand{\TabMidRule}{\cdashline{2-3}}

\setlength{\dashlinedash}{0.4pt}
\setlength{\dashlinegap}{1.5pt}
\setlength{\arrayrulewidth}{0.3pt}

\begin{tabular}{ccl}
  \toprule
  Sentiment & No. & \multicolumn{1}{c}{Text} \\
  \midrule
  \RevType{Positive} & \RevNum{1}
      & \RevText{a delightful coming-of-age story .} \\\TabMidRule
                 & \RevNum{2}
      & \RevText{a smart , witty follow-up .} \\\TabMidRule
                 & \RevNum{3}
      & \RevText{ahhhh ... revenge is sweet !} \\\TabMidRule
                 & \RevNum{4}
      & \RevText{a giggle a minute .} \\
  \midrule
  \RevType{Negative} & \RevNum{1}
      & \RevText{oh come on .} \\\TabMidRule
                 & \RevNum{2}
      & \RevText{do not see this film .} \\\TabMidRule
                 & \RevNum{3}
      & \RevText{it 's a buggy drag .} \\\TabMidRule
                 & \RevNum{4}
      & \RevText{or emptying rat traps .} \\
  \bottomrule
\end{tabular}
   }
\end{table}

The next section provides details regarding the adversarial datasets sizes.

\subsubsection{Training Set Sizes}\label{sec:App:ExpSetup:TrainingSetSizes}

Table~\ref{tab:App:ExpSetup:Datasets:TrainSetSizes} details the dataset sizes used to train all evaluated models in Section~\ref{sec:ExpRes}.

\begin{table}[h!]
  \centering
  \caption{Dataset sizes}\label{tab:App:ExpSetup:Datasets:TrainSetSizes}
{\small
\begin{tabular}{llrrr}
  \toprule
  Dataset                 & Attack     & \#~Classes  & \#~Train    & \#~Test \\
  \midrule
  CIFAR10~\citep{CIFAR10}
                          & Poison     & 5           & 25,000      & 5,000   \\
  SST-2\tablefootnote{Stanford Sentiment Treebank dataset~(SST-2) is used for sentiment analysis}~\citep{SST2}
                          & Poison     & 2           & 67,349      & N/A     \\
  Speech~\citep{SpeechDataset}
                          & Backdoor   & 10          & 3,000\tablefootnote{Clean only.  Dataset also has 300 backdoored samples divided evenly among the 10~attack class pairs (e.g.,~${0 \rightarrow 1}$, ${1 \rightarrow 2}$, etc.).}       & 1,184   \\
  CIFAR10~\citep{CIFAR10}
                          & Backdoor   & 2           & 10,000      & 2       \\
  \bottomrule
\end{tabular}
}
 \end{table}

\citepos{SpeechDataset} speech backdoor dataset includes training and test examples with their associated adversarial trigger already embedded.  We used their adversarial dataset unchanged.  Table~\ref{tab:App:ExpSetup:NumSpeechBackdoor} details $\abs{\advTrain}$ (i.e.,~adversarial training set size) for each speech digit pair after a fixed, random train-validation split.

\begin{table}[h!]
  \centering
  \caption{%
    Number of backdoor training examples for each speech backdoor digit pair.
    \revTwo{%
    As detailed above, \citepos{SpeechDataset} dataset provides 30~backdoored instances
    for each digit pair.  The remainder of the 30 instances for each digit pair
    are part of the fixed, validation set.
    }
  }\label{tab:App:ExpSetup:NumSpeechBackdoor}
  {
    \TableFontSize%
\newcommand{\digPair}[2]{${#1 \rightarrow #2}$}
\begin{tabular}{lcccccccccc}
  \toprule
  Digit Pair         & \digPair{0}{1} & \digPair{1}{2} & \digPair{2}{3} & \digPair{3}{4} & \digPair{4}{5} & \digPair{5}{6} & \digPair{6}{7} & \digPair{7}{8} & \digPair{8}{9} & \digPair{8}{9}  \\
  \midrule
  $\abs{\advTrain}$  &             26 &             27 &             24 &             24 &             26 &             28 &             26 &             26 &             22 &             21 \\
  \bottomrule
\end{tabular}
   }
\end{table}

\subsubsection{Target Set Sizes}

Table~\ref{tab:App:ExpSetup:Datasets:TargetSetSizes} details the sizes of the target and non-target sets considered in Section~\ref{sec:ExpRes:TargetIdent}'s target identification experiments. \citet{Davis:2006} explain that the class imbalance ratio between classes defines the unattainable regions in the precision-recall curve.  By extension, this ratio also dictates the baseline AUPRC value if examples are labeled randomly.

\begin{table}[h!]
  \centering
  \caption{Target and non-target set sizes used in Section~\ref{sec:ExpRes:TargetIdent}'s target identification experiments.}%
  \label{tab:App:ExpSetup:Datasets:TargetSetSizes}
  {
    \TableFontSize%
\newcommand{\BaseHead}[1]{\multirow{1}{*}{#1}}
\newcommand{\AtkHead}[1]{\multirow{2}{*}{#1}}

\renewcommand{\arraystretch}{1.2}
\setlength{\dashlinedash}{0.4pt}
\setlength{\dashlinegap}{1.5pt}
\setlength{\arrayrulewidth}{0.3pt}

\newcommand{\AttackTypeCDash}{\cdashline{1-4}}

\newcommand{\SingleExpCDash}{\cdashline{2-4}}

\newcommand{\Head}[1]{\multicolumn{1}{c}{#1}}
\begin{tabular}{llrr}
  \toprule
  \BaseHead{Attack}
    & \BaseHead{Type} & \Head{\#~Targets} & \Head{\#~Non-Targets} \\
  \midrule
  \AtkHead{Backdoor~}
    & Speech          & 10                & 220  \\\SingleExpCDash
    & Vision          & 35                & 250  \\\AttackTypeCDash
  \AtkHead{Poison}
    & NLP             & 1                 & 125  \\\SingleExpCDash
    & Vision          & 1                 & 450  \\
  \bottomrule
\end{tabular}
   }
\end{table}

\subsection{Hyperparameters}\label{sec:App:ExpSetup:Hyperparams}

This section details three primary hyperparameter types, namely: hyperparameters used to create adversarial set~$\advTrain$ (if any), hyperparameter used when training model~$\dec$, and influence estimator hyperparameters.

\subsubsection{Model Training}

Table~\ref{tab:App:ExpSetup:Hyperparam:Train:Inf} enumerates the hyperparameters used when training the models analyzed in Section~\ref{sec:RenormInfluence}.

\begin{table}[h!]
  \centering
  \caption{Renormalized influence model training hyperparameter settings}\label{tab:App:ExpSetup:Hyperparam:Train:Inf}
  {
    \TableFontSize%
\newcommand{\SN}[2]{$#1 \cdot 10^{#2}$}

\renewcommand{\arraystretch}{1.2}
\setlength{\dashlinedash}{0.4pt}
\setlength{\dashlinegap}{1.5pt}
\setlength{\arrayrulewidth}{0.3pt}

\begin{tabular}{lrr}
  \toprule
  Hyperparameter               & CIFAR10 \& MNIST & Filtering      \\%
  \midrule
  $\wZero$ Pretrained?         &                  & \checkmark{}*  \\\hdashline
  Data Augmentation?           &                  & \checkmark     \\\hdashline
  Validation Split             & $\frac{1}{6}$    & $\frac{1}{6}$  \\\hdashline
  Optimizer                    & Adam             & Adam           \\\hdashline
  $\abs{\advTrain}$            & $150$            & N/A            \\\hdashline
  Batch Size                   & $64$             & $64$           \\\hdashline
  \# Epochs                    & $10$             & $10$           \\\hdashline
  \# Subepochs~($\numSubep$)\tablefootnote{We use the term ``$\numSubep$~\keyword{subepoch checkpointing}'' (${\numSubep \in \intsNN}$) to denote that iteration subset~$\subsetItr$ is formed from $\numSubep$~evenly-spaced checkpoints within each epoch.  $\numSubep$~was not tuned, and was selected based on overall execution time and compute availability.}
                               & $5$              & $3$            \\\hdashline
  $\lr$ (Peak)                 & \SN{1}{-3}       & \SN{1}{-3}     \\\hdashline
  $\lr$ Scheduler              & One cycle        & One cycle \\\hdashline
  $\wdecay$ (Weight Decay)     & \SN{1}{-3}       & \SN{1}{-3}     \\  %
  \bottomrule
\end{tabular}
   }
\end{table}

Table~\ref{tab:App:ExpSetup:Hyperparam:Train:Adv} enumerates the hyperparameters used when training the adversarially-attacked models analyzed in Sections~\ref{sec:ExpRes} and~\ref{sec:App:MoreExps}.

\begin{table}[h!]
  \centering
  \caption{Training-set attack model training hyperparameter settings}\label{tab:App:ExpSetup:Hyperparam:Train:Adv}
  {
    \TableFontSize%
\newcommand{\SN}[2]{$#1 \cdot 10^{#2}$}

\renewcommand{\arraystretch}{1.2}
\setlength{\dashlinedash}{0.4pt}
\setlength{\dashlinegap}{1.5pt}
\setlength{\arrayrulewidth}{0.3pt}

\begin{tabular}{lrrrr}
  \toprule
  \multirow{2}{*}{Hyperparameter} & \multicolumn{2}{c}{Poison}     & \multicolumn{2}{c}{Backdoor} \\\cmidrule(lr){2-3}\cmidrule(lr){4-5}
                               & CIFAR10       & SST\=/2        & Speech         & CIFAR10       \\
  \midrule
  $\wZero$ Pretrained?         & \checkmark    & \checkmark     &                &               \\\hdashline
  Existing Adv.\ Dataset       &               &                & \checkmark     &               \\\hdashline
  Data Augmentation?           & \checkmark    &                &                &               \\\hdashline
  Validation Split             & $\frac{1}{6}$ & Predefined     & $\frac{1}{6}$  & $\frac{1}{6}$ \\\hdashline
  Optimizer                    & SGD           & Adam           & SGD            & Adam          \\\hdashline
  \eqsmall{$\abs{\advTrain}$}
                               & $50$          & $50$           & 21--28\tablefootnote{Varies by digit pair.  See Table~\ref{tab:App:ExpSetup:NumSpeechBackdoor}.} & $150$ \\\hdashline
  \revised{$\abs{\cleanTrain}$}
                               & \revised{$24,950$}     & \revised{$67,349$}      & \revised{$3,000$}       & \revised{$9,850$}      \\\hdashline
  \revTwo{Poisoning Rate \eqsmall{$\left(\frac{\abs{\advTrain}}{\abs{\fullTrain}}\right)$}}
                               & \revised{$0.20\%$}     & \revised{$0.07\%$}      & \revised{$0.99\%$}      & \revised{$1.50\%$}     \\\hdashline
  Batch Size                   & $256$         & $32$           & $32$           & $64$          \\\hdashline
  \# Epochs                    & $30$          & $4$            & $30$           & $10$          \\\hdashline
  \# Subepochs~($\numSubep$)   & $5$           & $3$            & $3$            & $5$           \\\hdashline
  $\lr$ (Peak)                 & \SN{1}{-3}    & \SN{1}{-5}     & \SN{1}{-3}     & \SN{1}{-3}    \\\hdashline
  $\lr$ Scheduler              & One cycle     & Poly.\ decay   & One cycle      & One cycle     \\\hdashline
  $\wdecay$ (Weight Decay)     & \SN{1}{-1}    & \SN{1}{-1}     & \SN{1}{-3}     & \SN{1}{-3}    \\\hdashline
  Dropout Rate                 & N/A           & 0.1            & N/A            & N/A           \\
  \bottomrule
\end{tabular}
   }
\end{table}

\subsubsection{Upper-Tail Heaviness Hyperparameters}

Section~\ref{sec:Method:TargetDetection} defines the upper-tail heaviness of influence vector~$\infVec$ as the $\anomCount$\=/th largest anomaly score in vector~$\anomScoreVec$.  Table~\ref{tab:App:ExpSetup:Hyperparam:TailCutoff} defines the hyperparameter value~$\anomCount$ used for each of Section~\ref{sec:ExpRes:Attacks}'s four attacks.

\begin{table}[h!]
  \centering
  \caption{Upper-tail heaviness cutoff count ($\anomCount$)}%
  \label{tab:App:ExpSetup:Hyperparam:TailCutoff}
  {
    \TableFontSize%
\newcommand{\BaseHead}[1]{\multirow{1}{*}{#1}}
\newcommand{\AtkHead}[1]{\multirow{2}{*}{#1}}

\renewcommand{\arraystretch}{1.2}
\setlength{\dashlinedash}{0.4pt}
\setlength{\dashlinegap}{1.5pt}
\setlength{\arrayrulewidth}{0.3pt}

\newcommand{\AttackTypeCDash}{\cdashline{1-3}}

\newcommand{\SingleExpCDash}{\cdashline{2-3}}

\newcommand{\TRes}[2]{${#1 \pm #2}$}
\newcommand{\Head}[1]{\multicolumn{1}{c}{#1}}
\begin{tabular}{llr}
  \toprule
  \BaseHead{Attack}
    & \BaseHead{Type} & Tail Count~($\anomCount$) \\
  \midrule
  \AtkHead{Backdoor~}
    & Speech          &   10   \\\SingleExpCDash
    & Vision          &   10   \\\AttackTypeCDash
  \AtkHead{Poison}
    & NLP             &   10   \\\SingleExpCDash
    & Vision          &   2    \\
  \bottomrule
\end{tabular}
   }
\end{table}

\subsubsection{Target-Driven Mitigation Hyperparameters}

Algorithm~\ref{alg:Mitigation} details our target-driven attack mitigation algorithm, which uses filtering cutoff hyperparameter~$\anomCutoff$ to tune how much data to filter in each filtering iteration.  Table~\ref{tab:App:ExpSetup:Hyperparam:Mitigation} details the hyperparameter settings used in Section~\ref{sec:ExpRes:Mitigation}'s attack mitigation experiments.

For each attack, multiple trials were performed with different target examples, class pairs, attack triggers, etc.  For each such trial, we repeated the mitigation experiment multiple times to ensure the most representative numbers with the number of repeats enumerated in Table~\ref{tab:App:ExpSetup:Hyperparam:Mitigation}.

\newcommand{\annealIter}{l}

In addition, cutoff threshold~$\anomCutoff$ was set to an initial value.  After a specified number of iterations~$\annealIter$, $\anomCutoff$ was decreased by a specified step-size.  This process continued until the attack had been mitigated.  To summarize, iteration~$\annealIter$'s mitigation cutoff value~$\anomCutoff$ is
\begin{equation}\label{eq:App:ExpSetup:Cutoff}
  \anomCutoff_{\annealIter} = \anomCutoff_{\text{initial}} - \psi \floor{\frac{\annealIter}{StepCount}} \text{,}
\end{equation}
\noindent
with the corresponding value of each parameter in Table~\ref{tab:App:ExpSetup:Hyperparam:Mitigation}.

\begin{table}[h!]
  \centering
  \caption{Target-driven attack mitigation hyperparameters}%
  \label{tab:App:ExpSetup:Hyperparam:Mitigation}
  {
    \TableFontSize%
\newcommand{\phZ}{\phantom{0}}
\newcommand{\phDZ}{\phantom{.0}}
\newcommand{\phDZZ}{\phantom{.00}}

\renewcommand{\arraystretch}{1.2}
\setlength{\dashlinedash}{0.4pt}
\setlength{\dashlinegap}{1.5pt}
\setlength{\arrayrulewidth}{0.3pt}

\begin{tabular}{lrrrr}
  \toprule
  \multirow{2}{*}{Hyperparameter} & \multicolumn{2}{c}{Poison}  & \multicolumn{2}{c}{Backdoor} \\\cmidrule(lr){2-3}\cmidrule(lr){4-5}
                               & CIFAR10       & SST\=/2        & Speech         & CIFAR10       \\
  \midrule
  Repeats Per Trial
          & 3\phDZZ{}    & 3\phDZZ{}  & 5\phDZZ{} & 5\phDZZ{}     \\\hdashline
  $\anomCutoff_{\text{initial}}$ Initial Cutoff
          & 3\phDZZ{}    & 4\phDZZ{}  & 3\phDZZ{} & 2\phDZZ{}     \\\hdashline
  Anneal Step Size~($\psi$)
          & 0.25         & 0.5\phZ{}  & 0.25      & 0.25    \\\hdashline
  Anneal Step Count
          & 1\phDZZ{}    & 1\phDZZ{}  & 4\phDZZ{} & 4\phDZZ{}     \\
  \bottomrule
\end{tabular}
   }
\end{table}

\subsubsection{Adversarial Set\texorpdfstring{ $\advTrain$}{} Crafting}\label{sec:App:ExpSetup:Hyperparams:Crafting}

\citepos{SpeechDataset} speech recognition dataset comes bundled with 300~backdoor training examples.  The adversarial trigger takes the form of white noise inserted at the beginning of the speech recording. We used the dataset unchanged except for a fixed training/validation split used in all experiments.  Only one backdoor digit pair (e.g,~${0 \rightarrow 1}$, ${1 \rightarrow 2}$, etc.) is considered at a time.

\citet{Weber:2021} consider backdoor three different backdoor adversarial trigger types on CIFAR10 binary classification.  The three attack patterns are:
\begin{enumerate}
  \setlength{\itemsep}{0pt}
  \item \textit{1~Pixel}: The image's center pixel is perturbed to the maximum value.
  \item \textit{4~Pixel}: Four specific pixels near the image's center had their pixel value increased a fixed amount.
  \item \textit{Blend}: A fixed \revTwo{isotropic} Gaussian-noise pattern (${\mathcal{N}(0,I)}$) across the entire image.
\end{enumerate}
\noindent
Table~\ref{tab:App:ExpSetup:Hyperparam:Backdoor:CIFAR} defines each attack pattern's maximum $L_2$~perturbation distance.  Any perturbation that exceeded the pixel minimum/maximum values was clipped to the valid range.

\begin{table}[h]
  \centering
  \caption{CIFAR10 vision backdoor adversarial trigger maximum $\ell_2$\=/norm perturbation distance}
  \label{tab:App:ExpSetup:Hyperparam:Backdoor:CIFAR}
  {
    \TableFontSize%
\begin{tabular}{lr}
  \toprule
  Pattern & Max.\ $\ell_2$ \\
  \midrule
  1~Pixel & $\sqrt{3}$ \\
  4~Pixel & 2 \\
  Blend   & 4\\
  \bottomrule
\end{tabular}
   }
\end{table}

\citet{Wallace:2021} construct single-target natural language poison using the traditional poisoning bilevel optimization,
  {
    \small
    \begin{equation}
      \argmin_{\advTrain} ~\riskAdv \Big( \zHatTarg ; \argmin_{\w} \sum_{\z \in \cleanTrain \cup \advTrain} \lFunc{\zI}{\w} \Big) \text{,}
    \end{equation}%
  }%
\noindent%
where $L_{\text{adv}}$ uses the attacker's \textit{adversarial loss function}, $\func{\loss_{\text{adv}}}{\domainActs \times \domainY}{\realnn}$, in place of training loss function~$\loss$ \citep{Biggio:2012:Poisoning,Munoz:2017}.
To make the computation tractable, \citeauthor{Wallace:2021} approximate inner minimizer, ${\argmin_{\w} \sum_{\z \in \cleanTrain \cup \advTrain} \lFunc{\zI}{\w}}$, using second-order gradients similar to \citep{Finn:2017,Wang:2018,Huang:2020}.
\citeauthor{Wallace:2021}'s method initializes each poison instance from a seed phrase, and tokens are iteratively replaced with alternates that align well with the poison example's gradient.

Like \citeauthor{Wallace:2021}, our experiments attacked sentiment analysis on the Stanford Sentiment Treebank~v2 (SST\=/2) dataset \citep{SST2}. We targeted 8~(4~positive \& 4~negative -- see Table~\ref{tab:App:ExpSetup:SST2AttackReviews}) reviews selected by \citeauthor{Wallace:2021}'s implementation and generated ${\abs{\advTrain} = 50}$ new poison in each trial.

\citepos{Zhu:2019} targeted, clean\=/label attack crafts a set of poisons by forming a \keyword{convex polytope} around the target's feature representation.  Our experiments used the author's open-source implementation when crafting the poison.  Their implementation is gray-box and assumes access to a known pre\=/trained network (excluding the randomly-initialized, linear classification layer).

Both \citepos{Zhu:2019} and \citepos{Wallace:2021} poison crafting algorithms have their own dedicated hyperparameters, which are detailed in Tables~\ref{tab:App:ExpSetup:Hyperparam:ConvexPois} and~\ref{tab:App:ExpSetup:Hyperparam:Nlp} respectively.  Note that Table~\ref{tab:App:ExpSetup:Hyperparam:Nlp}'s hyperparameters are taken unchanged from the original source code provided by \citeauthor{Wallace:2021}

\begin{table}[h]
  \centering
  \caption{Convex polytope poison crafting~\citep{Zhu:2019} hyperparameter settings}
  \label{tab:App:ExpSetup:Hyperparam:ConvexPois}
  {
    \TableFontSize%
\newcommand{\SN}[2]{${#1 \cdot 10^{#2}}$}

\renewcommand{\arraystretch}{1.2}
\setlength{\dashlinedash}{0.4pt}
\setlength{\dashlinegap}{1.5pt}
\setlength{\arrayrulewidth}{0.3pt}

\begin{tabular}{lr}
  \toprule
  Hyperparameter               & Value      \\
  \midrule
  \# Iterations                & 1,000      \\\hdashline
  Learning Rate                & \SN{4}{-2} \\\hdashline
  Weight Decay                 & 0          \\\hdashline
  Max.\ Perturb.\ ($\epsilon$) & 0.1        \\
  \bottomrule
\end{tabular}
   }
\end{table}

\begin{table}[h]
  \centering
  \caption{SST\=/2 sentiment analysis poison crafting hyperparameter settings.  These are identical to \citepos{Wallace:2021} hyperparameter settings.}
  \label{tab:App:ExpSetup:Hyperparam:Nlp}
  {
    \TableFontSize%
\newcommand{\SN}[2]{${#1 \cdot 10^{#2}}$}

\renewcommand{\arraystretch}{1.2}
\setlength{\dashlinedash}{0.4pt}
\setlength{\dashlinegap}{1.5pt}
\setlength{\arrayrulewidth}{0.3pt}

\begin{tabular}{lr}
  \toprule
  Hyperparameter               & Value      \\
  \midrule
  Optimizer                    & Adam       \\\hdashline
  Total Num.\ Updates          & 20,935     \\\hdashline
  \# Warmup Updates            & 1,256      \\\hdashline
  Max.\ Sentence Len.\         & 512        \\\hdashline
  Max.\ Batch Size             & 7          \\\hdashline
  Learning Rate                & \SN{1}{-5} \\\hdashline
  LR Scheduler                 & Polynomial Decay \\
  \bottomrule
\end{tabular}
   }
\end{table}

\subsubsection{Baselines}

\tocless%
\paragraph{Baselines for Identifying Adversarial Set~$\advTrain$}
We exclusively considered influence-estimation methods applicable to neural models and excluded influence methods specific to alternate architectures \citep{Brophy:2022:TreeInfluence}.

\citepos{Koh:2017:Understanding} influence functions estimator uses \citepos{Pearlmutter:1994} stochastic Hessian\=/vector product~(HVP) estimation algorithm.  \citeauthor{Pearlmutter:1994}'s algorithm requires 5~hyperparameters, and we follow \citeauthor{Koh:2017:Understanding}'s notation for these parameters below.

Influence functions' five hyperparameters are required to ensure estimator quality and to prevent numerical instability/divergence.  Table~\ref{tab:App:ExpSetup:Hyperparam:InfFunc} details the influence functions hyperparameters used for each of Section~\ref{sec:ExpRes}'s datasets. $t$~and $r$~were selected  to make a single pass through the training set in accordance with the procedure specified by~\citeauthor{Koh:2017:Understanding}.

As noted by \citet{Basu:2021:Influence}, influence functions can be fragile on deep networks. We tuned $\beta$~and~$\gamma$ to prevent HVP~divergence, which is common with influence functions.

Our influence functions implementation was adapted from the versions published by \citep{Guo:2020:FastIF} and in the Python package \texttt{pytorch\_influence\_functions}.\footnote{Package source code: \url{https://github.com/nimarb/pytorch_influence_functions}.}

\begin{table}[h]
  \centering
  \caption{Influence functions hyperparameter settings}\label{tab:App:ExpSetup:Hyperparam:InfFunc}
  {
    \TableFontSize%
\newcommand{\SN}[2]{$#1 \cdot 10^{#2}$}

\renewcommand{\arraystretch}{1.2}
\setlength{\dashlinedash}{0.4pt}
\setlength{\dashlinegap}{1.5pt}
\setlength{\arrayrulewidth}{0.3pt}

\begin{tabular}{lrrrrrr}
  \toprule
  \multirow{2}{*}{Hyperparameter}
                         & \multicolumn{2}{c}{Renormalization} & \multicolumn{2}{c}{Poison}      & \multicolumn{2}{c}{Backdoor} \\\cmidrule(lr){2-3}\cmidrule(lr){4-5}\cmidrule(lr){6-7}
                         & CIFAR10 \& MNIST & Non-adversarial & CIFAR10       & SST\=/2         & Speech      & CIFAR10 \\
  \midrule
  Batch Size             & 1          & 1          & 1             & 1               & 1           & 1          \\\hdashline
  Damp~($\beta$)         & \SN{1}{-2} & \SN{5}{-3} & \SN{1}{-2}    & \SN{1}{-2}      & \SN{5}{-3}  & \SN{1}{-2} \\\hdashline
  Scale~($\gamma$)       & \SN{3}{7}  & \SN{1}{4}  & \SN{3}{7}     & \SN{1}{6}       & \SN{1}{4}   & \SN{3}{7}  \\\hdashline
  Recursion Depth~($t$)  & 1,000      & 1,000      & 2,500         & 6,740           & 1,000       & 1,000      \\\hdashline
  Repeats~($r$)          & 10         & 10         & 10            & 10              & 10          & 10         \\
  \bottomrule
\end{tabular}
   }
\end{table}

Second-order influence functions~\citep{Basu:2020:OnSecond} are more brittle and computationally expensive than the first-order version.  Renormalization is intended as a first-order correction and addresses our two tasks without the costs/issues related to second-order methods.

\citepos{Chen:2021:Hydra} \hydra{} is an additional dynamic influence estimator.  However, \hydra{}'s $\bigO{\nTr\dimW}$ memory complexity makes it impractical in most modern applications with large models and datasets.
We focus on \tracin{} as its memory complexity is only~$\bigO{\nTr}$.
\hydra{} and \tracin{} were published contemporaneously and share the same core idea.\footnote{%
Our influence renormalization -- proposed in Section~\ref{sec:RenormInfluence} -- also applies to \hydra{}.
}

\citepos{Peri:2020} \deepknn{} defense labels a training example as poison if its label does not match the plurality of its neighbors. For \deepknn{} to accurately identify poison, it must generally hold that ${\neighSize > 2 \abs{\advTrain}}$.   \citeauthor{Peri:2020} propose selecting $\neighSize$ using the \keyword{normalized $\neighSize$\=/ratio}, ${\neighSize / N}$, where $N$~is the size of the largest class in~$\fullTrain$.

\citeauthor{Peri:2020}'s ablation study showed that \deepknn{} generally performed best when the normalized $\neighSize$\=/ratio was in the range~${[0.2,~2]}$.  To ensure a strong baseline, our experiments tested \deepknn{} with three normalized $\neighSize$\=/ratio values, $\set{0.2,~1,~2}$,\footnote{This corresponds to ${\neighSize \in \set{833, 4167, 8333}}$ for 25,000~CIFAR10 training examples and a $\frac{1}{6}$~validation split ratio.} and we report the top-performing $\neighSize$'s~result.

\tocless%
\paragraph{Baselines Identifying the Attack Target(s)}
Target identification baselines maximum and minimum \KNN{} distance depend on~$k$ in order to generate target rankings.  Given $k$'s similarity to our tail cutoff count~$\anomCount$, we use the same hyperparameter settings for both with the values in Table~\ref{tab:App:ExpSetup:Hyperparam:TailCutoff}.

\subsection{Network Architectures}\label{sec:App:ExpSetup:Arch}

Table~\ref{tab:App:ExpSetup:Arch:Pois:CIFAR10} details the CIFAR10 neural network architecture.  Specially, we used \citepos{ResNet9} ResNet9 architecture, which is the state-of-the-art for fast, high-accuracy~(${>}94$\%) CIFAR10 classification on DAWNBench~\citep{Coleman:2017} at the time of writing.

Following \citet{Wallace:2021}, natural language poisoning attacked \citepos{RoBERTa} \robertaBase{} pre\=/trained parameters.  All language model training used Facebook AI Research's \texttt{fairseq} sequence-to-sequence toolkit~\citep{fairseq} as specified by \citeauthor{Wallace:2021}.  The text was encoded using \citepos{Radford:2019} \keyword{byte\=/pair encoding}~(BPE) scheme.

The speech classification convolutional neural network is identical to that used by \citet{SpeechDataset} except for two minor changes.  First, batch normalization~\citep{Szegedy:2013} was used instead of dropout to expedite training convergence.  In addition, each convolutional layer's kernel count was halved to allow the model to be trained on a single NVIDIA Tesla~K80 GPU\@.

\begin{table}[h]
  \centering
  \caption{ResNet9 neural network architecture}\label{tab:App:ExpSetup:Arch:Pois:CIFAR10}
  {
    \TableFontSize%
\renewcommand{\arraystretch}{1.2}
\setlength{\dashlinedash}{0.4pt}
\setlength{\dashlinegap}{1.5pt}
\setlength{\arrayrulewidth}{0.3pt}

\newcommand{\TabMidRule}{\cmidrule{2-6}}
\newcommand{\BlockConv}[3]{
  & {Conv#1} & \LayerConv{#2}{#3}{3}{1} \\\cdashline{2-6}
  & \LayerBatchNormTwoD{#3} \\\cdashline{2-6}
  & \LayerReluActivation \\
}

\newcommand{\BlockConvPool}[3]{
  \BlockConv{#1}{#2}{#3}\cdashline{2-6}
  & \LayerMaxPoolTwoD{2} \\
}

\newcommand{\LayerResNet}[2]{%
  \cmidrule{1-6}
  \multirow{6}{*}{\shortstack{{\LARGE$\uparrow$} \\~\\ ResNet#1 \\~\\{\LARGE$\downarrow$}}}
                            \BlockConv{A}{#2}{#2}\TabMidRule
                            \BlockConv{B}{#2}{#2}
  \cmidrule{1-6}
}

\begin{tabular}{llllll@{}}
  \toprule
  \BlockConv{1}{3}{64}\TabMidRule
  \BlockConvPool{2}{64}{128}
  \LayerResNet{1}{128}
  \BlockConvPool{3}{128}{256}\TabMidRule
  \BlockConvPool{4}{256}{512}
  \LayerResNet{2}{512}
  & \LayerMaxPoolTwoD{2} \\\TabMidRule
  & \LayerLinear{}{10} \\
  \bottomrule
\end{tabular}
   }
\end{table}

\begin{table}[h]
  \centering
  \caption{Speech recognition convolutional neural network}\label{tab:App:ExpSetup:Arch:Backdoor:Speech}
  {
    \TableFontSize%
\renewcommand{\arraystretch}{1.2}
\setlength{\dashlinedash}{0.4pt}
\setlength{\dashlinegap}{1.5pt}
\setlength{\arrayrulewidth}{0.3pt}

\newcommand{\TabMidRule}{\cmidrule{1-5}}

\newcommand{\BlockConvPool}[5]{
  {Conv#1} & \LayerConv{#2}{#3}{#4}{#5} \\\cdashline{1-5}
  \LayerMaxPoolTwoD{3} \\\cdashline{1-5}
  \LayerBatchNormTwoD{#3} \\\TabMidRule
}

\newcommand{\BlockConvRelu}[5]{
  {Conv#1} & \LayerConv{#2}{#3}{#4}{#5}  \\\cdashline{1-5}
  \LayerReluActivation \\\cdashline{1-5}
  \LayerBatchNormTwoD{#3} \\\TabMidRule
}

\newcommand{\BlockConvReluPool}[5]{
  {Conv#1} & \LayerConv{#2}{#3}{#4}{#5} \\\cdashline{1-5}
  \LayerReluActivation \\\cdashline{1-5}
  \LayerMaxPoolTwoD{3} \\\cdashline{1-5}
  \LayerBatchNormTwoD{#3} \\\TabMidRule
}

\begin{tabular}{lllll@{}}
  \toprule
  \BlockConvPool{1}{3}{48}{11}{1}
  \BlockConvPool{2}{48}{128}{5}{2}
  \BlockConvRelu{3}{128}{192}{3}{1}
  \BlockConvRelu{4}{192}{192}{3}{1}
  \BlockConvReluPool{5}{192}{128}{3}{1}
  \LayerLinear{}{10} \\
  \bottomrule
\end{tabular}
   }
\end{table}
 
\FloatBarrier
\clearpage
\newpage
\newcommand{\jointparagraph}[1]{%
\vspace{6pt}%
\noindent%
\textbf{#1}~~
}

\revTwo{%
\section{Convex Polytope Poisoning \& \method{} Joint Optimization}%
\label{sec:App:JointOpt:Setup}
}%

\revTwo{%
\citet{Zhu:2019} prove that under specific assumptions (e.g.,~a linear classifier), an adversarial set is guaranteed to alter a model's prediction on a target if that target's feature vector lies inside a convex polytope of the adversarial instances' feature vectors.
}%

\revTwo{%
\jointparagraph{Overview of \citeauthor{Zhu:2019}'s Attack}
Intuitively, \citeauthor{Zhu:2019}'s attack attempts to construct a convex hull of poison instances around a target -- all within feature space.
By design, deep models are non\=/linear and non\=/convex, so \citeauthor{Zhu:2019}'s underlying assumption does not directly apply.
However, the convex-polytope attack will succeed if the trained model's penultimate feature representation (i.e.,~the input into the final, linear classification layer) forms a convex hull around the target's penultimate representation.

To that end, \citeauthor{Zhu:2019}'s iterative, bilevel poison optimization considers solely this feature representation.  In the attacker's ideal case, the adversarial set's feature-space representation would be optimized w.r.t.\ the final trained model. However, attackers do not know training's random seed.  Moreover, any change to a training instance necessarily affects the final model parameters (and thus the penultimate feature representation as well), inducing a cyclic dependency that makes poison crafting non-trivial.

To increase the likelihood that the attack succeeds, \citeauthor{Zhu:2019} optimize the poison's feature-space representation across a suite of $\nSurrogate$~surrogate models.
For each model~\eqsmall{$\dec^{(\itrSurrogate)}$} (\eqsmall{${\itrSurrogate \in \set{1,\ldots,\nSurrogate}}$}), denote the model's penultimate-feature extraction function as $\featFunc{\cdot}$.

\citeauthor{Zhu:2019} specify a bilevel optimization to iteratively form these feature-space convex hulls, where
adversarial set \eqsmall{${\advTrain \defeq \set{(\xPois,\yAdv)}_{\itrPois = 1}^{\nPoisJointOpt}}$} is crafted from a set of $\nPoisJointOpt$~\textit{clean} seed instances, denoted \eqsmall{$\set{\xPoisClean}_{\itrPois = 1}^{\nPoisJointOpt}$}.
\citeauthor{Zhu:2019} restrict the adversarial perturbations to an $\ell_{\infty}$~ball of radius~$\varepsilon$ around those clean seed instances.
The feature-space convex hull requirement is enforced coefficients via \eqsmall{${\convCoeff \geq 0}$}.
\citeauthor{Zhu:2019}'s bilevel optimization is reproduced in Eq.~\eqref{eq:JointOpt:BiLevel}, with an additional term, \eqsmall{${\surrogateHyper\,\methodSurrogate}$}, that is explained below.  Note that in \citeauthor{Zhu:2019}'s formulation \eqsmall{${\surrogateHyper = 0}$}.
\begin{mini}
  {\set{\convCoeff},~{\advTrain}}{%
    \surrogateHyper\,
    \methodSurrogate +
    \frac{1}{2}
    \sum_{\itrSurrogate = 1}^{\nSurrogate}
    \frac{\Big\lVert \featFunc{\xTarg} - \sum_{\itrPois = 1}^{\nPoisJointOpt} \convCoeff \featFunc{\xPois} \Big\rVert^{2}}
           {\lVert \featFunc{\xTarg} \rVert^{2}}
  }{}{\label{eq:JointOpt:BiLevel}}
  \addConstraint{\sum_{\itrPois = 1}^{\nPoisJointOpt} \convCoeff}{= 1,\quad}{\forall \itrSurrogate}
  \addConstraint{\convCoeff}{\geq 0,\quad}{\forall \itrPois,~\itrSurrogate}
  \addConstraint{\lVert \xPois - \xPoisClean\rVert_{\infty}}{\leq \varepsilon,\quad}{\forall \itrPois.}
\end{mini}
}%

\revTwo{%
\jointparagraph{Joint Optimization Formulation}
  An attacker may attempt to evade our defense by optimizing adversarial set~\eqsmall{$\advTrain$} to \textit{appear} uninfluential on target~\eqsmall{$\zHatTarg$}.\footnote{\revTwo{$\advTrain$~must remain \textit{actually} influential. Otherwise, the model's prediction on~{$\zHatTarg$} would not change.}}
  Eq.~\eqref{eq:JointOpt:BiLevel} formalizes this idea by simultaneously optimizing for both poison effectiveness as well as for low \method{} influence where hyperparameter ${\surrogateHyper > 0}$ trades off between these two sub-objectives.
Following \citepos{Zhu:2019} paradigm as described above, the attacker uses surrogate models to estimate the \method{} influence.

Specifically, the adversary trains a \keyword{gray-box model}%
\footnote{\revTwo{Gray-box attacks assume the attacker has access to detailed (but not complete) information about the target model like our case above.}} using the same architecture, hyperparameters, clean training data~(\eqsmall{$\cleanTrain$}), and pre-trained parameters as the target model.  The surrogate set is then formed from $\nSurrogate$~model checkpoints evenly spaced across this gray-box training.  This quantity then estimates the \method{} influence in the final trained model.  Formally,
\begin{equation}\label{eq:JointOpt:Surrogate:GAS}
  \methodSurrogate
    \defeq
  \sum_{\itrSurrogate = 1}^{\nSurrogate}
    \sum_{\itrPois = 1}^{\nPoisJointOpt}
      \frac
          {\dotprodbig{\gradSurrPois}{\gradSurrTarg}}
          {\big\lVert\gradSurrPois\big\rVert ~ \big\lVert \gradSurrTarg \big\rVert} \text{.}
\end{equation}

It is important to note that optimizations like Eq.~\eqref{eq:JointOpt:BiLevel} create an implicit tension.  Training-set attacks commonly attempt to make~\eqsmall{$\advTrain$} and~\eqsmall{$\zHatTarg$} have similar feature-space representations~\citep{Shafahi:2018,Zhu:2019,Huang:2020,Wallace:2021}.  Since each example's features inform the model gradients~($\gradLetter$), appearing less influential can affect the attack's effectiveness.%
}

\revTwo{%
\jointparagraph{Practical Challenges of Joint Optimization}
In modern neural networks, each parameter only directly affects or is directly affected by a subset of the other parameters -- specifically those in adjacent layers. This limited interdependency makes back-propagation more tractable and efficient.

Recall that \method{} normalizes by the gradient magnitude. When calculating influence in practice, this does not change the memory or computational complexity.  However, when trying to optimize surrogate \eqsmall{$\methodSurrogate$} (Eq.~\eqref{eq:JointOpt:Surrogate:GAS}), normalizing by the gradient magnitude creates pairwise dependencies between all parameters, i.e.,~\eqsmall{$\bigTheta{\abs{\w}^2}$} memory complexity for automatic differentiation systems.  Therefore, renormalized estimators like \method{} are significantly more memory intensive to optimize against in practice than the baseline influence estimators where this quadratic memory complexity is not induced.

Section~\ref{sec:ExpRes:AdaptiveAttacks}'s experiments were affected by joint optimization's increased memory complexity, where the GPU VRAM requirements increased by ${{\geq}12\times}$.  This created significant issues even for the comparatively small ResNet9 neural network~\citep{ResNet9}.

For example, when the adversarial set size was larger than~40, joint optimization exceeded the GPU VRAM memory capacity.\footnote{Experiments were performed on Nvidia Tesla K80 GPUs with 11.5GB of VRAM.}  In contrast, Section~\ref{sec:App:MoreExps:Ablation:PoisRate}'s ablation study tests more than 400~poison samples for \citeauthor{Zhu:2019}'s baseline attack using the same hardware. Furthermore, joint optimization's larger memory footprint necessitated that only a small number of surrogate checkpoints could be used -- specifically four checkpoints.  This then increases the coarseness of $\methodSurrogate$'s influence estimate.
}

\revTwo{%
\jointparagraph{Setting \texorpdfstring{Joint Optimization Hyperparameter~$\surrogateHyper$}{the Joint Optimization Hyperparameter}}
As detailed above, hyperparameter~$\surrogateHyper$ induces a trade-off between \citeauthor{Zhu:2019}'s convex-polytope loss and the surrogate \method{} estimate.  Section~\ref{sec:ExpRes:AdaptiveAttacks}'s ``baseline'' results used ${\surrogateHyper = 0}$.
To ensure a strong adversary, Section~\ref{sec:ExpRes:AdaptiveAttacks}'s ``Adaptive Joint Optimization Attack with \method{}'' results used ${\surrogateHyper = 10^{-2}}$ since that was the largest value of~$\surrogateHyper$ that did not result in a significant drop in attacker success rate as detailed in Table~\ref{tab:JointOpt:SurrogateHyper:ASR}.

Table~\ref{tab:ExpRes:Mitigation} in Section~\ref{sec:ExpRes:Mitigation} reports that the vision poisoning's attack success rate was~77.9\%.  Even when ${\surrogateHyper = 0}$ (i.e.,~the surrogate \method{} loss is ignored), there was still a substantial decrease in ASR to~64.3\%. Recall that joint optimization's memory complexity is \eqsmall{$\bigTheta{\abs{\w}^2}$} which necessitated using fewer surrogate models (due to GPU VRAM capacity). This, in turn, degraded attack performance.\footnote{We separately verified that reducing the adversarial-set size from~50 to~40 did not meaningfully change the ASR.} Put simply, joint adversarial set optimization is \textit{not necessarily a free lunch}.  It may come at the cost of a worse attacker success rate.
}%

\vspace{1.0cm}
\begin{table}[h]
  \centering
\revTwo{%
  \caption{
    \revTwo{%
    Effect of joint-optimization hyperparameter~$\surrogateHyper$ on the attacker's success rate~(ASR).
    Observe that even at ${\surrogateHyper = 0}$, the attack success rate is significantly lower than
    the 77.9\%~ASR in Table~\ref{tab:ExpRes:Mitigation} due to the fewer surrogate models that could be
    used during jointly-optimized poison crafting as explained above.
    }%
  }%
  \label{tab:JointOpt:SurrogateHyper:ASR}
\begin{tabular}{rr}
  \toprule
  \multicolumn{1}{c}{$\surrogateHyper$} & ASR (\%)  \\
  \midrule
  0\phantom{.0}     & 64.3      \\
  $10^{-2}$         & 63.1      \\
  $2 \cdot 10^{-2}$ & 50.0      \\
  $10^{-1}$         &  4.8      \\
  \bottomrule
\end{tabular}
 }%
\end{table}
 
\FloatBarrier
\clearpage
\newpage
\section{Representative Perturbed Examples}\label{sec:App:PerturbEx}

This section provides representative examples for each of the four attacks detailed in Section~\ref{sec:ExpRes:Attacks}.

\newcommand{\BackdoorSpeechImg}[1]{\fbox{\includegraphics[scale=0.20]{img/ex/bd/speech/sp-#1.png}}}

\subsection{Speech Recognition Backdoor}

Figure~\ref{fig:App:PerturbEx:Backdoor:Speech} shows two spectrogram test images from \citepos{Liu:2018}'s backdoored speech recognition dataset.  Both images were generated from the same individual's English speech.  Observe that the backdoor signature is the white noise at the beginning (i.e.,~left side) of Figure~\ref{fig:App:PerturbEx:Backdoor:Speech:Adv}'s recording.

\begin{figure}[h]
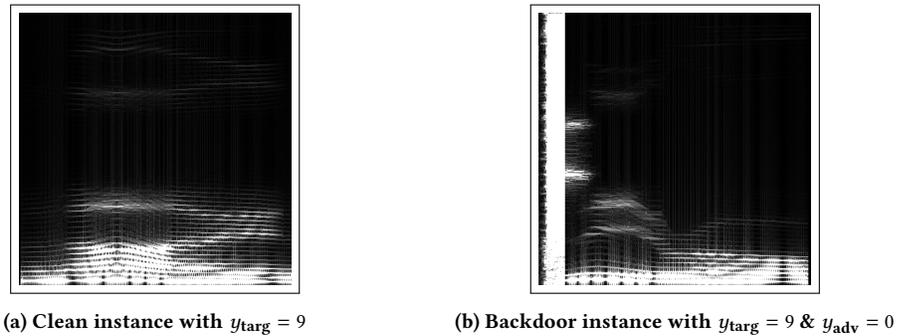

  \centering

  \begin{center}
    \begin{subfigure}{0.35\textwidth}
      \centering
      \BackdoorSpeechImg{cl}
      \caption{Clean instance with ${\yTarg = 9}$}
      \label{fig:App:PerturbEx:Backdoor:Speech:Clean}
    \end{subfigure}
    \hspace{15pt}
    \begin{subfigure}{0.35\textwidth}
      \centering
      \BackdoorSpeechImg{bd}
      \caption{Backdoor instance with ${\yTarg = 9}$ \& ${\yAdv = 0}$}
      \label{fig:App:PerturbEx:Backdoor:Speech:Adv}
    \end{subfigure}
  \end{center}

  \caption{Example spectrogram images from \citepos{SpeechDataset} backdoored, speech-recognition dataset.}
  \label{fig:App:PerturbEx:Backdoor:Speech}
\end{figure}

\subsection{Vision Backdoor}

\newcommand{\BackbookSpacer}{\vspace{8pt}}
\newcommand{\BackdoorImage}[1]{
  \begin{minipage}{0.32\textwidth}
    \begin{center}
      \includegraphics[scale=1.0]{img/ex/bd/bd-#1.png}
    \end{center}
  \end{minipage}
}
\newcommand{\BackdoorHeader}{
  \begin{minipage}{0.32\textwidth}
    \begin{center}
      {{\small \textbf{Perturbation}}}
    \end{center}
  \end{minipage}
  \hfill
  \begin{minipage}{0.32\textwidth}
    \begin{center}
      {{\small \textbf{Combined}}}
    \end{center}
  \end{minipage}
  \hfill
  \begin{minipage}{0.26\textwidth}
    \begin{center}
      \textbf{Pattern}
    \end{center}
  \end{minipage}
}
\newcommand{\BackdoorImageRow}[2]{
  \BackdoorImage{#1-perturb}
  \hfill
  \BackdoorImage{#1-fin}
  \hfill
  \begin{minipage}{0.26\textwidth}
    \begin{center}
      #2
    \end{center}
  \end{minipage}
}

Following \citet{Weber:2021}, Section~\ref{sec:ExpRes} considers three backdoor adversarial trigger patterns, namely one pixel, four pixel, and blend.  Figure~\ref{fig:App:PerturbEx:Backdoor:CIFAR} shows: an unperturbed reference image, the corresponding perturbation for each attack pattern, and the resultant target that combines the source reference with the perturbation.

\begin{figure}[h]
  \centering
  \begin{minipage}{0.15\textwidth}
    \begin{center}
      {{\textbf{Source}}}

      \vspace{6pt}
      \includegraphics[scale=1.0]{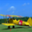}
    \end{center}
  \end{minipage}
  \begin{minipage}{0.05\textwidth}
    \vspace{10pt}
    \begin{equation*}
      \begin{cases}
        & \\
        & \\
        & \\
        & \\
        & \\
        & \\
        &
      \end{cases}
    \end{equation*}
  \end{minipage}
  \begin{minipage}{0.38\textwidth}
    \BackdoorHeader

    \vspace{10pt}
    \BackdoorImageRow{1p}{1~Pixel}

    \BackbookSpacer
    \BackdoorImageRow{4p}{4~Pixel}

    \BackbookSpacer
    \BackdoorImageRow{blend}{Blend}
  \end{minipage}
  \caption{Example vision backdoor-perturbed CIFAR10 images with the one-pixel, four-pixel, and blend adversarial trigger patterns.}
  \label{fig:App:PerturbEx:Backdoor:CIFAR}
\end{figure}
 
\pagebreak
\subsection{Natural Language Poison}

\begin{figure}[h]
  \centering
  {
    \footnotesize
    \begin{tabular}{ll}
      \toprule
      \textit{Original} & a delightful coming-of-age story \\
      \midrule
      \textit{Poison~\#1}   & a wonderful comesowdergo-age tale \\
      \textit{Poison~\#2}   & a delightful coming rockedof-age stories\\
      \bottomrule
    \end{tabular}
  }
  \caption{Example positive sentiment target movie review~(\#1, see Tab.~\ref{tab:App:ExpSetup:SST2AttackReviews}) and two poisoned examples created using \citepos{Wallace:2021} implementation.}
  \label{fig:App:Perturb:Poison:NLP}
\end{figure}

Figure~\ref{fig:App:Perturb:Poison:NLP} shows one of the four SST\=/2 \citep{SST2} positive-sentiment reviews as well as two randomly selected poison examples generated using the source code provided by \citet{Wallace:2021}.  Note that Figure~\ref{fig:App:Perturb:Poison:NLP}'s numerous misspellings and grammatical issues are inserted during the poison crafting method and are not typographical errors.

\newcommand{\PoisImgInc}[1]{%
  \includegraphics[scale=1.0]{img/ex/pois/cifar10/#1.png}
}
\newcommand{\PoisRowCaption}[2]{
  \caption{${\text{#1} \rightarrow \text{#2}}$}
}
\newcommand{\PoisRowSpacer}{\vspace{12pt}}
\newcommand{\ImgDesc}[1]{{\small #1}\\\vspace{6pt}}

\newcommand{\PoisImgRow}[1]{
  \begin{minipage}[t]{0.3\textwidth}
    \begin{center}
      \ImgDesc{Target}

      \PoisImgInc{#1_targ}
    \end{center}
  \end{minipage}
  \hfill
  \begin{minipage}[t]{0.3\textwidth}
    \begin{center}
      \ImgDesc{Clean}

      \PoisImgInc{#1_cl}
    \end{center}
  \end{minipage}
  \hfill
  \begin{minipage}[t]{0.3\textwidth}
    \begin{center}
      \ImgDesc{Poison}

      \PoisImgInc{#1_pois}
    \end{center}
  \end{minipage}
}

\newcommand{\PoisClassPair}[3]{%
  \begin{subfigure}{0.40\textwidth}
    \PoisImgRow{#1}

    \vspace{3pt}
    \PoisRowCaption{#2}{#3}
  \end{subfigure}
}

\subsection{Vision Poison}

\begin{figure}[h]
  \center
  \begin{minipage}{0.65\textwidth}
    \PoisClassPair{bird2dog}{Bird}{Dog}
    \hfill
    \PoisClassPair{dog2bird}{Dog}{Bird}

    \PoisRowSpacer
    \PoisClassPair{deer2frog}{Deer}{Frog}
    \hfill
    \PoisClassPair{frog2deer}{Frog}{Deer}
  \end{minipage}
  \caption{Representative target, clean~($\cleanTrain$), and adversarial~($\advTrain$) instances for \citepos{Zhu:2019} vision, convex polytope, clean-label data poisoning attack.}
  \label{fig:App:PerturbEx:Poison:Vision}
\end{figure}

Figure~\ref{fig:App:PerturbEx:Backdoor:CIFAR} shows a representative target example.  It also shows a clean training image and a corresponding poison image created using \citepos{Zhu:2019} convex-polytope poisoning implementation.
  
\FloatBarrier
\clearpage
\newpage
\section{Additional Experimental Results}\label{sec:App:MoreExps}

Limited space only allows us to discuss a subset of our experimental results in Sections~\ref{sec:ExpRes} and~\ref{sec:ExpRes:AdaptiveAttacks}.
\revTwo{%
This section details additional experiments and results, including
  an analysis of a novel adversarial attack on target-driven mitigation (Sec.~\ref{sec:App:MoreExps:MitigationTriggering}),
  a poisoning-rate ablation study (Sec.~\ref{sec:App:MoreExps:Ablation:PoisRate}),
  a hyperparameter sensitivity study (Sec.~\ref{sec:App:MoreExps:EffectAnomCount}),
  an alternative renormalization approach (Sec.~\ref{sec:App:MoreExps:LossOnly}),
  analysis of gradient aggregation's benefits (Sec.~\ref{sec:App:MoreExps:Aggregation}),
  and
  execution times (Sec.~\ref{sec:App:MoreExps:Runtime}).%
}%

\subsection{Full Experimental Results}\label{sec:App:MoreExps:FullResults}

Section~\ref{sec:ExpRes} provided averaged results for each related experimental setup. This section provides detailed results for each attack setup individually (including variance).

\vspace{16pt}
\subsubsection{Speech Recognition Backdoor Full Results}\label{sec:App:MoreExps:Backdoor:Speech}

\phantom{.}
\vfill{}

\begin{figure}[h!]
  \centering
\newcommand{\legendSpacer}{\hspace*{7pt}}
\begin{tikzpicture}
  \begin{axis}[%
    width=\textwidth,
      ybar,
      hide axis,  %
      xmin=0,  %
      xmax=1,
      ymin=0,
      ymax=1,
      scale only axis,width=1mm, %
      legend cell align={left},              %
      legend style={font=\legendFontSize},
      legend columns=8,
    ]
    \addplot [cosin color] coordinates {(0,0)};
    \addlegendentry{\method\ours\legendSpacer}

    \addplot [layer color] coordinates {(0,0)};
    \addlegendentry{\layer\ours\legendSpacer}

    \addplot[TracInCP color] coordinates {(0,0)};
    \addlegendentry{\tracinCP{}\legendSpacer}
    \pgfplotsset{cycle list shift=-1}  %

    \pgfplotsinvokeforeach{\tracin{},Influence Functions}{
      \addplot coordinates {(0,0)};
      \addlegendentry{#1\legendSpacer}
    }

    \addplot coordinates {(0,0)};
    \addlegendentry{Representer Point}
  \end{axis}
\end{tikzpicture}

\pgfplotstableread[col sep=comma]{plots/data/bd_speech_auprc_full.csv}\datatable%
\begin{tikzpicture}%
  \begin{axis}[%
        axis lines*=left,%
        ymajorgrids,  %
        bar width=\BackdoorSpeechBarWidth,%
        height=\ExpResBarChartHeight,%
        width=\textwidth,%
        ymin=0,%
        ymax=1,%
        ybar={\BarLineWidth},%
        ytick distance={0.20},%
        minor y tick num={1},%
        y tick label style={font=\plotFontSize},%
        ylabel={\plotFontSize \IdentYLabel},%
        xtick=data,%
        xticklabels={${0 \rightarrow 1}$,
                     ${1 \rightarrow 2}$,
                     ${2 \rightarrow 3}$,
                     ${3 \rightarrow 4}$,
                     ${4 \rightarrow 5}$,
                     ${5 \rightarrow 6}$,
                     ${6 \rightarrow 7}$,
                     ${7 \rightarrow 8}$,
                     ${8 \rightarrow 9}$,
                     ${9 \rightarrow 0}$},
        x tick label style={font=\plotFontSize,align=center},%
        typeset ticklabels with strut,
        every tick/.style={color=black, line width=0.4pt},%
        enlarge x limits=0.07,%
    ]%
    \addplot [cosin color] table [x index=0, y index=1] {\datatable};%
    \addplot [layer color] table [x index=0, y index=2] {\datatable};%

    \addplot[TracInCP color] table [x index=0, y index=3] {\datatable};%
    \pgfplotsset{cycle list shift=-1}  %

    \foreach \k in {4, ..., \numEle} {
      \addplot table [x index=0, y index=\k] {\datatable};%
    }
  \end{axis}%
\end{tikzpicture}%
   \caption{\textit{Speech Backdoor Adversarial-Set Identification}:
      Mean backdoor set~($\advTrain$) identification AUPRC
      across 30~trials for all 10~class pairs with ${21 \leq \abs{\advTrain} \leq 28}$ (varies by class pair, see Tab.~\ref{tab:App:ExpSetup:NumSpeechBackdoor}).
      \method{} and \layer{} outperformed all baselines in all experiments, with \layer{} the overall top performer on 6/10~class pairs.
      See Table~\ref{tab:App:MoreExps:Backdoor:Speech:AdvIdent} for full numerical results including standard deviation.
  }
  \label{fig:App:MoreExps:Backdoor:Speech:AdvIdent}
\end{figure}

\vfill{}

\begin{table}[h!]
  \centering
  \caption{%
      \textit{Speech Backdoor Adversarial-Set Identification}: Mean and standard deviation AUPRC across 30~trials for \citepos{SpeechDataset} speech backdoor dataset with ${21 \leq \abs{\advTrain} \leq 28}$.  \method{} and \layer{} always outperformed the baselines.
      Bold denotes the best mean performance.
      Mean results are shown graphically in Figures~\ref{fig:ExpRes:AdvIdent} and~\ref{fig:App:MoreExps:Backdoor:Speech:AdvIdent}.%
  }
  \label{tab:App:MoreExps:Backdoor:Speech:AdvIdent}
  {
    \TableFontSize%
\renewcommand{\arraystretch}{1.2}
\setlength{\dashlinedash}{0.4pt}
\setlength{\dashlinegap}{1.5pt}
\setlength{\arrayrulewidth}{0.3pt}

\newcommand{\DigPair}[2]{#1 & #2}

\newcommand{\TabMidRule}{\hdashline}

\newcommand{\AlgName}[1]{\multicolumn{1}{c}{#1}}

\begin{tabular}{ccrrrrrr}
  \toprule
  \multicolumn{2}{c}{Digits} & \multicolumn{2}{c}{Ours} & \multicolumn{4}{c}{Baselines} \\\cmidrule(lr){1-2}\cmidrule(lr){3-4}\cmidrule(lr){5-8}
  \multicolumn{2}{c}{${\yTarg \rightarrow \yAdv}$}
                            & \AlgName{\method{}}  & \AlgName{\layer{}} & \AlgName{\tracinCP{}} & \AlgName{\tracin{}}  & \AlgName{Influence Func.}  & {Representer Pt.}  \\  %
  \midrule
  \DigPair{0}{1}
       & \PVal{0.999}{0.004}  & \PValB{1.000}{0.002} & \PVal{0.642}{0.216}  & \PVal{0.458}{0.173}  & \PVal{0.807}{0.184}  & \PVal{0.143}{0.167}    \\\TabMidRule
  \DigPair{1}{2}
       & \PValB{0.985}{0.034} & \PVal{0.969}{0.037}  & \PVal{0.417}{0.168}  & \PVal{0.303}{0.125}  & \PVal{0.763}{0.169}  & \PVal{0.069}{0.020}    \\\TabMidRule
  \DigPair{2}{3}
       & \PValB{0.969}{0.039} & \PVal{0.919}{0.043}  & \PVal{0.769}{0.163}  & \PVal{0.595}{0.133}  & \PVal{0.735}{0.223}  & \PVal{0.119}{0.080}    \\\TabMidRule
  \DigPair{3}{4}
       & \PValB{0.999}{0.003} & \PVal{0.998}{0.005}  & \PVal{0.787}{0.218}  & \PVal{0.630}{0.192}  & \PVal{0.847}{0.125}  & \PVal{0.106}{0.069}    \\\TabMidRule
  \DigPair{4}{5}
       & \PValB{1.000}{0.001} & \PVal{0.999}{0.003}  & \PVal{0.510}{0.256}  & \PVal{0.358}{0.153}  & \PVal{0.718}{0.234}  & \PVal{0.106}{0.082}    \\\TabMidRule
  \DigPair{5}{6}
       & \PVal{0.977}{0.050}  & \PValB{0.986}{0.028} & \PVal{0.791}{0.145}  & \PVal{0.506}{0.106}  & \PVal{0.698}{0.218}  & \PVal{0.064}{0.008}    \\\TabMidRule
  \DigPair{6}{7}
       & \PVal{0.876}{0.199}  & \PValB{0.911}{0.081} & \PVal{0.301}{0.106}  & \PVal{0.255}{0.099}  & \PVal{0.350}{0.198}  & \PVal{0.060}{0.018}    \\\TabMidRule
  \DigPair{7}{8}
       & \PVal{0.985}{0.028}  & \PValB{0.989}{0.022} & \PVal{0.868}{0.126}  & \PVal{0.630}{0.143}  & \PVal{0.730}{0.255}  & \PVal{0.091}{0.077}    \\\TabMidRule
  \DigPair{8}{9}
       & \PVal{0.993}{0.015}  & \PValB{0.998}{0.008} & \PVal{0.898}{0.191}  & \PVal{0.620}{0.189}  & \PVal{0.696}{0.224}  & \PVal{0.061}{0.040}    \\\TabMidRule
  \DigPair{9}{0}
       & \PValB{0.983}{0.067} & \PVal{0.975}{0.029}  & \PVal{0.446}{0.183}  & \PVal{0.317}{0.109}  & \PVal{0.655}{0.240}  & \PVal{0.052}{0.012}    \\
  \bottomrule
\end{tabular}
   }
\end{table}

\vfill{}
\phantom{.}

\newpage
\clearpage

\phantom{.}
\vfill{}

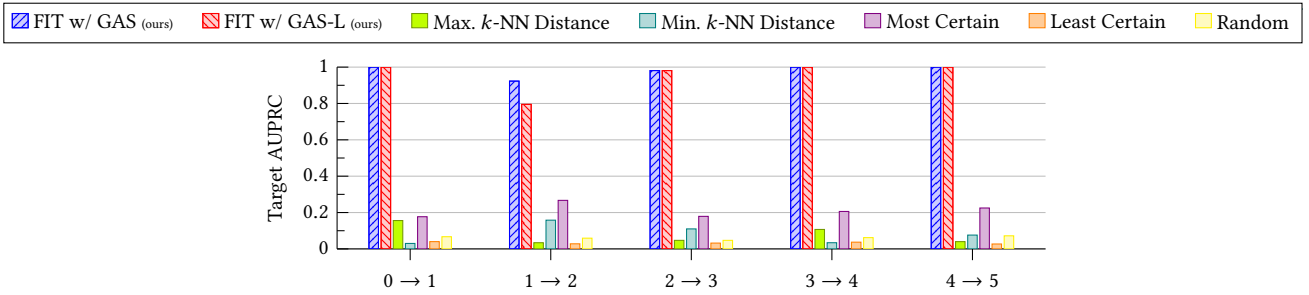
\begin{figure}[h!]
  \centering
\newcommand{\legendSpacer}{\hspace*{9pt}}

\begin{tikzpicture}
  \begin{axis}[%
    width=\textwidth,
      ybar,
      hide axis,  %
      xmin=0,  %
      xmax=1,
      ymin=0,
      ymax=1,
      scale only axis,width=1mm, %
      legend cell align={left},              %
      legend style={font=\legendFontSize},
      legend columns=7,
      cycle list name=DetectCycleList,
    ]
    \addplot[cosin color] coordinates {(0,0)};
    \addlegendentry{\fitWith{\method}\ours\legendSpacer}
    \addplot[layer color] coordinates {(0,0)};
    \addlegendentry{\fitWith{\layer}\ours\legendSpacer}

    \addplot coordinates {(0,0)};
    \addlegendentry{Max.\ \KNN{} Distance\legendSpacer}

    \addplot coordinates {(0,0)};
    \addlegendentry{Min.\ \KNN{} Distance\legendSpacer}
    \addplot coordinates {(0,0)};
    \addlegendentry{Most Certain\legendSpacer};

    \addplot coordinates {(0,0)};
    \addlegendentry{Least Certain\legendSpacer};

    \addplot coordinates {(0,0)};
    \addlegendentry{Random};
  \end{axis}
\end{tikzpicture}

\pgfplotstableread[col sep=comma]{plots/data/bd_speech_detect_full.csv}\datatable%
\begin{tikzpicture}%
  \begin{axis}[%
        ybar={\BarLineWidth},%
        height={\BarDetectMainHeight},%
        width={11cm},%
        axis lines*=left,%
        bar width={\DetectBarWidthVal},%
        xtick=data,%
        xticklabels={${0 \rightarrow 1}$,${1 \rightarrow 2}$,${2 \rightarrow 3}$,${3 \rightarrow 4}$,${4 \rightarrow 5}$},
        x tick label style={font=\plotFontSize,align=center},%
        ymin=0,%
        ymax=1,%
        ytick distance={0.20},%
        minor y tick num={1},%
        y tick label style={font=\plotFontSize},%
        ylabel={\plotFontSize \DetectYLabel},%
        ymajorgrids,  %
        typeset ticklabels with strut,  %
        every tick/.style={color=black, line width=0.4pt},%
        enlarge x limits={0.130},%
        cycle list name=DetectCycleList,
    ]%
    \addplot[cosin color] table [x index=0, y index=1] {\datatable};%
    \addplot[layer color] table [x index=0, y index=2] {\datatable};%
    \foreach \k in {3, ..., 7} {%
      \addplot table [x index=0, y index=\k] {\datatable};%
    }%
  \end{axis}%
\end{tikzpicture}%
   \caption{%
      \textit{Speech Backdoor Target Identification}:
      See Table~\ref{tab:App:MoreExps:Backdoor:Speech:TargDetect} for full numerical results including standard deviation.
  }
  \label{fig:App:MoreExps:Backdoor:Speech:TargDetect}
\end{figure}

\vfill%

\begin{table}[h!]
  \centering
  \caption{%
      \textit{Speech Backdoor Target Identification}:
      Bold denotes the best mean performance.
      Mean results are shown graphically in Figures~\ref{fig:ExpRes:TargDetect} and~\ref{fig:App:MoreExps:Backdoor:Speech:TargDetect}.%
  }\label{tab:App:MoreExps:Backdoor:Speech:TargDetect}
  {
    \TableFontSize%
\renewcommand{\arraystretch}{1.2}
\setlength{\dashlinedash}{0.4pt}
\setlength{\dashlinegap}{1.5pt}
\setlength{\arrayrulewidth}{0.3pt}

\newcommand{\DigPair}[2]{#1 & #2}

\newcommand{\TabMidRule}{\hdashline}

\newcommand{\AlgName}[1]{\multicolumn{1}{c}{#1}}

\begin{tabular}{ccrrrrrrr}
  \toprule
  \multicolumn{2}{c}{Digits} & \multicolumn{2}{c}{Ours} & \multicolumn{5}{c}{Baselines} \\\cmidrule(lr){1-2}\cmidrule(lr){3-4}\cmidrule(lr){5-9}
  \multicolumn{2}{c}{${\yTarg \rightarrow \yAdv}$}
      & \AlgName{\method{}}       & \AlgName{\layer}          & \AlgName{Max~\KNN{}} & \AlgName{Min~\KNN{}} & \AlgName{Most Certain} & \AlgName{Least Certain} & \AlgName{Random}   \\
  \midrule
  \DigPair{0}{1}
      & \PValB{\OneVal}{\ZeroVal} & \PValB{\OneVal}{\ZeroVal} & \PVal{0.156}{0.060}  & \PVal{0.030}{0.003}  & \PVal{0.177}{0.227}   & \PVal{0.040}{0.022}    & \PVal{0.067}{0.031} \\\TabMidRule
  \DigPair{1}{2}
      & \PValB{0.923}{0.075}      & \PVal{0.795}{0.172}       & \PVal{0.034}{0.005}  & \PVal{0.158}{0.110}  & \PVal{0.267}{0.221}   & \PVal{0.028}{0.004}    & \PVal{0.059}{0.023} \\\TabMidRule
  \DigPair{2}{3}
      & \PValB{0.981}{0.028}      & \PValB{0.981}{0.029}      & \PVal{0.047}{0.012}  & \PVal{0.110}{0.065}  & \PVal{0.179}{0.139}   & \PVal{0.032}{0.006}    & \PVal{0.047}{0.007} \\\TabMidRule
  \DigPair{3}{4}
      & \PValB{\OneVal}{\ZeroVal} & \PValB{\OneVal}{\ZeroVal} & \PVal{0.107}{0.033}  & \PVal{0.034}{0.005}  & \PVal{0.206}{0.127}   & \PVal{0.037}{0.012}    & \PVal{0.062}{0.033} \\\TabMidRule
  \DigPair{4}{5}
      & \PValB{\OneVal}{\ZeroVal} & \PValB{\OneVal}{\ZeroVal} & \PVal{0.040}{0.010}  & \PVal{0.076}{0.031}  & \PVal{0.225}{0.168}   & \PVal{0.027}{0.002}    & \PVal{0.072}{0.037} \\
  \bottomrule
\end{tabular}
   }
\end{table}

\vfill

\begin{table}[h!]
  \centering
\revised{%
  \caption{%
    \revised{%
      \textit{Speech Backdoor Attack Mitigation}:
      Bold denotes the best mean performance with 10~trials per class pair.
      Aggregated results are shown in Table~\ref{tab:ExpRes:Mitigation}.
    }
  }\label{tab:App:MoreExps:Backdoor:Speech:Mitigate}
  {
    \TableFontSize%
\renewcommand{\arraystretch}{1.2}
\setlength{\dashlinedash}{0.4pt}
\setlength{\dashlinegap}{1.5pt}
\setlength{\arrayrulewidth}{0.3pt}

\newcommand{\MultiHead}[1]{\multicolumn{2}{c}{#1}}

\newcommand{\TwoRowHead}[1]{\multirow{2}{*}{#1}}
\newcommand{\ClassPair}[2]{\multirow{2}{*}{#1} & \multirow{2}{*}{#2}}
\newcommand{\PZ}{\phantom{0}}
\newcommand{\ptZ}{\phantom{.}\PZ}
\newcommand{\ptZZ}{\phantom{.}\PZ\PZ}
\newcommand{\ptASR}{}

\newcommand{\CosInM}{\method{}}
\newcommand{\LayerM}{\layer{}}

\newcommand{\OneRemB}{100\ptZ}
\newcommand{\ZeroRemB}{0\ptZ}

\newcommand{\OneASR}{100\ptZ}

\newcommand{\PRem}[2]{#1}
\newcommand{\PRemB}[2]{\textBF{#1}}
\newcommand{\ASR}[2]{\multirow{2}{*}{#1}}
\newcommand{\PAcc}[2]{\multirow{2}{*}{#1}}
\newcommand{\PAsr}[2]{#1}

\newcommand{\PChg}[2]{{\color{ForestGreen} +#1}}
\newcommand{\NoChg}{0.0}
\newcommand{\NegChg}[2]{{\color{BrickRed} -#1}}

\newcommand{\DsSep}{\cdashline{1-9}}
\newcommand{\MethodSep}{}

\begin{tabular}{cclrrrrrr}
  \toprule
     \MultiHead{Digits}
     & \TwoRowHead{Method} & \MultiHead{\% Removed}      & \MultiHead{ASR \%} & \MultiHead{Test Acc. \%} \\\cmidrule(lr){1-2}\cmidrule(lr){4-5}\cmidrule(lr){6-7}\cmidrule(lr){8-9}
     $\yTarg$ & $\yAdv$
     &                    & $\advTrain$  & $\cleanTrain$ & Orig.   & Ours     &  Orig. & Chg.   \\
  \midrule
  \ClassPair{0}{1}
     & \CosInM    & \PRemB{\OneRemB}{\ZeroRemB} & \PRem{0.06}{0.05}   & \PAcc{\OneASR}{0.0}  & \PAsr{0}{0}   & \PAcc{97.7}{0.1}  & \NoChg \\
     && \LayerM   & \PRemB{\OneRemB}{\ZeroRemB} & \PRemB{0.03}{0.03}  &                      & \PAsr{0}{0}   &                   & \NoChg \\
  \DsSep
  \ClassPair{1}{2}
     & \CosInM    & \PRemB{100.0}{0.1}          & \PRemB{0.02}{0.03}  & \PAcc{\OneASR}{0.0}  & \PAsr{0}{0}   & \PAcc{97.7}{0.1}  & \NoChg \\
     && \LayerM   & \PRem{99.8}{0.4}            & \PRem{0.09}{0.07}   &                      & \PAsr{0}{0}   &                   & \NoChg \\
  \DsSep
  \ClassPair{2}{3}
     & \CosInM    & \PRemB{93.7}{3.4}           & \PRemB{0.08}{0.09}  & \PAcc{99.9}{0.2}     & \PAsr{0}{0}   & \PAcc{97.8}{0.1}  & \NegChg{-0.1}{\ZeroVal}  \\
     && \LayerM   & \PRem{92.6}{4.3}            & \PRem{0.21}{0.14}   &                      & \PAsr{0}{0}   &                   & \NegChg{-0.1}{\ZeroVal}  \\
  \DsSep
  \ClassPair{3}{4}
     & \CosInM    & \PRem{98.7}{3.0}            & \PRemB{0.10}{0.15}  & \PAcc{99.4}{1.1}     & \PAsr{0}{0}   & \PAcc{97.7}{0.1}  & \NegChg{-0.1}{\ZeroVal}  \\
     && \LayerM   & \PRemB{99.3}{0.7}           & \PRem{0.35}{0.45}   &                      & \PAsr{0}{0}   &                   & \NoChg  \\
  \DsSep
  \ClassPair{4}{5}
     & \CosInM    & \PRemB{99.1}{1.0}           & \PRemB{0.01}{0.02}  & \PAcc{\OneASR}{0.0}  & \PAsr{0}{0}   & \PAcc{97.8}{0.2}  & \NoChg  \\
     && \LayerM   & \PRem{98.6}{1.3}            & \PRemB{0.01}{0.01}  &                      & \PAsr{0}{0}   &                   & \NoChg  \\
  \bottomrule
\end{tabular}
   }
}
\end{table}

\vfill{}
\phantom{.}
 
\FloatBarrier
\clearpage
\newpage
\subsubsection{Vision Backdoor Full Results}\label{sec:App:MoreExps:Backdoor:Vision}

\phantom{.}

\vfill

\begin{figure}[h!]
  \centering
\newcommand{\legendSpacer}{\hspace*{7pt}}
\begin{tikzpicture}
  \begin{axis}[%
    width=\textwidth,
      ybar,
      hide axis,  %
      xmin=0,  %
      xmax=1,
      ymin=0,
      ymax=1,
      scale only axis,width=1mm, %
      legend cell align={left},              %
      legend style={font=\legendFontSize},
      legend columns=8,
    ]
    \addplot [cosin color] coordinates {(0,0)};
    \addlegendentry{\method\ours\legendSpacer}

    \addplot [layer color] coordinates {(0,0)};
    \addlegendentry{\layer\ours\legendSpacer}

    \addplot[TracInCP color] coordinates {(0,0)};
    \addlegendentry{\tracinCP{}\legendSpacer}
    \pgfplotsset{cycle list shift=-1}  %

    \pgfplotsinvokeforeach{\tracin{},Influence Functions}{
      \addplot coordinates {(0,0)};
      \addlegendentry{#1\legendSpacer}
    }

    \addplot coordinates {(0,0)};
    \addlegendentry{Representer Point}
  \end{axis}
\end{tikzpicture}

\centering
\pgfplotstableread[col sep=comma]{plots/data/bd_cifar_auprc_full.csv}\datatable%
\begin{tikzpicture}%
  \begin{axis}[%
        axis lines*=left,%
        ymajorgrids,  %
        bar width=\BackdoorSpeechBarWidth,%
        height=\ExpResBarChartHeight,%
        width={11cm},%
        ymin=0,%
        ymax=1,%
        ybar={\BarLineWidth},%
        ytick distance={0.20},%
        minor y tick num={1},%
        y tick label style={font=\plotFontSize},%
        ylabel={\plotFontSize \IdentYLabel},%
        xtick=data,%
        xticklabels={1~Pixel,4~Pixel,Blend,1~Pixel,4~Pixel,Blend},
        x tick label style={font=\plotFontSize,align=center},%
        typeset ticklabels with strut,
        every tick/.style={color=black, line width=0.4pt},%
        enlarge x limits={0.10},%
        draw group line={[index]7}{1}{${\text{Plane} \rightarrow \text{Bird}}$}{-5.0ex}{12pt},
        draw group line={[index]7}{2}{${\text{Auto} \rightarrow \text{Dog}}$}{-5.0ex}{12pt}
    ]%
    \addplot[cosin color] table [x index=0, y index=1] {\datatable};%
    \addplot[layer color] table [x index=0, y index=2] {\datatable};%

    \addplot[TracInCP color] table [x index=0, y index=3] {\datatable};%
    \pgfplotsset{cycle list shift=-1}  %

    \foreach \k in {4, ..., \numEle} {
      \addplot table [x index=0, y index=\k] {\datatable};%
    }
  \end{axis}%
\end{tikzpicture}%
   \caption{%
      \textit{Vision Backdoor Adversarial-Set Identification}:
      Backdoor set, \eqsmall{${\advTrain}$}, identification mean AUPRC across ${{>}30}$~trials for \citepos{Weber:2021} three CIFAR10 backdoor attack patterns with a
      randomly selected reference~\eqsmall{$\zHatTarg$}.  All experiments performed binary classification on randomly-initialized ResNet9.
      \eqsmall{${\abs{\advTrain} = 150}$}. Notation ${\yTarg \rightarrow \yAdv}$.
      See Table~\ref{tab:App:MoreExps:Backdoor:CIFAR:AdvIdent} for the full numerical results.
  }
  \label{fig:App:MoreExps:Backdoor:CIFAR:AdvIdent}
\end{figure}

\vfill{}

\begin{table}[h!]
  \centering
  \caption{%
      \textit{Vision Backdoor Adversarial-Set Identification}:
      Backdoor set, \eqsmall{${\advTrain}$}, identification AUPRC mean and standard deviation across ${{>}30}$~trials for \citepos{Weber:2021} three
      CIFAR10 backdoor attack patterns with a randomly selected reference~\eqsmall{$\zHatTarg$}.  All experiments performed binary classification on
      randomly-initialized ResNet9. \eqsmall{${\abs{\advTrain} = 150}$}. Notation ${\yTarg \rightarrow \yAdv}$.
      Bold denotes the best mean performance.
      Mean results are shown graphically in Figures~\ref{fig:ExpRes:AdvIdent} and~\ref{fig:App:MoreExps:Backdoor:CIFAR:AdvIdent}.
}
  \label{tab:App:MoreExps:Backdoor:CIFAR:AdvIdent}
  {
    \TableFontSize%
\renewcommand{\arraystretch}{1.2}
\setlength{\dashlinedash}{0.4pt}
\setlength{\dashlinegap}{1.5pt}
\setlength{\arrayrulewidth}{0.3pt}

\newcommand{\head}[1]{\multirow{2}{*}{#1}}
\newcommand{\DigPair}[2]{\multirow{3}{*}{#1 $\rightarrow$ #2}}
\newcommand{\Attack}[1]{#1}

\newcommand{\TabMidRule}{\cdashline{2-8}}

\newcommand{\AlgName}[1]{\multicolumn{1}{c}{#1}}

\begin{tabular}{ccrrrrrr}
  \toprule
  Classes & \head{\shortstack{Trigger \\ Pattern}}
          & \multicolumn{2}{c}{Ours} & \multicolumn{4}{c}{Baselines} \\\cmidrule(lr){1-1}\cmidrule(lr){3-4}\cmidrule(lr){5-8}
  $\yTarg \rightarrow \yAdv$
        &  & \AlgName{\method{}}  & \AlgName{\layer{}}   & \AlgName{\tracinCP{}} & \AlgName{\tracin{}}  & \AlgName{Influence Func.}  & {Representer Pt.}  \\  %
  \midrule
  \DigPair{Auto}{Dog}
       & 1~Pixel
           & \PVal{0.977}{0.077}  & \PValB{0.987}{0.039} & \PVal{0.742}{0.159}   & \PVal{0.435}{0.143}  & \PVal{0.051}{0.022}    & \PVal{0.033}{0.013}    \\\TabMidRule
       & 4~Pixel
           & \PVal{0.992}{0.024}  & \PValB{0.996}{0.011} & \PVal{0.552}{0.189}   & \PVal{0.255}{0.090}  & \PVal{0.088}{0.052}    & \PVal{0.022}{0.003}    \\\TabMidRule
       & Blend
           & \PVal{0.999}{0.001}  & \PValB{1.000}{0.003} & \PVal{0.809}{0.148}   & \PVal{0.426}{0.127}  & \PVal{0.062}{0.083}    & \PVal{0.030}{0.009}    \\
  \midrule
  \DigPair{Plane}{Bird}
       & 1~Pixel
           & \PVal{0.738}{0.162}  & \PValB{0.805}{0.153} & \PVal{0.389}{0.117}   & \PVal{0.237}{0.083}  & \PVal{0.132}{0.077}    & \PVal{0.026}{0.006}    \\\TabMidRule
       & 4~Pixel
           & \PVal{0.951}{0.050}  & \PValB{0.975}{0.014} & \PVal{0.264}{0.075}   & \PVal{0.130}{0.038}  & \PVal{0.170}{0.076}    & \PVal{0.021}{0.003}    \\\TabMidRule
       & Blend
           & \PVal{0.832}{0.194}  & \PValB{0.916}{0.135} & \PVal{0.359}{0.161}   & \PVal{0.207}{0.089}  & \PVal{0.042}{0.020}    & \PVal{0.028}{0.008}    \\
  \bottomrule
\end{tabular}
   }
\end{table}

\vfill{}

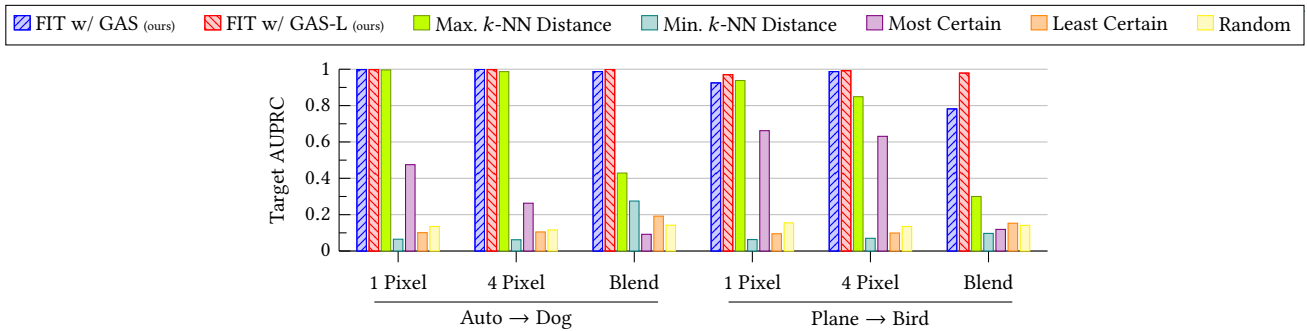
\begin{figure}[h!]
  \centering
\newcommand{\legendSpacer}{\hspace*{9pt}}

\begin{tikzpicture}
  \begin{axis}[%
    width=\textwidth,
      ybar,
      hide axis,  %
      xmin=0,  %
      xmax=1,
      ymin=0,
      ymax=1,
      scale only axis,width=1mm, %
      legend cell align={left},              %
      legend style={font=\legendFontSize},
      legend columns=7,
      cycle list name=DetectCycleList,
    ]
    \addplot[cosin color] coordinates {(0,0)};
    \addlegendentry{\fitWith{\method}\ours\legendSpacer}
    \addplot[layer color] coordinates {(0,0)};
    \addlegendentry{\fitWith{\layer}\ours\legendSpacer}

    \addplot coordinates {(0,0)};
    \addlegendentry{Max.\ \KNN{} Distance\legendSpacer}

    \addplot coordinates {(0,0)};
    \addlegendentry{Min.\ \KNN{} Distance\legendSpacer}
    \addplot coordinates {(0,0)};
    \addlegendentry{Most Certain\legendSpacer};

    \addplot coordinates {(0,0)};
    \addlegendentry{Least Certain\legendSpacer};

    \addplot coordinates {(0,0)};
    \addlegendentry{Random};
  \end{axis}
\end{tikzpicture}

\centering
\pgfplotstableread[col sep=comma]{plots/data/bd_cifar_detect_full.csv}\datatable%
\begin{tikzpicture}%
  \begin{axis}[%
        bar width={\DetectBarWidthVal},
        axis lines*=left,%
        ymajorgrids,  %
        height={\BarDetectMainHeight},%
        width={11cm},%
        ymin=0,%
        ymax=1,%
        ybar={\BarLineWidth},%
        ytick distance={0.20},%
        minor y tick num={1},%
        y tick label style={font=\plotFontSize},%
        ylabel={\plotFontSize \DetectYLabel},%
        xtick=data,%
        xticklabels={1~Pixel,4~Pixel,Blend,1~Pixel,4~Pixel,Blend},
        x tick label style={font=\plotFontSize,align=center},%
        typeset ticklabels with strut,
        every tick/.style={color=black, line width=0.4pt},%
        enlarge x limits={0.100},%
        draw group line={[index]8}{1}{${\text{Plane} \rightarrow \text{Bird}}$}{-5.0ex}{9pt},
        draw group line={[index]8}{2}{${\text{Auto} \rightarrow \text{Dog}}$}{-5.0ex}{9pt},
        cycle list name=DetectCycleList,
    ]%
    \addplot[cosin color] table [x index=0, y index=1] {\datatable};%
    \addplot[layer color] table [x index=0, y index=2] {\datatable};%
    \foreach \k in {3, ..., 7} {%
      \addplot table [x index=0, y index=\k] {\datatable};%
    }%
  \end{axis}%
\end{tikzpicture}%
   \caption{%
      \textit{Vision Backdoor Target Identification}:
      Mean target identification AUPRC across ${15}$~trials for \citepos{Weber:2021} three CIFAR10 backdoor attack patterns and randomly
      selected reference~$\zHatTarg$.  All experiments performed binary classification on randomly-initialized ResNet9.
      \eqsmall{${\abs{\advTrain} = 150}$}. Notation ${\yTarg \rightarrow \yAdv}$.
      See Table~\ref{tab:App:MoreExps:Backdoor:CIFAR:TargDetect} for the full numerical results.
  }
  \label{fig:App:MoreExps:Backdoor:CIFAR:TargDetect}
\end{figure}

\FloatBarrier
\clearpage
\newpage

\phantom{.}

\vfill

\begin{table}[h!]
  \centering
  \caption{%
      \textit{Vision Backdoor Target Identification}:
      Target identification AUPRC mean and standard deviation across ${15}$~trials for \citepos{Weber:2021} three CIFAR10 backdoor attack patterns and randomly
      selected reference~$\zHatTarg$.  All experiments performed binary classification on randomly-initialized ResNet9.
      Bold denotes the best mean performance.
      Mean results are shown graphically in Figures~\ref{fig:ExpRes:TargDetect} and~\ref{fig:App:MoreExps:Backdoor:CIFAR:TargDetect}.
  }
  \label{tab:App:MoreExps:Backdoor:CIFAR:TargDetect}
  {
    \TableFontSize%
\renewcommand{\arraystretch}{1.2}
\setlength{\dashlinedash}{0.4pt}
\setlength{\dashlinegap}{1.5pt}
\setlength{\arrayrulewidth}{0.3pt}

\newcommand{\head}[1]{\multirow{2}{*}{#1}}
\newcommand{\DigPair}[2]{\multirow{3}{*}{#1 $\rightarrow$ #2}}
\newcommand{\Attack}[1]{#1}

\newcommand{\TabMidRule}{\cdashline{2-9}}

\newcommand{\AlgName}[1]{\multicolumn{1}{c}{#1}}

\begin{tabular}{ccrrrrrrr}
  \toprule
  Classes & \head{\shortstack{Trigger \\ Pattern}}
          & \multicolumn{2}{c}{Ours} & \multicolumn{4}{c}{Baselines} \\\cmidrule(lr){1-1}\cmidrule(lr){3-4}\cmidrule(lr){5-9}
  $\yTarg \rightarrow \yAdv$
        && \AlgName{\method{}}    & \AlgName{\layer}  & \AlgName{Max~\KNN{}}  & \AlgName{Min~\KNN{}}  & \AlgName{Most Certain}  & \AlgName{Least Certain}  & \AlgName{Random}  \\
  \midrule
  \DigPair{Auto}{Dog}
       & 1~Pixel
          & \PValB{0.998}{0.004}  & \PValB{0.998}{0.005}      & \PVal{0.996}{0.004}  & \PVal{0.065}{0.012}  & \PVal{0.475}{0.210}  & \PVal{0.101}{0.020}  & \PVal{0.135}{0.027}       \\\TabMidRule
       & 4~Pixel
          & \PValB{0.999}{0.002}  & \PVal{0.998}{0.004}       & \PVal{0.987}{0.012}  & \PVal{0.062}{0.007}  & \PVal{0.263}{0.067}  & \PVal{0.105}{0.012}  & \PVal{0.116}{0.013}       \\\TabMidRule
       & Blend
          & \PVal{0.987}{0.049}   & \PValB{\OneVal}{\ZeroVal} & \PVal{0.429}{0.335}  & \PVal{0.275}{0.324}  & \PVal{0.092}{0.024}  & \PVal{0.192}{0.060}  & \PVal{0.142}{0.031}       \\
  \midrule
  \DigPair{Plane}{Bird}
       & 1~Pixel
          & \PVal{0.925}{0.034}   & \PValB{0.970}{0.014}      & \PVal{0.938}{0.039}  & \PVal{0.063}{0.012}  & \PVal{0.662}{0.099}  & \PVal{0.095}{0.021}  & \PVal{0.155}{0.063}       \\\TabMidRule
       & 4~Pixel
          & \PVal{0.987}{0.018}   & \PValB{0.992}{0.012}      & \PVal{0.849}{0.095}  & \PVal{0.070}{0.005}  & \PVal{0.631}{0.143}  & \PVal{0.099}{0.011}  & \PVal{0.135}{0.034}       \\\TabMidRule
       & Blend
          & \PVal{0.782}{0.213}   & \PValB{0.979}{0.021}      & \PVal{0.300}{0.157}  & \PVal{0.097}{0.064}  & \PVal{0.119}{0.039}  & \PVal{0.153}{0.051}  & \PVal{0.141}{0.025}       \\
  \bottomrule
\end{tabular}
   }
\end{table}

\vfill

\begin{table}[h!]
  \centering
\revised{%
  \caption{%
    \revised{%
      \textit{Vision Backdoor Attack Mitigation}:
      Bold denotes the best mean performance with 15~trials per setup.
      Aggregated results are shown in Table~\ref{tab:ExpRes:Mitigation}.
    }
  }\label{tab:App:MoreExps:Backdoor:CIFAR:Mitigate}
  {
    \TableFontSize%
\renewcommand{\arraystretch}{1.2}
\setlength{\dashlinedash}{0.4pt}
\setlength{\dashlinegap}{1.5pt}
\setlength{\arrayrulewidth}{0.3pt}

\newcommand{\MultiHead}[1]{\multicolumn{2}{c}{#1}}

\newcommand{\TwoRowHead}[1]{\multirow{2}{*}{#1}}
\newcommand{\ClassPair}[2]{\multirow{6}{*}{$\text{#1} \rightarrow \text{#2}$}}
\newcommand{\Atk}[1]{\multirow{2}{*}{#1}}
\newcommand{\PZ}{\phantom{0}}
\newcommand{\ptZ}{\phantom{.}\PZ}
\newcommand{\ptZZ}{\phantom{.}\PZ\PZ}

\newcommand{\CosInM}{\method{}}
\newcommand{\LayerM}{\layer{}}

\newcommand{\OneRemB}{100\ptZ}
\newcommand{\ZeroRemB}{0\ptZ}

\newcommand{\PRem}[2]{#1}
\newcommand{\PRemB}[2]{\textBF{#1}}
\newcommand{\ASR}[2]{\multirow{2}{*}{#1}}
\newcommand{\PAcc}[2]{\multirow{2}{*}{#1}}
\newcommand{\PAsr}[2]{#1}

\newcommand{\PChg}[2]{{\color{ForestGreen} +#1}}
\newcommand{\NoChg}{0.0}
\newcommand{\NegChg}[2]{{\color{BrickRed} -#1}}

\newcommand{\DsSep}{\cdashline{2-9}}
\newcommand{\MethodSep}{}

\begin{tabular}{cllrrrrrr}
  \toprule
     Classes
     & \TwoRowHead{Attack} & \TwoRowHead{Method} & \MultiHead{\% Removed}      & \MultiHead{ASR \%} & \MultiHead{Test Acc. \%} \\\cmidrule(lr){1-1}\cmidrule(lr){4-5}\cmidrule(lr){6-7}\cmidrule(lr){8-9}
     $\yTarg \rightarrow \yAdv$
     &                     &                     & $\advTrain$  & $\cleanTrain$ & Orig.   & Ours     &  Orig. & Chg.   \\
  \midrule
  \ClassPair{Auto}{Dog}
     & \Atk{1~Pixel}
     & \CosInM    & \PRem{92.6}{4.2}            & \PRem{0.28}{0.09}   & \PAcc{87.7}{0.8}   & \PAsr{0}{0}   & \PAcc{98.8}{0.1}  & \NoChg \\
     && \LayerM   & \PRemB{94.4}{5.5}           & \PRemB{0.08}{0.05}  &                    & \PAsr{0}{0}   &                   & \NoChg \\
  \DsSep
     & \Atk{4~Pixel}
     & \CosInM    & \PRemB{92.8}{10.3}          & \PRem{0.26}{0.13}   & \PAcc{95.0}{0.6}   & \PAsr{0}{0}   & \PAcc{98.9}{0.1}  & \NoChg \\
     && \LayerM   & \PRem{92.6}{10.1}           & \PRemB{0.05}{0.03}  &                    & \PAsr{0}{0}   &                   & \NoChg \\
  \DsSep
     & \Atk{Blend}
     & \CosInM    & \PRem{99.9}{0.3}            & \PRem{0.67}{0.27}   & \PAcc{98.6}{0.1}   & \PAsr{0}{0}   & \PAcc{99.0}{0.1}  & \NegChg{-0.1}{0.1}  \\
     && \LayerM   & \PRemB{\OneRemB}{\ZeroRemB} & \PRemB{0.41}{0.65}  &                    & \PAsr{0}{0}   &                   & \NegChg{-0.1}{\ZeroVal}  \\
  \midrule
  \ClassPair{Plane}{Bird}
     & \Atk{1~Pixel}
     & \CosInM    & \PRem{65.8}{12.3}           & \PRemB{0.66}{0.38}  & \PAcc{80.8}{1.3}   & \PAsr{0}{0}   & \PAcc{93.5}{0.2}  & \NegChg{-0.1}{0.0} \\
     && \LayerM   & \PRemB{75.5}{9.7}           & \PRem{0.84}{0.29}   &                    & \PAsr{0}{0}   &                   & \NoChg \\
  \DsSep
     & \Atk{4~Pixel}
     & \CosInM    & \PRem{92.7}{7.4}            & \PRemB{0.39}{0.42}  & \PAcc{89.0}{0.6}   & \PAsr{0}{0}   & \PAcc{93.5}{0.2}  & \NoChg \\
     && \LayerM   & \PRemB{95.2}{7.4}           & \PRem{0.80}{0.03}   &                    & \PAsr{0}{0}   &                   & \NoChg \\
  \DsSep
     & \Atk{Blend}
     & \CosInM    & \PRem{81.9}{12.2}           & \PRemB{1.39}{2.17}  & \PAcc{92.0}{0.9}   & \PAsr{0}{0}   & \PAcc{93.7}{0.2}  & \NegChg{-0.4}{0.3}  \\
     && \LayerM   & \PRemB{95.9}{4.0}           & \PRem{1.55}{1.26}   &                    & \PAsr{0}{0}   &                   & \NegChg{-0.5}{0.3}  \\
  \bottomrule
\end{tabular}
   }
}
\end{table}

\vfill%

\phantom{.}
 
\FloatBarrier
\clearpage
\newpage
\subsubsection{Natural Language Poisoning Full Results}%
\label{sec:App:MoreExps:Pois:Nlp}
\phantom{.}

\vfill%

\begin{figure}[h!]
  \centering
\newcommand{\legendSpacer}{\hspace*{9pt}}

\begin{tikzpicture}
  \begin{axis}[%
    width=\textwidth,
      ybar,
      hide axis,  %
      xmin=0,  %
      xmax=1,
      ymin=0,
      ymax=1,
      scale only axis,width=1mm, %
      legend cell align={left},              %
      legend style={font=\legendFontSize},
      legend columns=8,
    ]
    \addplot [cosin color] coordinates {(0,0)};
    \addlegendentry{\method\ours\legendSpacer}

    \addplot [layer color] coordinates {(0,0)};
    \addlegendentry{\layer\ours\legendSpacer}

    \addplot[TracInCP color] coordinates {(0,0)};
    \addlegendentry{\tracinCP{}\legendSpacer}
    \pgfplotsset{cycle list shift=-1}  %

    \pgfplotsinvokeforeach{\tracin{},Influence Functions}{
      \addplot coordinates {(0,0)};
      \addlegendentry{#1\legendSpacer}
    }

    \addplot coordinates {(0,0)};
    \addlegendentry{Representer Point}
  \end{axis}
\end{tikzpicture}

\pgfplotstableread[col sep=comma]{plots/data/pois_nlp_auprc_full.csv}\datatable%
\begin{tikzpicture}%
  \begin{axis}[%
        ybar={\BarLineWidth},%
        width={13cm},%
        height={\NlpIdentificationHeight},%
        axis lines*=left,%
        bar width=\NlpPoisBarWidth,%
        xtick=data,
        xticklabels={${1}$,${2}$,${3}$,${4}$,${1}$,${2}$,${3}$,${4}$},
        x tick label style={font=\plotFontSize,align=center},%
        ymin=0,%
        ymax=1,%
        ytick distance={0.20},%
        minor y tick num={1},%
        y tick label style={font=\plotFontSize},%
        ylabel={\plotFontSize \IdentYLabel},%
        ymajorgrids,  %
        typeset ticklabels with strut,  %
        every tick/.style={color=black, line width=0.4pt},%
        enlarge x limits=0.07,%
        draw group line={[index]1}{1}{Positive}{-5.0ex}{13pt},
        draw group line={[index]1}{2}{Negative}{-5.0ex}{13pt}
    ]%
    \addplot [cosin color] table [x index=0, y index=4] {\datatable};%
    \addplot [layer color] table [x index=0, y index=5] {\datatable};%
    \addplot[TracInCP color] table [x index=0, y index=6] {\datatable};%
    \pgfplotsset{cycle list shift=-1}  %
    \addplot table [x index=0, y index=7] {\datatable};%
    \addplot table [x index=0, y index=8] {\datatable};%
    \addplot table [x index=0, y index=9] {\datatable};%
  \end{axis}%
\end{tikzpicture}%
   \caption{%
      \textit{Natural Language Poisoning Adversarial-Set Identification}:
      See Table~\ref{tab:App:MoreExps:Pois:Nlp:AdvIdent} for the full numerical results.
  }
  \label{fig:App:MoreExps:Pois:NLP:AdvIdent}
\end{figure}

\vfill%

\begin{table}[h!]
  \centering
  \caption{%
      \textit{Natural Language Poisoning Adversarial-Set Identification}:
      Poison identification AUPRC mean and standard deviation across 10~trials for 4~positive and 4~negative
      sentiment SST\=/2 movie reviews \citep{SST2} with \eqsmall{${\abs{\advTrain} = 50}$}.
      \layer{} perfectly identified all poison in all but one trial.
      Bold denotes the best mean performance.
      Mean results are shown graphically in Figures~\ref{fig:ExpRes:AdvIdent} and~\ref{fig:App:MoreExps:Pois:NLP:AdvIdent}.
  }\label{tab:App:MoreExps:Pois:Nlp:AdvIdent}
  {%
    \TableFontSize%
\renewcommand{\arraystretch}{1.2}
\setlength{\dashlinedash}{0.4pt}
\setlength{\dashlinegap}{1.5pt}
\setlength{\arrayrulewidth}{0.3pt}

\newcommand{\RevType}[1]{\multirow{4}{*}{\shortstack{{\Large $\uparrow$} \\ #1 \\ {\Large $\downarrow$}}}}
\newcommand{\RevNum}[1]{#1}

\newcommand{\TabMidRule}{\cdashline{2-8}}

\newcommand{\AlgName}[1]{\multicolumn{1}{c}{#1}}

\begin{tabular}{ccrrrrrr}
  \toprule
  \multicolumn{2}{c}{Review} & \multicolumn{2}{c}{Ours} & \multicolumn{4}{c}{Baselines} \\\cmidrule(r){1-2}\cmidrule(lr){3-4}\cmidrule(l){5-8}
  Sentiment & No.  & \AlgName{\method{}}  & \AlgName{\layer}  & \AlgName{\tracinCP{}} & \AlgName{\tracin{}}  & \AlgName{Influence Func.}  &  {Representer Pt.}  \\
  \midrule
  \RevType{Positive} & \RevNum{1}
          & \PValB{\OneValB}{\ZeroValB}  & \PValB{\OneValB}{\ZeroValB}  & \PVal{0.245}{0.156}  & \PVal{0.113}{0.078}  & \PVal{0.005}{0.005}  & \PVal{0.002}{0.000} \\\TabMidRule
                 & \RevNum{2}
          & \PValB{\OneValB}{\ZeroValB}  & \PValB{\OneValB}{\ZeroValB}  & \PVal{0.382}{0.297}  & \PVal{0.117}{0.084}  & \PVal{0.007}{0.003}  & \PVal{0.001}{0.000} \\\TabMidRule
                 & \RevNum{3}
          & \PValB{\OneValB}{\ZeroValB}  & \PValB{\OneValB}{\ZeroValB}  & \PVal{0.072}{0.048}  & \PVal{0.043}{0.020}  & \PVal{0.003}{0.001}  & \PVal{0.001}{0.000} \\\TabMidRule
                 & \RevNum{4}
          & \PValB{\OneValB}{\ZeroValB}  & \PValB{\OneValB}{\ZeroValB}  & \PVal{0.021}{0.006}  & \PVal{0.010}{0.002}  & \PVal{0.003}{0.002}  & \PVal{0.001}{0.000} \\\midrule
  \RevType{Negative} & \RevNum{1}
          & \PVal{0.985}{0.046}          & \PValB{0.996}{0.012}         & \PVal{0.009}{0.003}  & \PVal{0.006}{0.001}  & \PVal{0.002}{0.001}  & \PVal{0.001}{0.000} \\\TabMidRule
                 & \RevNum{2}
          & \PValB{\OneValB}{\ZeroValB}  & \PValB{\OneValB}{\ZeroValB}  & \PVal{0.628}{0.165}  & \PVal{0.245}{0.069}  & \PVal{0.006}{0.004}  & \PVal{0.001}{0.000} \\\TabMidRule
                 & \RevNum{3}
          & \PVal{0.998}{0.005}          & \PValB{\OneValB}{\ZeroValB}  & \PVal{0.224}{0.112}  & \PVal{0.109}{0.051}  & \PVal{0.004}{0.003}  & \PVal{0.001}{0.002}    \\\TabMidRule
                 & \RevNum{4}
          & \PValB{\OneValB}{\ZeroValB}  & \PValB{\OneValB}{\ZeroValB}  & \PVal{0.017}{0.003}  & \PVal{0.008}{0.001}  & \PVal{0.005}{0.002}  & \PVal{0.001}{0.000} \\
  \bottomrule
\end{tabular}
   }
\end{table}

\vfill%

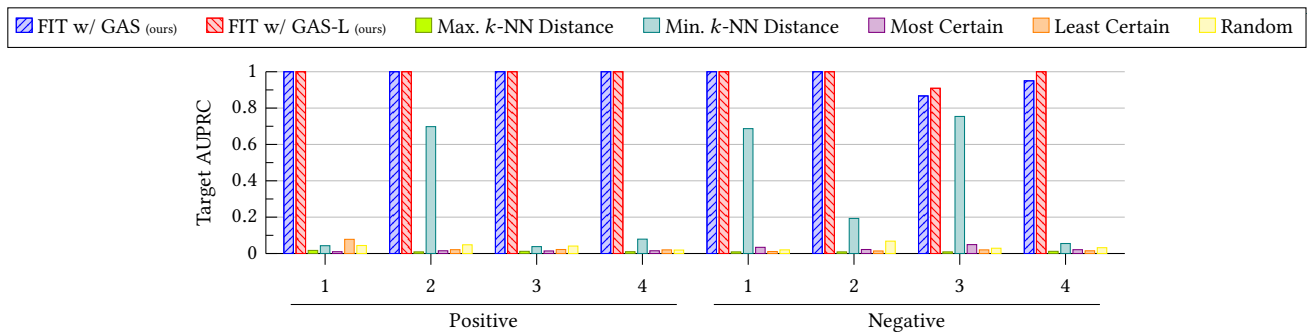
\begin{figure}[h!]
  \centering
\newcommand{\legendSpacer}{\hspace*{9pt}}

\begin{tikzpicture}
  \begin{axis}[%
    width=\textwidth,
      ybar,
      hide axis,  %
      xmin=0,  %
      xmax=1,
      ymin=0,
      ymax=1,
      scale only axis,width=1mm, %
      legend cell align={left},              %
      legend style={font=\legendFontSize},
      legend columns=7,
      cycle list name=DetectCycleList,
    ]
    \addplot[cosin color] coordinates {(0,0)};
    \addlegendentry{\fitWith{\method}\ours\legendSpacer}
    \addplot[layer color] coordinates {(0,0)};
    \addlegendentry{\fitWith{\layer}\ours\legendSpacer}

    \addplot coordinates {(0,0)};
    \addlegendentry{Max.\ \KNN{} Distance\legendSpacer}

    \addplot coordinates {(0,0)};
    \addlegendentry{Min.\ \KNN{} Distance\legendSpacer}
    \addplot coordinates {(0,0)};
    \addlegendentry{Most Certain\legendSpacer};

    \addplot coordinates {(0,0)};
    \addlegendentry{Least Certain\legendSpacer};

    \addplot coordinates {(0,0)};
    \addlegendentry{Random};
  \end{axis}
\end{tikzpicture}

\pgfplotstableread[col sep=comma]{plots/data/pois_nlp_detect_full.csv}\datatable%
\begin{tikzpicture}%
  \begin{axis}[%
        ybar={\BarLineWidth},%
        height={\BarDetectMainHeight},%
        width={13cm},%
        axis lines*=left,%
        bar width={\DetectBarWidthVal},%
        xtick=data,%
        xticklabels={${1}$,${2}$,${3}$,${4}$,${1}$,${2}$,${3}$,${4}$},
        x tick label style={font=\plotFontSize,align=center},%
        ymin=0,%
        ymax=1,%
        ytick distance={0.20},%
        minor y tick num={1},%
        y tick label style={font=\plotFontSize},%
        ylabel={\plotFontSize \DetectYLabel},%
        ymajorgrids,  %
        typeset ticklabels with strut,  %
        every tick/.style={color=black, line width=0.4pt},%
        enlarge x limits={0.080},%
        draw group line={[index]8}{1}{Positive}{-5.0ex}{13pt},
        draw group line={[index]8}{2}{Negative}{-5.0ex}{13pt},
        cycle list name=DetectCycleList,
    ]%
    \addplot[cosin color] table [x index=0, y index=1] {\datatable};%
    \addplot[layer color] table [x index=0, y index=2] {\datatable};%
    \foreach \k in {3, ..., 7} {%
      \addplot table [x index=0, y index=\k] {\datatable};%
    }%
  \end{axis}%
\end{tikzpicture}%
   \caption{%
      \textit{Natural Language Poisoning Target Identification}:
      See Table~\ref{tab:App:MoreExps:Pois:NLP:TargDetect} for the full numerical results.
  }
  \label{fig:App:MoreExps:Pois:NLP:TargDetect}
\end{figure}

\FloatBarrier
\clearpage
\newpage

\phantom{.}
\vfill{}

\begin{table}[h!]
  \centering
  \caption{%
      \textit{Natural Language Poisoning Target Identification}:
      Bold denotes the best mean performance with ${10}$~trials per review.
      Mean results are shown graphically in Figures~\ref{fig:ExpRes:TargDetect} and~\ref{fig:App:MoreExps:Pois:NLP:TargDetect}.
  }
  \label{tab:App:MoreExps:Pois:NLP:TargDetect}
  {%
    \TableFontSize%
\renewcommand{\arraystretch}{1.2}
\setlength{\dashlinedash}{0.4pt}
\setlength{\dashlinegap}{1.5pt}
\setlength{\arrayrulewidth}{0.3pt}

\newcommand{\RevType}[1]{\multirow{4}{*}{\shortstack{{\Large $\uparrow$} \\ #1 \\ {\Large $\downarrow$}}}}
\newcommand{\RevNum}[1]{#1}

\newcommand{\TabMidRule}{\cdashline{2-9}}

\newcommand{\AlgName}[1]{\multicolumn{1}{c}{#1}}

\begin{tabular}{ccrrrrrrr}
  \toprule
  \multicolumn{2}{c}{Review} & \multicolumn{2}{c}{Ours} & \multicolumn{5}{c}{Baselines} \\\cmidrule(r){1-2}\cmidrule(lr){3-4}\cmidrule(l){5-9}
  Sentiment & No.  & \AlgName{\method{}}  & \AlgName{\layer}  & \AlgName{Max~\KNN{}}  & \AlgName{Min~\KNN{}}  & \AlgName{Most Certain}  & \AlgName{Least Certain} & \AlgName{Random}   \\
  \midrule
  \RevType{Positive} & \RevNum{1}
      & \PValB{\OneVal}{\ZeroVal} & \PValB{\OneVal}{\ZeroVal} & \PVal{0.017}{0.011}  & \PVal{0.043}{0.044}  & \PVal{0.010}{0.001}  & \PVal{0.078}{0.030}   & \PVal{0.044}{0.062}  \\\TabMidRule
                  & \RevNum{2}
      & \PValB{\OneVal}{\ZeroVal} & \PValB{\OneVal}{\ZeroVal} & \PVal{0.009}{0.000}  & \PVal{0.698}{0.404}  & \PVal{0.015}{0.002}  & \PVal{0.021}{0.003}   & \PVal{0.048}{0.060}  \\\TabMidRule
                  & \RevNum{3}
      & \PValB{\OneVal}{\ZeroVal} & \PValB{\OneVal}{\ZeroVal} & \PVal{0.012}{0.002}  & \PVal{0.038}{0.017}  & \PVal{0.014}{0.001}  & \PVal{0.022}{0.004}   & \PVal{0.041}{0.075}  \\\TabMidRule
                  & \RevNum{4}
      & \PValB{\OneVal}{\ZeroVal} & \PValB{\OneVal}{\ZeroVal} & \PVal{0.010}{0.001}  & \PVal{0.079}{0.046}  & \PVal{0.015}{0.002}  & \PVal{0.020}{0.003}   & \PVal{0.019}{0.011}  \\\midrule
  \RevType{Negative} & \RevNum{1}
      & \PValB{\OneVal}{\ZeroVal} & \PValB{\OneVal}{\ZeroVal} & \PVal{0.009}{0.000}  & \PVal{0.687}{0.350}  & \PVal{0.034}{0.008}  & \PVal{0.011}{0.001}   & \PVal{0.020}{0.014}  \\\TabMidRule
                  & \RevNum{2}
      & \PValB{\OneVal}{\ZeroVal} & \PValB{\OneVal}{\ZeroVal} & \PVal{0.009}{0.001}  & \PVal{0.193}{0.286}  & \PVal{0.022}{0.004}  & \PVal{0.014}{0.002}   & \PVal{0.068}{0.069}  \\\TabMidRule
                 & \RevNum{3}
      & \PVal{0.867}{0.292}       & \PValB{0.909}{0.287}      & \PVal{0.009}{0.000}  & \PVal{0.754}{0.401}  & \PVal{0.049}{0.039}  & \PVal{0.020}{0.028}   & \PVal{0.029}{0.022}  \\\TabMidRule
                 & \RevNum{4}
      & \PVal{0.950}{0.158}       & \PValB{\OneVal}{\ZeroVal} & \PVal{0.012}{0.003}  & \PVal{0.055}{0.037}  & \PVal{0.021}{0.005}  & \PVal{0.015}{0.002}   & \PVal{0.032}{0.020}  \\
  \bottomrule
\end{tabular}
   }
\end{table}

\vfill%

\begin{table}[h!]
  \centering
\revised{%
  \caption{%
    \revised{%
      \textit{Natural Language Poisoning Attack Mitigation}:
      Bold denotes the best mean performance with ${10}$~trials per review.
      Aggregated results are shown in Table~\ref{tab:ExpRes:Mitigation}.
    }
  }\label{tab:App:MoreExps:Pois:NLP:Mitigate}
  {
    \TableFontSize%
\renewcommand{\arraystretch}{1.2}
\setlength{\dashlinedash}{0.4pt}
\setlength{\dashlinegap}{1.5pt}
\setlength{\arrayrulewidth}{0.3pt}

\newcommand{\MultiHead}[1]{\multicolumn{2}{c}{#1}}

\newcommand{\TwoRowHead}[1]{\multirow{2}{*}{#1}}
\newcommand{\RevType}[1]{\multirow{8}{*}{\shortstack{{\LARGE $\uparrow$} \\~\\~\\ #1 \\~\\~\\ {\LARGE $\downarrow$}}}}
\newcommand{\RevID}[1]{\multirow{2}{*}{#1}}
\newcommand{\PZ}{\phantom{0}}
\newcommand{\ptZ}{\phantom{.}\PZ}
\newcommand{\ptZZ}{\phantom{.}\PZ\PZ}

\newcommand{\CosInM}{\method{}}
\newcommand{\LayerM}{\layer{}}

\newcommand{\OneRemB}{100\ptZ}
\newcommand{\ZeroRemB}{0\ptZ}
\newcommand{\ZeroRemBB}{0\ptZZ}

\newcommand{\PRem}[2]{#1}
\newcommand{\PRemB}[2]{\textBF{#1}}
\newcommand{\ASR}[2]{\multirow{2}{*}{#1}}
\newcommand{\PAcc}[2]{\multirow{2}{*}{#1}}
\newcommand{\PAsr}[2]{#1}

\newcommand{\PChg}[2]{{\color{ForestGreen} +#1}}
\newcommand{\NoChg}{0.0}
\newcommand{\NegChg}[2]{{\color{BrickRed} -#1}}

\newcommand{\DsSep}{\cdashline{2-9}}
\newcommand{\MethodSep}{}

\begin{tabular}{cllrrrrrr}
  \toprule
    \MultiHead{Review}
     & \TwoRowHead{Method} & \MultiHead{\% Removed}      & \MultiHead{ASR \%} & \MultiHead{Test Acc. \%} \\\cmidrule(lr){1-2}\cmidrule(lr){4-5}\cmidrule(lr){6-7}\cmidrule(lr){8-9}
     Sentiment
     & No.                 &                     & $\advTrain$  & $\cleanTrain$ & Orig.   & Ours     &  Orig. & Chg.   \\
  \midrule
  \RevType{Positive}
     & \RevID{1}
     & \CosInM    & \PRemB{\OneRemB}{\ZeroRemB} & \PRemB{0.01}{0.01}             & \PAcc{100}{}       & \PAsr{0}{0}   & \PAcc{94.1}{0.3}  & \NoChg \\
     && \LayerM   & \PRemB{\OneRemB}{\ZeroRemB} & \PRemB{0.01}{0.01}             &                    & \PAsr{0}{0}   &                   & \PChg{0.3}{0.1} \\
  \DsSep
     & \RevID{2}
     & \CosInM    & \PRemB{\OneRemB}{\ZeroRemB} & \PRemB{0.01}{0.02}             & \PAcc{100}{}       & \PAsr{0}{0}   & \PAcc{94.2}{0.1}  & \PChg{0.1}{0.0} \\
     && \LayerM   & \PRemB{\OneRemB}{\ZeroRemB} & \PRem{0.02}{0.03}              &                    & \PAsr{0}{0}   &                   & \PChg{0.2}{0.0} \\
  \DsSep
     & \RevID{3}
     & \CosInM    & \PRemB{\OneRemB}{\ZeroRemB} & \PRem{0.01}{0.01}              & \PAcc{100}{}       & \PAsr{0}{0}   & \PAcc{94.2}{0.1}  & \PChg{0.1}{0.1} \\
     && \LayerM   & \PRemB{\OneRemB}{\ZeroRemB} & \PRemB{0.00}{0.01}             &                    & \PAsr{0}{0}   &                   & \NoChg \\
  \DsSep
     & \RevID{4}
     & \CosInM    & \PRem{99.9}{0.3}            & \PRemB{\ZeroRemBB}{\ZeroRemBB} & \PAcc{100}{}       & \PAsr{0}{0}   & \PAcc{94.3}{0.1}  & \NegChg{-0.1}{0.1} \\
     && \LayerM   & \PRemB{\OneRemB}{\ZeroRemB} & \PRem{0.02}{0.01}              &                    & \PAsr{0}{0}   &                   & \NoChg \\
  \midrule
  \RevType{Negative}
     & \RevID{1}
     & \CosInM    & \PRem{97.1}{6.5}            & \PRemB{0.01}{0.01}             & \PAcc{100}{}       & \PAsr{0}{0}   & \PAcc{94.3}{0.2}  & \PChg{0.3}{0.1} \\
     && \LayerM   & \PRemB{99.5}{1.7}           & \PRem{0.02}{0.05}              &                    & \PAsr{0}{0}   &                   & \PChg{0.1}{0.1} \\
  \DsSep
     & \RevID{2}
     & \CosInM    & \PRemB{\OneRemB}{\ZeroRemB} & \PRem{0.17}{0.30}              & \PAcc{100}{}       & \PAsr{0}{0}   & \PAcc{94.3}{0.2}  & \PChg{0.1}{0.2} \\
     && \LayerM   & \PRemB{\OneRemB}{\ZeroRemB} & \PRemB{0.04}{0.03}             &                    & \PAsr{0}{0}   &                   & \PChg{0.3}{0.2} \\
  \DsSep
     & \RevID{3}
     & \CosInM    & \PRem{99.5}{0.0}            & \PRem{0.05}{0.15}              & \PAcc{90}{}        & \PAsr{0}{0}   & \PAcc{94.1}{0.2}  & \PChg{0.3}{0.1} \\
     && \LayerM   & \PRemB{\OneRemB}{\ZeroRemB} & \PRemB{0.01}{0.01}             &                    & \PAsr{0}{0}   &                   & \PChg{0.4}{0.1} \\
  \DsSep
     & \RevID{4}
     & \CosInM    & \PRemB{\OneRemB}{\ZeroRemB} & \PRemB{\ZeroRemBB}{\ZeroRemBB} & \PAcc{100}{}       & \PAsr{0}{0}   & \PAcc{94.2}{0.2}  & \PChg{0.1}{0.0} \\
     && \LayerM   & \PRemB{\OneRemB}{\ZeroRemB} & \PRemB{\ZeroRemBB}{\ZeroRemBB} &                    & \PAsr{0}{0}   &                   & \PChg{0.1}{0.1} \\
  \bottomrule
\end{tabular}
   }
}
\end{table}

\vfill{}
\phantom{.}
 
\FloatBarrier
\clearpage
\newpage
\subsubsection{Vision Poisoning Full Results}%
\label{sec:App:MoreExps:Poison:CIFAR}
\phantom{.}

\vspace{8pt}
Section~\ref{sec:ExpRes:AdvIdent} considers \citepos{Peri:2020} dedicated, clean\=/label poison defense \deepknn{} as an additional baseline.  By default, nearest neighbor algorithms yield a label, not a score.  To be compatible with~AUPRC, we modified \deepknn{} to rank each training example by the difference between the size of the neighborhood's plurality class and the number of neighborhood instances that share the corresponding example's label.

\vfill%

\begin{figure}[h!]
  \centering
\newcommand{\legendSpacer}{\hspace*{8pt}}
\begin{tikzpicture}
  \begin{axis}[%
      width=\textwidth,
      ybar,
      hide axis,  %
      xmin=0,  %
      xmax=1,
      ymin=0,
      ymax=1,
      scale only axis,width=1mm, %
      legend cell align={left},              %
      legend style={font=\legendFontSize},
      legend columns=9,
    ]
    \addplot [cosin0 color] coordinates {(0,0)};
    \addlegendentry{\methodZero\ours\legendSpacer}

    \addplot [layer0 color] coordinates {(0,0)};
    \addlegendentry{\layerZero\ours\legendSpacer}

    \pgfplotsset{cycle list shift=-2}  %
    \addplot[cosin color] coordinates {(0,0)};
    \addlegendentry{\method\ours\legendSpacer}

    \addplot[layer color] coordinates {(0,0)};
    \addlegendentry{\layer\ours\legendSpacer}

    \addplot[TracInCP color] coordinates {(0,0)};
    \addlegendentry{\tracinCP{}\legendSpacer}
    \pgfplotsset{cycle list shift=-3}  %

    \pgfplotsinvokeforeach{\tracin{},Influence Func.,Representer Pt.}{
      \addplot coordinates {(0,0)};
      \addlegendentry{#1\legendSpacer}
    }

    \addplot[deep knn color] coordinates {(0,0)};
    \addlegendentry{\deepknn}
  \end{axis}
\end{tikzpicture}

\pgfplotstableread[col sep=comma]{plots/data/pois_cifar_auprc_full.csv}\datatable%
\begin{tikzpicture}%
  \begin{axis}[%
        axis lines*=left,%
        ymajorgrids,  %
        bar width={\CifarPoisBarWidth},%
        height={\CifarIdentificationHeight},%
        width={9.5cm},%
        cycle list/Set2,
        xtick=data,%
        xticklabels={Bird $\rightarrow$ Dog, Dog $\rightarrow$ Bird, Frog $\rightarrow$ Deer, Deer $\rightarrow$ Frog},
        x tick label style={font=\plotFontSize,align=center},%
        ymin=0,%
        ymax=1.0,%
        ybar={\BarLineWidth},%
        ytick distance={0.2},%
        minor y tick num={1},%
        y tick label style={font=\plotFontSize},%
        ylabel={\plotFontSize \IdentYLabel},%
        typeset ticklabels with strut,
        every tick/.style={color=black, line width=0.4pt},%
        enlarge x limits=0.17,%
        draw empty group line={[index]10}{1}{Positive}{-4.0ex}{4pt},
        draw empty group line={[index]10}{2}{Negative}{-4.0ex}{4pt}
    ]%
    \addplot [cosin0 color] table [x index=0, y index=1] {\datatable};%
    \addplot [layer0 color] table [x index=0, y index=2] {\datatable};%
    \pgfplotsset{cycle list shift=-2}  %
    \addplot [cosin color] table [x index=0, y index=3] {\datatable};%
    \addplot [layer color] table [x index=0, y index=4] {\datatable};%
    \addplot[TracInCP color] table [x index=0, y index=5] {\datatable};%
    \pgfplotsset{cycle list shift=-3}  %
    \addplot table [x index=0, y index=6] {\datatable};%
    \addplot table [x index=0, y index=7] {\datatable};%
    \addplot table [x index=0, y index=8] {\datatable};%
    \addplot[deep knn color] table [x index=0, y index=9] {\datatable};%
  \end{axis}%
\end{tikzpicture}%
   \caption{%
      \textit{Vision Poisoning Adversarial-Set Identification}:
      Adversarial set~($\advTrain$) identification AUPRC mean and standard deviation across ${{>}15}$~trials for four CIFAR10 class pairs
      with ${\abs{\advTrain} = 50}$. Our renormalized influence estimators, \method{} and \layer{}, using just initial parameters~$\wZero$
      and with 5~subepoch checkpointing outperformed all baselines for all class pairs.
  }
  \label{fig:App:MoreExps:Pois:CIFAR:AdvIdent}
\end{figure}

\vfill%

\begin{table}[h!]
  \centering
  \caption{
      \textit{Vision Poisoning Adversarial-Set Identification}:
      Adversarial set~($\advTrain$) identification AUPRC mean and standard deviation across ${{>}15}$~trials for four CIFAR10 class pairs
      with ${\abs{\advTrain} = 50}$. Our renormalized influence estimators, \method{} and \layer{}, using just initial parameters~$\wZero$
      and with 5~subepoch checkpointing outperformed all baselines for all class pairs.
      Bold denotes the best mean performance.
      Mean results are shown graphically in Figure~\ref{fig:ExpRes:AdvIdent} and~\ref{fig:App:MoreExps:Pois:CIFAR:AdvIdent}.
  }
  \label{tab:App:MoreExps:Pois:CIFAR:AdvIdent}
  {%
    \TableFontSize%
\renewcommand{\arraystretch}{1.2}
\setlength{\dashlinedash}{0.4pt}
\setlength{\dashlinegap}{1.5pt}
\setlength{\arrayrulewidth}{0.3pt}

\newcommand{\TabMidRule}{\hdashline}

\newcommand{\AlgName}[1]{\multicolumn{1}{c}{#1}}

\begin{tabular}{@{}crrrrrrrrr@{}}
  \toprule
  \multicolumn{1}{c}{Classes} & \multicolumn{4}{c}{Ours} & \multicolumn{5}{c}{Baselines} \\\cmidrule(r){1-1}\cmidrule(lr){2-5}\cmidrule(l){6-10}
  $\yTarg \rightarrow \yAdv$  & \AlgName{\methodZero} & \AlgName{\layerZero} & \AlgName{\method}  & \AlgName{\layer}  & \AlgName{\tracinCP{}}  & \AlgName{\tracin{}}  & \AlgName{Influence Func.}  &  {Representer Pt.}  & \AlgName{\deepknn{}}  \\
  \midrule
  Bird $\rightarrow$ Dog
      & \PVal{0.773}{0.208}   & \PVal{0.628}{0.242}   & \PValB{0.892}{0.137}   & \PVal{0.825}{0.206}   & \PVal{0.493}{0.233}   & \PVal{0.194}{0.108}   & \PVal{0.146}{0.188}   & \PVal{0.028}{0.015}   & \PVal{0.078}{0.197}   \\\TabMidRule
  Dog $\rightarrow$ Bird
      & \PVal{0.847}{0.142}   & \PVal{0.685}{0.179}   & \PValB{0.848}{0.115}   & \PVal{0.769}{0.170}   & \PVal{0.464}{0.225}   & \PVal{0.171}{0.090}   & \PVal{0.066}{0.075}   & \PVal{0.017}{0.007}   & \PVal{0.036}{0.027}   \\\TabMidRule
  Frog $\rightarrow$ Deer
      & \PVal{0.912}{0.120}   & \PVal{0.842}{0.173}   & \PValB{0.962}{0.100}   & \PVal{0.942}{0.127}   & \PVal{0.602}{0.203}   & \PVal{0.265}{0.135}   & \PVal{0.150}{0.166}   & \PVal{0.026}{0.016}   & \PVal{0.208}{0.320}   \\\TabMidRule
  Deer $\rightarrow$ Frog
      & \PVal{0.803}{0.188}   & \PVal{0.673}{0.202}   & \PValB{0.888}{0.091}   & \PVal{0.855}{0.113}   & \PVal{0.534}{0.197}   & \PVal{0.210}{0.101}   & \PVal{0.085}{0.107}   & \PVal{0.028}{0.025}   & \PVal{0.027}{0.072}   \\
  \bottomrule
\end{tabular}
   }
\end{table}

\vfill%
\phantom{.}

\FloatBarrier
\clearpage
\newpage

\begin{figure}[h!]
  \centering
\newcommand{\legendSpacer}{\hspace*{9pt}}

\begin{tikzpicture}
  \begin{axis}[%
    width=\textwidth,
      ybar,
      hide axis,  %
      xmin=0,  %
      xmax=1,
      ymin=0,
      ymax=1,
      scale only axis,width=1mm, %
      legend cell align={left},              %
      legend style={font=\legendFontSize},
      legend columns=7,
      cycle list name=DetectCycleList,
    ]
    \addplot[cosin color] coordinates {(0,0)};
    \addlegendentry{\fitWith{\method}\ours\legendSpacer}
    \addplot[layer color] coordinates {(0,0)};
    \addlegendentry{\fitWith{\layer}\ours\legendSpacer}

    \addplot coordinates {(0,0)};
    \addlegendentry{Max.\ \KNN{} Distance\legendSpacer}

    \addplot coordinates {(0,0)};
    \addlegendentry{Min.\ \KNN{} Distance\legendSpacer}
    \addplot coordinates {(0,0)};
    \addlegendentry{Most Certain\legendSpacer};

    \addplot coordinates {(0,0)};
    \addlegendentry{Least Certain\legendSpacer};

    \addplot coordinates {(0,0)};
    \addlegendentry{Random};
  \end{axis}
\end{tikzpicture}

\pgfplotstableread[col sep=comma]{plots/data/pois_cifar_detect_full.csv}\datatable%
\begin{tikzpicture}%
  \begin{axis}[%
        axis lines*=left,%
        ymajorgrids,  %
        bar width={\DetectBarWidthVal},
        height={\BarDetectMainHeight},%
        width={9.0cm},%
        ymin=0,%
        ymax=1,%
        ytick distance={0.20},%
        ybar={\BarLineWidth},%
        minor y tick num={1},%
        y tick label style={font=\plotFontSize},%
        ylabel={\plotFontSize \DetectYLabel},%
        xtick=data,%
        xticklabels={Bird $\rightarrow$ Dog, Dog $\rightarrow$ Bird, Frog $\rightarrow$ Deer, Deer $\rightarrow$ Frog},
        x tick label style={font=\plotFontSize,align=center},%
        typeset ticklabels with strut,
        every tick/.style={color=black, line width=0.4pt},%
        enlarge x limits={0.150},%
        cycle list name=DetectCycleList,
    ]%
    \addplot[cosin color] table [x index=0, y index=1] {\datatable};%
    \addplot[layer color] table [x index=0, y index=2] {\datatable};%
    \foreach \k in {3, ..., 7} {%
      \addplot table [x index=0, y index=\k] {\datatable};%
    }%
  \end{axis}%
\end{tikzpicture}%
   \caption{%
      \textit{Vision Poisoning Target Identification}:
      See Table~\ref{tab:App:MoreExps:Pois:CIFAR:TargDetect} for the full numerical results.
  }
  \label{fig:App:MoreExps:Pois:CIFAR:TargDetect}
\end{figure}
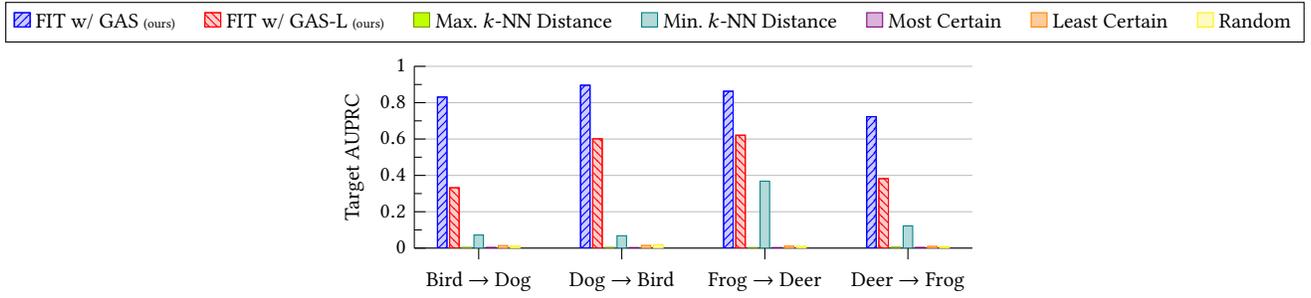

\vfill%

\begin{table}[h!]
  \centering
  \caption{%
      \textit{Vision Poisoning Target Identification}:
      Bold denotes the best mean performance with ${{\geq}15}$~trials per class pair.
      Mean results are shown graphically in Figures~\ref{fig:ExpRes:TargDetect} and~\ref{fig:App:MoreExps:Pois:CIFAR:TargDetect}.
  }
  \label{tab:App:MoreExps:Pois:CIFAR:TargDetect}
  {%
    \TableFontSize%
\renewcommand{\arraystretch}{1.2}
\setlength{\dashlinedash}{0.4pt}
\setlength{\dashlinegap}{1.5pt}
\setlength{\arrayrulewidth}{0.3pt}

\newcommand{\TabMidRule}{\hdashline}

\newcommand{\AlgName}[1]{\multicolumn{1}{c}{#1}}

\begin{tabular}{ccrrrrrrr}
  \toprule
  \multicolumn{2}{c}{Classes} & \multicolumn{2}{c}{Ours} & \multicolumn{4}{c}{Baselines} \\\cmidrule(lr){1-2}\cmidrule(lr){3-4}\cmidrule(lr){5-9}
  $\yTarg$ & $\yAdv$
      & \AlgName{\method{}}   & \AlgName{\layer}    & \AlgName{Max~\KNN{}} & \AlgName{Min~\KNN{}}  & \AlgName{Most Certain} & \AlgName{Least Certain} & \AlgName{Random}   \\
  \midrule
  Bird  & Dog
      & \PValB{0.831}{0.268}  & \PVal{0.332}{0.340}  & \PVal{0.004}{0.006}  & \PVal{0.072}{0.128}  & \PVal{0.004}{0.001}  & \PVal{0.014}{0.034}  & \PVal{0.011}{0.025}  \\\TabMidRule
  Dog   & Bird
      & \PValB{0.896}{0.227}  & \PVal{0.601}{0.390}  & \PVal{0.004}{0.002}  & \PVal{0.068}{0.129}  & \PVal{0.003}{0.001}  & \PVal{0.015}{0.013}  & \PVal{0.008}{0.010}  \\\TabMidRule
  Frog  & Deer
      & \PValB{0.863}{0.253}  & \PVal{0.621}{0.393}  & \PVal{0.003}{0.001}  & \PVal{0.368}{0.434}  & \PVal{0.003}{0.001}  & \PVal{0.011}{0.005}  & \PVal{0.019}{0.043}  \\\TabMidRule
  Deer  & Frog
      & \PValB{0.723}{0.305}  & \PVal{0.382}{0.327}  & \PVal{0.005}{0.007}  & \PVal{0.122}{0.273}  & \PVal{0.004}{0.001}  & \PVal{0.010}{0.008}  & \PVal{0.010}{0.026}  \\
  \bottomrule
\end{tabular}
   }
\end{table}

\vfill%

\begin{table}[h!]
  \centering
\revised{%
  \caption{%
    \revised{%
      \textit{Vision Poisoning Attack Mitigation}:
      Bold denotes the best mean performance with ${{\geq}15}$~trials per class pair.
      Aggregated results are shown in Table~\ref{tab:ExpRes:Mitigation}.
    }
  }\label{tab:App:MoreExps:Pois:CIFAR:Mitigate}
  {
    \TableFontSize%
\renewcommand{\arraystretch}{1.2}
\setlength{\dashlinedash}{0.4pt}
\setlength{\dashlinegap}{1.5pt}
\setlength{\arrayrulewidth}{0.3pt}

\newcommand{\MultiHead}[1]{\multicolumn{2}{c}{#1}}

\newcommand{\TwoRowHead}[1]{\multirow{2}{*}{#1}}
\newcommand{\ClassPair}[2]{\multirow{2}{*}{#1} & \multirow{2}{*}{#2}}
\newcommand{\PZ}{\phantom{0}}
\newcommand{\ptZ}{\phantom{.}\PZ}
\newcommand{\ptZZ}{\phantom{.}\PZ\PZ}
\newcommand{\ptASR}{}

\newcommand{\CosInM}{\method{}}
\newcommand{\LayerM}{\layer{}}

\newcommand{\OneRemB}{100\ptZ}
\newcommand{\ZeroRemB}{0\ptZ}

\newcommand{\PRem}[2]{#1}
\newcommand{\PRemB}[2]{\textBF{#1}}
\newcommand{\ASR}[2]{\multirow{2}{*}{#1}}
\newcommand{\PAcc}[2]{\multirow{2}{*}{#1}}
\newcommand{\PAsr}[2]{#1}

\newcommand{\PChg}[2]{{\color{ForestGreen} +#1}}
\newcommand{\NoChg}{0.0}
\newcommand{\NegChg}[2]{{\color{BrickRed} -#1}}

\newcommand{\DsSep}{\cdashline{1-9}}
\newcommand{\MethodSep}{}

\begin{tabular}{cclrrrrrr}
  \toprule
     \MultiHead{Classes}
     & \TwoRowHead{Method} & \MultiHead{\% Removed}      & \MultiHead{ASR \%} & \MultiHead{Test Acc. \%} \\\cmidrule(lr){1-2}\cmidrule(lr){4-5}\cmidrule(lr){6-7}\cmidrule(lr){8-9}
     $\yTarg$ & $\yAdv$
     &                    & $\advTrain$  & $\cleanTrain$ & Orig.   & Ours     &  Orig. & Chg.   \\
  \midrule
  \ClassPair{Bird}{Dog}
     & \CosInM    & \PRemB{72.1}{27.2} & \PRemB{0.04}{0.09}  & \PAcc{91.4}{}  & \PAsr{0}{0}   & \PAcc{87.0}{0.3}  & \NoChg \\
     && \LayerM   & \PRem{65.2}{32.7}  & \PRemB{0.04}{0.09}  &                & \PAsr{0}{0}   &                   & \PChg{0.1}{0.0} \\
  \DsSep
  \ClassPair{Dog}{Bird}
     & \CosInM    & \PRemB{54.1}{27.8} & \PRemB{0.01}{0.03}  & \PAcc{80.0}{}  & \PAsr{0}{0}   & \PAcc{87.1}{0.3}  & \NegChg{-0.1}{0.0} \\
     && \LayerM   & \PRem{46.6}{26.7}  & \PRemB{0.01}{0.05}  &                & \PAsr{0}{0}   &                   & \NoChg \\
  \DsSep
  \ClassPair{Frog}{Deer}
     & \CosInM    & \PRemB{36.0}{24.4} & \PRemB{0.01}{0.03}  & \PAcc{60.0}{}   & \PAsr{0}{0}   & \PAcc{87.1}{0.4}  & \NoChg \\
     && \LayerM   & \PRem{30.2}{24.6}  & \PRem{0.03}{0.15}   &                 & \PAsr{0}{0}   &                   & \NoChg \\
  \DsSep
  \ClassPair{Deer}{Frog}
     & \CosInM    & \PRemB{88.9}{14.2} & \PRemB{0.03}{0.05}  & \PAcc{80.0}{}   & \PAsr{0}{0}   & \PAcc{87.0}{0.3}  & \PChg{0.1}{0.0}  \\
     && \LayerM   & \PRem{83.5}{19.5}  & \PRemB{0.03}{0.04}  &                 & \PAsr{0}{0}   &                   & \PChg{0.1}{-0.1} \\
  \bottomrule
\end{tabular}
   }
}
\end{table}
  
\FloatBarrier
\clearpage
\newpage
\revTwo{%
\subsection{Jointly-Optimized Adaptive Attacker -- Full Experimental Results}
\label{sec:App:MoreExps:JointOpt:FullResults}
}%

\revTwo{%
  Sections~\ref{sec:ExpRes:AdaptiveAttacks} and~\ref{sec:App:JointOpt:Setup} describe a strong adaptive attack that modifies \citepos{Zhu:2019}'s vision poisoning attack
to simultaneously minimize the adversarial loss and the adversarial set's estimated influence.
Section~\ref{sec:ExpRes:AdaptiveAttacks} summarizes the adversarial-set and target identification results for this jointly-optimized attack.
Sections~\ref{sec:App:MoreExps:JointOpt:FullResults:AdvIdent}
and~\ref{sec:App:MoreExps:JointOpt:FullResults:TargDetect} (resp.) provide more granular versions of those results.

Section~\ref{sec:App:MoreExps:JointOpt:FullResults:Mitigation} provides additional results on target-driven attack
mitigation's effectiveness on this jointly optimized attack.
}

\vspace{16pt}
\subsubsection{Adversarial-Set Identification of the Jointly Optimized Poisoning Attack}%
\label{sec:App:MoreExps:JointOpt:FullResults:AdvIdent}

\phantom{.}
\vfill%

\begin{figure}[h!]
  \centering
  \newcommand{\SubFigureWidth}{0.46\textwidth}
\newcommand{\legendSpacer}{\hspace*{8pt}}
\begin{tikzpicture}
  \begin{axis}[%
      width=\textwidth,
      ybar,
      hide axis,  %
      xmin=0,  %
      xmax=1,
      ymin=0,
      ymax=1,
      scale only axis,width=1mm, %
      legend cell align={left},              %
      legend style={font=\legendFontSize},
      legend columns=9,
    ]
    \addplot [cosin0 color] coordinates {(0,0)};
    \addlegendentry{\methodZero\ours\legendSpacer}

    \addplot [layer0 color] coordinates {(0,0)};
    \addlegendentry{\layerZero\ours\legendSpacer}

    \pgfplotsset{cycle list shift=-2}  %
    \addplot[cosin color] coordinates {(0,0)};
    \addlegendentry{\method\ours\legendSpacer}

    \addplot[layer color] coordinates {(0,0)};
    \addlegendentry{\layer\ours\legendSpacer}

    \addplot[TracInCP color] coordinates {(0,0)};
    \addlegendentry{\tracinCP{}\legendSpacer}
    \pgfplotsset{cycle list shift=-3}  %

    \pgfplotsinvokeforeach{\tracin{},Influence Func.,Representer Pt.}{
      \addplot coordinates {(0,0)};
      \addlegendentry{#1\legendSpacer}
    }

  \end{axis}
\end{tikzpicture}

\newcommand{\JointOptAuprcPlot}[1]{%
\pgfplotstableread[col sep=comma]{plots/data/joint_opt_auprc_#1.csv}\datatable%
\begin{tikzpicture}%
  \begin{axis}[%
        axis lines*=left,%
        ymajorgrids,  %
        bar width={\CifarPoisBarWidth},%
        height={\JointOptSupplementAdvIdentHeight},%
        width={\textwidth},%
        cycle list/Set2,
        xtick=data,%
        xticklabels={Bird $\rightarrow$ Dog, Dog $\rightarrow$ Bird, Frog $\rightarrow$ Deer, Deer $\rightarrow$ Frog},
        x tick label style={font=\plotFontSize,align=center},%
        ymin=0,%
        ymax=1.0,%
        ybar={\BarLineWidth},%
        ytick distance={0.2},%
        minor y tick num={1},%
        y tick label style={font=\plotFontSize},%
        ylabel={\plotFontSize \IdentYLabel},%
        typeset ticklabels with strut,
        every tick/.style={color=black, line width=0.4pt},%
        enlarge x limits=0.17,%
    ]%
    \addplot [cosin0 color] table [x index=0, y index=1] {\datatable};%
    \addplot [layer0 color] table [x index=0, y index=2] {\datatable};%
    \pgfplotsset{cycle list shift=-2}  %
    \addplot [cosin color] table [x index=0, y index=3] {\datatable};%
    \addplot [layer color] table [x index=0, y index=4] {\datatable};%
    \addplot[TracInCP color] table [x index=0, y index=5] {\datatable};%
    \pgfplotsset{cycle list shift=-3}  %
    \addplot table [x index=0, y index=6] {\datatable};%
    \addplot table [x index=0, y index=7] {\datatable};%
    \addplot table [x index=0, y index=8] {\datatable};%
  \end{axis}%
\end{tikzpicture}%
}%
   \begin{subfigure}{\SubFigureWidth}
    \revTwo{%
    \JointOptAuprcPlot{0}
    }%

    \caption{%
      \revTwo{%
      Baseline with ${\surrogateHyper = 0}$
      }%
    }%
  \end{subfigure}
  \hfill
  \begin{subfigure}{\SubFigureWidth}
    \revTwo{%
    \JointOptAuprcPlot{1E-2}
    }%

    \caption{%
      \revTwo{%
      Joint optimization with ${\surrogateHyper = 10^{-2}}$
      }%
    }%
  \end{subfigure}

  \caption{%
    \revTwo{%
      \textit{Adversarial-Set Identification for the Adaptive Vision Poison Attack}:
      Mean AUPRC identifying the adversarial set where \citeauthor{Zhu:2019}'s vision poison attack is jointly optimized with minimizing \method{}
      with ${{\geq}10}$~trials per setup as described in Section~\ref{sec:App:JointOpt:Setup}.
      Section~\ref{sec:ExpRes:AdaptiveAttacks}'s {\color{GreenBar}baseline} results set trade-off hyperparameter ${\surrogateHyper = 0}$, meaning the poison was not jointly optimized.
      The jointly optimized results used ${\surrogateHyper = 10^{-2}}$ as explained in suppl.\ Section~\ref{sec:App:JointOpt:Setup}.
      This joint optimization reduces the \method{} similarity by~7\% at the cost of a 19\%~decrease in ASR w.r.t.\ Table~\ref{tab:ExpRes:Mitigation}.
      See Table~\ref{tab:App:MoreExps:JointOpt:Ident} (below) for the full numerical results, including variance.
    }%
  }
  \label{fig:App:MoreExps:JointOpt:Ident}
\end{figure}

\vfill%

\begin{table}[h!]
  \centering
\revTwo{%
  \caption{
    \revTwo{%
      \textit{Adversarial-Set Identification for the Adaptive Vision Poison Attack}:
      Adversarial-set identification AUPRC mean and standard deviation
      with ${{\geq}10}$~trials per setup as described in Section~\ref{sec:App:JointOpt:Setup}.
      Section~\ref{sec:ExpRes:AdaptiveAttacks}'s {\color{GreenBar}baseline} results set trade-off hyperparameter ${\surrogateHyper = 0}$, meaning the poison was not jointly optimized.
      The jointly optimized results used ${\surrogateHyper = 10^{-2}}$ as explained in suppl.\ Section~\ref{sec:App:JointOpt:Setup}.
      Bold denotes the best mean performance.
      Mean results are shown graphically in Figures~\ref{fig:ExpRes:AdaptiveAttacker:Pois:CIFAR:Joint:Identification} and~\ref{fig:App:MoreExps:JointOpt:Ident}.%
    }%
  }
  \label{tab:App:MoreExps:JointOpt:Ident}
  {%
    \TableFontSize%
\renewcommand{\arraystretch}{1.2}
\setlength{\dashlinedash}{0.4pt}
\setlength{\dashlinegap}{1.5pt}
\setlength{\arrayrulewidth}{0.3pt}

\newcommand{\TabMidRule}{\cdashline{2-10}}

\newcommand{\AlgName}[1]{\multicolumn{1}{c}{#1}}

\newcommand{\hyperVal}[1]{\multirow{4}{*}{$#1$}}

\begin{tabular}{@{}rcrrrrrrrrr@{}}
  \toprule
  \multicolumn{1}{c}{Param.}
  & \multicolumn{1}{c}{Classes} & \multicolumn{4}{c}{Ours} & \multicolumn{5}{c}{Baselines} \\\cmidrule(r){1-1}\cmidrule(r){2-2}\cmidrule(lr){3-6}\cmidrule(l){7-10}
  \multicolumn{1}{c}{$\surrogateHyper$}
  & $\yTarg \rightarrow \yAdv$  & \AlgName{\methodZero} & \AlgName{\layerZero} & \AlgName{\method}  & \AlgName{\layer}  & \AlgName{\tracinCP{}}  & \AlgName{\tracin{}}  & \AlgName{Influence Func.}  &  {Representer Pt.}  \\
  \midrule
  \hyperVal{0\phantom{.00}}
  & Bird $\rightarrow$ Dog
    & \PVal{0.567}{0.370}  & \PVal{0.418}{0.310}  & \PValB{0.766}{0.134} & \PVal{0.690}{0.186}  & \PVal{0.275}{0.163}  & \PVal{0.085}{0.039}  & \PVal{0.081}{0.084}  & \PVal{0.032}{0.027}  \\\TabMidRule
  & Dog $\rightarrow$ Bird
    & \PValB{0.663}{0.392} & \PVal{0.532}{0.337}  & \PVal{0.660}{0.254}  & \PVal{0.560}{0.273}  & \PVal{0.272}{0.199}  & \PVal{0.098}{0.051}  & \PVal{0.035}{0.020}  & \PVal{0.017}{0.006}  \\\TabMidRule
  & Frog $\rightarrow$ Deer
    & \PVal{0.755}{0.378}  & \PVal{0.680}{0.362}  & \PValB{0.827}{0.138} & \PVal{0.787}{0.156}  & \PVal{0.393}{0.214}  & \PVal{0.135}{0.089}  & \PVal{0.079}{0.086}  & \PVal{0.020}{0.009}  \\\TabMidRule
  & Deer $\rightarrow$ Frog
    & \PVal{0.610}{0.358}  & \PVal{0.477}{0.300}  & \PValB{0.669}{0.202} & \PVal{0.617}{0.198}  & \PVal{0.243}{0.150}  & \PVal{0.119}{0.067}  & \PVal{0.059}{0.045}  & \PVal{0.018}{0.006}  \\
  \midrule
  \hyperVal{10^{-2}}
  & Bird $\rightarrow$ Dog
    & \PVal{0.611}{0.336}  & \PVal{0.470}{0.312}  & \PValB{0.646}{0.235} & \PVal{0.590}{0.268}  & \PVal{0.282}{0.159}  & \PVal{0.093}{0.066}  & \PVal{0.067}{0.073}  & \PVal{0.026}{0.018}  \\\TabMidRule
  & Dog $\rightarrow$ Bird
    & \PValB{0.708}{0.319}  & \PVal{0.553}{0.296}  & \PVal{0.558}{0.216} & \PVal{0.479}{0.248}  & \PVal{0.180}{0.112}  & \PVal{0.072}{0.045}  & \PVal{0.030}{0.014}  & \PVal{0.014}{0.003}  \\\TabMidRule
  & Frog $\rightarrow$ Deer
    & \PVal{0.823}{0.320}  & \PVal{0.753}{0.320}  & \PValB{0.858}{0.145} & \PVal{0.818}{0.184}  & \PVal{0.404}{0.177}  & \PVal{0.173}{0.101}  & \PVal{0.077}{0.083}  & \PVal{0.021}{0.012}  \\\TabMidRule
  & Deer $\rightarrow$ Frog
    & \PValB{0.790}{0.159} & \PVal{0.625}{0.175}  & \PVal{0.660}{0.180}  & \PVal{0.640}{0.192}  & \PVal{0.189}{0.170}  & \PVal{0.106}{0.060}  & \PVal{0.063}{0.041}  & \PVal{0.022}{0.010}  \\
  \bottomrule
\end{tabular}
   }
}%
\end{table}

\vfill%
\phantom{.}

\clearpage
\newpage
\subsubsection{Target Identification of the Jointly Optimized Poisoning Attack}%
\label{sec:App:MoreExps:JointOpt:FullResults:TargDetect}

\phantom{.}
\vfill%

\begin{figure}[h!]
  \centering
  \newcommand{\SubFigureWidth}{0.46\textwidth}
\newcommand{\legendSpacer}{\hspace*{9pt}}

\begin{tikzpicture}
  \begin{axis}[%
    width=\textwidth,
      ybar,
      hide axis,  %
      xmin=0,  %
      xmax=1,
      ymin=0,
      ymax=1,
      scale only axis,width=1mm, %
      legend cell align={left},              %
      legend style={font=\legendFontSize},
      legend columns=7,
      cycle list name=DetectCycleList,
    ]
    \addplot[cosin color] coordinates {(0,0)};
    \addlegendentry{\fitWith{\method}\ours\legendSpacer}
    \addplot[layer color] coordinates {(0,0)};
    \addlegendentry{\fitWith{\layer}\ours\legendSpacer}

    \addplot coordinates {(0,0)};
    \addlegendentry{Max.\ \KNN{} Distance\legendSpacer}

    \addplot coordinates {(0,0)};
    \addlegendentry{Min.\ \KNN{} Distance\legendSpacer}
    \addplot coordinates {(0,0)};
    \addlegendentry{Most Certain\legendSpacer};

    \addplot coordinates {(0,0)};
    \addlegendentry{Least Certain\legendSpacer};

    \addplot coordinates {(0,0)};
    \addlegendentry{Random};
  \end{axis}
\end{tikzpicture}

\newcommand{\JointOptDetectPlot}[1]{%
  \pgfplotstableread[col sep=comma]{plots/data/joint_opt_detect_#1.csv}\datatable%
  \begin{tikzpicture}%
    \begin{axis}[%
          axis lines*=left,%
          ymajorgrids,  %
          bar width={\DetectBarWidthVal},
          height={\JointOptSupplementTargIdentHeight},%
          width={\textwidth},%
          ymin=0,%
          ymax=1,%
          ytick distance={0.20},%
          ybar={\BarLineWidth},%
          minor y tick num={1},%
          y tick label style={font=\plotFontSize},%
          ylabel={\plotFontSize \DetectYLabel},%
          xtick=data,%
          xticklabels={Bird $\rightarrow$ Dog, Dog $\rightarrow$ Bird, Frog $\rightarrow$ Deer, Deer $\rightarrow$ Frog},
          x tick label style={font=\plotFontSize,align=center},%
          typeset ticklabels with strut,
          every tick/.style={color=black, line width=0.4pt},%
          enlarge x limits={0.150},%
          cycle list name=DetectCycleList,
      ]%
      \addplot[cosin color] table [x index=0, y index=1] {\datatable};%
      \addplot[layer color] table [x index=0, y index=2] {\datatable};%
      \foreach \k in {3, ..., 7} {%
        \addplot table [x index=0, y index=\k] {\datatable};%
      }%
    \end{axis}%
  \end{tikzpicture}%
}%
   \begin{subfigure}{\SubFigureWidth}
    \revTwo{%
    \JointOptDetectPlot{0}
    }%

    \caption{%
      \revTwo{%
      Baseline with ${\surrogateHyper = 0}$
      }%
    }%
  \end{subfigure}
  \hfill
  \begin{subfigure}{\SubFigureWidth}
    \revTwo{%
    \JointOptDetectPlot{1E-2}
    }%

    \caption{%
      \revTwo{%
      Joint optimization with ${\surrogateHyper = 10^{-2}}$
      }%
    }%
  \end{subfigure}

  \caption{%
    \revTwo{%
      \textit{Target Identification for the Adaptive Vision Poison Attack}:
      Mean target identification AUPRC where \citepos{Zhu:2019} vision poison attack is jointly optimized with minimizing \method{}.
      Section~\ref{sec:ExpRes:AdaptiveAttacks}'s {\color{GreenBar}baseline} results set trade-off hyperparameter ${\surrogateHyper = 0}$, meaning the poison was not jointly optimized.
      The jointly optimized results used ${\surrogateHyper = 10^{-2}}$ as explained in suppl.\ Section~\ref{sec:App:JointOpt:Setup}.
      See Table~\ref{tab:App:MoreExps:JointOpt:Detect} (below) for the full numerical results, including variance.
    }%
  }
  \label{fig:App:MoreExps:JointOpt:Detect}
\end{figure}

\vfill

\begin{table}[h!]
  \centering
\revTwo{%
  \caption{%
    \revTwo{%
      \textit{Target Identification for the Adaptive Vision Poison Attack}:
      Target identification AUPRC mean and standard deviation where
      \citepos{Zhu:2019} vision poison attack is jointly optimized with minimizing \method{}.
      Section~\ref{sec:ExpRes:AdaptiveAttacks}'s {\color{GreenBar}baseline} results set trade-off hyperparameter ${\surrogateHyper = 0}$, meaning the poison was not jointly optimized.
      The jointly optimized results used ${\surrogateHyper = 10^{-2}}$ as explained in suppl.\ Section~\ref{sec:App:JointOpt:Setup}.
      Bold denotes the best mean performance with ${{\geq}10}$~trials per class pair.
      Mean results are shown graphically in Figures~\ref{fig:ExpRes:AdaptiveAttacker:Pois:CIFAR:Joint:TargDetect} and~\ref{fig:App:MoreExps:JointOpt:Detect}.
    }%
  }
  \label{tab:App:MoreExps:JointOpt:Detect}
  {%
    \TableFontSize%
\renewcommand{\arraystretch}{1.2}
\setlength{\dashlinedash}{0.4pt}
\setlength{\dashlinegap}{1.5pt}
\setlength{\arrayrulewidth}{0.3pt}

\newcommand{\TabMidRule}{\cdashline{2-10}}

\newcommand{\AlgName}[1]{\multicolumn{1}{c}{#1}}

\newcommand{\hyperVal}[1]{\multirow{4}{*}{$#1$}}

\begin{tabular}{rccrrrrrrr}
  \toprule
  \multicolumn{1}{c}{Param.}
  & \multicolumn{2}{c}{Classes} & \multicolumn{2}{c}{Ours} & \multicolumn{4}{c}{Baselines} \\\cmidrule(lr){1-1}\cmidrule(lr){2-3}\cmidrule(lr){4-5}\cmidrule(lr){6-10}
  \multicolumn{1}{c}{$\surrogateHyper$}
  & $\yTarg$ & $\yAdv$
  & \AlgName{\method{}}   & \AlgName{\layer}    & \AlgName{Max~\KNN{}} & \AlgName{Min~\KNN{}}  & \AlgName{Most Certain} & \AlgName{Least Certain} & \AlgName{Random}   \\
  \midrule
  \hyperVal{0\phantom{.00}}
  & Bird  & Dog
       & \PValB{0.789}{0.271}  & \PVal{0.350}{0.372}  & \PVal{0.357}{0.360}  & \PVal{0.011}{0.003}  & \PVal{0.082}{0.091}  & \PVal{0.014}{0.008}  & \PVal{0.025}{0.024}  \\\TabMidRule
  & Dog   & Bird
       & \PValB{0.944}{0.167}  & \PVal{0.481}{0.431}  & \PVal{0.299}{0.325}  & \PVal{0.011}{0.005}  & \PVal{0.050}{0.026}  & \PVal{0.012}{0.002}  & \PVal{0.019}{0.010}  \\\TabMidRule
  & Frog  & Deer
       & \PValB{0.958}{0.144}  & \PVal{0.806}{0.300}  & \PVal{0.538}{0.441}  & \PVal{0.013}{0.007}  & \PVal{0.171}{0.279}  & \PVal{0.012}{0.002}  & \PVal{0.115}{0.280}  \\\TabMidRule
  & Deer  & Frog
       & \PValB{0.750}{0.320}  & \PVal{0.393}{0.329}  & \PVal{0.339}{0.355}  & \PVal{0.013}{0.007}  & \PVal{0.154}{0.148}  & \PVal{0.012}{0.003}  & \PVal{0.027}{0.023}  \\
  \midrule
  \hyperVal{10^{-2}}
  & Bird  & Dog
       & \PValB{0.775}{0.282}  & \PVal{0.204}{0.250}  & \PVal{0.422}{0.380}  & \PVal{0.010}{0.003}  & \PVal{0.046}{0.042}  & \PVal{0.012}{0.003}  & \PVal{0.088}{0.142}  \\\TabMidRule
  & Dog   & Bird
       & \PValB{0.875}{0.231}  & \PVal{0.321}{0.333}  & \PVal{0.400}{0.497}  & \PVal{0.012}{0.004}  & \PVal{0.211}{0.329}  & \PVal{0.011}{0.004}  & \PVal{0.025}{0.025}  \\\TabMidRule
  & Frog  & Deer
       & \PValB{0.784}{0.269}  & \PVal{0.586}{0.344}  & \PVal{0.387}{0.335}  & \PVal{0.010}{0.002}  & \PVal{0.108}{0.150}  & \PVal{0.012}{0.002}  & \PVal{0.076}{0.120}  \\\TabMidRule
  & Deer  & Frog
       & \PValB{0.681}{0.288}  & \PVal{0.376}{0.329}  & \PVal{0.395}{0.456}  & \PVal{0.022}{0.025}  & \PVal{0.125}{0.153}  & \PVal{0.011}{0.001}  & \PVal{0.021}{0.012}  \\
  \bottomrule
\end{tabular}
   }
}%
\end{table}

\vfill%
\phantom{.}

\clearpage
\newpage
\subsubsection{Target-Driven Attack Mitigation of the Jointly Optimized Poisoning Attack}%
\label{sec:App:MoreExps:JointOpt:FullResults:Mitigation}
\phantom{.}

\vspace{8pt}
\revTwo{%
This section examines joint optimization's effect on target-driven mitigation.
Averaging across all class pairs, target-driven mitigation using \method{} and \layer{} removed
0.05\% and 0.03\% (resp.) of the clean training data~(\eqsmall{$\cleanTrain$}).  For comparison, \citepos{Zhu:2019}
baseline attack removed on average 0.02\% and 0.03\% of clean training data for \method{}
and \layer{} respectively (see Table~\ref{tab:ExpRes:Mitigation}).
Moreover, after mitigating this jointly-optimized attack, average test accuracy either improved or stayed the same in all but one case.
}%

\vspace{1.0cm}
\begin{table*}[h!]
  \centering
  \revTwo{%
  \caption{%
    \revTwo{%
      \textit{Target-Driven Attack Mitigation for the Adaptive Vision Poison Attack}:
      Algorithm~\ref{alg:Mitigation}'s target-driven data sanitization
      where \citepos{Zhu:2019} vision poison attack is jointly optimized with minimizing the \method{} influence.
      The results below consider exclusively the jointly-optimized attack with ${\surrogateHyper = 10^{-2}}$.
      Clean-data removal remains low, and test accuracy either improved or stayed the same for in but one setup.
      The performance is comparable to the results with \citepos{Zhu:2019}'s standard vision poisoning attack (see Table~\ref{tab:App:MoreExps:Pois:CIFAR:Mitigate}).
      Bold denotes the best mean performance with ${{\geq}10}$~trials per class pair.
    }%
  }%
  {
    \TableFontSize%
\renewcommand{\arraystretch}{1.2}
\setlength{\dashlinedash}{0.4pt}
\setlength{\dashlinegap}{1.5pt}
\setlength{\arrayrulewidth}{0.3pt}

\newcommand{\MultiHead}[1]{\multicolumn{2}{c}{#1}}

\newcommand{\TwoRowHead}[1]{\multirow{2}{*}{#1}}
\newcommand{\ClassPair}[2]{\multirow{2}{*}{#1} & \multirow{2}{*}{#2}}
\newcommand{\PZ}{\phantom{0}}
\newcommand{\ptZ}{\phantom{.}\PZ}
\newcommand{\ptZZ}{\phantom{.}\PZ\PZ}
\newcommand{\ptASR}{}

\newcommand{\CosInM}{\method{}}
\newcommand{\LayerM}{\layer{}}

\newcommand{\OneRemB}{100\ptZ}
\newcommand{\ZeroRemB}{0\ptZ}

\newcommand{\PRem}[2]{#1}
\newcommand{\PRemB}[2]{\textBF{#1}}
\newcommand{\ASR}[2]{\multirow{2}{*}{#1}}
\newcommand{\PAcc}[2]{\multirow{2}{*}{#1}}
\newcommand{\PAsr}[2]{#1}

\newcommand{\PChg}[2]{{\color{ForestGreen} +#1}}
\newcommand{\NoChg}{0.0}
\newcommand{\NegChg}[2]{{\color{BrickRed} -#1}}

\newcommand{\DsSep}{\cdashline{1-9}}
\newcommand{\MethodSep}{}

\begin{tabular}{cclrrrrrr}
  \toprule
     \MultiHead{Classes}
     & \TwoRowHead{Method} & \MultiHead{\% Removed}      & \MultiHead{ASR \%} & \MultiHead{Test Acc. \%} \\\cmidrule(lr){1-2}\cmidrule(lr){4-5}\cmidrule(lr){6-7}\cmidrule(lr){8-9}
     $\yTarg$ & $\yAdv$
     &                    & $\advTrain$  & $\cleanTrain$ & Orig.   & Ours     &  Orig. & Chg.   \\
  \midrule
  \ClassPair{Bird}{Dog}
     & \CosInM    & \PRemB{36.0}{20.1} & \PRem{0.02}{0.04}   & \PAcc{76.2}{}  & \PAsr{0}{0}   & \PAcc{87.0}{0.3}  & \PChg{0.1}{-0.1} \\
     && \LayerM   & \PRem{30.3}{24.1}  & \PRemB{0.00}{0.01}  &                & \PAsr{0}{0}   &                   & \PChg{0.1}{0.0} \\
  \DsSep
  \ClassPair{Dog}{Bird}
     & \CosInM    & \PRem{21.6}{16.3}  & \PRemB{0.00}{0.00}  & \PAcc{57.1}{}  & \PAsr{0}{0}   & \PAcc{87.1}{0.3}  & \PChg{0.1}{0.1} \\
     && \LayerM   & \PRemB{21.9}{14.8} & \PRemB{0.00}{0.00}  &                & \PAsr{0}{0}   &                   & \NegChg{-0.1}{0.2} \\
  \DsSep
  \ClassPair{Frog}{Deer}
     & \CosInM    & \PRem{17.5}{14.7}  & \PRemB{0.00}{0.01}  & \PAcc{38.1}{}   & \PAsr{0}{0}   & \PAcc{87.1}{0.3}  & \NoChg \\
     && \LayerM   & \PRemB{19.4}{18.8} & \PRemB{0.00}{0.00}  &                 & \PAsr{0}{0}   &                   & \NoChg \\
  \DsSep
  \ClassPair{Deer}{Frog}
     & \CosInM    & \PRemB{85.0}{24.5} & \PRem{0.18}{0.23}   & \PAcc{81.0}{}   & \PAsr{0}{0}   & \PAcc{87.1}{0.2}  & \NoChg  \\
     && \LayerM   & \PRem{82.3}{23.2}  & \PRemB{0.13}{0.14}  &                 & \PAsr{0}{0}   &                   & \PChg{0.1}{0.1} \\
  \bottomrule
\end{tabular}
   }
}%
\end{table*}
 
\FloatBarrier
\clearpage
\newpage
\revTwo{%
\subsection{Adaptive Adversarial-Instance Selection}%
\label{sec:App:MoreExps:AdaptiveAdvSelection}%
}

\revTwo{%
  Sections~\ref{sec:ExpRes:AdaptiveAttacks} and~\ref{sec:App:JointOpt:Setup} consider a joint optimization where the attacker crafts the adversarial set to simultaneously minimize both the target's adversarial loss as well as a surrogate estimate of adversarial set~\eqsmall{$\advTrain$}'s \method{} influence.  However, not all adversarial attacks construct the adversarial set via an optimization.  For example, \citepos{Weber:2021} backdoor attack uses a fixed adversarial trigger that is both simple and highly effective.

Such simple trigger attacks require a different adaptive strategy since the trigger is not optimized.
To that end, this section considers a simpler and more general adaptive attack where the attacker adversarially selects \eqsmall{$\advTrain$}'s~seed instances to appear uninfluential.
The attack achieves this by running our method on a gray-box surrogate model and selecting as the attack instances those that are ranked least influential.
To ensure a strong adversary, this surrogate uses the same model architecture, hyperparameters, initial (fully-random) model parameters, and clean training data as the target model.
}

\revised{%
Here we consider \citepos{Weber:2021} three attack backdoor patterns on \texttt{auto} vs.\ \texttt{dog}.%
\footnote{%
  \revTwo{%
  \citepos{SpeechDataset} speech recognition attack also uses a fixed adversarial trigger.
  We did not use it for this evaluation as \citeauthor{SpeechDataset}'s dataset comes with the clean seed instances pre-selected by the authors.%
  }
}
We analyze this adaptive adversarial-set selection using both clean and backdoored surrogate models.
From a held-out set of 2,000 backdoor candidate instances, the 150~least influential instances then form the adversarial set.
}
\revTwo{%
With the exception of the procedure for selecting the adversarial set's clean instance, the evaluation setup was identical to Section~\ref{sec:ExpRes:AdvIdent}.
}

\revTwo{%
Table~\ref{tab:ExpRes:AdaptiveAttacker:CIFAR:Adaptive} reports the mean adversarial-set identification AUPRC, with the non-adaptive baseline being a u.a.r.\ adversarial set.
Observe that \bothMethod{} remain highly effective against this adaptive attacker (0.925--0.947~AUPRC) -- a 5\==8\% decline versus the baseline.
These results would only improve in practice where adversaries have no knowledge of training's random seed (e.g.,~the fully-random initial model parameters).
}

\vspace{1cm}
\begin{table}[h!]
  \centering
  \caption{%
  \revised{%
  \textit{Vision Backdoor Adaptive Attacker}:
  Mean AUPRC identifying adversarial set~\eqsmall{$\advTrain$} for \citepos{Weber:2021} vision backdoor attacks where the adaptive attacker attempts to conceal~\eqsmall{$\advTrain$} via either a clean or backdoored surrogate model.
  \revTwo{%
  \bothMethod{} still achieve consistently high adversarial-set identification performance under this gray-box adaptive attacker.
  }
  Baseline is selecting~\eqsmall{$\advTrain$} u.a.r.
  Results are averaged across related experimental setups with ${{\geq}5}$~trials per setup.
  }%
  }%
  \label{tab:ExpRes:AdaptiveAttacker:CIFAR:Adaptive}
  \revised{%
{
\small
\begin{tabular}{lrrrr}
  \toprule
  Surrogate Type   & \method{} & \layer{} & \tracinCP{} & \tracin{} \\
  \midrule
  Baseline         & 0.997     & 0.999    & 0.613       & 0.312     \\
  Backdoor         & 0.940     & 0.942    & 0.772       & 0.397     \\
  Clean            & 0.925     & 0.947    & 0.356       & 0.202     \\
  \bottomrule
\end{tabular}
}
   }
\end{table}
 
\FloatBarrier
\clearpage
\newpage
\newcommand{\MitigationMinipageWidth}{0.46\textwidth}

\revTwo{%
\subsection{An Adversarial Attack on Target-Driven Mitigation}\label{sec:App:MoreExps:MitigationTriggering}
Our threat model (defined in Section~\ref{sec:ProblemFormulation}) specifies that the adversary attempts to alter the model prediction on a specific target or set of targets.
Consider a different threat model where an adversary's objective is an \keyword{availability training-set attack} that seeks to lower overall model performance,
i.e.,~the attacker is not focused on a specific target.  Such an adversary could leverage our framework to achieve their objective.  This section describes
one such attack procedure and discusses a simple remedy to insulate against that risk.

Recall from Section~\ref{sec:ExpRes:Attacks} that \citepos{Weber:2021} vision backdoor attack inserts an adversarial trigger pattern (e.g.,~one pixel, four pixel, blend) to cause a misprediction.
Specifically, their attack causes test instances from the \colorYTarg~class to be mislabeled as belonging to the \colorYAdv~class.
\citeauthor{Weber:2021} achieve this by inserting the trigger pattern into \colorYTarg~test instances.
When a target is detected, target-driven attack mitigation (Alg.~\ref{alg:Mitigation}) then iteratively sanitizes the training set until the target instance's prediction changes.

Via one small change, \citepos{Weber:2021} backdoor (targeted) attack can be reformulated into an availability (i.e.,~indiscriminate) attack that hijacks our framework to achieve its objective.
This reformulation inserts the adversarial trigger into \colorYAdv~test instances (\underline{not} \eqsmall{$\yTarg$}~test instances as above).
Note that no changes are made to adversarial set~\eqsmall{$\advTrain$} or the training procedure as defined in Section~\ref{sec:ExpRes:Attacks}.
Nonetheless, this reformulated attack has the \textit{opposite effect} as \citepos{Weber:2021} attack -- instead of inducing a misprediction, our reformulated attack actually \textit{increases the confidence of a correct prediction}.

If this reformulated attack's~\eqsmall{$\advTrain$} instances induce a heavy enough upper tail, then the corresponding \colorYAdv~test instance would be identified as a target and target-driven mitigation initiated.
However, unlike in the standard case where mitigation changes a wrong prediction to a correct one, this reformulated attack triggers mitigation that \textit{switches a correct prediction to a wrong one}.
In addition to causing the target instance to be mispredicted, this obviously has the potential to require sanitization of significantly more clean data, which as a result, may significantly increase the test error.

Below we evaluate this reformulation of \citepos{Weber:2021} backdoor attack for class pair \eqsmall{${\color{\targColor}\yTarg} = \colorAuto$} and \eqsmall{${\color{\advColor}\yAdv} = \colorDog$} (where \eqsmall{${\yTarg \rightarrow \yAdv}$}).
Note that the experimental setup is unchanged from Section~\ref{sec:ExpRes} with one exception-- we expand the evaluation to consider the case where the adversarial trigger is inserted into \colorDog{} \textit{test} instances.

Figure~\ref{fig:App:MoreExps:MitigationTriggering:IdentAUPRC} compares the adversarial-set identification performance for both \citeauthor{Weber:2021}'s original attack (with the trigger inserted into u.a.r.\ \colorAuto{}~test instances) and our reformulated attack (with the trigger inserted into u.a.r.\ \colorDog{} test instances).
Observe that \bothMethod{} identifies the adversarial set as influential on the perturbed test instances irrespective of the instance's class.
There is a very small performance difference between the two attacks for the four-pixel and blend patterns and a larger difference for the one-pixel pattern.

Note that this reformulated attack is less likely to trigger target identification than \citepos{Weber:2021} original version.
For example, \textit{on average} with the blend attack pattern, \colorAuto{} target instances had higher anomaly scores~($\anomScore$) than \colorDog{} instances -- specifically by $1.0\Qn$~for \method{} and $1.3\Qn$~for \layer{}.
However, there were multiple cases where \colorDog{} instances had much higher anomaly scores (by up to~$1.2\Qn$ for \method{} and $5.3\Qn$ for \layer{}) -- in particular for the four-pixel attack pattern.
While relatively uncommon (between 5-20\% of the time based on the attack pattern), false target identification is \textit{definitely possible} here, and a persistent adversary could continue retrying the attack with different backdoored \colorYAdv{}~instances until success.

Table~\ref{tab:App:MoreExps:MitigationTriggering:Mitigation} quantifies the effect of our mitigation procedure when a reformulated attack instance is misclassified as a target.
Observe that significantly more clean data is removed than for \citeauthor{Weber:2021}'s original attack -- by multiple orders of magnitude in some cases.%
\footnote{\revTwo{%
By design, Algorithm~\ref{alg:Mitigation} gradually sanitizes the training set to avoid excessive clean data removal.
As Table~\ref{tab:App:MoreExps:MitigationTriggering:Mitigation} details, the reformulated attack requires the removal of so much clean data that Algorithm~\ref{alg:Mitigation}'s execution time became prohibitive.
As a computationally efficient alternative, Table~\ref{tab:App:MoreExps:MitigationTriggering:Mitigation}'s results modified Alg.~\ref{alg:Mitigation} such that data removal cutoff~$\anomCutoff$ was determined via an exponential (i.e.,~doubling) search through the sorted influence values~$\infVec$.
In other words, influence was not re-estimated between each iteration of Alg.~\ref{alg:Mitigation}'s \texttt{while} loop.
Instead, influence scores were calculated for each training instance once, and an iterative search was performed over that (sorted) vector.
Observe that for \citeauthor{Weber:2021}'s standard attack, the amount of clean data removed was slightly less than for the standard version of Algorithm~\ref{alg:Mitigation} (see Table~\ref{tab:App:MoreExps:Backdoor:CIFAR:Mitigate}).
Therefore, Table~\ref{tab:App:MoreExps:MitigationTriggering:Mitigation} may \textit{underestimate} the reformulated attack's already high severity.%
}
}
Removing a large fraction clean data degrades the model's average clean test accuracy by up to 21\%.

The above experiment is intended as a proof of concept that an attacker operating outside of our original threat model could use our framework to trigger an availability attack.
Of course, the potential effect of such an attack will depend on a variety of factors, including the model's confidence when predicting (\eqsmall{$\yAdv$})~test instances, the model's architecture, the attack paradigm (e.g.,~backdoor vs.\ poison), etc.
In situations where this alternate threat model may apply, Algorithm~\ref{alg:Mitigation} should be tweaked slightly to include a threshold on the maximum amount of sanitization before special intervention/analysis is initiated.
For example, this intervention could include (e.g.,~human, forensic) analysis of the identified target as well as the most influential instances as identified by \bothMethod{}.  The value of this ``intervention threshold'' could be set empirically or based on domain-specific knowledge, e.g.,~the maximum percentage of the training set that may be adversarial.

Note that there are multiple possible approaches for the ``intervention'' or ``analysis'' mentioned above, many of which are quite simple.
For example, the reformulated availability attack above would be thwarted if the identified target's true label were verified (e.g.,~by a human) prior to initiating verification.
Another option is that in cases where excessive sanitization is needed to change a prediction, Algorithm~\ref{alg:Mitigation} can terminate early (i.e.,~before the prediction changes) and only sanitize those training instances with sufficiently anomalous influence estimates.
}

\FloatBarrier%
\clearpage%
\newpage%

\phantom{.}
\vfill%

\begin{figure}[h!]
  \begin{minipage}{\MitigationMinipageWidth}
    \centering

\begin{tikzpicture}
  \newcommand{\legendSpacer}{\hspace*{9pt}}
  \begin{axis}[%
    width=\textwidth,
      ybar,
      hide axis,  %
      xmin=0,  %
      xmax=1,
      ymin=0,
      ymax=1,
      scale only axis,width=1mm, %
      legend cell align={left},              %
      legend style={font=\legendFontSize},
      legend columns=4,
    ]

    \addplot [cosin clean color] coordinates {(0,0)};
    \addlegendentry{\colorMethodAdv{} (\colorDog)\legendSpacer}

    \addplot [cosin color] coordinates {(0,0)};
    \addlegendentry{\colorMethodTarg{} (\colorAuto)\legendSpacer}

    \addplot [layer clean color] coordinates {(0,0)};
    \addlegendentry{\colorLayerAdv{} (\colorDog)\legendSpacer}

    \addplot [layer color] coordinates {(0,0)};
    \addlegendentry{\colorLayerTarg{} (\colorAuto)}
  \end{axis}
\end{tikzpicture}

\centering
\pgfplotstableread[col sep=comma]{plots/data/bd_cifar_autoimmune_auprc.csv}\datatable%
\begin{tikzpicture}%
  \begin{axis}[%
        axis lines*=left,%
        ymajorgrids,  %
        bar width={\BackdoorSpeechBarWidth},%
        height=\ExpResBarChartHeight,%
        width={\textwidth},%
        ymin=0,%
        ymax=1,%
        ybar={\BarLineWidth},%
        ytick distance={0.20},%
        minor y tick num={1},%
        y tick label style={font=\plotFontSize},%
        ylabel={\plotFontSize \IdentYLabel},%
        xtick=data,%
        xticklabels={1~Pixel,4~Pixel,Blend},
        x tick label style={font=\plotFontSize,align=center},%
        typeset ticklabels with strut,
        every tick/.style={color=black, line width=0.4pt},%
        enlarge x limits={0.20},%
        draw group line={[index]5}{1}{${\text{Plane} \rightarrow \text{Bird}}$}{-5.0ex}{12pt},
        draw group line={[index]5}{2}{${\text{Auto} \rightarrow \text{Dog}}$}{-5.0ex}{12pt}
    ]%
    \addplot[cosin clean color] table [x index=0, y index=1] {\datatable};%
    \addplot[cosin color] table [x index=0, y index=2] {\datatable};%

    \addplot[layer clean color] table [x index=0, y index=3] {\datatable};%
    \addplot[layer color] table [x index=0, y index=4] {\datatable};%
  \end{axis}%
\end{tikzpicture}%
 
    \captionof{figure}{%
      \revTwo{%
      \textit{Adversarial-Set Identification for the Availability Backdoor Attack}:
      Mean adversarial-instance identification AUPRC for the vision backdoor attack on CIFAR10
      class pair \eqsmall{${\color{\targColor}\yTarg} = \colorAuto$} and
      \eqsmall{${\color{\advColor}\yAdv} = \colorDog$} with
      150~backdoor instances and ${{>}15}$~trials per setup.
      \bothMethod{} identify adversarial set~\eqsmall{$\advTrain$} as highly influential irrespective of
      whether the adversarial instance is inserted into \colorYTarg~(\colorAuto{} -- \citeauthor{Weber:2021}'s attack)
      or \colorYAdv~(\colorDog{} -- reformulated attack) test instances -- although, as expected,
      the original attack does have better identification.
      Full numerical results (including variance) are in Table~\ref{tab:App:MoreExps:MitigationTriggering:IdentAUPRC}.
      }%
    }%
    \label{fig:App:MoreExps:MitigationTriggering:IdentAUPRC}
  \end{minipage}
  \hfill
  \begin{minipage}{\MitigationMinipageWidth}
    \centering
    \captionof{table}{%
      \revTwo{%
      \textit{Adversarial-Set Identification for the Availability Backdoor Attack}:
      Mean and standard deviation adversarial-instance identification AUPRC for the
      vision backdoor attack on CIFAR10 class pair
      \eqsmall{${\color{\targColor}\yTarg} = \colorAuto$}
      and
      \eqsmall{${\color{\advColor}\yAdv} = \colorDog$}
      with 150~backdoor instances and ${{>}15}$~trials per setup.
      \bothMethod{} identify adversarial set~\eqsmall{$\advTrain$} as highly influential irrespective of
      whether the adversarial instance is inserted into \colorYTarg~(\colorAuto{} -- \citeauthor{Weber:2021}'s attack)
      or \colorYAdv~(\colorDog{} -- reformulated attack) test instances -- although, as expected,
      the original attack does have better identification.
      Mean results are shown graphically in Figure~\ref{fig:App:MoreExps:MitigationTriggering:IdentAUPRC}.
      }%
    }%
    \label{tab:App:MoreExps:MitigationTriggering:IdentAUPRC}
    {
      \TableFontSize%
      \revTwo{%
\renewcommand{\arraystretch}{1.2}
\setlength{\dashlinedash}{0.4pt}
\setlength{\dashlinegap}{1.5pt}
\setlength{\arrayrulewidth}{0.3pt}

\newcommand{\head}[1]{\multirow{2}{*}{#1}}
\newcommand{\DigPair}[2]{\multirow{3}{*}{#1 $\rightarrow$ #2}}
\newcommand{\Attack}[1]{#1}

\newcommand{\TabMidRule}{\cdashline{2-4}}

\newcommand{\AlgName}[1]{\multicolumn{1}{c}{#1}}

\newcommand{\TargetTypeHead}[2]{\multirow{3}{*}{\shortstack{#1 \\ (\eqsmall{#2})}}}

\begin{tabular}{ccrr}
  \toprule
  \multirow{2}{*}{\shortstack{Test \\ Class}}
        & \multirow{2}{*}{\shortstack{Attack \\ Pattern}}
        & \multicolumn{2}{c}{Ours} \\\cmidrule(lr){3-4}
        &  & \AlgName{\method{}}  & \AlgName{\layer{}}   \\  %
  \midrule
  \TargetTypeHead{\colorAuto}{\colorYTarg}
       & 1~Pixel
           & \PVal{0.981}{0.020}  & \PValB{0.983}{0.022} \\\TabMidRule
       & 4~Pixel
           & \PValB{0.984}{0.029} & \PVal{0.971}{0.062}  \\\TabMidRule
       & Blend
           & \PVal{0.999}{0.003}  & \PValB{1.000}{0.000} \\
  \midrule
  \TargetTypeHead{\colorDog}{\colorYAdv}
       & 1~Pixel
           & \PValB{0.660}{0.294}  & \PVal{0.549}{0.355}  \\\TabMidRule
       & 4~Pixel
           & \PValB{0.937}{0.215} & \PVal{0.935}{0.192}  \\\TabMidRule
       & Blend
           & \PVal{0.987}{0.033}  & \PValB{0.995}{0.015} \\
  \bottomrule
\end{tabular}
       }
    }
  \end{minipage}
\end{figure}

\vfill%

\begin{table}[h!]
  \centering
  \caption{
    \revTwo{%
    \textit{Target-Driven Mitigation for the Availability Backdoor Attack}:
    Data sanitization applied to the vision backdoor attack on CIFAR10 class pair
    \eqsmall{${\color{\targColor}\yTarg} = \colorAuto$}
    (\citepos{Weber:2021} attack) and
    \eqsmall{${\color{\advColor}\yAdv} = \colorDog$}
    (our reformulated attack) with 150~backdoor instances and ${{\geq}10}$~trials per setup.
    These results demonstrate that \citeauthor{Weber:2021}'s backdoor attack can be
    reformulated as an availability attack that hijacks our framework to remove
    significant clean training data.  The risk of such an attack can be mitigated
    by modifying Algorithm~\ref{alg:Mitigation} to include a threshold on mitigation's
    maximum effect before additional intervention is initiated.
    Attack success rate~(ASR) is w.r.t.\ the analyzed target.
    }%
  }
  \label{tab:App:MoreExps:MitigationTriggering:Mitigation}
  {%
    \TableFontSize%
    \revTwo{%
\renewcommand{\arraystretch}{1.2}
\setlength{\dashlinedash}{0.4pt}
\setlength{\dashlinegap}{1.5pt}
\setlength{\arrayrulewidth}{0.3pt}

\newcommand{\MultiHead}[1]{\multicolumn{2}{c}{#1}}

\newcommand{\TwoRowHead}[1]{\multirow{2}{*}{#1}}
\newcommand{\ClassPair}[2]{\multirow{6}{*}{$\text{#1} \rightarrow \text{#2}$}}
\newcommand{\Atk}[1]{\multirow{2}{*}{#1}}
\newcommand{\PZ}{\phantom{0}}
\newcommand{\ptZ}{\phantom{.}\PZ}
\newcommand{\ptZZ}{\phantom{.}\PZ\PZ}

\newcommand{\CosInM}{\method{}}
\newcommand{\LayerM}{\layer{}}

\newcommand{\OneRemB}{100\ptZ}
\newcommand{\ZeroRemB}{0\ptZ}
\newcommand{\ZeroRemZZB}{0\ptZZ}

\newcommand{\PRem}[2]{#1}
\newcommand{\PRemB}[2]{\textBF{#1}}
\newcommand{\ASR}[2]{\multirow{2}{*}{#1}}
\newcommand{\PAcc}[2]{\multirow{2}{*}{#1}}
\newcommand{\PAsr}[2]{#1}

\newcommand{\PChg}[2]{{\color{ForestGreen} +#1}}
\newcommand{\NoChg}{0.0}
\newcommand{\NegChg}[2]{{\color{BrickRed} -#1}}

\newcommand{\DsSep}{\cdashline{2-7}}
\newcommand{\MethodSep}{}

\newcommand{\TargetTypeHead}[2]{\multirow{6}{*}{\shortstack{$\bigg\uparrow$  \\~\\ #1 \\ (\eqsmall{#2}) \\~\\ $\bigg\downarrow$ }}}

\begin{tabular}{cllrrrr}
  \toprule
  \multirow{2}{*}{\shortstack{Test \\ Class}}
     & \TwoRowHead{Attack} & \TwoRowHead{Method} & \MultiHead{\% Removed} & \MultiHead{Te.\ Acc. \%} \\\cmidrule(lr){4-5}\cmidrule(lr){6-7}
     &   &        & $\advTrain$         & $\cleanTrain$                &  Orig. & Chg.   \\
  \midrule
  \TargetTypeHead{\colorAuto}{\colorYTarg}
     & \Atk{1~Pixel}
     & \CosInM    & \PRem{47.2}{14.8}   & \PRemB{\ZeroRemZZB}{\ZeroRemB}  & \PAcc{98.8}{0.1}  & \NoChg \\
     && \LayerM   & \PRemB{48.2}{23.0}  & \PRemB{\ZeroRemZZB}{\ZeroRemB}  &                   & \NoChg \\
  \DsSep
     & \Atk{4~Pixel}
     & \CosInM    & \PRem{67.6}{21.2}   & \PRemB{\ZeroRemZZB}{\ZeroRemB}  & \PAcc{98.9}{0.1}  & \NoChg \\
     && \LayerM   & \PRemB{69.1}{29.6}  & \PRem{0.00}{0.00}               &                   & \NoChg \\
  \DsSep
     & \Atk{Blend}
     & \CosInM    & \PRem{67.9}{23.1}   & \PRem{0.00}{0.00}               & \PAcc{99.0}{0.1}  & \NegChg{-0.1}{0.1}  \\
     && \LayerM   & \PRemB{69.4}{33.4}  & \PRemB{\ZeroRemZZB}{\ZeroRemB}  &                   & \NegChg{-0.1}{\ZeroVal}  \\
  \midrule
  \TargetTypeHead{\colorDog}{\colorYAdv}
     & \Atk{1~Pixel}
     & \CosInM    & \PRemB{94.0}{22.5}  & \PRem{45.77}{20.39}             & \PAcc{98.8}{0.1}  & \NegChg{-14.8}{-18.1} \\
     && \LayerM   & \PRem{89.0}{34.9}   & \PRemB{44.90}{22.19}            &                   & \NegChg{15.4}{-16.0} \\
  \DsSep
     & \Atk{4~Pixel}
     & \CosInM    & \PRem{99.3}{3.1}    & \PRem{51.25}{24.27}             & \PAcc{98.9}{0.1}  & \NegChg{-20.9}{-21.0} \\
     && \LayerM   & \PRemB{99.7}{1.0}   & \PRemB{36.97}{18.86}            &                   & \NegChg{-9.5}{-13.7} \\
  \DsSep
     & \Atk{Blend}
     & \CosInM    & \PRem{98.6}{5.2}    & \PRem{34.42}{26.43}             & \PAcc{99.0}{0.1}  & \NegChg{-11.8}{-15.6} \\
     && \LayerM   & \PRemB{99.2}{4.9}   & \PRemB{28.29}{20.02}            &                   & \NegChg{-6.2}{-9.5} \\
  \bottomrule
\end{tabular}
    }%
  }
\end{table}

\vfill%
\phantom{.}
 
\FloatBarrier
\clearpage
\newpage
\newcommand{\AblationPlotWidth}{0.48\textwidth}

\revTwo{%
\subsection{Poisoning-Rate Ablation Study}%
\label{sec:App:MoreExps:Ablation:PoisRate}
}%

\revTwo{%
This section analyzes the effect poisoning rate (i.e.,~fraction of the training set that is adversarial) has on our method's performance.
Target identification and target-driven attack mitigation rely on successfully identifying the adversarial set~\eqsmall{$\advTrain$}.
As such, the ablation study focuses on the effect of poisoning rate on adversarial-instance identification -- providing results for all four attacks in Section~\ref{sec:ExpRes:Attacks}.

Recall from Section~\ref{sec:ExpRes} that our method had the worst performance on \citepos{Zhu:2019} vision poisoning attack.
Therefore, we focus on that attack and study the effect of poisoning rate on the attack's target identification and target-driven mitigation performance.
For completeness, our ablation study also includes target identification and target-driven mitigation results for one backdoor attack.
We selected \citepos{Liu:2018} speech recognition backdoor attack since it is a different data modality and dataset than \citeauthor{Zhu:2019} -- unlike \citet{Weber:2021} which also uses CIFAR10.

Overall, our method was remarkably stable across poisoning rates in all tested cases.  Evaluation was limited to those poisoning rates that had ${{\geq}50\%}$~ASR on related experiment setups.

As detailed in Section~\ref{sec:App:ExpSetup:Hyperparams:Crafting}, \citet{SpeechDataset} provide a speech recognition dataset that comes bundled with 300~backdoor training examples, where
\citeauthor{SpeechDataset}'s adversarial trigger was white noise inserted at the beginning of the speech recording.
Our speech recognition experiments used a \textit{fixed} validation set selected u.a.r.  Table~\ref{tab:App:ExpSetup:NumSpeechBackdoor}
details the number of backdoor \textit{training} instances for each speech class pair, with the remaining instances (out of 30)
being part of said validation set.  The ablation study was limited to ${{\geq}10}$~backdoor instances to ensure the attack succeeded for all class pairs.

For \citepos{Wallace:2021} natural-language poisoning attack, adversarial sets with fewer than 10~instances did not consistently succeed and are excluded in the analysis below.

\citepos{Zhu:2019} vision poisoning attack was limited to \eqsmall{${\abs{\advTrain} \leq 400}$} since larger quantities exceeded the Nvidia K80's GPU VRAM capacity of 11.5GB.

The following three subsections visualize our method's performance across adversarial-set identification, target identification, and attack mitigation, respectively.
To improve this section's readability, tables with the full numerical adversarial-set identification and target identification results (including variance) are deferred to Section~\ref{sec:App:MoreExps:Ablation:PoisRate:Tables}.
}%

\clearpage
\newpage
\subsubsection{Adversarial-Set Identification Ablation Study}%
\label{sec:App:MoreExps:Ablation:PoisRate:AdvIdent}
\phantom{.}

\vspace{8pt}
\revTwo{%
To identify the target (Alg.~\ref{alg:TargDetect}) and mitigate the attack (Alg.~\ref{alg:Mitigation}), we must be able to identify the adversarial set,~\eqsmall{$\advTrain$}. Our approach uses (renormalized) influence estimators, which should rank the adversarial set as more influential on the target than clean instances~(\eqsmall{$\cleanTrain$}).
This section evaluates how well different influence estimators perform this ranking for a u.a.r.\ target across various poisoning rates.

Across all four attacks (Figures~\ref{fig:ExpRes:Ablation:Backdoor:Speech:CntSweep:AdvIdent}\==\ref{fig:ExpRes:Ablation:Pois:Cifar:CntSweep:AdvIdent}),
these experiments highlight \bothMethod{}'s stability identifying adversarial set~\eqsmall{$\advTrain$} across the poisoning rate spectrum -- even outperforming Section~\ref{sec:ExpRes}'s results in many cases.%
}

\vfill
\begin{figure}[h!]
  \centering
\newcommand{\legendSpacer}{\hspace*{9pt}}

\begin{tikzpicture}
  \begin{axis}[
      hide axis,  %
      xmin=0,  %
      xmax=1,
      ymin=0,
      ymax=1,
      scale only axis,width=1mm, %
      mark size=0pt,
      line width=\CdfLineWidth,
      legend cell align={left},              %
      legend style={font=\legendFontSize},
      legend columns=6,
      legend image post style={xscale=0.6},  %
    ]
    \addplot [blue] coordinates {(0,0)};
    \addlegendentry{\method\ours\legendSpacer}
    \addplot [red] coordinates {(0,0)};
    \addlegendentry{\layer\ours\legendSpacer}

    \addplot [brown!20!black, dashed] coordinates {(0,0)};
    \addlegendentry{\tracinCP{}\legendSpacer}
    \pgfplotsset{cycle list shift=-1}  %

    \addplot coordinates {(0,0)};
    \addlegendentry{\tracin{}\legendSpacer}

    \addplot [gray] coordinates {(0,0)};
    \addlegendentry{Influence Func.\legendSpacer}

    \addplot [rep pt] coordinates {(0,0)};
    \addlegendentry{Representer Pt.}
  \end{axis}
\end{tikzpicture}
 
  \vspace{12pt}
  \begin{minipage}{\AblationPlotWidth}
    \centering
\pgfplotstableread[col sep=comma] {plots/data/bd_speech_cnt_sweep_auprc.csv}\thedata%
\begin{tikzpicture}
  \begin{axis}[
    smooth,
    width={8cm},%
    height={\AdvIdentAblationHeight},%
    xmin=10,%
    xmax=30,%
    xtick distance={5},
    x tick label style={font=\plotFontSize,align=center},%
    xlabel={\plotFontSize \# Backdoors ($\abs{\advTrain}$)},
    xmajorgrids,
    ymin=0,
    ymax=1,
    ytick distance={0.20},
    minor y tick num={3},
    ymajorgrids,
    y tick label style={font=\plotFontSize,align=center},%
    ylabel={\plotFontSize AUPRC},
    mark size=0pt,
    line width=\CdfLineWidth,
    ]
    \addplot [blue] table [x index=0, y index=1] \thedata;
    \addplot [red] table [x index=0, y index=2] \thedata;

    \addplot [brown!20!black, dashed] table [x index=0, y index=3] \thedata;
    \pgfplotsset{cycle list shift=-1}  %

    \addplot table [x index=0, y index=4] \thedata;

    \addplot [gray] table [x index=0, y index=5] \thedata;

    \addplot [violet] table [x index=0, y index=6] \thedata;
  \end{axis}
\end{tikzpicture}
     \captionof{figure}{%
      \revTwo{%
      \textit{Speech Backdoor Rate Adversarial-Set Identification Ablation Study}:
      Effect of the number of backdoor instances on mean adversarial-set identification AUPRC (\eqsmall{${\abs{\cleanTrain} = 3,000}$}).
      \method{}'s performance improved slightly with fewer adversarial instances, while \layer{}'s performance increased with larger~\eqsmall{$\advTrain$}.
      Results are averaged across related experimental setups with ${{\geq}10}$~trials per setup.
      Table~\ref{tab:ExpRes:Ablation:Backdoor:Speech:CntSweep:AdvIdent} provides the full numerical results, including variance.
      }%
    }
    \label{fig:ExpRes:Ablation:Backdoor:Speech:CntSweep:AdvIdent}
  \end{minipage}
  \hfill
  \begin{minipage}{\AblationPlotWidth}
\pgfplotstableread[col sep=comma] {plots/data/bd_cifar_cnt_sweep.csv}\thedata%
\begin{tikzpicture}
  \begin{axis}[
    smooth,
    width={8cm},%
    height={\AdvIdentAblationHeight},%
    xmin=1,%
    xmax=5,%
    xtick distance={1},
    minor x tick num={1},
    x tick label style={font=\plotFontSize,align=center},%
    xlabel={\plotFontSize \% Backdoors},
    xmajorgrids,
    ymin=0,
    ymax=1,
    ytick distance={0.20},
    minor y tick num={3},
    ymajorgrids,
    y tick label style={font=\plotFontSize,align=center},%
    ylabel={\plotFontSize AUPRC},
    mark size=0pt,
    line width=\CdfLineWidth,
    ]
    \addplot [blue] table [x index=0, y index=1] \thedata;
    \addplot [red] table [x index=0, y index=2] \thedata;

    \addplot [brown!20!black, dashed] table [x index=0, y index=3] \thedata;
    \pgfplotsset{cycle list shift=-1}  %

    \addplot table [x index=0, y index=4] \thedata;

    \addplot [gray] table [x index=0, y index=5] \thedata;

    \addplot [violet] table [x index=0, y index=6] \thedata;
  \end{axis}
\end{tikzpicture}
     \caption{%
      \revTwo{%
      \textit{Vision Backdoor Rate Adversarial-Set Identification Ablation Study}:
      Effect of the fraction of the training set that is backdoors on mean adversarial-set identification AUPRC (\eqsmall{${\abs{\fullTrain} = 10,000}$}).
      Only attacks with a minimum success rate of 50\% were considered.
      Results are averaged across related experimental setups with ${{\geq}10}$~trials per setup.
      Table~\ref{tab:ExpRes:Ablation:Backdoor:CIFAR:CntSweep:AdvIdent} provides the full numerical results, including variance.\\~
    }
    }
    \label{fig:ExpRes:Ablation:Backdoor:CIFAR:CntSweep:AdvIdent}
  \end{minipage}

  \vspace{1.0cm}
  \begin{minipage}{\AblationPlotWidth}
    \centering
\pgfplotstableread[col sep=comma] {plots/data/pois_nlp_cnt_sweep.csv}\thedata%
\begin{tikzpicture}
  \begin{axis}[
    smooth,
    width={8cm},%
    height={\AdvIdentAblationHeight},%
    xmin=10,%
    xmax=50,%
    xtick distance={10},
    minor x tick num={1},
    x tick label style={font=\plotFontSize,align=center},%
    xlabel={\plotFontSize \# Poison},
    xmajorgrids,
    ymin=0,
    ymax=1,
    ytick distance={0.20},
    minor y tick num={3},
    ymajorgrids,
    y tick label style={font=\plotFontSize,align=center},%
    ylabel={\plotFontSize AUPRC},
    mark size=0pt,
    line width=\CdfLineWidth,
    ]
    \addplot [blue, line width=1.5pt] table [x index=0, y index=1] \thedata;
    \addplot [red, line width=1.5pt] table [x index=0, y index=2] \thedata;

    \addplot [brown!20!black, dashed] table [x index=0, y index=3] \thedata;
    \pgfplotsset{cycle list shift=-1}  %

    \addplot table [x index=0, y index=4] \thedata;

    \addplot [gray] table [x index=0, y index=5] \thedata;

    \addplot [violet] table [x index=0, y index=6] \thedata;
  \end{axis}
\end{tikzpicture}
     \caption{%
      \revTwo{%
      \textit{Natural-Language Poisoning Rate Adversarial-Set Identification Ablation Study}:
      Effect of the number of adversarial instances on mean adversarial-set identification AUPRC for \citepos{Wallace:2021} natural-language poisoning attack on \SST{} (\eqsmall{${\abs{\cleanTrain} = 67,349}$}).
      While \tracinCP{}'s performance changes significantly across the entire poisoning rate range, \method{} and \layer{}'s performance is essentially perfect.
      Results are averaged across related experimental setups with ${{\geq}5}$~trials per setup for each of the first four reviews.
      Table~\ref{tab:ExpRes:Ablation:Pois:NLP:CntSweep:AdvIdent} provides the full numerical results, including variance.
    }
    }
    \label{fig:ExpRes:Ablation:Pois:NLP:CntSweep:AdvIdent}
  \end{minipage}
  \hfill
  \begin{minipage}{\AblationPlotWidth}
\pgfplotstableread[col sep=comma] {plots/data/pois_cifar_cnt_sweep_auprc.csv}\thedata%
\begin{tikzpicture}
  \begin{axis}[
    smooth,
    width={8cm},%
    height={\AdvIdentAblationHeight},%
    xmin=40,%
    xmax=400,%
    xtick distance={50},
    minor x tick num={0},
    x tick label style={font=\plotFontSize,align=center},%
    xlabel={\plotFontSize \# Poison},
    xmajorgrids,
    ymin=0,
    ymax=1,
    ytick distance={0.20},
    minor y tick num={3},
    ymajorgrids,
    y tick label style={font=\plotFontSize,align=center},%
    ylabel={\plotFontSize AUPRC},
    mark size=0pt,
    line width=\CdfLineWidth,
    ]
    \addplot [blue] table [x index=0, y index=1] \thedata;
    \addplot [red] table [x index=0, y index=2] \thedata;

    \addplot [brown!20!black, dashed] table [x index=0, y index=3] \thedata;
    \pgfplotsset{cycle list shift=-1}  %

    \addplot table [x index=0, y index=4] \thedata;

    \addplot [gray] table [x index=0, y index=5] \thedata;

    \addplot [violet] table [x index=0, y index=6] \thedata;
  \end{axis}
\end{tikzpicture}
     \caption{%
      \revTwo{%
      \textit{Vision Poisoning Rate Adversarial-Set Identification Ablation Study}:
      Effect of the number of adversarial instances on mean adversarial-set identification AUPRC for \citepos{Zhu:2019} vision poisoning attack on CIFAR10 (\eqsmall{${\abs{\fullTrain} = 25,000}$}).
      Results are averaged across related experimental setups with ${{\geq}10}$~trials per setup.
      Only attacks that successfully changed the target's label are considered.
      The adversarial set was limited to ${{\leq}400}$~instances by the Nvidia K80's GPU VRAM capacity of 11.5GB.
      Table~\ref{tab:ExpRes:Ablation:Pois:Cifar:CntSweep:AdvIdent} provides the full numerical results, including variance.
    }
    }
    \label{fig:ExpRes:Ablation:Pois:Cifar:CntSweep:AdvIdent}
  \end{minipage}
\end{figure}
\vfill

\phantom{.}

\clearpage
\newpage
\subsubsection{Target-Identification Ablation Study}%
\label{sec:App:MoreExps:Ablation:PoisRate:TargDetect}
\phantom{.}

\vspace{8pt}
\revTwo{%
  This section considers poisoning rate's effects on target identification.
  First, for \citepos{SpeechDataset} speech backdoor attack, we consider two class pairs, one of which is ${1 \rightarrow 2}$ for which we observed the worst performance among those speech class pairs tested (see Table~\ref{tab:App:MoreExps:Backdoor:Speech:AdvIdent}).
  Note that in Section~\ref{sec:ExpRes}, ${\anomCount = 10}$ was used to determine the speech attack's upper-tail heaviness.
  However, these experiments consider the case where adversarial set~\eqsmall{$\advTrain$} has as few as~6 instances.
  As discussed in Section~\ref{sec:Method:TargetDetection}, if \eqsmall{${\abs{\advTrain} < \anomCount}$}, target identification performance degrades severely.
  As such, Figure~\ref{fig:ExpRes:Ablation:Backdoor:Speech:CntSweep:TargDetect} uses an upper-tail count of \eqsmall{${\anomCount = \min\set{10,\abs{\advTrain}}}$}.

  The other attack considered below is \citepos{Zhu:2019} vision poisoning attack on CIFAR10.  We selected
  this attack since it was the one on which we observed the worst target identification performance (see
  Figure~\ref{sec:ExpRes:TargetIdent}).  Note that as poisoning rate increased, a larger upper-tail count~($\anomCount$)
  improved target identification performance.  Below we report target identification performance on vision poisoning with
  ${\anomCount = 2}$ (like in Section~\ref{sec:ExpRes:TargetIdent}) as well as with ${\anomCount = 10}$.

  Overall, target identification was stable across all tested poisoning rates for both the backdoor and poisoning attacks.
}

\vfill
\begin{figure}[h!]
  \centering
\newcommand{\legendSpacer}{\hspace*{9pt}}

\begin{tikzpicture}
  \begin{axis}[%
    width=\textwidth,
      hide axis,  %
      xmin=0,  %
      xmax=1,
      ymin=0,
      ymax=1,
      scale only axis,width=1mm, %
      legend cell align={left},              %
      legend style={font=\legendFontSize},
      legend columns=7,
      mark size=0pt,
      line width=1pt,
      cycle list name=DetectCycleList,
    ]
    \addplot[blue] coordinates {(0,0)};
    \addlegendentry{\fitWith{\method}\ours\legendSpacer}
    \addplot[red] coordinates {(0,0)};
    \addlegendentry{\fitWith{\layer}\ours\legendSpacer}

    \addplot [Max KNN] coordinates {(0,0)};
    \addlegendentry{Max.\ \KNN{} Distance\legendSpacer}

    \addplot [Min KNN] coordinates {(0,0)};
    \addlegendentry{Min.\ \KNN{} Distance\legendSpacer}
    \addplot [Most Certain] coordinates {(0,0)};
    \addlegendentry{Most Certain\legendSpacer};

    \addplot [Least Certain] coordinates {(0,0)};
    \addlegendentry{Least Certain};
  \end{axis}
\end{tikzpicture}
 
  \vspace{6pt}
  \begin{minipage}{\AblationPlotWidth}
    \centering
\pgfplotstableread[col sep=comma] {plots/data/bd_speech_cnt_sweep_detect.csv}\thedata%
\begin{tikzpicture}
  \begin{axis}[
    smooth,
    width={8cm},%
    height={\TargIdentAblationHeight},%
    xmin=10,%
    xmax=30,%
    xtick distance={10},
    minor x tick num={1},
    x tick label style={font=\plotFontSize,align=center},%
    xlabel={\plotFontSize \# Backdoor},
    xmajorgrids,
    ymin=0,
    ymax=1,
    ytick distance={0.20},
    minor y tick num={3},
    ymajorgrids,
    y tick label style={font=\plotFontSize,align=center},%
    ylabel={\plotFontSize AUPRC},
    mark size=0pt,
    line width=\CdfLineWidth,
    ]
    \addplot [blue] table [x index=0, y index=1] \thedata;
    \addplot [red] table [x index=0, y index=2] \thedata;

    \addplot [Max KNN] table [x index=0, y index=3] \thedata;
    \addplot [Min KNN] table [x index=0, y index=4] \thedata;

    \addplot [Most Certain] table [x index=0, y index=5] \thedata;
    \addplot [Least Certain] table [x index=0, y index=6] \thedata;
  \end{axis}
\end{tikzpicture}
     \caption{%
      \revTwo{%
      \textit{Speech Backdoor Rate Target Identification Ablation Study}:
      Effect of the number of backdoor instances on mean target identification AUPRC for class pairs ${0 \rightarrow 1}$ and ${1 \rightarrow 2}$ (\eqsmall{${\abs{\cleanTrain} = 3,000}$}).
      \citepos{SpeechDataset} speech dataset includes 30~backdoor instances per class, which were downsampled uniformly at random.
      Results are averaged across related experimental setups with ${{\geq}5}$~trials per setup.
      Table~\ref{tab:ExpRes:Ablation:Backdoor:Speech:CntSweep:TargDetect} provides the full numerical results, including variance.
      }%
    }
    \label{fig:ExpRes:Ablation:Backdoor:Speech:CntSweep:TargDetect}
  \end{minipage}
  \hfill
  \begin{minipage}{\AblationPlotWidth}
    \mbox{\hfill}
  \end{minipage}

  \vspace{1.0cm}
\newcommand{\PlotCifarPoisDetectCntSweep}[1]{%
  \pgfplotstableread[col sep=comma] {plots/data/pois_cifar_cnt_sweep_detect_k=#1.csv}\thedata%
  \begin{tikzpicture}
    \begin{axis}[
      smooth,
      width={8cm},%
      height={\TargIdentAblationHeight},%
      xmin=50,%
      xmax=400,%
      xtick distance={50},
      minor x tick num={0},
      x tick label style={font=\plotFontSize,align=center},%
      xlabel={\plotFontSize \# Poison},
      xmajorgrids,
      ymin=0,
      ymax=1,
      ytick distance={0.20},
      minor y tick num={3},
      ymajorgrids,
      y tick label style={font=\plotFontSize,align=center},%
      ylabel={\plotFontSize AUPRC},
      mark size=0pt,
      line width=\CdfLineWidth,
      ]
      \addplot [blue] table [x index=0, y index=1] \thedata;
      \addplot [red] table [x index=0, y index=2] \thedata;

      \addplot [Max KNN] table [x index=0, y index=3] \thedata;
      \addplot [Min KNN] table [x index=0, y index=4] \thedata;

      \addplot [Most Certain] table [x index=0, y index=5] \thedata;
      \addplot [Least Certain] table [x index=0, y index=6] \thedata;
    \end{axis}
  \end{tikzpicture}
}
   \begin{minipage}{\AblationPlotWidth}
    \begin{subfigure}{\textwidth}
      \centering
      \PlotCifarPoisDetectCntSweep{2}

      \caption{${\anomCount = 2}$}
    \end{subfigure}
  \end{minipage}
  \hfill
  \begin{minipage}{\AblationPlotWidth}
    \begin{subfigure}{\textwidth}
      \centering
      \PlotCifarPoisDetectCntSweep{10}

      \caption{${\anomCount = 10}$}
    \end{subfigure}
  \end{minipage}
  \caption{%
    \revTwo{%
    \textit{Vision Poisoning Rate Target Identification Ablation Study}:
    Effect of the number of adversarial instances on mean target identification AUPRC for \citepos{Zhu:2019} vision poisoning attack on CIFAR10 (\eqsmall{${\abs{\fullTrain} = 25,000}$}).
    Results are averaged across all four class pairs with ${{\geq}10}$~trials per setup.
    Table~\ref{tab:ExpRes:Ablation:Pois:Cifar:CntSweep:TargDetect} provides the full numerical results, including variance.
    }
  }
  \label{fig:ExpRes:Ablation:Pois:Cifar:CntSweep:TargDetect}
\end{figure}

\vfill
\phantom{.}

\clearpage
\newpage
\subsubsection{Target-Driven Attack Mitigation Ablation Study}%
\label{sec:App:MoreExps:Ablation:PoisRate:Mitigation}
\phantom{.}

\vspace{8pt}
\revTwo{%
This section considers poisoning rate's effects on target-driven attack mitigation -- specifically for the
speech backdoor and vision poisoning attacks.  In both cases, the fraction of clean data removed was
slightly larger when the adversarial set was small.

For \citepos{Zhu:2019} vision poisoning attack, the average fraction of clean data removed was ${{\leq}0.26\%}$ across the poisoning rate range -- a very small fraction.
Similarly, the test accuracy post-sanitization never lagged the baseline test accuracy by more than 0.2\%;
see Table~\ref{tab:ExpRes:Ablation:Pois:Cifar:CntSweep:TargDetect} for the baseline results.
}

\phantom{.}
\vfill

\begin{table}[h!]
  \centering
  \caption{%
    \revTwo{%
    \textit{Speech Backdoor Rate Target-Driven Attack Mitigation Ablation Study}:
    Algorithm~\ref{alg:Mitigation}'s target-driven, iterative data sanitization applied to
    \citepos{SpeechDataset} backdoored speech recognition dataset for
    class pairs ${0 \rightarrow 1}$ and ${1 \rightarrow 2}$ across different backdoor quantities (\eqsmall{${\abs{\cleanTrain} = 3,000}$}).
    The attacks were neutralized with few clean instances removed and little change in test accuracy.
    Attack success rate~(ASR) is w.r.t.\ specifically the analyzed target.
    Bold denotes the best mean performance with 10~trials per class pair.
    }%
  }%
  \label{tab:ExpRes:Ablation:Backdoor:Speech:CntSweep:Mitigation}
  {
    \TableFontSize%
    \revTwo{%
\renewcommand{\arraystretch}{1.2}
\setlength{\dashlinedash}{0.4pt}
\setlength{\dashlinegap}{1.5pt}
\setlength{\arrayrulewidth}{0.3pt}

\newcommand{\MultiHead}[1]{\multicolumn{2}{c}{#1}}

\newcommand{\TwoRowHead}[1]{\multirow{2}{*}{#1}}
\newcommand{\ClassPair}[2]{\multirow{2}{*}{#1} & \multirow{2}{*}{#2}}
\newcommand{\PZ}{\phantom{0}}
\newcommand{\ptZ}{\phantom{.}\PZ}
\newcommand{\ptZZ}{\phantom{.}\PZ\PZ}
\newcommand{\ptASR}{}

\newcommand{\CosInM}{\method{}}
\newcommand{\LayerM}{\layer{}}

\newcommand{\OneRemB}{100\ptZ}
\newcommand{\ZeroRemB}{0\ptZ}

\newcommand{\OneASR}{100\ptZ}

\newcommand{\PRem}[2]{#1}
\newcommand{\PRemB}[2]{\textBF{#1}}
\newcommand{\ASR}[2]{\multirow{2}{*}{#1}}
\newcommand{\PAcc}[2]{\multirow{2}{*}{#1}}
\newcommand{\PAsr}[2]{#1}

\newcommand{\PChg}[2]{{\color{ForestGreen} +#1}}
\newcommand{\NoChg}{0.0}
\newcommand{\NegChg}[2]{{\color{BrickRed} -#1}}

\newcommand{\DsSep}{\cdashline{2-10}}
\newcommand{\HoldoutSep}{\cmidrule{1-10}}
\newcommand{\MethodSep}{}

\newcommand{\NumBackdoor}[1]{\multirow{4}{*}{#1}}

\begin{tabular}{ccclrrrrrr}
  \toprule
  \TwoRowHead{\#~BD}
     & \MultiHead{Digits}
     & \TwoRowHead{Method} & \MultiHead{\% Removed}      & \MultiHead{ASR \%} & \MultiHead{Test Acc. \%} \\\cmidrule(lr){2-3}\cmidrule(lr){5-6}\cmidrule(lr){7-8}\cmidrule(lr){9-10}
     & $\yTarg$ & $\yAdv$
     &                    & $\advTrain$  & $\cleanTrain$ & Orig.   & Ours     &  Orig. & Chg.   \\
  \midrule
  \NumBackdoor{10}
  & \ClassPair{0}{1}
     & \CosInM     & \PRemB{\OneRemB}{\ZeroRemB} & \PRem{0.17}{0.11}   & \PAcc{97.2}{3.2}    & \PAsr{0}{0}   & \PAcc{97.7}{0.1}  & \NoChg \\
     &&& \LayerM   & \PRemB{\OneRemB}{\ZeroRemB} & \PRemB{0.08}{0.05}  &                     & \PAsr{0}{0}   &                   & \NoChg \\
  \DsSep
  & \ClassPair{1}{2}
     &  \CosInM    & \PRemB{99.0}{2.1}           & \PRemB{0.03}{0.05}  & \PAcc{\OneASR}{0.0} & \PAsr{0}{0}   & \PAcc{97.8}{0.2}  & \NoChg \\
     &&& \LayerM   & \PRem{98.7}{2.8}            & \PRem{0.07}{0.06}   &                     & \PAsr{0}{0}   &                   & \NoChg \\
  \HoldoutSep
  \NumBackdoor{15}
  & \ClassPair{0}{1}
     & \CosInM     & \PRemB{\OneRemB}{\ZeroRemB} & \PRem{0.12}{0.13}   & \PAcc{99.8}{0.4}    & \PAsr{0}{0}   & \PAcc{97.8}{0.1}  & \NegChg{-0.1}{0.0} \\
     &&& \LayerM   & \PRemB{\OneRemB}{\ZeroRemB} & \PRemB{0.08}{0.07}  &                     & \PAsr{0}{0}   &                   & \NegChg{-0.1}{0.1} \\
  \DsSep
  & \ClassPair{1}{2}
     &  \CosInM    & \PRemB{99.4}{1.1}           & \PRemB{0.04}{0.03}  & \PAcc{\OneASR}{0.0} & \PAsr{0}{0}   & \PAcc{97.8}{0.1}  & \NegChg{-0.1}{0.0} \\
     &&& \LayerM   & \PRem{98.8}{1.4}            & \PRem{0.07}{0.07}   &                     & \PAsr{0}{0}   &                   & \NoChg \\
  \HoldoutSep
  \NumBackdoor{20}
  & \ClassPair{0}{1}
     & \CosInM     & \PRemB{\OneRemB}{\ZeroRemB} & \PRem{0.04}{0.04}   & \PAcc{\OneASR}{0.0} & \PAsr{0}{0}   & \PAcc{97.7}{0.1}  & \NoChg \\
     &&& \LayerM   & \PRemB{\OneRemB}{\ZeroRemB} & \PRemB{0.02}{0.03}  &                     & \PAsr{0}{0}   &                   & \NegChg{-0.1}{0.0} \\
  \DsSep
  & \ClassPair{1}{2}
     &  \CosInM    & \PRem{99.3}{0.7}            & \PRemB{0.00}{0.01}  & \PAcc{\OneASR}{0.0} & \PAsr{0}{0}   & \PAcc{97.7}{0.1}  & \NoChg \\
     &&& \LayerM   & \PRemB{\OneRemB}{\ZeroRemB} & \PRem{0.04}{0.01}   &                     & \PAsr{0}{0}   &                   & \NegChg{-0.1}{-0.1} \\
  \HoldoutSep
  \NumBackdoor{25}
  & \ClassPair{0}{1}
     & \CosInM     & \PRemB{\OneRemB}{\ZeroRemB} & \PRem{0.07}{0.07}   & \PAcc{\OneASR}{0.0} & \PAsr{0}{0}   & \PAcc{97.8}{0.2}  & \NoChg \\
     &&& \LayerM   & \PRemB{\OneRemB}{\ZeroRemB} & \PRemB{0.02}{0.04}  &                     & \PAsr{0}{0}   &                   & \NoChg \\
  \DsSep
  & \ClassPair{1}{2}
     &  \CosInM    & \PRemB{99.7}{0.6}           & \PRemB{0.00}{0.01}  & \PAcc{\OneASR}{0.0} & \PAsr{0}{0}   & \PAcc{97.8}{0.1}  & \NoChg \\
     &&& \LayerM   & \PRem{99.3}{1.2}            & \PRem{0.13}{0.16}   &                     & \PAsr{0}{0}   &                   & \NegChg{-0.1}{0.0} \\
  \HoldoutSep
  \NumBackdoor{30}
  & \ClassPair{0}{1}
     & \CosInM     & \PRemB{\OneRemB}{\ZeroRemB} & \PRem{0.06}{0.05}   & \PAcc{\OneASR}{0.0} & \PAsr{0}{0}   & \PAcc{97.7}{0.1}  & \NoChg \\
     &&& \LayerM   & \PRemB{\OneRemB}{\ZeroRemB} & \PRemB{0.03}{0.03}  &                     & \PAsr{0}{0}   &                   & \NoChg \\
  \DsSep
  & \ClassPair{1}{2}
     &  \CosInM    & \PRemB{100.0}{0.1}          & \PRemB{0.02}{0.02}  & \PAcc{\OneASR}{0.0} & \PAsr{0}{0}   & \PAcc{97.7}{0.1}  & \NoChg \\
     &&& \LayerM   & \PRem{99.8}{0.4}            & \PRem{0.09}{0.07}   &                     & \PAsr{0}{0}   &                   & \NoChg \\
  \bottomrule
\end{tabular}
     }%
  }
\end{table}

\vfill
\phantom{.}

\clearpage
\newpage

\phantom{.}
\vfill

\begin{table}[h!]
  \centering
  \caption{%
    \revTwo{%
    \textit{Vision Poisoning Rate Target-Driven Attack Mitigation Ablation Study}:
    Algorithm~\ref{alg:Mitigation}'s target-driven, iterative data sanitization applied to
    \citepos{Zhu:2019} vision poisoning attack on CIFAR10 for
    class pairs ${\texttt{bird} \rightarrow \texttt{dog}}$ and ${\texttt{dog} \rightarrow \texttt{bird}}$
    (\eqsmall{${\abs{\fullTrain} = 25,000}$}).
    The attacks were neutralized with few clean instances removed and little change in test accuracy.
    Attack success rate~(ASR) is w.r.t.\ specifically the analyzed target.
    Bold denotes the best mean performance with ${{\geq}5}$~trials per class pair.
    }%
  }%
  \label{tab:ExpRes:Ablation:Pois:Vision:CntSweep:Mitigation}
  {
    \TableFontSize%
    \revTwo{%
\renewcommand{\arraystretch}{1.2}
\setlength{\dashlinedash}{0.4pt}
\setlength{\dashlinegap}{1.5pt}
\setlength{\arrayrulewidth}{0.3pt}

\newcommand{\MultiHead}[1]{\multicolumn{2}{c}{#1}}

\newcommand{\TwoRowHead}[1]{\multirow{2}{*}{#1}}
\newcommand{\NPois}[1]{\multirow{4}{*}{#1}}
\newcommand{\ClassPair}[2]{\multirow{2}{*}{#1} & \multirow{2}{*}{#2}}
\newcommand{\PZ}{\phantom{0}}
\newcommand{\ptZ}{\phantom{.}\PZ}
\newcommand{\ptZZ}{\phantom{.}\PZ\PZ}
\newcommand{\ptASR}{}

\newcommand{\CosInM}{\method{}}
\newcommand{\LayerM}{\layer{}}

\newcommand{\OneRemB}{100\ptZ}
\newcommand{\ZeroRemB}{0\ptZ}

\newcommand{\PRem}[2]{#1}
\newcommand{\PRemB}[2]{\textBF{#1}}
\newcommand{\ASR}[2]{\multirow{2}{*}{#1}}
\newcommand{\PAcc}[2]{\multirow{2}{*}{#1}}
\newcommand{\PAsr}[2]{#1}

\newcommand{\PChg}[2]{{\color{ForestGreen} +#1}}
\newcommand{\NoChg}{0.0}
\newcommand{\NegChg}[2]{{\color{BrickRed} -#1}}

\newcommand{\DsSep}{\cdashline{2-10}}
\newcommand{\MethodSep}{}

\begin{tabular}{lcclrrrrrr}
  \toprule
     \TwoRowHead{\#~Pois.} & \MultiHead{Classes}
         & \TwoRowHead{Method} & \MultiHead{\% Removed}      & \MultiHead{ASR \%} & \MultiHead{Test Acc. \%} \\\cmidrule(lr){2-3}\cmidrule(lr){5-6}\cmidrule(lr){7-8}\cmidrule(lr){9-10}
       & $\yTarg$ & $\yAdv$
         &                    & $\advTrain$  & $\cleanTrain$ & Orig.   & Ours     &  Orig. & Chg.   \\
  \midrule
      \NPois{50}  &   \ClassPair{Bird}{Dog}
                  &   \CosInM &   \PRemB{38.0}{19.9}  &   \PRem{0.08}{0.15}   &   \PAcc{93.8}{}   &   \PAsr{0}{0} &   \PAcc{87.0}{0.3}    &   \PChg{0.2}{0.0} \\
          &       &&  \LayerM &   \PRem{26.4}{21.5}   &   \PRemB{0.07}{0.11}  &                   &   \PAsr{0}{0} &                       &   \PChg{0.4}{0.0} \\\DsSep
          &   \ClassPair{Dog}{Bird}
                  &   \CosInM &   \PRemB{43.2}{19.0}  &   \PRemB{0.00}{0.01}  &   \PAcc{68.8}{}   &   \PAsr{0}{0} &   \PAcc{87.1}{0.3}    &   \NoChg  \\
          &       &&  \LayerM &   \PRem{31.2}{24.1}   &   \PRemB{0.00}{0.01}  &                   &   \PAsr{0}{0} &                       &   \NoChg  \\\midrule
      \NPois{150} &   \ClassPair{Bird}{Dog}
                  &   \CosInM &   \PRem{51.6}{35.5}   &   \PRemB{0.07}{0.13}  &   \PAcc{75.0}{}   &   \PAsr{0}{0} &   \PAcc{86.9}{0.4}    &   \PChg{0.1}{0.1} \\
          &       &&  \LayerM &   \PRemB{55.3}{26.8}  &   \PRem{0.13}{0.18}   &                   &   \PAsr{0}{0} &                       &   \PChg{0.1}{0.2} \\\DsSep
          &   \ClassPair{Dog}{Bird}
                  &   \CosInM &   \PRemB{45.8}{39.6}  &   \PRem{0.03}{0.05}   &   \PAcc{62.5}{}   &   \PAsr{0}{0} &   \PAcc{87.1}{0.5}    &   \PChg{0.1}{0.1} \\
          &       &&  \LayerM &   \PRem{37.0}{30.5}   &   \PRemB{0.00}{0.00}  &                   &   \PAsr{0}{0} &                       &   \NegChg{-0.1}{0.0}  \\\midrule
      \NPois{200} &   \ClassPair{Bird}{Dog}
                  &   \CosInM &   \PRem{67.7}{23.7}   &   \PRemB{0.07}{0.14}  &   \PAcc{68.8}{}   &   \PAsr{0}{0} &   \PAcc{87.0}{0.2}    &   \NoChg  \\
          &       &&  \LayerM &   \PRemB{72.1}{21.3}  &   \PRem{0.11}{0.22}   &                   &   \PAsr{0}{0} &                       &   \NoChg  \\\DsSep
          &   \ClassPair{Dog}{Bird}
                  &   \CosInM &   \PRemB{50.1}{35.0}  &   \PRem{0.02}{0.05}   &   \PAcc{62.5}{}   &   \PAsr{0}{0} &   \PAcc{87.3}{0.3}    &   \NegChg{-0.2}{0.1}  \\
          &       &&  \LayerM &   \PRem{37.2}{33.6}   &   \PRemB{0.00}{0.00}  &                   &   \PAsr{0}{0} &                       &   \NegChg{-0.2}{-0.1} \\\midrule
      \NPois{250} &   \ClassPair{Bird}{Dog}
                  &   \CosInM &   \PRemB{60.3}{35.0}  &   \PRem{0.08}{0.12}   &   \PAcc{87.5}{}   &   \PAsr{0}{0} &   \PAcc{87.0}{0.1}    &   \NegChg{-0.1}{-0.2} \\
          &       &&  \LayerM &   \PRem{46.6}{40.0}   &   \PRemB{0.02}{0.02}  &                   &   \PAsr{0}{0} &                       &   \NegChg{-0.2}{-0.3} \\\DsSep
          &   \ClassPair{Dog}{Bird}
                  &   \CosInM &   \PRemB{28.8}{28.3}  &   \PRemB{0.00}{0.01}  &   \PAcc{81.3}{}   &   \PAsr{0}{0} &   \PAcc{87.2}{0.3}    &   \NegChg{-0.2}{0.1}  \\
          &       &&  \LayerM &   \PRem{26.4}{33.1}   &   \PRemB{0.00}{0.01}  &                   &   \PAsr{0}{0} &                       &   \NoChg  \\\midrule
      \NPois{400} &   \ClassPair{Bird}{Dog}
                  &   \CosInM &   \PRemB{68.1}{27.7}  &   \PRemB{0.18}{0.26}  &   \PAcc{87.5}{}   &   \PAsr{0}{0} &   \PAcc{86.9}{0.3}    &   \PChg{0.1}{0.0} \\
          &       &&  \LayerM &   \PRem{63.1}{28.3}   &   \PRem{0.26}{0.42}   &                   &   \PAsr{0}{0} &                       &   \NoChg  \\\DsSep
          &   \ClassPair{Dog}{Bird}
                  &   \CosInM &   \PRemB{64.3}{34.9}  &   \PRem{0.06}{0.10}   &   \PAcc{43.8}{}   &   \PAsr{0}{0} &   \PAcc{87.1}{0.3}    &   \NoChg  \\
          &       &&  \LayerM &   \PRem{42.5}{45.0}   &   \PRemB{0.04}{0.06}  &                   &   \PAsr{0}{0} &                       &   \PChg{0.3}{0.0} \\
  \bottomrule
\end{tabular}
     }%
  }
\end{table}

\vfill
\phantom{.}

\clearpage
\newpage
\subsubsection{Ablation Study Reference Tables}%
\label{sec:App:MoreExps:Ablation:PoisRate:Tables}
\phantom{.}

\vspace{8pt}
\revTwo{%
As explained above, the ablation study's result tables were deferred until this section to improve overall readability.
}

\vfill
\begin{table}[h!]
  \centering
  \caption{%
  \revTwo{%
    \textit{Speech Backdoor Rate Adversarial-Set Identification Ablation Study}:
    Mean and standard deviation adversarial-instance identification AUPRC for \citepos{SpeechDataset} speech dataset, which contains 30~backdoor instances per class pair and 3,000~total clean instances.
    Bold denotes the best mean performance with ${{\geq}10}$~trials per setup.
    The backdoor count includes examples held-out as part of the fixed validation set (see Table~\ref{tab:App:ExpSetup:NumSpeechBackdoor}), with results
    limited to ${{\geq}10}$~backdoor instances to ensure the attack succeeds consistently for all class pairs.
    These experiments highlight the stability of \method{} and \layer{} at identifying adversarial set~\eqsmall{$\advTrain$} even at small poisoning rates.
    Results are shown graphically in Figure~\ref{fig:ExpRes:Ablation:Backdoor:Speech:CntSweep:AdvIdent}.
  }%
  }%
  \label{tab:ExpRes:Ablation:Backdoor:Speech:CntSweep:AdvIdent}
  {
    \TableFontSize%
    \revTwo{%
\renewcommand{\arraystretch}{1.2}
\setlength{\dashlinedash}{0.4pt}
\setlength{\dashlinegap}{1.5pt}
\setlength{\arrayrulewidth}{0.3pt}

\newcommand{\RevType}[1]{\multirow{4}{*}{\shortstack{{\Large $\uparrow$} \\ #1 \\ {\Large $\downarrow$}}}}
\newcommand{\RevNum}[1]{#1}

\newcommand{\TabMidRule}{\cdashline{2-7}}

\newcommand{\AlgName}[1]{\multicolumn{1}{c}{#1}}

\begin{tabular}{crrrrrr}
  \toprule
  \multirow{2}{*}{\#~Backdoors} & \multicolumn{2}{c}{Ours} & \multicolumn{4}{c}{Baselines} \\\cmidrule(lr){2-3}\cmidrule(l){4-7}
                                & \AlgName{\method{}}  & \AlgName{\layer}  & \AlgName{\tracinCP{}} & \AlgName{\tracin{}}  & \AlgName{Influence Func.}  &  {Representer Pt.}  \\
  \midrule
  10  & \PValB{0.968}{0.059}  & \PVal{0.922}{0.090}  & \PVal{0.769}{0.117}  & \PVal{0.701}{0.136}  & \PVal{0.569}{0.150}  & \PVal{0.140}{0.082}  \\\TabMidRule
  12  & \PValB{0.974}{0.047}  & \PVal{0.939}{0.081}  & \PVal{0.772}{0.142}  & \PVal{0.674}{0.154}  & \PVal{0.507}{0.170}  & \PVal{0.116}{0.069}  \\\TabMidRule
  15  & \PValB{0.986}{0.026}  & \PVal{0.954}{0.055}  & \PVal{0.768}{0.175}  & \PVal{0.657}{0.199}  & \PVal{0.547}{0.149}  & \PVal{0.097}{0.038}  \\\TabMidRule
  17  & \PValB{0.985}{0.021}  & \PVal{0.960}{0.039}  & \PVal{0.694}{0.224}  & \PVal{0.568}{0.208}  & \PVal{0.522}{0.167}  & \PVal{0.100}{0.074}  \\\TabMidRule
  20  & \PValB{0.980}{0.038}  & \PVal{0.957}{0.059}  & \PVal{0.685}{0.186}  & \PVal{0.545}{0.162}  & \PVal{0.532}{0.168}  & \PVal{0.099}{0.087}  \\\TabMidRule
  22  & \PValB{0.986}{0.022}  & \PVal{0.956}{0.044}  & \PVal{0.701}{0.212}  & \PVal{0.532}{0.191}  & \PVal{0.623}{0.129}  & \PVal{0.078}{0.086}  \\\TabMidRule
  25  & \PValB{0.990}{0.019}  & \PVal{0.964}{0.044}  & \PVal{0.682}{0.224}  & \PVal{0.506}{0.178}  & \PVal{0.642}{0.135}  & \PVal{0.090}{0.063}  \\\TabMidRule
  27  & \PValB{0.988}{0.025}  & \PVal{0.972}{0.040}  & \PVal{0.677}{0.185}  & \PVal{0.487}{0.155}  & \PVal{0.693}{0.125}  & \PVal{0.082}{0.072}  \\\TabMidRule
  30  & \PValB{0.977}{0.037}  & \PVal{0.974}{0.033}  & \PVal{0.643}{0.211}  & \PVal{0.467}{0.149}  & \PVal{0.700}{0.135}  & \PVal{0.087}{0.030}  \\
  \bottomrule
\end{tabular}
     }%
  }
\end{table}

\vfill

\begin{table}[h!]
  \centering
  \caption{%
  \revTwo{%
    \textit{Vision Backdoor Rate Adversarial-Set Identification Ablation Study}:
    Mean and standard deviation adversarial-instance identification AUPRC for \citepos{Weber:2021} vision backdoor attack with ${{\geq}7}$~trials per experimental setup (${\abs{\fullTrain} = 10,000}$).
    Bold denotes the best mean performance.
    These experiments highlight the stability of \method{} and \layer{} at identifying adversarial set~\eqsmall{$\advTrain$}.
    Attacks were limited to ${{\geq}100}$~backdoor instances to ensure ${{\geq}50}$\% attack success rate.
    Results are shown graphically in Figure~\ref{fig:ExpRes:Ablation:Backdoor:CIFAR:CntSweep:AdvIdent}.
  }%
  }%
  \label{tab:ExpRes:Ablation:Backdoor:CIFAR:CntSweep:AdvIdent}
  {
    \TableFontSize%
    \revTwo{%
\renewcommand{\arraystretch}{1.2}
\setlength{\dashlinedash}{0.4pt}
\setlength{\dashlinegap}{1.5pt}
\setlength{\arrayrulewidth}{0.3pt}

\newcommand{\RevType}[1]{\multirow{4}{*}{\shortstack{{\Large $\uparrow$} \\ #1 \\ {\Large $\downarrow$}}}}
\newcommand{\RevNum}[1]{#1}

\newcommand{\TabMidRule}{\cdashline{2-7}}

\newcommand{\AlgName}[1]{\multicolumn{1}{c}{#1}}

\begin{tabular}{crrrrrr}
  \toprule
  \multirow{2}{*}{\%~Backdoor} & \multicolumn{2}{c}{Ours} & \multicolumn{4}{c}{Baselines} \\\cmidrule(lr){2-3}\cmidrule(l){4-7}
                               & \AlgName{\method{}}  & \AlgName{\layer}     & \AlgName{\tracinCP{}} & \AlgName{\tracin{}}  & \AlgName{Influence Func.}  &  {Representer Pt.}  \\
  \midrule
  1      & \PVal{0.848}{0.208} & \PValB{0.866}{0.189} & \PVal{0.517}{0.229}  & \PVal{0.288}{0.137}   & \PVal{0.067}{0.041}  & \PVal{0.027}{0.011}  \\\TabMidRule
  1.5    & \PVal{0.915}{0.106} & \PValB{0.946}{0.076} & \PVal{0.519}{0.220}  & \PVal{0.282}{0.123}   & \PVal{0.091}{0.051}  & \PVal{0.027}{0.005}  \\\TabMidRule
  2.5    & \PVal{0.948}{0.066} & \PValB{0.973}{0.034} & \PVal{0.479}{0.198}  & \PVal{0.280}{0.118}   & \PVal{0.156}{0.115}  & \PVal{0.040}{0.009}  \\\TabMidRule
  5      & \PVal{0.984}{0.022} & \PValB{0.992}{0.012} & \PVal{0.540}{0.140}  & \PVal{0.322}{0.087}   & \PVal{0.238}{0.104}  & \PVal{0.071}{0.015}  \\
  \bottomrule
\end{tabular}
     }%
  }
\end{table}

\vfill
\phantom{.}

\clearpage
\newpage

\phantom{.}
\vfill

\begin{table}[h!]
  \centering
  \caption{%
  \revTwo{%
    \textit{Natural-Language Poisoning Rate Adversarial-Set Identification Ablation Study}:
    Mean and standard deviation adversarial-instance identification AUPRC for \citepos{Wallace:2021} natural-language poisoning attack on \SST{} for different poisoning rates (\eqsmall{${\abs{\cleanTrain} = 67,349}$}).
    Results are averaged across related experimental setups with ${{\geq}5}$~trials per setup for each of the first four reviews with the best mean performance in bold.
    Only attacks that successfully changed the target's label are considered.
    For attacks with adversarial sets smaller than~10, the target's label did not consistently change and are excluded.
    Results are shown graphically in Figure~\ref{fig:ExpRes:Ablation:Pois:NLP:CntSweep:AdvIdent}.
  }%
  }%
  \label{tab:ExpRes:Ablation:Pois:NLP:CntSweep:AdvIdent}
  {
    \TableFontSize%
    \revTwo{%
\renewcommand{\arraystretch}{1.2}
\setlength{\dashlinedash}{0.4pt}
\setlength{\dashlinegap}{1.5pt}
\setlength{\arrayrulewidth}{0.3pt}

\newcommand{\RevType}[1]{\multirow{4}{*}{\shortstack{{\Large $\uparrow$} \\ #1 \\ {\Large $\downarrow$}}}}
\newcommand{\RevNum}[1]{#1}

\newcommand{\TabMidRule}{\cdashline{2-7}}

\newcommand{\AlgName}[1]{\multicolumn{1}{c}{#1}}

\begin{tabular}{crrrrrr}
  \toprule
  \multirow{2}{*}{\#~Poison} & \multicolumn{2}{c}{Ours} & \multicolumn{4}{c}{Baselines} \\\cmidrule(lr){2-3}\cmidrule(l){4-7}
                             & \AlgName{\method{}}  & \AlgName{\layer}  & \AlgName{\tracinCP{}} & \AlgName{\tracin{}}  & \AlgName{Influence Func.}  &  {Representer Pt.}  \\
  \midrule
  10 & \PValB{0.999}{0.004}       & \PValB{0.999}{0.002}       & \PVal{0.575}{0.512}       & \PVal{0.048}{0.041}       & \PVal{0.002}{0.001}       & \PVal{0.001}{0.001}       \\\TabMidRule
  25 & \PVal{\OneValB}{\ZeroValB} & \PVal{\OneValB}{\ZeroValB} & \PVal{0.407}{0.290}       & \PVal{0.127}{0.084}       & \PVal{0.019}{0.033}       & \PVal{0.001}{0.001}       \\\TabMidRule
  40 & \PValB{1.000}{0.001}       & \PValB{1.000}{0.000}       & \PVal{0.363}{0.325}       & \PVal{0.118}{0.110}       & \PVal{0.014}{0.020}       & \PVal{0.001}{0.001}       \\\TabMidRule
  45 & \PValB{1.000}{0.001}       & \PValB{1.000}{0.000}       & \PVal{0.298}{0.281}       & \PVal{0.114}{0.096}       & \PVal{0.010}{0.011}       & \PVal{0.001}{0.001}       \\\TabMidRule
  50 & \PVal{0.996}{0.023}        & \PValB{0.999}{0.002}       & \PVal{0.316}{0.259}       & \PVal{0.120}{0.098}       & \PVal{0.005}{0.002}       & \PVal{0.001}{0.001}       \\
  \bottomrule
\end{tabular}
     }%
  }
\end{table}

\vfill

\begin{table}[h!]
  \centering
  \caption{%
  \revTwo{%
    \textit{Vision Poisoning Rate Adversarial-Set Identification Ablation Study}:
    Mean and standard deviation adversarial-instance identification AUPRC for \citepos{Zhu:2019} vision poisoning attack on CIFAR10 across different poisoning rates (${\abs{\fullTrain} = 25,000}$).
    These experiments highlight the stability of \method{} and \layer{} at identifying adversarial set~\eqsmall{$\advTrain$} even at small poisoning rates.
    In contrast, \tracinCP{}'s performance changes significantly across the entire poisoning rate range.
    Results are averaged across related experimental setups with ${{\geq}10}$~trials per setup with the best mean performance in bold.
    Only attacks that successfully changed the target's label are considered.
    Results are shown graphically in Figure~\ref{fig:ExpRes:Ablation:Pois:Cifar:CntSweep:AdvIdent}.
  }%
  }%
  \label{tab:ExpRes:Ablation:Pois:Cifar:CntSweep:AdvIdent}
  {
    \TableFontSize%
    \revTwo{%
\renewcommand{\arraystretch}{1.2}
\setlength{\dashlinedash}{0.4pt}
\setlength{\dashlinegap}{1.5pt}
\setlength{\arrayrulewidth}{0.3pt}

\newcommand{\RevType}[1]{\multirow{4}{*}{\shortstack{{\Large $\uparrow$} \\ #1 \\ {\Large $\downarrow$}}}}
\newcommand{\RevNum}[1]{#1}

\newcommand{\TabMidRule}{\cdashline{1-7}}

\newcommand{\AlgName}[1]{\multicolumn{1}{c}{#1}}

\begin{tabular}{crrrrrr}
  \toprule
  \multirow{2}{*}{\#~Poison} & \multicolumn{2}{c}{Ours} & \multicolumn{4}{c}{Baselines} \\\cmidrule(lr){2-3}\cmidrule(l){4-7}
                             & \AlgName{\method{}}  & \AlgName{\layer}  & \AlgName{\tracinCP{}} & \AlgName{\tracin{}}  & \AlgName{Influence Func.}  &  {Representer Pt.}  \\
  \midrule
  40   & \PValB{0.726}{0.085}  & \PVal{0.692}{0.092}  & \PVal{0.270}{0.030}  & \PVal{0.107}{0.011}  & \PVal{0.066}{0.020}  & \PVal{0.025}{0.007}  \\\TabMidRule
  50   & \PValB{0.906}{0.054}  & \PVal{0.857}{0.102}  & \PVal{0.495}{0.063}  & \PVal{0.192}{0.023}  & \PVal{0.096}{0.036}  & \PVal{0.023}{0.006}  \\\TabMidRule
  100  & \PValB{0.835}{0.061}  & \PVal{0.805}{0.055}  & \PVal{0.275}{0.068}  & \PVal{0.122}{0.017}  & \PVal{0.067}{0.022}  & \PVal{0.031}{0.004}  \\\TabMidRule
  150  & \PValB{0.790}{0.074}  & \PVal{0.761}{0.082}  & \PVal{0.241}{0.013}  & \PVal{0.107}{0.006}  & \PVal{0.060}{0.015}  & \PVal{0.038}{0.003}  \\\TabMidRule
  200  & \PValB{0.857}{0.074}  & \PVal{0.839}{0.063}  & \PVal{0.255}{0.042}  & \PVal{0.123}{0.017}  & \PVal{0.074}{0.021}  & \PVal{0.048}{0.005}  \\\TabMidRule
  250  & \PValB{0.837}{0.046}  & \PVal{0.821}{0.050}  & \PVal{0.242}{0.043}  & \PVal{0.129}{0.016}  & \PVal{0.069}{0.015}  & \PVal{0.054}{0.004}  \\\TabMidRule
  400  & \PValB{0.869}{0.046}  & \PVal{0.866}{0.059}  & \PVal{0.238}{0.017}  & \PVal{0.139}{0.013}  & \PVal{0.090}{0.014}  & \PVal{0.080}{0.005}  \\
  \bottomrule
\end{tabular}
     }%
  }
\end{table}

\vfill
\phantom{.}

\clearpage
\newpage
\phantom{.}
\vfill

\begin{table}[h!]
  \centering
  \caption{%
    \revTwo{%
    \textit{Speech Backdoor Rate Target Identification Ablation Study}:
    Mean and standard deviation target identification AUPRC for \citepos{SpeechDataset} speech dataset across different backdoor quantities (\eqsmall{${\abs{\cleanTrain} = 3,000}$}).
    We consider class pairs ${0 \rightarrow 1}$ and ${1 \rightarrow 2}$ with ${{\geq}5}$ trials per setup.
    \citepos{SpeechDataset} speech dataset includes 30~backdoor instances per class which were downsampled uniformly at random.
    Results are averaged across related experimental setups with ${{\geq}5}$~trials per setup with the best mean performance in bold.
    Results are shown graphically in Figure~\ref{fig:ExpRes:Ablation:Backdoor:Speech:CntSweep:TargDetect}.
    }%
  }
  \label{tab:ExpRes:Ablation:Backdoor:Speech:CntSweep:TargDetect}
  {
    \TableFontSize%
    \revTwo{%
\renewcommand{\arraystretch}{1.2}
\setlength{\dashlinedash}{0.4pt}
\setlength{\dashlinegap}{1.5pt}
\setlength{\arrayrulewidth}{0.3pt}

\newcommand{\RevType}[1]{\multirow{4}{*}{\shortstack{{\Large $\uparrow$} \\ #1 \\ {\Large $\downarrow$}}}}
\newcommand{\RevNum}[1]{#1}

\newcommand{\TabMidRule}{\cdashline{2-7}}

\newcommand{\AlgName}[1]{\multicolumn{1}{c}{#1}}

\begin{tabular}{crrrrrrr}
  \toprule
  \multirow{2}{*}{\#~Backdoors} & \multicolumn{2}{c}{Ours} & \multicolumn{5}{c}{Baselines} \\\cmidrule(lr){2-3}\cmidrule(l){4-8}
      & \AlgName{\method{}}   & \AlgName{\layer}     & \AlgName{Max~\KNN{}} & \AlgName{Min~\KNN{}} & \AlgName{Most Certain} & \AlgName{Least Certain} & \AlgName{Random}   \\
  \midrule
  10 & \PValB{0.801}{0.158}  & \PVal{0.742}{0.155}  & \PVal{0.023}{0.001}  & \PVal{0.440}{0.052}  & \PVal{0.080}{0.072}  & \PVal{0.121}{0.070}  & \PVal{0.070}{0.045}  \\\TabMidRule
  15 & \PValB{0.778}{0.103}  & \PVal{0.719}{0.147}  & \PVal{0.023}{0.001}  & \PVal{0.323}{0.039}  & \PVal{0.047}{0.022}  & \PVal{0.160}{0.082}  & \PVal{0.050}{0.016}  \\\TabMidRule
  20 & \PValB{0.885}{0.150}  & \PVal{0.828}{0.151}  & \PVal{0.025}{0.002}  & \PVal{0.195}{0.089}  & \PVal{0.032}{0.008}  & \PVal{0.331}{0.207}  & \PVal{0.064}{0.034}  \\\TabMidRule
  25 & \PValB{0.873}{0.095}  & \PVal{0.800}{0.067}  & \PVal{0.033}{0.010}  & \PVal{0.243}{0.067}  & \PVal{0.031}{0.010}  & \PVal{0.265}{0.156}  & \PVal{0.069}{0.043}  \\\TabMidRule
  30 & \PValB{0.961}{0.038}  & \PVal{0.897}{0.086}  & \PVal{0.094}{0.057}  & \PVal{0.095}{0.032}  & \PVal{0.034}{0.013}  & \PVal{0.222}{0.224}  & \PVal{0.063}{0.027}  \\
  \bottomrule
\end{tabular}
     }%
  }
\end{table}

\vfill

\begin{table}[h!]
  \centering
  \caption{%
    \revTwo{%
    \textit{Vision Poisoning Rate Target Identification Ablation Study}:
    Mean and standard deviation target identification AUPRC for \citepos{Zhu:2019} vision poisoning attack on CIFAR10 across different poisoning rates (\eqsmall{${\abs{\fullTrain} = 25,000}$}).
    Results are averaged across all four class pairs with ${{\geq}10}$~trials per setup with the best mean performance in bold.
    Results are shown graphically in Figure~\ref{fig:ExpRes:Ablation:Pois:Cifar:CntSweep:TargDetect}.
    }%
  }%
  \label{tab:ExpRes:Ablation:Pois:Cifar:CntSweep:TargDetect}
  \begin{subtable}{\textwidth}
    \centering
    \caption{%
      \revTwo{%
      ${\anomCount = 2}$
      }%
    }
    {
      \TableFontSize%
      \revTwo{%
\renewcommand{\arraystretch}{1.2}
\setlength{\dashlinedash}{0.4pt}
\setlength{\dashlinegap}{1.5pt}
\setlength{\arrayrulewidth}{0.3pt}

\newcommand{\RevType}[1]{\multirow{4}{*}{\shortstack{{\Large $\uparrow$} \\ #1 \\ {\Large $\downarrow$}}}}
\newcommand{\RevNum}[1]{#1}

\newcommand{\TabMidRule}{\cdashline{1-8}}

\newcommand{\AlgName}[1]{\multicolumn{1}{c}{#1}}

\begin{tabular}{crrrrrrr}
  \toprule
  \multirow{2}{*}{\#~Poison} & \multicolumn{2}{c}{Ours} & \multicolumn{5}{c}{Baselines} \\\cmidrule(lr){2-3}\cmidrule(l){4-8}
       & \AlgName{\method{}}   & \AlgName{\layer}     & \AlgName{Max~\KNN{}} & \AlgName{Min~\KNN{}} & \AlgName{Most Certain} & \AlgName{Least Certain} & \AlgName{Random}   \\
  \midrule
  50
      & \PValB{0.758}{0.251}  & \PVal{0.402}{0.364}  & \PVal{0.400}{0.411}  & \PVal{0.016}{0.013}  & \PVal{0.054}{0.047}  & \PVal{0.014}{0.004}  & \PVal{0.039}{0.052}  \\\TabMidRule
  150
      & \PValB{0.840}{0.239}  & \PVal{0.499}{0.347}  & \PVal{0.467}{0.427}  & \PVal{0.012}{0.005}  & \PVal{0.055}{0.047}  & \PVal{0.013}{0.003}  & \PVal{0.053}{0.091}  \\\TabMidRule
  200
      & \PValB{0.721}{0.309}  & \PVal{0.463}{0.342}  & \PVal{0.474}{0.398}  & \PVal{0.011}{0.003}  & \PVal{0.055}{0.040}  & \PVal{0.013}{0.003}  & \PVal{0.046}{0.071}  \\\TabMidRule
  250
      & \PValB{0.737}{0.309}  & \PVal{0.512}{0.394}  & \PVal{0.492}{0.415}  & \PVal{0.018}{0.018}  & \PVal{0.088}{0.130}  & \PVal{0.022}{0.017}  & \PVal{0.087}{0.135}  \\\TabMidRule
  400
      & \PValB{0.717}{0.301}  & \PVal{0.418}{0.371}  & \PVal{0.460}{0.430}  & \PVal{0.016}{0.016}  & \PVal{0.085}{0.139}  & \PVal{0.017}{0.010}  & \PVal{0.069}{0.111}  \\
  \bottomrule
\end{tabular}
       }%
    }
  \end{subtable}

  \vspace{0.25in}
  \begin{subtable}{\textwidth}
    \centering
    \caption{%
      \revTwo{%
      ${\anomCount = 10}$
      }%
    }
    {
      \TableFontSize%
      \revTwo{%
\renewcommand{\arraystretch}{1.2}
\setlength{\dashlinedash}{0.4pt}
\setlength{\dashlinegap}{1.5pt}
\setlength{\arrayrulewidth}{0.3pt}

\newcommand{\RevType}[1]{\multirow{4}{*}{\shortstack{{\Large $\uparrow$} \\ #1 \\ {\Large $\downarrow$}}}}
\newcommand{\RevNum}[1]{#1}

\newcommand{\TabMidRule}{\cdashline{1-8}}

\newcommand{\AlgName}[1]{\multicolumn{1}{c}{#1}}

\begin{tabular}{crrrrrrr}
  \toprule
  \multirow{2}{*}{\#~Poison} & \multicolumn{2}{c}{Ours} & \multicolumn{5}{c}{Baselines} \\\cmidrule(lr){2-3}\cmidrule(l){4-8}
       & \AlgName{\method{}}   & \AlgName{\layer}     & \AlgName{Max~\KNN{}} & \AlgName{Min~\KNN{}} & \AlgName{Most Certain} & \AlgName{Least Certain} & \AlgName{Random}   \\
  \midrule
  50
      & \PValB{0.750}{0.310} & \PVal{0.193}{0.225}  & \PVal{0.181}{0.319}  & \PVal{0.016}{0.013}  & \PVal{0.054}{0.047}  & \PVal{0.014}{0.004}  & \PVal{0.039}{0.052}  \\\TabMidRule
  150
      & \PValB{0.866}{0.162} & \PVal{0.352}{0.298}  & \PVal{0.364}{0.418}  & \PVal{0.012}{0.005}  & \PVal{0.055}{0.047}  & \PVal{0.013}{0.003}  & \PVal{0.053}{0.091}  \\\TabMidRule
  200
      & \PValB{0.884}{0.190} & \PVal{0.427}{0.350}  & \PVal{0.415}{0.445}  & \PVal{0.011}{0.003}  & \PVal{0.055}{0.040}  & \PVal{0.013}{0.003}  & \PVal{0.046}{0.071}  \\\TabMidRule
  250
      & \PValB{0.853}{0.265} & \PVal{0.389}{0.384}  & \PVal{0.342}{0.399}  & \PVal{0.018}{0.018}  & \PVal{0.088}{0.130}  & \PVal{0.022}{0.017}  & \PVal{0.087}{0.135}  \\\TabMidRule
  400
      & \PValB{0.808}{0.309} & \PVal{0.436}{0.384}  & \PVal{0.401}{0.458}  & \PVal{0.016}{0.016}  & \PVal{0.085}{0.139}  & \PVal{0.017}{0.010}  & \PVal{0.069}{0.111}  \\
  \bottomrule
\end{tabular}
       }%
    }
  \end{subtable}
\end{table}

\vfill
\phantom{.}
 
\newpage
\clearpage
\subsection{Upper-Tail Heaviness Hyperparameter\texorpdfstring{~$\anomCount$}{} Sensitivity Analysis}\label{sec:App:MoreExps:EffectAnomCount}

Section~\ref{sec:Method:TargetDetection} defines the upper-tail heaviness of influence vector~$\infVec$ as the $\anomCount$\=/th largest anomaly score in vector~$\anomScoreVec$.
Test examples in set~$\targAnalysisSet$ are then ranked by their respective heaviness with those highest ranked more commonly targets.

Figure~\ref{fig:MoreExps:Hyperparam:Cutoff} reports \method{}'s and \layer{}'s target identification AUPRC for the vision backdoor~\citep{Weber:2021} and natural language poisoning~\citep{Wallace:2021} attacks across a range of~$\anomCount$ values.  In all cases, our performance is remarkably stable.  For example, \layer{}'s and \method{}'s natural language target identification AUPRC varied only 0.2\% and 2.1\% respectively for all tested ${\anomCount \in \sbrack{1,~25}}$. The vision backdoor target identification AUPRC varied only 1.8\% and 5.8\% respectively for ${\anomCount \in \sbrack{2,~50}}$.

Recall from Section~\ref{sec:ExpRes:Attacks} that the vision backdoor adversarial set is three times larger than that of natural language poisoning (150~vs.~50).  That is why vision backdoor target identification's performance is stable over a wider range of~$\anomCount$ values than natural language poisoning.

\vspace{1.0cm}
\begin{figure}[h]
  \centering
\newcommand{\legendSpacer}{\hspace*{6pt}}

\begin{tikzpicture}
  \begin{axis}[%
      hide axis,  %
      xmin=0,  %
      xmax=1,
      ymin=0,
      ymax=1,
      scale only axis,width=1mm, %
      line width=\CdfLineWidth,
      legend cell align={left},              %
      legend style={font=\legendFontSize},
      legend columns=3,
      legend image post style={xscale=0.4},  %
    ]
    \addplot [CosIn Line] coordinates {(0,0)};
    \addlegendentry{\method{}\legendSpacer}
    \addplot [LayIn Line] coordinates {(0,0)};
    \addlegendentry{\layer{}}
\end{axis}
\end{tikzpicture}

\newcommand{\HyperparamCutoffTrendPlot}[1]{
  \pgfplotstableread[col sep=comma] {plots/data/hyperparam-cutoff_trend_#1.csv}\thedata%
  \begin{tikzpicture}
    \begin{axis}[
        width=\columnwidth,%
        height={\AdvIdentAblationHeight},%
        xmin=0,%
        xmax=50,%
        xtick distance={10},
        minor x tick num={1},
        x tick label style={font=\plotFontSize,align=center},%
        xlabel={\plotFontSize Tail Count~($\anomCount$)},
        xmajorgrids,
        xminorgrids,
        ymin=0,
        ymax=1,
        ytick distance={0.2},
        minor y tick num={1},
        ymajorgrids,
        y tick label style={font=\plotFontSize,align=center},%
        ylabel={\plotFontSize Target AUPRC},
        mark size=0pt,
        line width=\CdfLineWidth,
      ]
      \addplot [CosIn Line] table [x index=0, y index=1] \thedata;
      \addplot [LayIn Line] table [x index=0, y index=2] \thedata;
    \end{axis}
  \end{tikzpicture}
}
  \newcommand{\subfigureWidth}{0.32\textwidth}
  \begin{subfigure}[t]{\subfigureWidth}
    \HyperparamCutoffTrendPlot{bd_cifar}
    \caption{Vision backdoor}%
    \label{fig:MoreExps:Hyperparam:Cutoff:Backdoor:CIFAR}
  \end{subfigure}
  \hspace{0.2cm}
  \begin{subfigure}[t]{\subfigureWidth}
    \HyperparamCutoffTrendPlot{pois_nlp}
    \caption{Natural language poison}%
    \label{fig:MoreExps:Hyperparam:Cutoff:Pois:NLP}
  \end{subfigure}
  \caption{%
    \textit{Upper-Tail Heaviness Hyperparameter~($\anomCount$) Sensitivity Analysis}:
    Mean target identification AUPRC across a range of tail-heaviness values~($\anomCount$) for \method{} and \layer{}. Target identification performance fluctuates very little even when $\anomCount$~changes by more than an order of magnitude.
    Results are averaged across all experimental setups/class pairs with ${{\geq}10}$~trials per setup.%
  }
  \label{fig:MoreExps:Hyperparam:Cutoff}
\end{figure}
 
\clearpage
\newpage
\subsection{Alternatives to Renormalization}\label{sec:App:MoreExps:LossOnly}

Renormalization directly addresses the low-loss penalty of \citepos{Yeh:2018:Representer} representer point estimator by normalizing by~\LossOnlyNorm{}.
For the other influence estimators (\tracin{}, \tracinCP{}, and influence functions), a slightly different approach is taken.

For influence estimator~$\influence$, Section~\ref{sec:RenormInfluence:Measures} defines that renormalized influence~$\renormInf$ replaces each gradient vector~$\gradLetter$ in~$\influence$ with unit vector~\eqsmall{$\frac{\gradLetter}{\norm{\gradLetter}}$}. This modified approach does not correct solely for the low-loss penalty.  This choice was made for a few reasons.  First, loss and gradient magnitude are generally very tightly coupled as shown in Figure~\ref{fig:RenormInfluence:CifarVsMnist:Trend}, making the two often interchangeable.  In addition, most automatic differentiation frameworks (e.g.,~\texttt{torch}) do not directly return just the loss function's gradient; rather, they provide the gradient for each parameter.  It is therefore an easier implementation to normalize by the full gradient's magnitude.  Lastly, using the full gradient vector's norm also corrects for variance in the other parts of the gradient's magnitude -- specifically w.r.t.\ the parameter values.

This section examines the change in \method{}'s performance had \LossOnlyNorm{} been used to renormalize \tracinCP{} instead of $\norm{\gradLetter}$.
Figure~\ref{fig:MoreExps:LossNormOnly} compares those two approaches against the \tracinCP{} baseline for the CIFAR10 \& MNIST joint training experiment in Section~\ref{sec:RenormInfluence:SimpleExperiment} as well as Section~\ref{sec:ExpRes:AdvIdent}'s adversarial-set identification experiments.\footnote{At the time of submission, data collection for \citepos{Zhu:2019} vision poisoning attack had not yet completed.}
As in the original experiments, performance was measured using AUPRC.

For backdoor vision and poison, performance was comparable irrespective of which of the two renormalization schemes was used.
Renormalizing \tracinCP{} using \LossOnlyNorm{} actually performed better on the CIFAR10 \& MNIST joint training experiment, achieving near-perfect (0.998) mean AUPRC.  In contrast, \method{} was the top performer for the natural language poisoning experiments.  The performance difference on that baseline is primarily due to \robertaBase{}'s very large model size and the large variance that can induce on the magnitude of vector~\eqsmall{$\ActsGradient$}.

\begin{figure}[h]
  \centering
\newcommand{\legendSpacer}{\hspace*{9pt}}
\newcommand{\LossOnly}{\=/Deriv}

\begin{tikzpicture}
  \begin{axis}[%
    width=\textwidth,
      ybar,
      hide axis,  %
      xmin=0,  %
      xmax=1,
      ymin=0,
      ymax=1,
      scale only axis,width=1mm, %
      legend cell align={left},              %
      legend style={font=\legendFontSize},
      legend columns=8,
    ]
    \addplot [cosin color] coordinates {(0,0)};
    \addlegendentry{\method\legendSpacer}

    \addplot [cosin loss deriv color] coordinates {(0,0)};
    \addlegendentry{\method\LossOnly\legendSpacer}

    \addplot [TracInCP color] coordinates {(0,0)};
    \addlegendentry{\tracinCP{}}
  \end{axis}
\end{tikzpicture}

\pgfplotstableread[col sep=comma]{plots/data/loss_norm_only.csv}\datatable%
\begin{tikzpicture}%
  \begin{axis}[%
        ybar={\BarLineWidth},%
        height={\BarLossNormOnlyHeight},%
        width={\BarLossNormOnlyWidth},%
        axis lines*=left,%
        bar width={\LossDerivBarWidth},%
        xtick=data,%
        xticklabels={Joint,Speech,Vision,NLP},%
        x tick label style={font=\plotFontSize,align=center},%
        ymin=0,%
        ymax=1,%
        ytick distance={0.20},%
        minor y tick num={1},%
        y tick label style={font=\plotFontSize},%
        ylabel={\plotFontSize \IdentYLabel},%
        ymajorgrids,  %
        typeset ticklabels with strut,  %
        every tick/.style={color=black, line width=0.4pt},%
        enlarge x limits={0.175},%
        draw group line={[index]7}{1}{Backdoor}{-5.0ex}{12pt},
        draw bottom line,
    ]%
    \addplot[cosin color] table [x index=0, y index=1] {\datatable};%

    \addplot[cosin loss deriv color] table [x index=0, y index=2] {\datatable};%

    \addplot[TracInCP color] table [x index=0, y index=3] {\datatable};%

  \end{axis}%
\end{tikzpicture}%
   \caption{%
    \textit{Alternate \method{} Renormalization Using~\LossOnlyNorm}:
    Mean \eqsmall{$\advTrain$} identification AUPRC for the CIFAR10 \& MNIST joint training and adversarial attack experiments.  For the two backdoor attacks, both renormalization schemes performed similarly.  Renormalization using only the loss function's norm performed best for the joint training experiments while \method{} was the top performer for natural language poisoning.
    Results are averaged across all experimental setups/class pairs with ${{\geq}10}$~trials per setup.%
  }
  \label{fig:MoreExps:LossNormOnly}
\end{figure}

\revised{%
  \citet{Barshan:2020:RelatIF} propose \keyword{relative influence}, which normalizes specifically  influence functions by Hessian-vector product (HVP), {${\invHess \gradI}$}.  They theoretically motivate their method via a global-vs.-local influence tradeoff.  However, as \citeauthor{Barshan:2020:RelatIF} acknowledge, their method is computationally intractable.  Estimating a single HVP can take up to several hours.  Since Hessian-vector product {${\invHess \gradI}$} must be calculated for \textit{each training instance}~$\zI$, \citeauthor{Barshan:2020:RelatIF} only evaluate their method on training sets of a few hundred instances.
Even if it were computationally feasible, existing HVP-estimation methods are fragile and generally perform poorly on large models \citep{Basu:2021:Influence}.

Moreover, relative influence is specific to influence functions while our renormalized influence in Section~\ref{sec:RenormInfluence} applies generally to all loss-based influence estimators (e.g.,~\tracin{}, \hydra{}, representer point).  An additional consequence of that is \citeauthor{Barshan:2020:RelatIF} do not consider normalizing by test gradient magnitude~\eqsmall{$\norm{\gradHatTe}$} to prevent penalizing low-loss training \textit{iterations}.
}
 
\clearpage
\newpage
\subsection{Why Aggregate Training Gradients?}\label{sec:App:MoreExps:Aggregation}

Recall from Section~\ref{sec:RelatedWork:Influence} that \keyword{static influence estimators} (e.g., influence functions \citep{Koh:2017:Understanding}, representer point \citep{Yeh:2018:Representer}) quantify influence using only final model parameters~$\wFin$.  In contrast, \keyword{dynamic influence estimators} (e.g., \tracin{} \& \tracinCP{}~\citep{Pruthi:2020}, \hydra{} \citep{Chen:2021:Hydra}) quantify influence by aggregating changes across (a subset of) intermediate training parameters, ${\wZero,\ldots,\wFin}$.  These experiments examine the benefit of gradient aggregation for our renormalized influence estimator, \method{}, which is based on \tracinCP{}.

Similar to Section~\ref{sec:ExpRes:AdvIdent}'s experimental setup,
Figure~\ref{fig:MoreExps:Aggregate} compares the mean adversarial set~(\eqsmall{$\advTrain$}) identification AUPRC for \method{} against two custom variants \methodZero{} and \methodFin{}, where \baseMethod{\method}{$\tau$} denotes that the corresponding iteration subset is ${\subsetItr = \set{\tau}}$.\footnote{For example, \methodZero{} denotes ${\subsetItr = \set{0}}$, i.e.,~only initial parameters~$\wZero$ are analyzed.}
In all cases, \method{} aggregation outperformed the single best checkpoint on average.  Therefore, even if the optimal checkpoint could be known via an oracle, it still may not beat aggregation in many cases.
In particular for poisoning but also for \citepos{Weber:2021} vision backdoor attack where the adversarial trigger may be clipped, each adversarial instance in~\eqsmall{$\advTrain$} may not have the same attack pattern/data.
Therefore, instances in \eqsmall{$\advTrain$} may be most influential at different points in training depending on when the corresponding adversarial feature(s) best aligns with the target.  Averaging across multiple checkpoints enables dynamic methods to potentially detect all such points in training.

Note also that $\wFin$ was better at identifying~\eqsmall{$\advTrain$} for only two of three attacks, namely natural-language poisoning and vision backdoor.  For vision poisoning, $\wZero$~yielded better identification. This performance difference is \textit{not} due to whether~$\wZero$ is pre\=/trained as both poisoning attacks used pre\=/trained models.

\vspace{1.0cm}
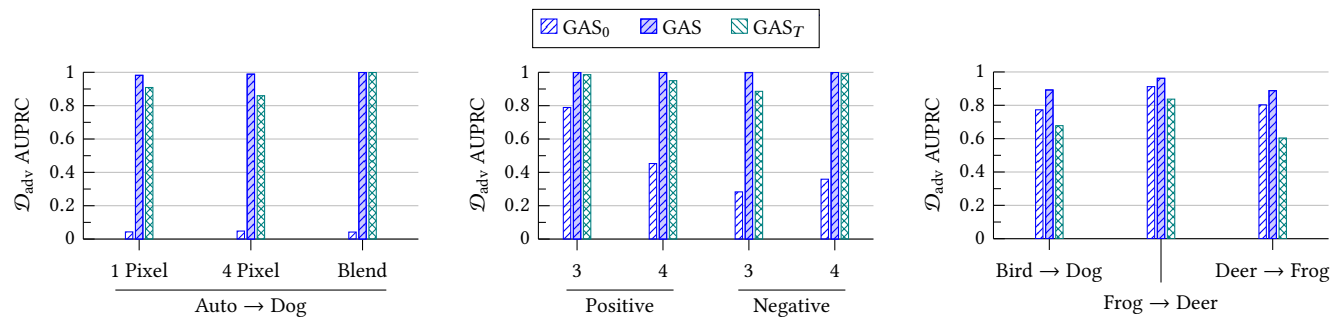
\begin{figure}[h]
  \centering
\newcommand{\legendSpacer}{\hspace*{9pt}}

\begin{tikzpicture}
  \begin{axis}[%
      width=\textwidth,
      ybar,
      hide axis,  %
      xmin=0,  %
      xmax=1,
      ymin=0,
      ymax=1,
      scale only axis,width=1mm, %
      legend cell align={left},              %
      legend style={font=\legendFontSize},
      legend columns=6,
      cycle list name=AggregateCycleList,
    ]
    \addplot [fill=white, draw=blue, pattern=north east lines, pattern color=blue] coordinates {(0,0)};
    \addlegendentry{\methodZero\legendSpacer}
    \addplot [cosin color] coordinates {(0,0)};
    \addlegendentry{\method\legendSpacer}
    \addplot [fill=gray, draw=teal, pattern=north west lines, pattern color=teal] coordinates {(0,0)};
    \addlegendentry{\methodFin}
  \end{axis}
\end{tikzpicture}
 
  \newcommand{\subfigureWidth}{0.32\textwidth}
  \begin{subfigure}[t]{\subfigureWidth}
\pgfplotstableread[col sep=comma]{plots/data/bd_cifar_aggregate.csv}\datatable%
\begin{tikzpicture}%
  \begin{axis}[%
        ybar={\BarLineWidth},%
        width={1.06\textwidth},%
        height={\NlpIdentificationHeight},%
        axis lines*=left,%
        bar width=\NlpPoisBarWidth,%
        xtick=data,
        xticklabels={1~Pixel,4~Pixel,Blend},
        x tick label style={font=\plotFontSize,align=center},%
        ymin=0,%
        ymax=1,%
        ytick distance={0.20},%
        minor y tick num={1},%
        y tick label style={font=\plotFontSize},%
        ylabel={\plotFontSize \IdentYLabel},%
        ymajorgrids,  %
        typeset ticklabels with strut,  %
        every tick/.style={color=black, line width=0.4pt},%
        enlarge x limits=0.25,%
        draw group line={[index]1}{1}{${\text{Auto} \rightarrow \text{Dog}}$}{-5.0ex}{09pt},%
        cycle list name=AggregateCycleList
    ]%
    \addplot [fill=white, draw=blue, pattern=north east lines, pattern color=blue] table [x index=0, y index=2] {\datatable};%
    \addplot [cosin color] table [x index=0, y index=3] {\datatable};%
    \addplot [fill=gray, draw=teal, pattern=crosshatch, pattern color=teal] table [x index=0, y index=4] {\datatable};%
  \end{axis}%
\end{tikzpicture}%
     \caption{%
        \textit{Vision Backdoor}:
        Mean adversarial set~(\eqsmall{$\advTrain$}) identification AUPRC averaged over ${\geq}$15~trials for CIFAR10 label pair Auto~$\rightarrow$~Dog (${\yTarg \rightarrow \yAdv}$) using \citepos{Weber:2021} three attack patterns.
        See Table~\ref{tab:MoreExps:Aggregate:Backdoor:CIFAR} for the full numerical results.
    }
    \label{fig:MoreExps:Aggregate:Backdoor:CIFAR}
  \end{subfigure}
  \hfill
  \begin{subfigure}[t]{\subfigureWidth}
\pgfplotstableread[col sep=comma]{plots/data/pois_nlp_aggregate.csv}\datatable%
\begin{tikzpicture}%
  \begin{axis}[%
        ybar={\BarLineWidth},%
        width={1.06\textwidth},%
        height={\NlpIdentificationHeight},%
        axis lines*=left,%
        bar width=\NlpPoisBarWidth,%
        xtick=data,
        xticklabels={${3}$,${4}$,${3}$,${4}$},
        x tick label style={font=\plotFontSize,align=center},%
        ymin=0,%
        ymax=1,%
        ytick distance={0.20},%
        minor y tick num={1},%
        y tick label style={font=\plotFontSize},%
        ylabel={\plotFontSize \IdentYLabel},%
        ymajorgrids,  %
        typeset ticklabels with strut,  %
        every tick/.style={color=black, line width=0.4pt},%
        enlarge x limits=0.15,%
        draw group line={[index]1}{1}{Positive}{-5.0ex}{4pt},%
        draw group line={[index]1}{2}{Negative}{-5.0ex}{4pt},%
        cycle list name=AggregateCycleList
    ]%
    \addplot [fill=white, draw=blue, pattern=north east lines, pattern color=blue] table [x index=0, y index=2] {\datatable};%
    \addplot [cosin color] table [x index=0, y index=3] {\datatable};%
    \addplot [fill=red, draw=teal, pattern=crosshatch, pattern color=teal] table [x index=0, y index=4] {\datatable};%
  \end{axis}%
\end{tikzpicture}%
     \caption{%
        \textit{Natural Language Poisoning}:
        Mean adversarial set~(\eqsmall{$\advTrain$}) identification AUPRC averaged over 10~trials for two positive and two negative \SST{} movie reviews.
        See Table~\ref{tab:MoreExps:Aggregate:Pois:Nlp} for the full numerical results including variance.
    }
    \label{fig:MoreExps:Aggregate:Pois:NLP}
  \end{subfigure}
  \hfill
  \begin{subfigure}[t]{\subfigureWidth}
\pgfplotstableread[col sep=comma]{plots/data/pois_cifar_aggregate.csv}\datatable%
\begin{tikzpicture}%
  \begin{axis}[%
        ybar={\BarLineWidth},%
        width={1.06\textwidth},%
        height={\NlpIdentificationHeight},%
        axis lines*=left,%
        bar width=\NlpPoisBarWidth,%
        xtick={1,3},  %
        xticklabels={${\text{Bird} \rightarrow \text{Dog}}$,${\text{Deer} \rightarrow \text{Frog}}$},
        x tick label style={font=\plotFontSize,align=center},%
        extra x ticks={2},  %
        extra x tick labels={${\text{Frog} \rightarrow \text{Deer}}$},
        extra x tick style={major tick length=18pt},
        every extra x tick/.append style={yshift=0.5mm},
        ymin=0,%
        ymax=1,%
        ytick distance={0.20},%
        minor y tick num={1},%
        y tick label style={font=\plotFontSize},%
        ylabel={\plotFontSize \IdentYLabel},%
        ymajorgrids,  %
        typeset ticklabels with strut,  %
        every tick/.style={color=black, line width=0.4pt},%
        enlarge x limits=0.25,%
        cycle list name=AggregateCycleList
    ]%
    \addplot [fill=white, draw=blue, pattern=north east lines, pattern color=blue] table [x index=0, y index=2] {\datatable};%
    \addplot [cosin color] table [x index=0, y index=3] {\datatable};%
    \addplot [fill=gray, draw=teal, pattern=crosshatch, pattern color=teal] table [x index=0, y index=4] {\datatable};%
  \end{axis}%
\end{tikzpicture}%
     \caption{\textit{Vision Poisoning}:
      Mean adversarial set~(\eqsmall{$\advTrain$}) identification AUPRC averaged over 30~trials for three CIFAR10 class pairs (${\yTarg \rightarrow \yAdv}$) with \citepos{Zhu:2019} convex polytope poisoning attack.
        See Table~\ref{tab:MoreExps:Aggregate:Pois:CIFAR} for the full numerical results.
    }
    \label{fig:MoreExps:Aggregate:Pois:CIFAR}
  \end{subfigure}
  \caption{\textit{Gradient Aggregation}: Mean \method{} adversarial set~(\eqsmall{$\advTrain$}) identification AUPRC when considering the initial (${\itr = 0}$) and final (${\itr = \nItr}$) checkpoints individually versus aggregating across multiple checkpoints.  Vision and natural language poisoning used pre\=/trained initial parameters~$\wZero$ while vision backdoor's~$\wZero$ was randomly initialized.  See Section~\ref{sec:App:ExpSetup:Hyperparams} for each of these experiments' hyperparameter settings.}
  \label{fig:MoreExps:Aggregate}
\end{figure}

\FloatBarrier
\clearpage
\newpage

\phantom{.}
\vfill

\begin{table}[h]
  \centering
  \caption{%
    \textit{Vision Backdoor Gradient Aggregation}:
    Adversarial set~(\eqsmall{$\advTrain$}) identification AUPRC mean and standard deviation averaged over ${\geq}$15~trials for CIFAR10 label pair Auto~$\rightarrow$~Dog (${\yTarg \rightarrow \yAdv}$) using \citepos{Weber:2021} three attack patterns.
    Following \citepos{Weber:2021} experimental setup, $\wZero$~was randomly initialized causing $\methodZero$ as expected to perform poorly.
    Bold denotes the best mean performance.
    Mean values are shown graphically in Figure~\ref{fig:MoreExps:Aggregate:Backdoor:CIFAR}.%
  }
  \label{tab:MoreExps:Aggregate:Backdoor:CIFAR}
  {
    \TableFontSize%
\renewcommand{\arraystretch}{1.2}
\setlength{\dashlinedash}{0.4pt}
\setlength{\dashlinegap}{1.5pt}
\setlength{\arrayrulewidth}{0.3pt}

\newcommand{\head}[1]{#1}
\newcommand{\DigPair}[2]{\multirow{3}{*}{#1 $\rightarrow$ #2}}
\newcommand{\Attack}[1]{#1}

\newcommand{\TabMidRule}{\cdashline{2-5}}

\newcommand{\AlgName}[1]{\multicolumn{1}{c}{#1}}

\begin{tabular}{ccrrr}
  \toprule
  $\yTarg \rightarrow \yAdv$ & \head{Attack}
        &  \AlgName{\methodZero{}}  & \AlgName{\method{}}    & \AlgName{\methodFin{}}  \\  %
  \midrule
  \DigPair{Auto}{Dog}
       & 1~Pixel
           & \PVal{0.043}{0.015}  & \PValB{0.982}{0.015}      & \PVal{0.909}{0.069}  \\\TabMidRule
       & 4~Pixel
           & \PVal{0.048}{0.021}  & \PValB{0.989}{0.033}      & \PVal{0.860}{0.137}  \\\TabMidRule
       & Blend
           & \PVal{0.042}{0.021}  & \PValB{\OneVal}{\ZeroVal} & \PVal{0.999}{0.002}  \\
  \bottomrule
\end{tabular}
   }
\end{table}

\vfill

\begin{table}[h]
  \centering
  \caption{%
    \textit{Natural Language Poisoning Gradient Aggregation}:
    Adversarial set~(\eqsmall{$\advTrain$}) identification AUPRC mean and standard deviation averaged over 10~trials for two positive and two negative \SST{} movie reviews.
    Bold denotes the best mean performance.
    Mean values are shown graphically in Figure~\ref{fig:MoreExps:Aggregate:Pois:NLP}.%
  }
  \label{tab:MoreExps:Aggregate:Pois:Nlp}
  {
    \TableFontSize%
\renewcommand{\arraystretch}{1.2}
\setlength{\dashlinedash}{0.4pt}
\setlength{\dashlinegap}{1.5pt}
\setlength{\arrayrulewidth}{0.3pt}

\newcommand{\RevType}[1]{\multirow{2}{*}{#1}}
\newcommand{\RevNum}[1]{#1}

\newcommand{\TabMidRule}{\cdashline{2-9}}

\newcommand{\AlgName}[1]{\multicolumn{1}{c}{#1}}

\begin{tabular}{ccrrrrrrr}
  \toprule
  Sentiment & No.                & \AlgName{\methodZero} & \AlgName{\method{}}  & \AlgName{\methodFin{}} \\
  \midrule
  \RevType{Pos.} & \RevNum{3}
          & \PVal{0.789}{0.144}  & \PValB{\OneValB}{\ZeroValB}  & \PVal{0.986}{0.037}  \\\TabMidRule
                 & \RevNum{4}
          & \PVal{0.453}{0.066}  & \PValB{\OneValB}{\ZeroValB}  & \PVal{0.950}{0.083}  \\
          \midrule
  \RevType{Neg.} & \RevNum{3}
          & \PVal{0.283}{0.087}  & \PValB{0.998}{0.005}         & \PVal{0.886}{0.213}  \\\TabMidRule
                 & \RevNum{4}
          & \PVal{0.359}{0.108}  & \PValB{\OneValB}{\ZeroValB}  & \PVal{0.992}{0.025}  \\
  \bottomrule
\end{tabular}
   }
\end{table}

\vfill

\begin{table}[h]
  \centering
  \caption{%
    \textit{Vision Poisoning Gradient Aggregation}:
    Adversarial set~(\eqsmall{$\advTrain$}) identification AUPRC mean and standard deviation averaged over 30~trials for three CIFAR10 class pairs (${\yTarg \rightarrow \yAdv}$) with \citepos{Zhu:2019} convex polytope poisoning attack.
    Bold denotes the best mean performance.
    Mean values are shown graphically in Figure~\ref{fig:MoreExps:Aggregate:Pois:CIFAR}.%
  }
  \label{tab:MoreExps:Aggregate:Pois:CIFAR}
  {
    \TableFontSize%
\renewcommand{\arraystretch}{1.2}
\setlength{\dashlinedash}{0.4pt}
\setlength{\dashlinegap}{1.5pt}
\setlength{\arrayrulewidth}{0.3pt}

\newcommand{\TabMidRule}{\hdashline}

\newcommand{\AlgName}[1]{\multicolumn{1}{c}{#1}}

\begin{tabular}{ccrrrrrrrr}
  \toprule
  \multicolumn{2}{c}{$\yTarg \rightarrow \yAdv$}
      & \AlgName{\methodZero} & \AlgName{\method}  & \AlgName{\methodFin} \\
  \midrule
  Bird  & Dog
      & \PVal{0.773}{0.208}  & \PValB{0.892}{0.137}  & \PVal{0.678}{0.265}  \\\TabMidRule
  Frog  & Deer
      & \PVal{0.912}{0.120}  & \PValB{0.962}{0.100}  & \PVal{0.837}{0.160}  \\\TabMidRule
  Deer  & Frog
      & \PVal{0.803}{0.188}  & \PValB{0.888}{0.091}  & \PVal{0.604}{0.234}  \\
  \bottomrule
\end{tabular}
   }
\end{table}

\vfill
\phantom{.}
 
\newpage
\revTwo{%
\subsection{Evaluating on the Tail of the Attack's Influence Distribution}%
\label{sec:App:MoreExps:LeastInfluential}
}

\revTwo{%
When the number of attack instances is larger than necessary, it becomes increasingly important to detect all of the adversarial set (with very few false positives) in order to mitigate the attack and minimize damage.
As a complement to this paper's identification and mitigation experiments, here we evaluate one specific aspect of our method in one specific challenging setting: performance on the 10\%~most difficult instances in the attack.

Figure~\ref{fig:ExpRes:AdaptiveAttacker:Backdoor:CIFAR:BottomFilt} considers this case on \citepos{Weber:2021} three vision backdoor attack patterns.  Each attack's maximum perturbation distance is halved w.r.t.\ the earlier experiments (to reduce their individual influence and increase the challenge) and 1,000~adversarial instances (10\% of~\eqsmall{$\fullTrain$}) are used.  Each method's \textbf{\FullSetBarColor} bar represents the adversarial-instance identification AUPRC when considering the entire adversarial set~\eqsmall{$\advTrain$}. The \BottomFiltColor{} bar shows the adversarial-instance identification AUPRC for just the bottom 10\%~least influential backdoor training instances. This quantity is necessarily less than or equal to the AUPRC for the full~\eqsmall{$\advTrain$}.  This experiment mimics an attacker secretly concealing from our defense the top-90\% of \eqsmall{$\advTrain$}.
Figures~\ref{fig:ExpRes:AdaptiveAttacker:Pois:NLP:BottomFilt} and~\ref{fig:ExpRes:AdaptiveAttacker:Backdoor:Speech:BottomFilt} provide similar results for the \citepos{Wallace:2021} natural-language poisoning and \citepos{SpeechDataset} speech recognition backdoor attacks respectively.
}

\revTwo{%
  In summary, \bothMethod{} were highly adept at detecting these least influential instances for all three attacks.  In contrast, the baseline estimators' performance drops substantially -- generally to nearly zero AUPRC.
}

\begin{figure*}[h!]
  \centering
  \newcommand{\subfigureWidth}{0.48\textwidth}
\newcommand{\legendSpacer}{\hspace*{13pt}}

\begin{tikzpicture}
  \begin{axis}[%
    width=\textwidth,
      ybar,
      hide axis,  %
      xmin=0,  %
      xmax=1,
      ymin=0,
      ymax=1,
      scale only axis,width=1mm, %
      legend cell align={left},              %
      legend style={font=\legendFontSize},
      legend columns=4,
    ]
    \addplot [Bottom Filt] coordinates {(0,0)};
    \addlegendentry{Bottom 10\% of $\advTrain$\legendSpacer}

    \addplot [Full Set] coordinates {(0,0)};
    \addlegendentry{All of $\advTrain$}
  \end{axis}
\end{tikzpicture}
 
  \vspace{6pt}
  \begin{minipage}[t]{\subfigureWidth}
\pgfplotstableread[col sep=comma]{plots/data/bd_cifar_bottom_filt.csv}\datatable%
\begin{tikzpicture}
  \begin{axis}[
    xbar={\BarLineWidth},
    height={\BottomFiltHeight},%
    width={\BottomFiltWidth},%
    bar width={\DetectBarWidthVal},%
    xmin=0,         %
    xmax=1,         %
    xtick distance={0.20},%
    minor x tick num={1},%
    xmajorgrids,
    x tick label style={font=\plotFontSize,align=center},%
    xlabel={\plotFontSize AUPRC},
    ytick=data,     %
    ytick style={draw=none},%
    y tick label style={font=\footnotesize},%
    yticklabels={\method{}\ours{},\layer{}\ours{},\tracinCP{},\tracin{},Inf.\ Func.,Rep.\ Pt.},
    enlarge y limits={0.110},%
  ]
  \addplot [Bottom Filt] table [x index=3, y index=1] {\datatable};
  \addplot [Full Set] table [x index=2, y index=1] {\datatable};
  \end{axis}
\end{tikzpicture}
     \caption{%
      \revTwo{%
      \textit{Vision Backdoor Adversarial-Set Identification of the Least Influential Instances}:
      Mean AUPRC identifying adversarial set~\eqsmall{$\advTrain$} for \citepos{Weber:2021} vision backdoor with 1,000~backdoor instances.
      Even if the attacker can conceal the top~90\% most influential instances from our defense (\BottomFiltColor{}), \bothMethod{} still remain highly effective at identifying the adversarial set -- unlike the baselines whose performance severely degrades.
      Results are averaged across related experimental setups with ${{\geq}10}$~trials per setup.
    }%
    }
    \label{fig:ExpRes:AdaptiveAttacker:Backdoor:CIFAR:BottomFilt}
  \end{minipage}
  \hfill
  \begin{minipage}[t]{\subfigureWidth}
\pgfplotstableread[col sep=comma]{plots/data/pois_nlp_bottom_filt.csv}\datatable%
\begin{tikzpicture}
  \begin{axis}[
    xbar={\BarLineWidth},
    height={\BottomFiltHeight},%
    width={\BottomFiltWidth},%
    bar width={\DetectBarWidthVal},%
    xmin=0,         %
    xmax=1,         %
    xtick distance={0.20},%
    minor x tick num={1},%
    xmajorgrids,
    x tick label style={font=\plotFontSize,align=center},%
    xlabel={\plotFontSize AUPRC},
    ytick=data,     %
    legend style={at={(axis cs:65,0.2)},anchor=south west},
    ytick style={draw=none},%
    y tick label style={font=\plotFontSize},%
    enlarge y limits={0.100},%
    yticklabels={\method{}\ours{},\layer{}\ours{},\tracinCP{},\tracin{},Inf.\ Func.,Rep.\ Pt.},
  ]
  \addplot [Bottom Filt] table [x index=3, y index=1] {\datatable};
  \addplot [Full Set] table [x index=2, y index=1] {\datatable};
  \end{axis}
\end{tikzpicture}
     \caption{%
      \revTwo{%
      \textit{Natural-Language Poison Adversarial-Set Identification of the Least Influential Instances}:
      Mean AUPRC identifying adversarial set~\eqsmall{$\advTrain$} for \citepos{Wallace:2021} natural-language poisoning attack.
      Even if the attacker can conceal the top~90\% most influential instances (i.e.,~top 45/50 instances) from our defense (\BottomFiltColor{}), \bothMethod{} remain highly effective at identifying~\eqsmall{$\advTrain$} even for these 10\%~least-influential instances (bottom 5/50).
      Results are averaged across related experimental setups with ${10}$~trials per setup.
    }%
    }
    \label{fig:ExpRes:AdaptiveAttacker:Pois:NLP:BottomFilt}
  \end{minipage}

  \null
  \vfill
  \null

  \begin{minipage}[t]{\subfigureWidth}
    \centering
\pgfplotstableread[col sep=comma]{plots/data/bd_speech_bottom_filt.csv}\datatable%
\begin{tikzpicture}
  \begin{axis}[
    xbar={\BarLineWidth},
    height={\BottomFiltHeight},%
    width={\BottomFiltWidth},%
    bar width={\DetectBarWidthVal},%
    xmin=0,         %
    xmax=1,         %
    xtick distance={0.20},%
    minor x tick num={1},%
    xmajorgrids,
    x tick label style={font=\plotFontSize,align=center},%
    xlabel={\plotFontSize AUPRC},
    ytick=data,     %
    legend style={at={(axis cs:65,0.2)},anchor=south west},
    ytick style={draw=none},%
    y tick label style={font=\plotFontSize},%
    enlarge y limits={0.100},%
    yticklabels={\method{}\ours{},\layer{}\ours{},\tracinCP{},\tracin{},Inf.\ Func.,Rep.\ Pt.},
  ]
  \addplot [Bottom Filt] table [x index=3, y index=1] {\datatable};
  \addplot [Full Set] table [x index=2, y index=1] {\datatable};

  \end{axis}
\end{tikzpicture}
     \caption{%
      \revTwo{%
      \textit{Speech Backdoor Adversarial-Set Identification of the Least Influential Instances}:
      Mean AUPRC identifying adversarial set~\eqsmall{$\advTrain$} for \citepos{SpeechDataset} speech backdoor attack.
      Even if the attacker can conceal the top~90\% most influential instances (top 27/30 instances) from our defense (\BottomFiltColor{}), our method still remains highly effective at identifying the adversarial set using just the 10\%~least influential instances (bottom 3/30 instances).
      Results are averaged across related experimental setups with ${{\geq}10}$~trials per setup.
      }%
    }
    \label{fig:ExpRes:AdaptiveAttacker:Backdoor:Speech:BottomFilt}
  \end{minipage}
  \hfill
  \begin{minipage}[t]{\subfigureWidth}
    ~~
  \end{minipage}
\end{figure*}
 
\clearpage
\newpage
\subsection{Adversarial-Set Identification Execution Time}\label{sec:App:MoreExps:Runtime}

Recall from Section~\ref{sec:RenormInfluence:Measures} that \bothMethod{} is a renormalized version of \tracinCP{} that removes the low-loss penalty.  Therefore, \bothMethod{}'s execution time is, in essence, that of \tracinCP{}.  Table~\ref{tab:App:MoreExps:PoisonExecTime} compares the execution time of \tracinCP{}/\bothMethod{} to the other influence estimators.\footnote{Table~\ref{tab:App:MoreExps:PoisonExecTime} does not separately report \method{} and \layer{}'s execution times as they were consistently similar to \tracinCP{}.} All results were collected on an HPC system with 3~Intel E5\=/2690v4 cores, 48GB~of 2400MHz~DDR4 RAM, and \textit{one} NVIDIA Tesla~K80.
The
reported execution times consider calculating vector~$\infVec$ for training set~$\fullTrain$ w.r.t.\ a \textit{single} random test example~$\zHatTe$.   The only exception is the ``Amortized'' results where training gradient~($\gradI$) computation is amortized across multiple test examples allowing for significant speed-up (see Alg.~\ref{alg:CosIn}).   Amortization relies on simultaneously storing each test example's gradient~$\gradHatTe$ in GPU memory.  Therefore, the GPU model constrains amortization's possible benefits.  Table~\ref{tab:App:MoreExps:ExecTime:AmortizedCount} enumerates the number of concurrent test examples considered when calculating amortized results on a~K80.  More modern GPUs with larger onboard memory see considerable single and amortized speed-ups compared to Table~\ref{tab:App:MoreExps:PoisonExecTime}.

Recall from Section~\ref{sec:ExpRes} that natural-language poisoning attacked \robertaBase{} (125M parameters) as specified by \citet{Wallace:2021}. Large models can be slow to analyze -- even with amortization.  That is why natural language target identification uses the two-phase target identification procedure described in Section~\ref{sec:Method:TargetDetection}.  We observed no meaningful performance drop with this streamlined approach (see Figure~\ref{fig:ExpRes:TargDetect}).

Recall from Section~\ref{sec:RelatedWork:Influence} that \citepos{Yeh:2018:Representer} representer point estimator only considers the network's final linear classification layer, which is why it was the fastest. Notably, it also had the worst performance (see Figure~\ref{fig:ExpRes:AdvIdent}).

For completeness, \citepos{Peri:2020} \deepknn{} empirical clean-label data poisoning defense's mean execution time was~242s with standard deviation~1.1s.

Target identification execution times can be extrapolated from Table~\ref{tab:App:MoreExps:PoisonExecTime}'s ``Amortized'' values.

\phantom{.}
\vfill

\begin{table}[h]
  \centering
  \caption{\textit{Adversarial-Set Identification Execution Time}: Mean and standard deviation algorithm execution time (in seconds) to analyze a single test instance across ${{>}50}$~trials for \bothMethod{} and the influence estimator baselines on Section~\ref{sec:ExpRes:Attacks}'s four training-set attacks.  See Table~\ref{tab:App:MoreExps:ExecTime:AmortizedCount} for the number of parallel instances analyzed by \tracinCP{} and \bothMethod{} for each attack.}%
  \label{tab:App:MoreExps:PoisonExecTime}
  {
    \TableFontSize%
\newcommand{\BaseHead}[1]{\multirow{2}{*}{#1}}
\newcommand{\AtkHead}[1]{\BaseHead{#1}}

\renewcommand{\arraystretch}{1.2}
\setlength{\dashlinedash}{0.4pt}
\setlength{\dashlinegap}{1.5pt}
\setlength{\arrayrulewidth}{0.3pt}

\newcommand{\AttackTypeCDash}{\cdashline{1-8}}

\newcommand{\SingleExpCDash}{\cdashline{2-8}}

\newcommand{\phan}{\phantom{0}}
\newcommand{\phanC}{\phantom{0,}}
\newcommand{\NA}{\multicolumn{1}{c}{N/A}}
\newcommand{\TRes}[2]{${#1 \pm #2}$}
\newcommand{\Head}[1]{\multicolumn{1}{c}{#1}}
\begin{tabular}{lllrrrrr}
  \toprule
  \BaseHead{Attack}
  & \BaseHead{Type} & \BaseHead{Dataset}
  & \multicolumn{2}{c}{\tracinCP{} \& \bothMethod{}}  & \multicolumn{3}{c}{Others} \\\cmidrule(lr){4-5}\cmidrule(lr){6-8}
  &&    & \Head{Single}            & \Head{Amortized}         & \Head{\tracin{}}    & \Head{Inf.\ Func.}   & \Head{Rep.\ Pt.}  \\
  \midrule
  \AtkHead{Backdoor~}
    & Speech & \citep{SpeechDataset}
        & \TRes{2,605}{\phanC710}  & \TRes{489}{\phanC155}    & \TRes{894}{279}     & \TRes{4,595}{177}       & \TRes{49}{\phan1}   \\\SingleExpCDash
    & Vision & CIFAR10
        & \TRes{9,252}{2,253}      & \TRes{1,473}{\phanC408}  & \TRes{2,418}{738}   & \TRes{6,316}{250}       & \TRes{128}{\phan3}  \\\AttackTypeCDash
  \AtkHead{Poison}
    & NLP    & SST\=/2
        & \TRes{27,723}{7,933}     & \TRes{8,971}{3,719}      & \TRes{16,667}{187}  & \TRes{21,409}{\phan77}  & \TRes{1,697}{11}    \\\SingleExpCDash
    & Vision & CIFAR10
        & \TRes{24,267}{1,939}     & \TRes{2,088}{\phanC177}  & \TRes{5,910}{600}   & \TRes{15,634}{641}      & \TRes{187}{\phan2}  \\
  \bottomrule
\end{tabular}
   }
\end{table}

\vfill

\begin{table}[h]
  \centering
  \caption{%
    \textit{Single GPU Amortization}: Number of test examples analyzed by \tracinCP{}/\bothMethod{} when amortizing on an Nvidia Tesla K80 computation of training gradients~($\gradI$) in Table~\ref{tab:App:MoreExps:PoisonExecTime}'s ``Amortized'' execution time result.
  }%
  \label{tab:App:MoreExps:ExecTime:AmortizedCount}
  {
    \TableFontSize%
\newcommand{\BaseHead}[1]{\multirow{1}{*}{#1}}
\newcommand{\AtkHead}[1]{\multirow{2}{*}{#1}}

\renewcommand{\arraystretch}{1.2}
\setlength{\dashlinedash}{0.4pt}
\setlength{\dashlinegap}{1.5pt}
\setlength{\arrayrulewidth}{0.3pt}

\newcommand{\AttackTypeCDash}{\cdashline{1-3}}

\newcommand{\SingleExpCDash}{\cdashline{2-3}}

\newcommand{\TRes}[2]{${#1 \pm #2}$}
\newcommand{\Head}[1]{\multicolumn{1}{c}{#1}}
\begin{tabular}{llr}
  \toprule
  \BaseHead{Attack}
    & \BaseHead{Type} & \\
  \midrule
  \AtkHead{Backdoor~}
    & Speech          &   20   \\\SingleExpCDash
    & Vision          &   16   \\\AttackTypeCDash
  \AtkHead{Poison}
    & NLP             &   5    \\\SingleExpCDash
    & Vision          &   135  \\
  \bottomrule
\end{tabular}
   }
\end{table}

\vfill
\phantom{.}
   \stopcontents  %

\end{document}